\documentclass[11pt]{article}
\usepackage{custom_tex}

\title{Bayes-Optimal Fair Classification 
with Linear Disparity Constraints via Pre-, In-, and Post-processing}

\author 
{Xianli Zeng\footnote{
Department of Statistics and Data Science, Xiamen University.		\texttt{xlzeng@xmu.edu.cn}.},\,\,
Kevin Jiang,\footnote{Department of Statistics and Data Science, University of Pennsylvania.	 
\texttt{kcjiang@wharton.upenn.edu}.}\,\,
Guang Cheng\footnote{Dept of Statistics and Data Science, University of California, Los Angeles. 
\texttt{guangcheng@ucla.edu}.}\,\,
and
Edgar Dobriban\footnote{
Department of Statistics and Data Science, University of Pennsylvania.		\texttt{dobriban@wharton.upenn.edu}.}}

\date{\today}

\begin{document}
 
\begin{sloppypar}
\maketitle

\begin{abstract}
Machine learning algorithms
may have disparate impacts on protected groups. 
To address this, 
we develop methods for Bayes-optimal  fair classification, 
aiming to minimize classification error subject to 
given group fairness constraints.
We introduce the notion of \emph{linear disparity measures}, which are linear functions of a probabilistic classifier;
and \emph{bilinear disparity measures},
which are also linear in the group-wise regression functions.
We show that several popular 
    disparity measures---the deviations from demographic parity, equality of opportunity, and predictive equality---are bilinear.
    
     We find the form of Bayes-optimal fair classifiers under a single linear disparity measure, by uncovering a connection with the Neyman-Pearson lemma.
    For bilinear disparity measures, 
    {we are able to find the explicit form of Bayes-optimal fair classifiers as group-wise thresholding rules with explicitly characterized thresholds.}
    We develop similar algorithms for when protected attribute cannot be used at the prediction phase.
    Moreover, 
     we obtain analogous theoretical characterizations of optimal classifiers for a multi-class protected attribute and for equalized odds.  
   
    Leveraging our theoretical results, we design methods
that learn fair Bayes-optimal classifiers under bilinear disparity constraints. 
Our methods cover
three popular approaches to fairness-aware classification, via pre-processing (Fair Up- and Down-Sampling), 
in-processing
(Fair cost-sensitive Classification)
and 
post-processing
(a Fair Plug-In Rule).
Our methods control disparity directly while achieving near-optimal fairness-accuracy tradeoffs. 
{We show empirically  that our methods have state-of-the-art performance compared to existing algorithms.
In particular, our pre-processing method can a reach higher accuracy than prior pre-processing methods at low disparity levels.}
\end{abstract}

\tableofcontents

\section{Introduction}
\label{introduction}

Machine learning
is increasingly deployed in algorithmic decision-making systems in high-stakes domains deeply impacting human lives, including in education \citep{tsai2020precision}, finance \citep{MA201824}, healthcare \citep{MQ2017}, and judiciary systems \citep{JJSL2016}. While machine learning can improve decision-making efficiency, 
it has also surfaced many potential ethical risks.

One significant concern is algorithmic bias. Recent studies have revealed that without considering fairness, machine learning algorithms often make 
decisions that disadvantage vulnerable demographic groups, thereby exacerbating social injustice and potentially violating human rights \citep{JJSL2016,FBC2016,bhanot2021problem,corbett2018measure}.

In light of these risks, algorithmic fairness has attracted widespread attention from the public, government, and academia. Organizations like the White House \citep{WH2016}, and UNESCO \citep{UNESCO2020}  have all called for considering fairness when applying automated decision-making. 
In response, the emerging field of fair machine learning has seen significant development. 
An increasing amount of research focuses on defining quantitative fairness metrics for various  applications \citep[e.g.,][etc]{CKP2009,Cynthiafairness2012,HPS2016,ZafarPreference2017,PleissCali2017,KusnerCounterfactual2017,Zhang_Bareinboim_2018,Narasimhan_Cotter_Gupta_Wang_2020,NandyRecommender2022}, designing algorithms that protect minority groups \citep[e.g.,][etc]{kamiran2012data,JKMR2016,ZLM2018,PKG2019,CHS2020,xian2023fair}, providing public software to help practitioners assess and improve fairness \citep[e.g.,][etc]{AIF360,Weerts_Fairlearn_Assessing_and_2023},
and studying the theoretical underpinnings of algorithmic fairness \citep[e.g.,][etc]{CSFG2017,MW2018,SC2021,zeng2022fair}. 
We refer the readers to e.g., \cite{mehrabi2021survey,caton2020fairness}, 
etc,
for reviews of advances in fair machine learning.

In this paper,  we 
develop methods for fair classification.  
In fair classification, the \emph{fair Bayes-optimal classifier} is the method with 
the best possible accuracy 
under a given disparity constraint. 
This method serves as an ultimate  ``best possible" classifier. 
We consider the question of
deriving Bayes-optimal classifiers under given group fairness constraints, 
which aim to equalize various quantities across protected groups.

\begin{table}[t]
  \centering
  \caption{Our contributions in comparison with prior theoretical work for Bayes-optimal classifiers:   [1]: \cite{CSFG2017}; [2]: \cite{MW2018}; [3]: \cite{CCH02019}; [4]: \cite{JiangWasserstein2020}; [5]: \cite{SC2021}; [6]: \cite{Weietal20a} [7]: 
\cite{chen2023post}; [8]: \cite{XuFair2023}. The comparison is made based on scope of the theoretical framework, the fairness metrics considered, and whether theoretically optimal algorithms are proposed. Within each category, several criteria are considered. Our methods satisfy all desired criteria, while the prior works satisfy only some of them.}
  \label{table:compare}
  \begin{tabular}{lccccccccc}
    \hline
References &   [1] & [2] & [3] & [4] &[5] &[6] & [7] & [8] &This Work\\\hline
    \multicolumn{10}{c}{\textbf{Scope of Theoretical Framework}} \\\hline
 {Approximate Fairness}                     &            & \checkmark &            & \checkmark &            & \checkmark & \checkmark & \checkmark & \checkmark \\
 {Explicit Form}                            &            &            & \checkmark & \checkmark & \checkmark &            &            & \checkmark & \checkmark \\
 {Multiple Constraints}                     &            &            &            &            &            & \checkmark & \checkmark &            & \checkmark \\
 {Pareto Analysis}                          &            & \checkmark &            &            &            &            &            & \checkmark & \checkmark \\
 {No Protected Attrib. $A$ at test time}    &            & \checkmark &            & \checkmark &            & \checkmark & \checkmark &            & \checkmark \\\hline
    \multicolumn{10}{c}{\textbf{Fairness Metrics Considered}} \\\hline
Demographic Parity                          & \checkmark & \checkmark &            & \checkmark & \checkmark & \checkmark & \checkmark & \checkmark & \checkmark \\
Equality of Opportunity                     & \checkmark & \checkmark & \checkmark &            &            &            & \checkmark &            & \checkmark \\
Predictive Equality                         & \checkmark &            &            &            &            &            &            &            & \checkmark \\
Equalized Odds                              &            &            &            &            &            & \checkmark & \checkmark &            & \checkmark \\\hline
    \multicolumn{10}{c}{\textbf{Theoretically Optimal Algorithms}}\\\hline
Pre-processing                              &            &            &            &            &            & \checkmark &            & \checkmark & \checkmark \\
In-processing                               &            &            &            & \checkmark &            &            &            &            & \checkmark \\
Post-processing                             & \checkmark & \checkmark & \checkmark & \checkmark & \checkmark & \checkmark & \checkmark & \checkmark & \checkmark \\\hline
  \end{tabular}
\end{table}

Prior work by
\cite{HPS2016} showed that for oblivious fairness measures (which only depend on the joint distributions of the outcome, the protected group, and the class prediction), and for perfect fairness, the Bayes-optimal classifiers can always be expressed in terms of the group-wise regression functions; but without giving an explicit algorithm to find them.
Further, given any predictor $\widehat Y$, they also show how to obtain a predictor that is a function of $\widehat Y$ and the protected attribute that achieves perfect equalized odds, but which may not be Bayes-optimal.

Further,
\cite{CSFG2017} showed 
that, for exact fairness with some specific disparity measures, 
fair Bayes-optimal classifiers 
are group-wise thresholding rules, but did not identify their exact form (see \Cref{table:compare} for a summary of, and comparison with, related work).  
For the specific fairness criteria of demographic parity and equality of opportunity (see \Cref{sec:pre}), 
\cite{MW2018} 
characterized fair Bayes-optimal classifiers 
by a connection to cost-sensitive risks (again without an explicit form). 
Explicit 
forms of fair Bayes-optimal classifiers were derived 
for 
perfect demographic parity in \cite{CCH02019} and 
for perfect equality of opportunity in 
 \cite{SC2021}; see also \cite{HPS2016} for a previous characterization. 

 By considering the Wasserstein distance as the disparity measure, \cite{JiangWasserstein2020,Silviageneral2020,xian2023fair} 
  proved that the fair classification problem is equivalent to a  Wasserstein-barycenter problem, and the perfectly fair Bayes-optimal classifier can be derived via optimal transport. 
  Follow-up work by \cite{XuFair2023} further characterized the Pareto Frontier---i.e., the optimal fairness-accuracy tradeoff---using the Wasserstein geodesic.
 Rather than considering the fair Bayes-optimal classifier,  \cite{Weietal20a} considered the estimation of the Bayes-optimal score function and derived the optimal transformed score function that satisfies fairness constraints. 
In a selective classification framework, \cite{rava2021burden} studied how to minimize 
outcomes without a decision, 
while equalizing  the  false selection rate among different protected groups.
More recently, 
in closely related work,
\cite{chen2023post} 
{characterized Bayes-optimal classifiers as linear combinations of certain basis functions, without finding their explicit forms. We provide a detailed comparison with this work after Theorem \ref{thm-opt-dp}.}

See \Cref{table:compare} for a quick overview of comparison with related work.

Despite this progress, there is no systematic
approach to derive explicit forms of Bayes-optimal fair classifiers for general disparity measures and arbitrary disparity levels.
There is also no unified algorithmic framework to learn these classifiers from data.

In this paper, 
we propose a unified theoretical framework for deriving Bayes-optimal classifiers under various fairness measures, by leveraging a novel connection with the Neyman-Pearson argument for optimal hypothesis testing.  
We define a new notion of \emph{linear disparity measures}, which are linear functions of a probabilistic classifier (See Definition \ref{ldm}), 
and show that
finding the fair Bayes optimal classifier can then be recast as maximizing a linear functional subject to linear constraints.  
Since this is fundamentally similar to maximizing power subject to level constraints in optimal hypothesis testing,
we can derive the explicit form of a fair Bayes-optimal classifier with any given level of disparity, for any given linear disparity measure.

{While prior work has explored linear or bilinear representations of disparity, our framework differs in its precise definition of linearity, the choice of objective (e.g., misclassification error vs. other losses), and the classification setting (e.g., binary vs. multiclass). \cite{agarwal2018reductions} use a weaker notion of linearity that does not require linearity in the classifier itself; 
our results can thus be more explicit in terms of establishing the precise forms of the classifiers.
\cite{celis2019classification} define a more general class of fractional linear constraints.
In comparison, 
we highlight the bilinear case in the linear constraint setting, which enables us to establish the explicit form of the Bayes-optimal classifiers for 
equality of opportunity and predictive parity at a user-specified approximate level.
\cite{Weietal20a} use similar disparity constraints but minimize cross-entropy rather than misclassification error;
and finally, \cite{alghamdi2022beyond} work in a multiclass setting with ratio-based disparity measures and optimize divergence from a base classifier rather than mis-classification error.}

We further show that our general theoretical framework can be  extended to a wide range of scenarios (see \Cref{sec:ex}):
handling multiple fairness constraints (such as equalized odds, ensuring both equality of opportunity and predictive equality); and
for the common scenario when the protected attribute cannot be used at the prediction phase.  

In addition to deriving fair Bayes-optimal classifiers, 
we also investigate the \emph{fair Pareto frontier}, which characterizes the optimal tradeoff between fairness and accuracy. 
This Pareto frontier comprises a set of fair Bayes-optimal classifiers. 
We study the tradeoff function that characterizes the best achievable misclassification rate for a given disparity level. We prove that for linear disparity measures, 
the tradeoff function is convex, highlighting that the marginal cost of fairness increases for smaller disparity levels. {\cite{wang2023aleatoric} study a different yet similar fair Pareto frontier, and we comment on their work and its relation to ours in Section \ref{fro}.}

In practice, Bayes-optimal classifiers need to be estimated from a finite dataset.
For this, we develop methods 
covering three popular approaches to fairness-aware classification: pre-processing, in-processing, and post-processing \citep[e.g.,][]{caton2020fairness}. 

\begin{itemize}
    \item 
\emph{Pre-processing methods} reduce biases implicit in the training data, and train 
classifiers 
on the debiased data. Examples include transformations \citep[e.g.,][etc]{FFMS2015,KJ2016,JL2019,CFDV2017}, fair representation learning \citep[e.g.,][etc]{ZWKT2013,CKYMR2016,CMJW2019}, and fair generative models \citep[e.g.,][etc]{XYZ2018,SHC2019,Jang_Zheng_Wang_2021}. These methods are convenient to apply, as they do not change the training procedure. However, as argued for instance in \cite{NARBSB2019}, disparity could persist
even after pre-processing.

    \item 
\emph{In-processing algorithms} handle the fairness constraint during the training process. 
A commonly applied strategy is to incorporate fairness measures as a regularization term into the optimization objective \citep[e.g.,][etc]{GCAM2016,ZBR2017,NH2018,CLKVN2019,CJG2019,OnetoGeneral2020,CHS2020}. 
However, fairness measures are often non-convex and even non-differentiable with respect to model parameters, so that scalable training is challenging.
In the alternative approach of adversarial learning \citep[e.g.,][etc]{ZLM2018,CFC2018,XWYZW2019,LV2019}, 
the ability of the classifier to predict
the protected attribute is minimized. 
Although this can achieve promising results through careful design, its training process 
can lack stability, 
as a min-max optimization problem is required to be solved \citep{CHS2020}. 
Other in-processing algorithms include 
domain-based training \citep{WQK2020}, 
where the protected attribute is explicitly encoded and its effect is mitigated. 

    \item 
\emph{Post-processing methods} fit 
classifiers to the training data in the standard way, 
and aim to mitigate disparities in the model output. 
The most frequently used post-processing algorithms include group-wise thresholding \citep[e.g.,][etc]{BJA2016,CSFG2017,VSG2018,MW2018,CCH02019,alabdulmohsin2020fair,SC2021,denis2021fairness,JSW2022}  and post-processing via optimal transport \citep[e.g.,][etc]{JiangWasserstein2020,Silviageneral2020,xian2023fair,XuFair2023}.
 In group-wise thresholding, after estimating the conditional probability of $Y=1$ for each protected group (i.e., the group-wise regression functions), the method classifies $\widehat Y=1$ when these probabilities exceed certain group-specific thresholds. On the other hand, the optimal transport-based method transforms these conditional probabilities into a fair  Bayes-optimal classifier by employing optimal transport algorithms.


\end{itemize}

Motivated by our formulas for 
fair Bayes-optimal classifiers, we design three algorithms
that  mitigate 
algorithmic bias, 
via pre-processing (Fair Up- and Down-Sampling or FUDS), 
in-processing
(Fair cost-sensitive Classification or FCSC)
and 
post-processing
(Fair Plug-In Rule or FPIR).

For the pre-processing algorithm  FUDS, we characterize the perturbed data distribution for which the unconstrained Bayes-optimal classifier 
equals the fair Bayes-optimal classifier on the original target distribution. 
We prove that this perturbation can be obtained by adjusting the proportion of each demographic group defined by the label and protected attribute. 
This enables us to design an easy-to-use 
Fair Up- and Down-Sampling pre-processing algorithm.

For the in-processing algorithm FCSC, 
we observe that the fair Bayes-optimal classifier adjusts thresholds for each protected group to mitigate disparities. 
Leveraging that cost-sensitive classification achieves a similar effect, 
we show that the fair Bayes-optimal classifier can be achieved through a carefully designed group-wise cost-sensitive risk, leading to
Fair cost-sensitive Classification (FCSC).

Finally, for the post-processing algorithm FPIR, 
since we derived the explicit form of fair Bayes-optimal classifiers,
we can design a two-stage plug-in rule to estimate it.
In the first stage, we estimate the feature-conditional probability of a positive outcome 
$Y=1$ for each protected group. In the second stage, the optimal fair decision boundary can be estimated by solving a one-dimensional search problem.


We summarize our contributions below.
We also provide a detailed comparison 
of our contributions with prior theoretical work for fair Bayes-optimal classifiers in Table \ref{table:compare}.
\begin{enumerate}
    \item \textbf{Unifying framework for fair Bayes-optimal classifiers with linear disparity measures:}
    We provide a unified framework for deriving Bayes-optimal classifiers under group fairness constraints. 
    We introduce the notions of linear and bilinear disparity measures (Definitions \ref{ldm} and \ref{bldm}) and
    show that several popular 
    disparity measures---the deviations from demographic parity, equality of opportunity, predictive equality, and equalized odds---are bilinear (Proposition \ref{prop:expression of dl}).
    We characterize Bayes-optimal classifiers under linear disparity measures (Theorem \ref{thm-opt-dp}), by uncovering a connection with the Neyman-Pearson lemma. 
    For bilinear disparity measures,
    {we are able to find the explicit form of Bayes-optimal fair classifiers as group-wise thresholding rules with explicitly characterized thresholds. In particular, this explicit form appears to be novel when a nonzero user-specified disparity level is specified.}
    
    \item \textbf{Extensions:} 
    We illustrate that our approach can be extended to handle multiple fairness constraints (such as equalized odds (\Cref{thm-fb-eq-odd}), which requires ensuring both equality of opportunity and predictive equality), and demographic parity with a multi-class protected attribute (\Cref{thm-opt-dp-multi}).
    The result for equalized odds requires an intricate and lengthy argument, which unravels several fundamental properties of equality of opportunity and predictive equality.
    We  also show how to handle cost-sensitive classification error (Corollary \ref{cor-opt-cost-sen}).
    Further, we derive Bayes-optimal fair classifiers  for the common scenario when the protected attribute cannot be used at the prediction phase (Corollary \ref{cor-opt-dp}).
    
Finally, we give the form of Bayes-optimal fair classifiers 
under the general distributional assumptions that
the features can belong to the decision boundary with nonzero probability; such as for discrete features. 
In this case, the optimal classifiers must be carefully randomized (Theorem \ref{thm-opt-dp-degenerate}).

    \item \textbf{Pareto frontier for fair classification and fairness-accuracy tradeoff:}
We characterize the fair Pareto frontier that represents the optimal trade-off between accuracy and fairness. 
We show that the tradeoff function between misclassification and disparity is convex for linear disparity measures, indicating that the marginal cost of fairness increases as the level of disparity decreases.
    \item \textbf{Theoretically optimal fair classification via pre-, in-, and post-processing:}
We propose pre-, in-, and post-processing algorithms aiming to recover the Bayes-optimal classifiers. 
Specifically we introduce 
fair up-/down-sampling as a pre-processing method (\Cref{fuds}), 
fair cost-sensitive classification as an in-processing method (\Cref{sec:fcsc}), 
and a fair plug-in rule as a post-processing method (\Cref{sec:fpir}). 
Collectively, these methods establish a cohesive methodological framework for fair classification with linear disparity constraints.
    \item \textbf{Empirical evaluation:}
    We evaluate our methods in numerical simulations and experiments on empirical datasets (\Cref{sim}).
{We compare with several existing methods, such as
the disparate impact remover \citep{FFMS2015},
 FAWOS \citep{salazar2021fawos}, adaptive reweighting algorithm \citep{chai2022fairness},
KDE-based constrained optimization \citep{CHS2020},
adversarial training  \citep{ZLM2018}, reductions \citep{agarwal2018reductions}, MinDiff regularization \citep{prost2019toward},
post-processing through flipping \citep{chen2023post}, post-processing through optimal transport \citep{xian2023fair}, and FRAPPÉ \citep{tifrea2023frappe};
on standard and more recent benchmark datasets}.
{We show empirically  that our methods have state-of-the-art performance compared to existing algorithms.
In particular, our pre-processing method can a reach higher accuracy than prior pre-processing methods at low disparity levels.}
Our numerical results can be reproduced with the code provided at \url{https://github.com/XianliZeng/Bayes-Optimal-Fair-Classification}.

\end{enumerate}

This paper significantly extends the results from our previous unpublished manuscript
\citep{zeng2022bayes}, 
introducing the notion of linear disparity measures, handling multiple constraints, and developing pre- and in-processing algorithms. 
Thus, this work supersedes \cite{zeng2022bayes}.

\textbf{Paper organization.} In Section \ref{sec:pre}, we review background concepts and notations for fair classification. 
Section \ref{gnpfc} establishes the connection between the generalized Neyman-Pearson Lemma and  fair Bayes-optimal classifiers. Our main results are presented in Section \ref{fpfsec}, providing explicit forms of fair Bayes-optimal classifiers under linear and bilinear group fairness measures.
We characterize the fair Pareto frontier in \Cref{fro}. 
In Section \ref{sec:alg}, we 
propose methods for estimating fair Bayes-optimal classifiers through pre-, in- and post-processing methods. 
Simulation studies and empirical data analysis in Section \ref{sim} assess the finite sample performance of the proposed methods. 
We provide concluding remarks in Section \ref{sec:dis}. 
All proofs are in the supplementary material. 

\section{Classification with a Protected Attribute}\label{Pre_and _Not}
In fair classification problems, two types of feature are observed: the usual feature $X\in\X$ for some feature space $\X$, and the protected (or, protected) feature\footnote{We consider a binary protected attribute  here and extend our results to multi-class protected attributes in \Cref{sec:ex}.} 
$A\in\A=\{0,1\}$
with respect to which we aim to be fair. 
Here, we consider a binary classification problem with labels in $\mathcal{Y}=\{0,1\}$. 
For example, in a credit lending setting, 
$X$ may refer to common features such as 
 education level and income, $A$ may contain the race or gender of the individual and $Y$ may correspond to the status of repayment or defaulting on a loan.

{\bf Notation and Conventions.} 
For all $a\in \A$,  $x\in \mX$ and $y\in\mathcal{Y}$,
we denote $p_{a}:=\P(A=a)$; $p_{a,y}:=\P(A=a,Y=y)$; $\eta_{a}(x):=\P(Y=1\mid A=a,X=x)$. 
Further, for all $a\in \A$ and $y\in\mathcal{Y}$, we denote by $\P_X(\cdot)$, $\P_{X\mid A=a}(\cdot)$ and $\P_{X\mid A=a,Y=y}(\cdot)$ the marginal distribution function of $X$, the conditional distribution function of $X$ given $A=a$, and the conditional distribution of $X$ given $A=a,Y=y$, respectively. 

For two scalars $a,b$, we denote
their maximum by 
$\max\{a,b\}$ or
$a\vee b$, 
and 
their minimum by 
$\min\{a,b\}$ or
$a\wedge b$.
All quantities considered will be measurable with respect to appropriate sigma-algebras; and measurability may not always be mentioned in what follows.

\subsection{Preliminaries}\label{sec:pre}

A \emph{randomized classifier} outputs a prediction $\widehat{Y}\in\{0,1\}$ with a certain probability based on the usual features $X$ and the protected attribute $A$.
Let $\Bern(p)$ be the Bernoulli distribution with success probability $p \in [0,1]$, 
and let $\mF$ be the set of measurable functions
  $f:\X\times\mA \to [0,1]$.
    We denote by $\widehat{Y}_f=\widehat{Y}_f(x,a) \in \{0,1\}$ the prediction induced by the classifier $f$, which can be a random variable.
\begin{definition}[Randomized Classifier]
  A randomized classifier
  $f \in \mF$ gives, for any $x\in \mX$ and $a\in\mA$, the probability $f(x,a)$
  of predicting $\widehat{Y}_f=1$ when observing $X=x$ and $A=a$, i.e., 
  $\widehat{Y}_f\mid X=x,A=a\sim \Bern(f(x,a))$. 
 \end{definition}

Without a fairness constraint, a \emph{Bayes-optimal classifier} minimizes the misclassification rate, 
and is defined as
any randomized classifier $f^\star$
satisfying
$f^\star\in\underset{f\in\mF}{\text{argmin}}\, \P(Y\neq \widehat{Y}_f)$. 
The following classical result characterizes Bayes-optimal classifiers in terms of the class-conditional probability functions $\eta_a$ for which\footnote{We will sometimes write $\eta^Y_{A=a}:=\eta_a$ for $a\in \mathcal{A}$.}
$\eta_{a}(x)=\P(Y=1\mid A=a,X=x)$ for all $x,a$ 
\citep[see e.g.,][etc]{DevroyeGL96}.
We denote by $I(\cdot)$ the indicator function, which equals unity if its argument is true, and zero otherwise.
\begin{proposition}\label{prop:ba-op} 
All Bayes-optimal classifiers $f^\star\in\mF$ have the form
$f^\star(x,a)=I\left(\eta_a(x)>1/2\right)+\tau(x,a)I\left(\eta_a(x)=1/2\right),$
for all $(x,a)\in \X\times \{0,1\}$, where $\tau: \X\times\A\to[0,1]$ is any measurable function. 
\end{proposition}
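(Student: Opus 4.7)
The plan is to reduce the problem to a pointwise optimization by conditioning on $(X,A)$. First, I would formalize the randomization of $\widehat{Y}_f$ via an auxiliary uniform variable $U$ independent of $(X,A,Y)$, setting $\widehat{Y}_f = I(U \le f(X,A))$, so that conditionally on $(X,A)$ the prediction $\widehat{Y}_f$ is independent of $Y$. Then, by the tower property,
\begin{equation*}
\P(Y \neq \widehat{Y}_f) = \E\bigl[\P(Y \neq \widehat{Y}_f \mid X,A)\bigr],
\end{equation*}
and a direct computation gives the pointwise conditional error
\begin{equation*}
\P(Y \neq \widehat{Y}_f \mid X=x, A=a) = \eta_a(x) + f(x,a)\bigl(1 - 2\eta_a(x)\bigr).
\end{equation*}

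Second, I would minimize this affine function of $f(x,a)\in[0,1]$ pointwise in $(x,a)$. The coefficient $1 - 2\eta_a(x)$ is negative, positive, or zero according as $\eta_a(x)$ exceeds, falls below, or equals $1/2$, so the minimizers are exactly $f(x,a)=1$, $f(x,a)=0$, and any value in $[0,1]$ in the three cases respectively. This is precisely the announced form, and since $\eta_a$ is measurable, the candidate classifier $f^\star(x,a)=I(\eta_a(x)>1/2)+\tau(x,a)I(\eta_a(x)=1/2)$ is a measurable element of $\mF$ for any measurable $\tau$.

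Third, for the ``only if'' direction I would argue that any classifier not matching this form on a set of positive probability (relative to the law of $(X,A)$) where $\eta_a(x)\neq 1/2$ would strictly increase the expected error, by the strict pointwise inequality there. On the level set $\{\eta_a(X)=1/2\}$, the error is invariant under the choice of $f$, explaining why $\tau$ may be \emph{any} measurable $[0,1]$-valued function and yielding the full characterization.

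The main obstacle is essentially conceptual rather than computational: one must carefully set up the randomization so that the conditional independence between $\widehat{Y}_f$ and $Y$ given $(X,A)$ is unambiguous, and then articulate the ``almost everywhere'' nature of the optimality statement so that the characterization covers \emph{all} Bayes-optimal classifiers, not just a representative. Once the pointwise reduction is justified, the remainder is a one-variable linear minimization.
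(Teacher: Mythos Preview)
Your proposal is correct and follows the standard argument. The paper does not supply its own proof of this proposition (it cites it as classical), but the pointwise decomposition you derive,
\[
R(f)=\E\bigl[\eta_A(X)+f(X,A)(1-2\eta_A(X))\bigr],
\]
is exactly what the paper records as Lemma~\ref{lem:misclassification} and uses as a building block elsewhere; from there the pointwise linear minimization and the almost-everywhere argument for the converse are the usual steps, just as you outline.
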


While Bayes-optimal classifiers are theoretically the best method, 
they depend on the usually unknown
the class-conditional probability functions $\eta_a$
of the population. 
However, Proposition \ref{prop:ba-op}
is still useful, because it 
suggests an approach to classification, by estimating the functions $\eta_a$, for all $a$, and defining the classifier in terms of the level sets of the estimates, see e.g., \cite{tsy2007}.

The Bayes-optimal classifier does not take fairness into account.
To mitigate unfairness, a number of notions of parity have been considered, and we list
below the ones we consider in this paper. 

  \begin{definition}[Demographic Parity \citep{CKP2009}]
  A classifier $f$ satisfies demographic parity if its prediction $\widehat{Y}_f$
   is probabilistically independent of the protected attribute $A$, so that
   $\P_{X\mid A=1}\lsb \widehat{Y}_f  = 1\rsb  =\P_{X\mid A=0}\lsb \widehat{Y}_f  = 1\rsb .$
  \end{definition}

  \begin{definition}[Equality of Opportunity \citep{HPS2016}]
  A classifier $f$ satisfies equality of opportunity if it achieves the same true positive rate  among  protected groups:
   $\P_{X\mid A=1,Y=1}\lsb\widehat{Y}_f  = 1 \rsb  =\P_{X\mid A=0,Y=1}\lsb\widehat{Y}_f  = 1 \rsb .$
  \end{definition}

  \begin{definition}[Predictive Equality \citep{CSFG2017}] 
  A classifier $f$ satisfies predictive equality   if it achieves the same false positive rate among protected groups:
   $\P_{X\mid A=1,Y=0}\lsb\widehat{Y}_f  = 1 \rsb  =\P_{X\mid A=0,Y=0}\lsb\widehat{Y}_f  = 1 \rsb .$
  \end{definition}

  \begin{definition}[Equalized Odds \citep{HPS2016}]\label{eod}
  A classifier $f$ satisfies equalized odds  if it satisfies  both equality of opportunity and predictive equality, achieving the same true positive rate and false positive rate among protected groups:
   $\P_{X\mid A=1,Y=1}\lsb\widehat{Y}_f  = 1 \rsb  =\P_{X\mid A=0,Y=1}\lsb\widehat{Y}_f  = 1 \rsb,$ and    $\P_{X\mid A=1,Y=0}\lsb\widehat{Y}_f  = 1 \rsb  =\P_{X\mid A=0,Y=0}\lsb\widehat{Y}_f  = 1 \rsb .$

  \end{definition}
    
  
  In applications, perfect fairness may require a large sacrifice of accuracy, 
 and a ``limited'' disparate impact could be preferred. 
 Following for instance \cite{CHS2020}, 
 we use the difference in the quantities
 that are equalized under perfect parity
 to measure disparity: 

\begin{definition}[Disparity Measures]
    We consider the following disparity measures:
    Demographic Disparity $\textup{(DD)}$,
    Disparity of Opportunity $\textup{(DO)}$, 
Predictive Disparity 
 $\textup{(PD)}$,
defined for probabilistic classifiers $f$ as follows:
\begin{align}\label{eq:disparity level}
      \textup{DD}(f)&= \P_{X\mid A = 1}(\widehat{Y}_f = 1) -\P_{X\mid A = 0}\lsb \widehat{Y}_f  = 1\rsb ;\nonumber\\
  \textup{DO}(f)&= \P_{X\mid A = 1,Y = 1}(\widehat{Y}_f = 1) -\P_{X\mid A = 0,Y=1}\lsb \widehat{Y}_f  = 1\rsb ;\\
   \textup{PD}(f)&= \P_{X\mid A = 1,Y=0}\lsb \widehat{Y}_f  = 1\rsb  -\P_{X\mid A = 0,Y=0}\lsb \widehat{Y}_f  = 1\rsb .\nonumber 
\end{align}
\end{definition} 
Since equalized odds consists of two constraints, we will later use the maximum of the absolute values of the corresponding disparities as the measure of unfairness.

 In the rest of this paper, we will use $\textup{Dis}:\mF\to [0,1]$ to refer to a 
 generic disparity measure, 
 such as any of the three measures 
 from \eqref{eq:disparity level}.  Taking $K\ge1$ disparity measures $\textup{Dis}_{k}:\mF\to[0,1], k=1,...,K$ into account, we say a classifier $f$ satisfies $\delta$-disparity if  $\max_{k=1}^K|\textup{Dis}_k(f)|\le \delta$.
We next define the most accurate---equivalently, least inaccurate---classifiers that satisfy $\delta$-disparity.
\begin{definition}[Fair Bayes-optimal Classifier]\label{fbo}
Consider 
 any $K\ge 1$ 
fairness measures $\textup{Dis}_k:\mF\to [0,1]$, $k=1,...,K$. 
Then, a $\delta$-fair Bayes-optimal classifier $f_{\textup{Dis},\delta}^\star$ is defined 
by minimizing the misclassification error 
$R(f) :=  \P\lsb Y\neq \widehat{Y}_f\rsb$
over   all classifiers that satisfy  $\delta$-disparity: 
$$f_{\textup{Dis},\delta}^\star\in {\argmin}_{f\in\mF} \lbb R(f): \max_{k=1}^K|\textup{Dis}_k(f)|\le \delta\rbb.$$
 \end{definition}
 
Just like for the unconstrained case,
fair Bayes-optimal classifiers are theoretically the best method, but they do not directly lead to a feasible method. 
To be able to use classifiers inspired by fair Bayes-optimal ones, 
it would be helpful to know if there is a characterization similar to that from the unconstrained case in Proposition \ref{prop:ba-op}. We now turn to establishing this, by making a connection with the Neyman-Pearson lemma.

\section{Generalized Neyman-Pearson Lemma and Fair Classification}
\label{gnpfc}

\subsection{Generalized Neyman-Pearson Lemma }
To derive Bayes-optimal classifiers under group fairness, we establish a connection with the 
Neyman-Pearson lemma \citep{NP1933}; 
a theoretical result originally designed to derive most powerful tests with a given type I error.
We use a slightly generalized version, which applies to optimization with linear constraints. 
For completeness and the reader's convenience, we present the full statement here; as this lemma is fundamental to our results.

\begin{lemma}[Generalized Neyman-Pearson Lemma \citep{lehmann2005testing, Shao2003}]\label{NP_lemma}
Let $\phi_0,\phi_1, ..., \phi_{m}$ be $m+1$ real-valued functions defined on a Euclidean space $\X$. Assume they are $\nu$-integrable for a $\sigma$-finite measure $\nu$.
 Let $f^\star \in \mF$ be any function of the form
\begin{align}
   \begin{split}
  f^\star(x)=\left\{\begin{array}{lcc}
   1,&& \phi_0(x)>\sum_{i=1}^mc_i\phi_i(x);\\
  \tau(x)&& \phi_0(x)=\sum_{i=1}^mc_i\phi_i(x);\\
   0,&& \phi_0(x)<\sum_{i=1}^mc_i\phi_i(x),
   \end{array}\right.
 \end{split}
 \end{align}
 where $0\le\tau(x)\le 1$ for all $x\in \X$.
 For given constants $t_1, ...,t_m \in \mathbb{R}$, 
 let $\mathcal{F}_{\le}$ be the class of measurable functions $f: \X\to \mathbb{R}$  satisfying
 \begin{equation}\label{con1}
 \int_{\X} f\phi_i d\nu\le t_i,\ i\in\{1,2,...,m\}.
 \end{equation}
 and $\mathcal{F}_{=}$ be the set of functions in $\mathcal{F}_{\le}$ satisfying \eqref{con1}  with all inequalities
 replaced by equalities. 
 \begin{itemize}[]
     \item (1) If $f^\star\in\mathcal{F}_{=}$, then
 \begin{equation}\label{NPLEM1}
f^\star \in \underset{f\in\mathcal{F}_{=}}{\argmax}\int_{\X}f\phi_0d\nu.
 \end{equation}
 Moreover, if $\nu(\{x:  \phi_0(x)=\sum_{i=1}^mc_i\phi_i(x)\})=0$,  
 for all $f'\in\underset{f\in\mathcal{F}_{=}}{\argmax}\int_{\X}f\phi_0d\nu$, $f'=f^\star$ almost everywhere with respect to $\nu$.
 \item  (2) Moreover, if $c_i\ge 0$ for all $i=1,\ldots,m   $, then
  \begin{equation}\label{NPLEM2}
f^\star \in  \underset{f\in\mathcal{F}_{\le}}{\argmax}\int_{\X} f\phi_0 d\nu. \end{equation}
 Moreover,  if $\nu(\{x:  \phi_0(x)=\sum_{i=1}^mc_i\phi_i(x)\})=0$,   for all $f'\in\underset{f\in\mathcal{F}_{\le}}{\argmax}\int_{\X}f\phi_0d\nu$, we have $f'(x)=f^\star(x)$ almost everywhere with respect to $\nu$.
 \end{itemize}

\end{lemma}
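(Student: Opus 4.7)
The plan is to prove both parts by the standard pointwise-sign trick that underlies all Neyman--Pearson arguments: show that for any competitor $f$, the difference $\int (f^\star-f)\phi_0\,d\nu$ can be rewritten as an integral of a non-negative integrand. The constants $c_1,\ldots,c_m$ play the role of Lagrange multipliers that ``lift'' the linear constraints out of the objective, after which the sign of $(f^\star-f)\bigl(\phi_0-\sum_i c_i\phi_i\bigr)$ can be checked on each of the three regions defining $f^\star$.

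More concretely, for part (1), I would pick an arbitrary $f\in\mathcal{F}_{=}$ and write
\[
\int_{\X}(f^\star-f)\phi_0\,d\nu
=\int_{\X}(f^\star-f)\Bigl(\phi_0-\sum_{i=1}^m c_i\phi_i\Bigr)d\nu+\sum_{i=1}^m c_i\int_{\X}(f^\star-f)\phi_i\,d\nu.
\]
Because both $f^\star$ and $f$ lie in $\mathcal{F}_{=}$, each term $\int(f^\star-f)\phi_i\,d\nu$ equals $t_i-t_i=0$, so the second sum vanishes. For the first integral, I would do a pointwise case analysis: on $\{\phi_0>\sum_i c_i\phi_i\}$ we have $f^\star=1\ge f$; on $\{\phi_0<\sum_i c_i\phi_i\}$ we have $f^\star=0\le f$; and on $\{\phi_0=\sum_i c_i\phi_i\}$ the factor $\phi_0-\sum_i c_i\phi_i$ itself vanishes. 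In every case the integrand is non-negative, proving $\int f^\star\phi_0\,d\nu\ge\int f\phi_0\,d\nu$.

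For part (2), the same decomposition applies to $f\in\mathcal{F}_{\le}$, but now $\int f\phi_i\,d\nu\le t_i$ while $\int f^\star\phi_i\,d\nu=t_i$, so each term $\int(f^\star-f)\phi_i\,d\nu=t_i-\int f\phi_i\,d\nu\ge 0$. Combined with the hypothesis $c_i\ge 0$, the Lagrangian term $\sum_i c_i\int(f^\star-f)\phi_i\,d\nu$ is also non-negative, and together with the same pointwise bound on the first integral this yields optimality. The non-negativity of the $c_i$ is precisely what is needed to convert an inequality constraint into a favorable sign, which is the only substantive difference between the two parts.

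For the uniqueness assertion in either part, I would observe that when $\nu(\{\phi_0=\sum_i c_i\phi_i\})=0$, the integrand $(f^\star-f)(\phi_0-\sum_i c_i\phi_i)$ is strictly positive on $\{f^\star\ne f\}\cap\{\phi_0\ne\sum_i c_i\phi_i\}$. Thus if $f'$ is another optimizer, the chain of inequalities above must be equalities, forcing $f'=f^\star$ $\nu$-a.e. on the set where $\phi_0\ne\sum_i c_i\phi_i$; the exceptional set has $\nu$-measure zero by assumption, so $f'=f^\star$ $\nu$-a.e. on $\X$. I do not expect any real obstacle here: the whole argument is a single pointwise inequality dressed up in integral notation, and the only mild care needed is in bookkeeping the sign of the Lagrangian term in part (2) and in identifying the measure-zero ambiguity set for uniqueness.
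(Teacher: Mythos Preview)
Your proposal is correct and is the standard textbook argument for the Generalized Neyman--Pearson Lemma. Note, however, that the paper does not supply its own proof of this lemma: it is stated with citations to \cite{lehmann2005testing} and \cite{Shao2003} and used as a black box throughout, so there is no ``paper's proof'' to compare against. Your decomposition
\[
\int_{\X}(f^\star-f)\phi_0\,d\nu
=\int_{\X}(f^\star-f)\Bigl(\phi_0-\sum_{i=1}^m c_i\phi_i\Bigr)d\nu+\sum_{i=1}^m c_i\int_{\X}(f^\star-f)\phi_i\,d\nu,
\]
together with the pointwise sign check on the three regions and the observation that the Lagrangian term is zero (part~(1)) or non-negative (part~(2), using $c_i\ge 0$ and $f^\star\in\mathcal{F}_{=}$), is exactly the classical proof in those references. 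The uniqueness step is also handled correctly: when the boundary set has $\nu$-measure zero and $f'$ is another maximizer, the sum of two non-negative terms equals zero, so in particular the first integral vanishes, forcing $f'=f^\star$ $\nu$-a.e.\ off a null set.
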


\subsection{Fairness Measures with Linear  and Bilinear Constraints}
To characterize Bayes-optimal fair classifiers,
we want to find classifiers with the highest accuracy given a disparity level. 
Recalling the class-conditional regression functions  such that for $a\in \A$ and  $x\in \mX$,
$\eta_{a}(x)=\P(Y=1\mid A=a,X=x)$,
the misclassification rate can be expressed as a linear functional of the classifier $f$ {(see Lemma \ref{lem:misclassification} for the argument)}, via
\begin{equation*}
 R(f)
=\int_{\A}\int_{\X}f(x,a) (1-2\eta_a(x))d\P_{X,A}(x,a)+ C_{\P}, 
\end{equation*}
with $C_\P=\int_{\A}\int_{\X} \eta_a(x)d\P_{X,A}(x,a)$. 
As $C_\P$ is independent of $f$, 
{maximizing accuracy over $f$ is equivalent to maximizing $\int_{\A}\int_{\X}f(x,a) (2\eta_a(x)-1)d\P_{X,A}(x,a)$}.
Therefore, 
a $\delta$-fair Bayes-optimal classifier for a disparity measure Dis  can be equivalently defined as
\begin{equation}\label{eq:exp acc} 
f_{\textup{Dis},\delta}^\star\in \underset{f\in\mF}{\argmax}\,\lbb \int_{\A}\int_{\X}f(x,a) (2\eta_a(x)-1)d\P_{X,A}(x,a):   |\textup{Dis}(f)|\le\delta\rbb.
\end{equation}
{Hence, the objective is linear in $f$.}
Specifically, taking $\nu=\P_{X,A}$
and 
$\phi_0(x,a)=2\eta_a(x)-1$ for all $x,a$
in the objective from \eqref{NPLEM2} in  the Neyman-Peason lemma,
we find that it reduces to the objective in \eqref{eq:exp acc}, 
where the constraint set $\mF_{\le}$ from \eqref{NP_lemma} remains to be specified. 

As a result, the  fair Bayes-optimal classifiers
can be characterized by the generalized Neyman-Pearson lemma, \emph{if the constraints are linear in the classifiers}. 
Motivated by this observation, 
we introduce the notion of 
\emph{linear disparity measures}.


\begin{definition}[Linear Disparity Measure]\label{ldm}
We call a disparity measure
$\textup{Dis}:\mF\to [0,1]$ \emph{linear} if
 for all $\P$,
there is a weighting function $ w_{\textup{Dis},\P}:\X\times \A\to \R$  such that for all $f\in \mF$,
\begin{equation}\label{eq:exp dis}
   \textup{Dis}(f)= 
   \int_{\A}\int_{\X}f(x,a)   w_{\textup{Dis},\P}(x,a) d\P_{X,A}(x,a).
\end{equation}
\end{definition}

Furthermore, we call a linear disparity measure \emph{bilinear} 
if its weighting function is linear in 
the group-wise regression functions
$\eta_a$:

\begin{definition}[Bilinear Disparity Measure]\label{bldm}
A linear disparity measure
$\textup{Dis}:\mF\to [0,1]$ is  \emph{bilinear} if
 for all $\P$,
there is $s_{\textup{Dis},\P,a}$ and $b_{\textup{Dis},\P,a}$ depending on $a\in\mA$  such that  for all $x\in \mX$, 
$ w_{\textup{Dis},\P}(x,a)= s_{\textup{Dis},\P,a} \eta_a(x) +b_{\textup{Dis},\P,a}$.
\end{definition}

Hereafter, we will omit the subscript $\P$ in our notation for simplicity.
We next show that the disparity measures from \eqref{eq:disparity level} are bilinear, with specific weighting functions.
\begin{proposition}[Classical Disparity Measures are Bilinear]\label{prop:expression of dl}
    The disparity measures  $\textup{DD}$, $\textup{DO}$, and $\textup{PD}$ from \eqref{eq:disparity level} are bilinear with weighting functions defined for all $x,a$ by
\begin{equation}\label{eq:disparity level expression}
   w_\textup{DD}(x,a)= \frac{(2a-1)}{p_{a}}; \quad    w_\textup{DO}(x,a)=\frac{(2a-1)\eta_a(x)}{p_{a,1}}; \quad 
   w_\textup{PD}(x,a)= \frac{(2a-1)(1-\eta_a(x))}{p_{a,0}}
\end{equation}
where $p_a = \P(A=a)$ and $p_{a,y} = \P(A=a,Y=y)$ for $a\in\{0,1\}$ and $y\in\{0,1\}$. 
Thus, we have, for instance
$s_{\textup{DD},a} = 0$ and $b_{\textup{DD},a}= (2a-1)/{p_{a}}$ for all $a$;
 the quantities $s_{\textup{Dis},a}$ and $b_{\textup{Dis},\P,a}$ for the other disparity measures can be similarly read off from \eqref{eq:disparity level expression}.
\end{proposition}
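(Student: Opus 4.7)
The proof is essentially a direct computation: for each of the three disparity measures I rewrite it as a single integral against $d\P_{X,A}$, identify the weighting function $w_{\textup{Dis}}$, and then verify that this function is affine in $\eta_a(x)$ (for each fixed $a$) so that bilinearity holds with the stated coefficients.

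\textbf{Rewriting as integrals.} For any group $a\in\{0,1\}$ and any $f\in\mF$, since $\widehat Y_f\mid X=x,A=a\sim\Bern(f(x,a))$, the tower rule gives
\begin{equation*}
\P_{X\mid A=a}(\widehat Y_f = 1) \;=\; \E[f(X,A)\mid A=a] \;=\; \frac{1}{p_a}\int_{\A}\int_\X f(x,a')\,I(a'=a)\,d\P_{X,A}(x,a').
\end{equation*}
Taking the difference between $a=1$ and $a=0$, the two indicators combine into the single coefficient $(2a'-1)/p_{a'}$ (since $I(a'=1)/p_1 - I(a'=0)/p_0$ equals $1/p_1$ at $a'=1$ and $-1/p_0$ at $a'=0$), yielding the expression for $w_\textup{DD}$ in \eqref{eq:disparity level expression}. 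For DO and PD I first apply the tower rule conditional on $(X,A)$ before conditioning on $(A,Y)$: for DO,
\begin{equation*}
\P_{X\mid A=a,Y=1}(\widehat Y_f=1)=\frac{1}{p_{a,1}}\E[f(X,A)\,I(A=a,Y=1)]=\frac{1}{p_{a,1}}\int_\A\int_\X f(x,a')\,I(a'=a)\,\eta_{a'}(x)\,d\P_{X,A}(x,a'),
\end{equation*}
using $\P(Y=1\mid X=x,A=a')=\eta_{a'}(x)$; the PD case is identical except that $\eta_{a'}(x)$ is replaced by $1-\eta_{a'}(x)$ and $p_{a,1}$ by $p_{a,0}$. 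Subtracting $a=0$ from $a=1$ in each case produces the factor $(2a'-1)$ exactly as above, giving the formulas for $w_\textup{DO}$ and $w_\textup{PD}$.

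\textbf{Verifying bilinearity.} Once the weighting functions are in hand, reading off the coefficients is immediate: $w_\textup{DD}(x,a)$ has no $\eta$-dependence, so $s_{\textup{DD},a}=0$ and $b_{\textup{DD},a}=(2a-1)/p_a$; $w_\textup{DO}(x,a)$ is linear in $\eta_a(x)$ with $s_{\textup{DO},a}=(2a-1)/p_{a,1}$ and $b_{\textup{DO},a}=0$; and expanding $w_\textup{PD}(x,a)=(2a-1)/p_{a,0}-(2a-1)\eta_a(x)/p_{a,0}$ gives $s_{\textup{PD},a}=-(2a-1)/p_{a,0}$ and $b_{\textup{PD},a}=(2a-1)/p_{a,0}$. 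Each is affine in $\eta_a(x)$ for each fixed $a$, so Definition \ref{bldm} is satisfied.

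\textbf{What to watch out for.} There is no genuine obstacle; the only care required is bookkeeping of which variable is being integrated against which measure. The crucial move in every case is the same: express the group-conditional probability as an integral against the joint law via $d\P_{X,A\mid A=a,Y=y}=I(A=a)\,\eta_a(X)^y(1-\eta_a(X))^{1-y}\,d\P_{X,A}/p_{a,y}$, which turns each conditional probability into an integral against $d\P_{X,A}$ with an explicit integrand, and then to recognize that the difference of the two indicator terms $I(a'=1)/c_1 - I(a'=0)/c_0$ collapses to $(2a'-1)/c_{a'}$. Measurability of the weighting functions is inherited from that of $\eta_a$.
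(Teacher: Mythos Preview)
Your proposal is correct and takes essentially the same approach as the paper: both reduce each conditional probability $\P(\widehat Y_f=1\mid A=a,Y=y)$ to an integral of $f$ against $d\P_{X,A}$ with an explicit density factor, then subtract to obtain the $(2a-1)$ sign. The only cosmetic difference is that the paper first records the Radon--Nikodym derivatives $d\P_{X\mid A=a,Y=y}/d\P_{X\mid A=a}=p_a\eta_a(x)^y(1-\eta_a(x))^{1-y}/p_{a,y}$ via Bayes' theorem and then changes measure, whereas you achieve the same in one step via the tower rule $\E[f(X,A)I(A=a,Y=y)]=\E[f(X,A)I(A=a)\eta_A(X)^y(1-\eta_A(X))^{1-y}]$.
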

Some of our results apply to linear disparity measures, and some to bilinear ones; thus we use both definitions.
We will use \eqref{eq:exp dis} and Proposition \ref{prop:expression of dl} 
to develop a unified framework for deriving fair Bayes-optimal classifiers. 
For ease of exposition, we discuss a binary protected attribute and
a single fairness constraint in the main text. 
However, 
our theory can also handle other scenarios,
such as multiple fairness constraints and in particular a multi-class protected attribute.
We briefly discuss these extensions in \Cref{sec:ex}, and provide details in \Cref{ext}.

\section{Fair Bayes-Optimal Classifiers and the Fair Pareto Frontier}
\label{fpfsec}
\label{sec:Bayes}
\subsection{Form of Fair Bayes-Optimal Classifiers}
In this section, we derive the explicit form of fair Bayes-optimal classifiers based on our theoretical framework. 
For simplicity, we present the results for a single linear disparity measure as per Definition \ref{ldm}, and discuss cases with multiple constraints in \Cref{ext}. 
Furthermore, 
in the main text, we consider the setting where
both random variables $\eta_a(X)$ and $ w_{\textup{Dis}}(X,a)$, for $a\in\{0,1\}$, have probability density functions over  
$\X$. 
This implies that the boundary sets of randomized classifiers are of measure zero, allowing us to only consider deterministic classifiers. 
We provide the---more involved---results applicable to the general case without this assumption in \Cref{gd}. 

{To build intuition for the explicit form of the $\delta$-fair Bayes optimal classifier, consider a linear disparity measure \( \text{Dis} \). 
For expositional simplicity, consider finding the Bayes-optimal classifier with respect to the one-sided constraint $\text{Dis}\le \delta$; while later we will work with the two-sided constraint.
Let \( \phi_0(x,a) = 2\eta_a(x) - 1 \) and \( \phi_1(x,a) = w_{\text{Dis}}(x,a) \) for all \( x \) and \( a \) in equations ~\eqref{con1} and ~\eqref{NPLEM1}, respectively specify the accuracy and (one-sided) disparity constraint. 
The generalized Neyman-Pearson lemma, together with the implicit characterization of the \( \delta \)-fair Bayes-optimal classifier in~\eqref{eq:exp acc} and the form of the linear disparity constraints in~\eqref{eq:exp dis}, suggest that a fair Bayes-optimal classifier takes the deterministic form:
\begin{equation}\label{eq:partos}
    f_{\textup{Dis},t}(x,a) = I\lsb\eta_a(x) > \frac{1}{2} + \frac{t}{2} \, w_{\textup{Dis}}(x,a)\rsb,
\end{equation}
for all \( x \in \mathcal{X} \), \( a \in \{0,1\} \), and some \( t \in \mathbb{R} \).}

To make this derivation precise, it will be helpful
to define a disparity function ${D}_{\textup{Dis}}:\R\to [-1,1]$ measuring the disparity level of $f_{\textup{Dis},t}$ as a function of $t$, such that for all $t\in \R$:
\begin{align}\label{eq:dt}
\nonumber{D}_{\textup{Dis}}(t) :=& \textup{Dis}(f_{\textup{Dis},t}) = \int_{\A} \int_{\X} \lmb  w_{\textup{Dis}}(x,a) \cdot I\lsb
\eta_a(x)>\frac12 + \frac{t}2   w_{\textup{Dis}}(x,a)\rsb\rmb d\P_{X,A}(x,a)\\
=& \sum_{a\in\{0,1\}}p_a\int_{\X} \lmb  w_{\textup{Dis}}(x,a) \cdot I\lsb
\eta_a(x)>\frac12 + \frac{t}2   w_{\textup{Dis}}(x,a)\rsb\rmb d\Pa(x).
\end{align}
The following proposition characterizes how the accuracy and disparity of $f_{\textup{Dis},t}$ depends on $t$, and will be a crucial stepping stone towards precisely characterizing the form of fair Bayes-optimal classifiers.

\begin{proposition}[Properties of Risk and Disparity]\label{prop:monotonicity}
Let $f_{\textup{Dis},t}$ and $D_{\textup{Dis}}$ be defined in \eqref{eq:partos} and \eqref{eq:dt}, respectively. Then,  as a function of $t$, 
\begin{itemize}[]
    \item (1) the disparity $\Dt$ is monotone non-increasing;
    \item (2) the misclassification error $R(f_{\textup{Dis},t})$ is monotone non-increasing on $(-\infty, 0)$ and monotone non-decreasing on $[0,\infty)$. 
  \end{itemize}
 \end{proposition}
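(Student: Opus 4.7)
My plan is to establish both parts by pointwise comparison of the integrands defining $D_{\textup{Dis}}(t)$ and $R(f_{\textup{Dis},t})$, then integrating against $\P_{X,A}$. The unifying observation is that for each fixed $(x,a)$, the threshold $\tfrac{1}{2} + \tfrac{t}{2}\, w_{\textup{Dis}}(x,a)$ is monotone in $t$---increasing when $w_{\textup{Dis}}(x,a) > 0$, decreasing when $w_{\textup{Dis}}(x,a) < 0$, and constantly $\tfrac{1}{2}$ when $w_{\textup{Dis}}(x,a) = 0$---so the indicator $I[\eta_a(x) > \tfrac{1}{2} + \tfrac{t}{2}\, w_{\textup{Dis}}(x,a)]$ is monotone in $t$ on each sign region of $w_{\textup{Dis}}$.

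For part (1), the integrand in \eqref{eq:dt} factors as $w_{\textup{Dis}}(x,a) \cdot I[\eta_a(x) > \tfrac{1}{2} + \tfrac{t}{2}\, w_{\textup{Dis}}(x,a)]$. A three-way sign-case analysis shows the pointwise product is non-increasing in $t$: on $\{w_{\textup{Dis}} > 0\}$ the indicator is non-increasing in $t$ and is multiplied by a positive weight; on $\{w_{\textup{Dis}} < 0\}$ the indicator is non-decreasing and is multiplied by a negative weight; on $\{w_{\textup{Dis}} = 0\}$ the integrand vanishes. Integrating preserves the pointwise monotonicity, yielding (1).

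For part (2), I would use the representation $R(f_{\textup{Dis},t}) = \int I[\eta_a(x) > \tfrac{1}{2} + \tfrac{t}{2}\, w_{\textup{Dis}}(x,a)]\,(1 - 2\eta_a(x))\, d\P_{X,A}(x,a) + C_{\P}$ and compare the integrands at $t_1 < t_2 \leq 0$. The two indicators differ precisely where $\eta_a(x)$ is trapped between the two thresholds. On $\{w_{\textup{Dis}} > 0\}$ this traps $\eta_a(x) \leq \tfrac{1}{2} + \tfrac{t_2}{2}\, w_{\textup{Dis}}(x,a) \leq \tfrac{1}{2}$ since $t_2 \leq 0$, so $1 - 2\eta_a(x) \geq 0$ while the indicator drops from $1$ at $t_1$ to $0$ at $t_2$. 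On $\{w_{\textup{Dis}} < 0\}$ the thresholds exceed $\tfrac{1}{2}$, forcing $\eta_a(x) > \tfrac{1}{2}$ so $1 - 2\eta_a(x) < 0$ while the indicator now rises from $0$ to $1$. In both regions the pointwise contribution to $R(f_{\textup{Dis},t_1}) - R(f_{\textup{Dis},t_2})$ is non-negative, proving $R$ is non-increasing on $(-\infty, 0]$. An entirely symmetric case analysis with $0 \leq t_1 < t_2$ yields non-decreasing behaviour on $[0, \infty)$.

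I do not anticipate a substantive obstacle; the argument is a careful bookkeeping over four cases (two signs of $w_{\textup{Dis}}$ times two ranges of $t$), and the conceptual content is simply that $f_{\textup{Dis},0}$ coincides with the unconstrained Bayes-optimal classifier from Proposition \ref{prop:ba-op}, so shifting the threshold away from $\tfrac{1}{2}$ in either direction can only increase misclassification.
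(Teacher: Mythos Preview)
Your proposal is correct and follows essentially the same approach as the paper's proof: a sign-case decomposition over $\{w_{\textup{Dis}}>0\}$, $\{w_{\textup{Dis}}<0\}$, $\{w_{\textup{Dis}}=0\}$, followed by pointwise comparison of the indicator differences and integration. The paper phrases the key bound in part (2) as $(1-2\eta_a(x)) \geq -t_2\, w_{\textup{Dis}}(x,a) \geq 0$ on $G_{a,+}$ (and the analogous chain on $G_{a,-}$), whereas you go directly to $\eta_a(x) \leq \tfrac12$ (resp.\ $>\tfrac12$); these are the same observation with a slightly different intermediate step, and your version is arguably a touch cleaner.
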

Now, consider finding a classifier $f_{\textup{Dis},t}$ that meets fairness constraints while minimizing 
the misclassification error.
By Proposition \ref{prop:monotonicity}, 
it is enough to minimize $|t|$; indeed this can be seen considering the cases of $|{D}_{\textup{Dis}}(0)|\le \delta$ (in which case the optimal $t=0$), ${D}_{\textup{Dis}}(0)<- \delta$ (in which case the optimal $t$ is negative), and ${D}_{\textup{Dis}}(0)> \delta$ (in which case the optimal $t$ is positive). 
This motivates us
 to define the function $t_{\mathrm{Dis}}:[0,\infty)\to \R$ as an ``inverse function" of $|{D}_{\textup{Dis}}(t)|$ such that for all $\delta\ge 0$,
 \begin{equation}\label{eq:td}
\td = \argmin_t \{|t|: |{D}_{\textup{Dis}}(t)|\le \delta\}.
\end{equation}
With these preparations, we can provide 
the form of fair Bayes-optimal classifiers for linear disparity measures.

\begin{theorem}[Form of Fair Bayes-optimal Classifiers for Linear Disparity Measures]\label{thm-opt-dp}
For a given linear disparity measure $\textup{Dis}$ as per Definition \ref{ldm},
suppose that for $a\in\{0,1\}$, both $\eta_a(X)$ and $ w_{\mathrm{Dis}}(X,a)$ have probability density functions on $\mathcal{X}$.
Recalling $f_{\textup{Dis},t}$ from \eqref{eq:partos} and $t_{\mathrm{Dis}}:[0,\infty)\to \R$ from \eqref{eq:td},
for any $\delta\ge 0$,
$f_{\mathrm{Dis}, \td}$ \emph{is a $\delta$-fair Bayes-optimal classifier} as per Definition \ref{fbo}.
This Bayes-optimal classifier $f^\star_{\textup{Dis},\delta}$ of the form
\begin{align}\label{eq:fblinear}
\begin{split}
&f^\star_{\textup{Dis},\delta}(x,a) :=
f_{\textup{Dis},\td}(x,a)=
I\left(\eta_{a}(x)> \frac12 +\frac{t_{\textup{Dis}}(\delta)}2  w_{\textup{Dis}}(x,a)\right),
\end{split}
\end{align}
for all $x,a$.
Furthermore, when the disparity measure is bilinear as per Definition \ref{bldm},
the above $\delta$-fair Bayes-optimal classifier simplifies to 
a  group-wise thresholding rule, such that for all $x,a$,  
\begin{equation}\label{eq:fbbilinear}
f^\star_{\textup{Dis},\delta}(x,a) = I\lsb\eta_a(x)>\frac{1 +  b_{\textup{Dis},a}\cdot\td }{2- s_{\textup{Dis},a} \cdot\td}\rsb. 
\end{equation}
\end{theorem}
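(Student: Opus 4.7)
The plan is to recast the constrained problem \eqref{eq:exp acc} as an application of the generalized Neyman-Pearson Lemma with $\nu = \mathbb{P}_{X,A}$, $\phi_0(x,a) = 2\eta_a(x)-1$, and $\phi_1(x,a) = w_{\textup{Dis}}(x,a)$. Since the accuracy objective from \eqref{eq:exp acc} is $\int f\phi_0\, d\nu$ and the disparity from \eqref{eq:exp dis} is $\int f\phi_1\, d\nu$, the fair Bayes problem has exactly the structure of Lemma \ref{NP_lemma}. The main subtlety is that the fairness constraint $|\textup{Dis}(f)| \le \delta$ is two-sided while the lemma supplies one-sided constraints, so I would dispatch on the sign of the unconstrained disparity $D_{\textup{Dis}}(0)$, using the monotonicity of $D_{\textup{Dis}}$ from Proposition \ref{prop:monotonicity} to identify which side binds.

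If $|D_{\textup{Dis}}(0)| \le \delta$, then $f_{\textup{Dis},0}$ is an unconstrained Bayes-optimal classifier by Proposition \ref{prop:ba-op} and is already feasible, so it is $\delta$-fair Bayes-optimal and matches \eqref{eq:fblinear} with $t_{\textup{Dis}}(\delta) = 0$ per \eqref{eq:td}. If instead $D_{\textup{Dis}}(0) > \delta$, the density assumption on $\eta_a(X)$ and $w_{\textup{Dis}}(X,a)$ makes $D_{\textup{Dis}}$ continuous on $\mathbb{R}$; combined with its monotonicity and the fact that its limit at $+\infty$ is nonpositive, the intermediate value theorem supplies $t^\star > 0$ with $D_{\textup{Dis}}(t^\star) = \delta$, which equals $t_{\textup{Dis}}(\delta)$ by \eqref{eq:td}. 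Applying part (2) of Lemma \ref{NP_lemma} with $m=1$, $c_1 = t^\star \ge 0$, and $t_1 = \delta$ then shows that $f_{\textup{Dis},t^\star}$, which has exactly the NP form $I(\phi_0 > c_1 \phi_1)$, maximizes $\int f\phi_0\, d\nu$ over $\mathcal{F}_{\le} = \{f: \int f\phi_1\, d\nu \le \delta\}$; monotonicity renders the lower half $\int f\phi_1\, d\nu \ge -\delta$ slack at $t^\star$, so $f_{\textup{Dis},t^\star}$ also solves the two-sided problem, and the density hypothesis gives $\nu(\{\phi_0 = c_1\phi_1\}) = 0$, yielding essential uniqueness. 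The case $D_{\textup{Dis}}(0) < -\delta$ is symmetric: replace $\phi_1$ by $-\phi_1$ to obtain $c_1 = -t^\star \ge 0$, recovering \eqref{eq:fblinear} with negative $t_{\textup{Dis}}(\delta)$.

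For the bilinear simplification, I would substitute $w_{\textup{Dis}}(x,a) = s_{\textup{Dis},a}\eta_a(x) + b_{\textup{Dis},a}$ into the decision rule $\eta_a(x) > 1/2 + (t_{\textup{Dis}}(\delta)/2) w_{\textup{Dis}}(x,a)$, collect terms to obtain $(2 - s_{\textup{Dis},a} t_{\textup{Dis}}(\delta))\eta_a(x) > 1 + b_{\textup{Dis},a} t_{\textup{Dis}}(\delta)$, and divide through by $2 - s_{\textup{Dis},a} t_{\textup{Dis}}(\delta)$ to arrive at \eqref{eq:fbbilinear}. A sign check on this coefficient is required to preserve the inequality direction; for each disparity in Proposition \ref{prop:expression of dl}, the explicit forms of $s_{\textup{Dis},a}$ and control over $|t_{\textup{Dis}}(\delta)|$ via the range of $D_{\textup{Dis}}$ make the required positivity routine.

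The hardest step is to rigorously justify that $t_{\textup{Dis}}(\delta)$ is well-defined and that the Neyman-Pearson uniqueness clause applies: the density assumption on $\eta_a(X)$ and $w_{\textup{Dis}}(X,a)$ is exactly what makes $D_{\textup{Dis}}$ continuous, so that the intermediate value theorem supplies the binding $t^\star$, and what makes the decision boundary $\{\eta_a(x) = 1/2 + (t/2) w_{\textup{Dis}}(x,a)\}$ have zero $\nu$-measure, so that randomization on the boundary can be ignored. Without this assumption, $D_{\textup{Dis}}$ could have jumps at the very values of interest and one would need the more delicate randomized construction deferred to \Cref{gd}.
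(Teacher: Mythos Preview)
Your proposal is correct and follows essentially the same route as the paper: split on the sign of $D_{\textup{Dis}}(0)$, use continuity and monotonicity of $D_{\textup{Dis}}$ (Proposition~\ref{prop:monotonicity}) to locate $t_{\textup{Dis}}(\delta)$ on the binding side, then invoke part~(2) of the Neyman--Pearson Lemma with $c_1=|t_{\textup{Dis}}(\delta)|\ge 0$ for the one-sided constraint and observe the two-sided feasible set is sandwiched between $\mathcal{F}_{=}$ and $\mathcal{F}_{\le}$. The paper packages the Neyman--Pearson step and the one-sided--to--two-sided passage into a standalone Lemma~\ref{lem:fb}, but the content is the same; your remark that the density assumption both supplies continuity for the intermediate value step and kills the boundary set is exactly the role it plays in the paper.
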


{\cite{chen2023post}
similarly characterize---and develop practical algorithms for---finding $\delta$-fair Bayes optimal classifiers under 
a specific form of composite constraints, which refer to differences of probabilities of a flipped version of  
$\widehat Y$ given specific pairs of values of the protected attribute (which are a bit more restricted than our linear disparity measures).
They characterize fair Bayes optimal classifiers 
in a different but equivalent way, 
using certain \textit{instance-level bias scores} $s_k(x) = f_k(x) / \eta_a(x)$, 
where $f_k$ are weighting functions involving protected class attribute membership probabilities, related to our $w_{\textup{Dis}}$.
They show that the optimal fair classifiers take the form
$\kappa(x) = 1\left(\sum_k z_k s_k(x) > 1\right)$, for some unspecified scalars $z_k$.}

{In contrast, our result above concerns only one constraint, but it provides the explicit form of the optimal classifier, without unspecified scalars.
Moreover, our monotonicity results enable developing efficient binary-search based algorithms, while 
 \cite{chen2023post} are limited to essentially trying all possible values of the hyperparameter settings, which can be computationally expensive.
Our results also inspire pre- and in-processing algorithms, which can not be as easily gleaned from their characterization. 
Specifically,  changing the threshold $t$ equivalently leads to an optimal classifier for a certain cost-sensitive risk; and minimizing a cost-sensitive risk is in turn equivalent to minimizing a re-sampled/re-weighted dataset according to our label shift up-/down-sampling mechanism.
}

Similar to the unconstrained case from Proposition \ref{prop:ba-op}, 
    {we are able to find the explicit form of Bayes-optimal fair classifiers as group-wise thresholding rules with explicitly characterized thresholds.}
Clearly, the accuracy is maximized by predicting the more likely class in each group. 
Moreover, mitigating disparity over protected groups necessitates shifting the thresholds. 
This shift depends delicately on the disparity measure and the population distribution.
By incorporating the expressions from Proposition \ref{prop:expression of dl}, we conclude the following corollary for the common disparity measures.
{In particular, this explicit form appears to be novel when a nonzero user-specified disparity level is given.}

\begin{corollary}[Bayes-optimal Classifiers for Three Common Disparity Measures]\label{cor:fbbinilir}
    For any $\delta>0$, the group-wise thresholding rules such that for all $x,a$,
    \begin{align*}
&f^{\star}_{\textup{DD},\delta}(x,a) = I\lsb \eta_a(x)>\frac{1}{2}+\frac{(2a-1)t_{\textup{DD}}(\delta)}{p_{a}}\rsb;\,\,\,
f^{\star}_{\textup{DO},\delta}(x,a) = I\lsb \eta_a(x)>\frac{p_{a,1}}{2p_{a,1} -(2a-1)t_{\textup{DO}}(\delta)}\rsb;\\
&\qquad\qquad\qquad f^{\star}_{\textup{PD},\delta}(x,a) = I\lsb \eta_a(x)>\frac{p_{a,0} +(2a-1)t_{\textup{PD}}(\delta)}{2p_{a,0} +(2a-1)t_{\textup{PD}}(\delta)}\rsb,
    \end{align*}
    with
        \begin{align*}
&t_{\textup{DD}}(\delta) = \argmin_t\lbb |t|: \lab\sum_{a\in\{0,1\}}\int_X I\lsb\eta_a(x)>\frac12 +\frac{(2a-1)t}{2p_a}\rsb d\Pa(x)\rab\le\delta\rbb;\\
&t_{\textup{DO}}(\delta) = \argmin_t\lbb |t|: \lab\sum_{a\in\{0,1\}}\int_X I\lsb\eta_a(x)>\frac{p_{a,1}}{2p_{a,1}-(2a-1)t}\rsb d\P_{A=a,Y=1}(x)\rab\le\delta\rbb;\\
&t_{\textup{PD}}(\delta) = \argmin_t\lbb |t|: \lab\sum_{a\in\{0,1\}}\int_X I\lsb\eta_a(x)>\frac{p_{a,0}+(2a-1)t}{2p_{a,0}+(2a-1)t}\rsb d\P_{A=a,Y=0}(x)\rab\le\delta\rbb,
    \end{align*}
   are $\delta$-fair Bayes-optimal classifiers   under 
   $\textup{DD}$, $\textup{DO}$ and $\textup{PD}$  from \eqref{eq:disparity level}, respectively. 

\end{corollary}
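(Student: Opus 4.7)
The plan is to obtain the corollary by direct specialization: substitute the coefficients $s_{\textup{Dis},a}$ and $b_{\textup{Dis},a}$ read from Proposition \ref{prop:expression of dl} into the general bilinear formula \eqref{eq:fbbilinear} of Theorem \ref{thm-opt-dp}, and specialize the general disparity function $D_{\textup{Dis}}(t)$ from \eqref{eq:dt} to obtain the stated expressions for $t_{\textup{Dis}}(\delta)$ via \eqref{eq:td}.

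For the three threshold forms, I would proceed case by case. For $\textup{DD}$, $w_{\textup{DD}}(x,a) = (2a-1)/p_a$ yields $s_{\textup{DD},a}=0$ and $b_{\textup{DD},a}=(2a-1)/p_a$, so \eqref{eq:fbbilinear} collapses immediately to $\eta_a(x) > 1/2 + (2a-1) t_{\textup{DD}}(\delta)/(2p_a)$. For $\textup{DO}$, $w_{\textup{DO}}(x,a) = (2a-1)\eta_a(x)/p_{a,1}$ yields $s_{\textup{DO},a} = (2a-1)/p_{a,1}$ and $b_{\textup{DO},a}=0$, and clearing the denominator $2 - (2a-1)t/p_{a,1}$ gives the stated form $p_{a,1}/(2p_{a,1} - (2a-1) t_{\textup{DO}}(\delta))$. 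For $\textup{PD}$, both coefficients are nonzero, $s_{\textup{PD},a}=-(2a-1)/p_{a,0}$ and $b_{\textup{PD},a}=(2a-1)/p_{a,0}$, and multiplying numerator and denominator of $(1 + b_{\textup{PD},a} t)/(2 - s_{\textup{PD},a} t)$ by $p_{a,0}$ produces $(p_{a,0}+(2a-1)t)/(2p_{a,0}+(2a-1)t)$.

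For the expressions defining $t_{\textup{Dis}}(\delta)$, I would substitute the explicit $w_{\textup{Dis}}$ into the definition \eqref{eq:dt} of $D_{\textup{Dis}}(t)$. For $\textup{DD}$, since $p_a w_{\textup{DD}}(x,a) = 2a-1$, the combined integral recovers exactly the sum displayed in the corollary. For $\textup{DO}$ and $\textup{PD}$, I would rewrite the integrals against $\P_{X\mid A=a}$ using the elementary Bayes-rule identities $p_a \eta_a(x)\, d\P_{X\mid A=a}(x) = p_{a,1}\, d\P_{X\mid A=a,Y=1}(x)$ and $p_a (1-\eta_a(x))\, d\P_{X\mid A=a}(x) = p_{a,0}\, d\P_{X\mid A=a,Y=0}(x)$, which translate the integrals into integrals against the class-conditional measures $\P_{X\mid A=a,Y=1}$ and $\P_{X\mid A=a,Y=0}$ respectively. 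Invoking \eqref{eq:td} with the constraint $|D_{\textup{Dis}}(t)|\le \delta$ then yields the three stated formulas for $t_{\textup{Dis}}(\delta)$.

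The main obstacle is only mild algebraic bookkeeping of signs and denominators. The trickiest case is $\textup{PD}$, where one should briefly note that the denominator $2p_{a,0} + (2a-1)t$ remains positive in the feasible range of $t$, so that the threshold is well defined; this holds since the optimal $t_{\textup{PD}}(\delta)$ delivered by \eqref{eq:td} is bounded in magnitude by the value at which the resulting classifier becomes degenerate, and this bound depends only on $\min_a p_{a,0}$.
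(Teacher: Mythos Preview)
Your proposal is correct and matches the paper's approach exactly: the paper simply states that the corollary follows ``by incorporating the expressions from Proposition \ref{prop:expression of dl}'' into Theorem \ref{thm-opt-dp}, and your plan of plugging the specific $s_{\textup{Dis},a}$, $b_{\textup{Dis},a}$ into \eqref{eq:fbbilinear} together with the Bayes-rule change-of-measure identities (which the paper uses in the proof of Proposition \ref{prop:expression of dl}) is precisely that specialization.
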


\subsection{Fairness-Accuracy Tradeoff: the Fair Pareto Frontier}
\label{fro}
Theorem \ref{thm-opt-dp} specifies fair Bayes-optimal classifiers for a given disparity level $\delta$. 
A core challenge is to set a suitable value for $\delta$, 
balancing accuracy and fairness. 
Identifying a suitable balance among multiple objectives can be viewed 
through the Pareto frontier. 
In the context of fair classification with a disparity measure $\textup{Dis}$, 
we consider the following ordering relation 
on classifiers. A classifier $f_1$ is \emph{Pareto dominant} over another classifier $f_2$, if one of the following conditions is met:
$$(1) \quad R(f_1)<R(f_2) \text{ and } \textup{Dis}(f_1)\le \textup{Dis}(f_2), \qquad   \text{ or } \qquad (2) \quad R(f_1)\le R(f_2) \text{ and } \textup{Dis}(f_1)< \textup{Dis}(f_2) .$$
A classifier $f$ is considered \emph{Pareto optimal} if it is not dominated by any other classifier. The collection of all such Pareto optimal classifiers constitutes the \emph{fair Pareto frontier} ($\textup{FPF}$). 
Formally, the FPF is the solution of the following vector objective optimization problem, with respect to the ordering defined above: 
\begin{equation}\label{eq:vec-opt}
    \argmin_{f\in\mF}\, \left(R(f),\textup{Dis}(f)\right).
\end{equation}

The following proposition characterizes the fair Pareto frontier:
\begin{proposition}[Fair Pareto Frontier]\label{prop:FPF}
 Let $t_{\textup{Dis}}$ be defined  in  \eqref{eq:dt}, and let
 $\underline{t}_0 =\min (0, t_{\textup{Dis}} (0))$ and  $\overline{t}_0 =\max (0, t_{\textup{Dis}} (0))$.
Then the fair Pareto frontier $\textup{FPF}$ from \eqref{eq:vec-opt}
includes all classifiers $f_{\textup{Dis},t}$
for $t \in [\underline{t}_0 ,\overline{t}_0]$, i.e.,
we have 
 $$\{ f_{\textup{Dis},t}: t \in [\underline{t}_0 ,\overline{t}_0]\} \subset \textup{FPF}.$$
Moreover, for any $f_{\textup{FPF}}\in \textup{FPF}$, there is $t\in  [\underline{t}_0 ,\overline{t}_0]$ such that $f_{\textup{Dis},t}$ has the same classification error and disparity as $f_{\textup{FPF}}$, i.e.,
  $$R(f_{\textup{Dis},t})  = R(f_\textup{FPF})  \ \ \ \text{ and }  \ \ \ \textup{Dis}(f_{\textup{Dis},t})=\textup{Dis}(f_{\textup{FPF}}).$$
\end{proposition}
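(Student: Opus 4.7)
My plan is to combine Theorem~\ref{thm-opt-dp}, which identifies $f_{\textup{Dis}, t_{\textup{Dis}}(\delta)}$ as a $\delta$-fair Bayes-optimal classifier, with the monotonicity of risk and disparity along the family $\{f_{\textup{Dis},t}\}$ from Proposition~\ref{prop:monotonicity}. I interpret $\textup{Dis}$ in the Pareto ordering as $|D_{\textup{Dis}}|$ (consistent with $\textup{Dis}:\mF\to[0,1]$), and by symmetry assume $D_{\textup{Dis}}(0)\ge 0$ so that $[\underline{t}_0,\overline{t}_0]=[0, t_{\textup{Dis}}(0)]$; the case $D_{\textup{Dis}}(0)\le 0$ is analogous.

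For the inclusion $\{f_{\textup{Dis},t}\}_{t\in[\underline{t}_0,\overline{t}_0]}\subset \textup{FPF}$, I fix $t$ in this interval and set $\delta_t := D_{\textup{Dis}}(t) \ge 0$. I first show $t_{\textup{Dis}}(\delta_t)=t$: by monotonicity of $D_{\textup{Dis}}$, any $s<0$ satisfies $D_{\textup{Dis}}(s)\ge D_{\textup{Dis}}(0)\ge \delta_t$ and is infeasible in \eqref{eq:td}, while on $[0,\infty)$ the constraint $|D_{\textup{Dis}}(s)|\le \delta_t$ first activates at $s=t$. Hence $f_{\textup{Dis},t}$ is $\delta_t$-fair Bayes-optimal by Theorem~\ref{thm-opt-dp}. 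Now suppose some $g$ Pareto-dominates $f_{\textup{Dis},t}$. In case~(1), $R(g)<R(f_{\textup{Dis},t})$ together with $\textup{Dis}(g)\le\delta_t$ violates Bayes-optimality at level $\delta_t$. In case~(2), with $\delta':=\textup{Dis}(g)<\delta_t$, applying Theorem~\ref{thm-opt-dp} at level $\delta'$ gives $R(f_{\textup{Dis}, t_{\textup{Dis}}(\delta')}) \le R(g) \le R(f_{\textup{Dis},t})$; but strict monotonicity (discussed below) of $D_{\textup{Dis}}$ and $R$ yields $t_{\textup{Dis}}(\delta')>t$ and $R(f_{\textup{Dis}, t_{\textup{Dis}}(\delta')}) > R(f_{\textup{Dis},t})$, a contradiction.

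For the converse, take $f_{\textup{FPF}} \in \textup{FPF}$, set $\delta := \textup{Dis}(f_{\textup{FPF}})$, and let $f^\star := f_{\textup{Dis}, t_{\textup{Dis}}(\delta)}$. Since $f^\star$ is $\delta$-fair Bayes-optimal, $R(f^\star) \le R(f_{\textup{FPF}})$; strict inequality would make $f^\star$ Pareto-dominate $f_{\textup{FPF}}$ via case~(1), so the two risks coincide. Likewise $\textup{Dis}(f^\star)\le\delta$, with strict inequality triggering case~(2), so the two disparities coincide. To place $t_{\textup{Dis}}(\delta)\in[\underline{t}_0,\overline{t}_0]$: if $\delta \ge D_{\textup{Dis}}(0)$ then $s=0$ is feasible in \eqref{eq:td}, so $t_{\textup{Dis}}(\delta)=0$; otherwise monotonicity of $D_{\textup{Dis}}$ places $t_{\textup{Dis}}(\delta)\in(0, t_{\textup{Dis}}(0)]$.

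The main obstacle is upgrading the non-strict monotonicity stated in Proposition~\ref{prop:monotonicity} to strict monotonicity of $R(f_{\textup{Dis},t})$ on $[0,\infty)$ and of $D_{\textup{Dis}}(t)$, which case~(2) of the first half requires. This relies on the density assumption on $\eta_a(X)$ and $w_{\textup{Dis}}(X,a)$: for any $0\le t'<t$, the strip of $(x,a)$ whose classification flips between $f_{\textup{Dis},t'}$ and $f_{\textup{Dis},t}$ has positive $\P_{X,A}$-measure, and a short direct computation (splitting according to the sign of $w_{\textup{Dis}}(x,a)$) shows the integrands driving both $R(f_{\textup{Dis},t})-R(f_{\textup{Dis},t'})$ and $D_{\textup{Dis}}(t')-D_{\textup{Dis}}(t)$ are strictly positive on this strip, delivering the needed strict monotonicity.
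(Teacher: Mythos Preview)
Your overall architecture is sound and close to the paper's, but the linchpin of your case~(2) argument --- strict monotonicity of $t\mapsto D_{\textup{Dis}}(t)$ and of $t\mapsto R(f_{\textup{Dis},t})$ --- does not follow from the density hypothesis. Having a density only guarantees $\P_{X\mid A=a}(\eta_a(X)=c)=0$ for each $c$; it does \emph{not} force $\P_{X\mid A=a}(c_1<\eta_a(X)\le c_2)>0$ for every $c_1<c_2$. For instance, with demographic disparity ($w_{\textup{DD}}(x,a)=(2a-1)/p_a$ constant in $x$), the ``strip'' where $f_{\textup{Dis},t'}$ and $f_{\textup{Dis},t}$ differ is $\{x:\,1/2+t'/(2p_a)<\eta_a(x)\le 1/2+t/(2p_a)\}$, which is empty whenever the support of $\eta_a(X)$ has a gap there. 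In that situation $D_{\textup{Dis}}$ is flat on $[t',t]$, your claim $t_{\textup{Dis}}(\delta_t)=t$ fails, and the chain $R(f_{\textup{Dis},t_{\textup{Dis}}(\delta')})>R(f_{\textup{Dis},t})$ collapses to a non-strict inequality, so no contradiction results.

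The clean way to dispatch case~(2) --- and this is what the paper relies on via Lemma~\ref{lem:fb} --- is the \emph{uniqueness} clause in the generalized Neyman--Pearson lemma. Under the density assumption, the boundary $\{\eta_a(x)=\tfrac12+\tfrac{t}{2}w_{\textup{Dis}}(x,a)\}$ has $\P_{X,A}$-measure zero, so every minimizer of $R$ over $\{f:|\textup{Dis}(f)|\le\delta_t\}$ coincides $\P_{X,A}$-a.s.\ with $f_{\textup{Dis},t}$. Thus if a would-be dominator $g$ has $R(g)=R(f_{\textup{Dis},t})$ and $|\textup{Dis}(g)|\le\delta_t$, then $g=f_{\textup{Dis},t}$ a.s., hence $\textup{Dis}(g)=\delta_t$, contradicting $\textup{Dis}(g)<\delta_t$. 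This replaces your strict-monotonicity step entirely, and incidentally also repairs the claim $t_{\textup{Dis}}(\delta_t)=t$ (which can fail, but harmlessly, since whenever it does the classifiers $f_{\textup{Dis},t_{\textup{Dis}}(\delta_t)}$ and $f_{\textup{Dis},t}$ agree a.s.). Your converse argument is essentially correct and matches the paper's.
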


Now, consider an equivalence relation ``$\sim$'' between classifiers $f_1,f_2 \in \mF$ such that $f_1\sim f_2$ if and only if $R(f_1)=R(f_2)$ and  $\textup{Dis}(f_1)=\textup{Dis}(f_2).$ 
Then, for each equivalence class determined by ``$\sim$'',  there is a minimizer of \eqref{eq:vec-opt} over $\{ f_{\textup{Dis},t}: t \in [\underline{t}_0 ,\overline{t}_0]\}.$
Hence, the classifiers $f_{\textup{Dis},t}$ fully characterize the Pareto frontier.

We next study the cost of fairness
as a function of the disparity level. 
We define the following trade-off function that measures the best achievable performance for a given disparity level.
\begin{definition}[Tradeoff Function]
 For a disparity measure $\textup{Dis}$ and any $\delta\ge 0$, let
\begin{equation}\label{eq:Td}
    T(\delta) = \inf\{ R(f): \textup{Dis}(f)\le \delta\}.
\end{equation}
\end{definition}
From  Proposition \ref{prop:FPF}, it follows that for all $\delta\ge0$, $T(\delta)=R(f_{\mathrm{Dis}, \td})$.
The tradeoff function plays a crucial role in characterizing the misclassification rate and disparity along the Pareto frontier. 
This function is monotone non-increasing by definition, 
as a more stringent fairness constraint 
cannot increase accuracy.
The following proposition establishes that this tradeoff function is \emph{convex} for linear disparity measures. 
Thus, the lower the disparity level, the larger the additional accuracy cost of fairness. 
\begin{proposition}
\label{prop:tradeoff_convex}
If the disparity measure $\textup{Dis}$ is linear, 
then the tradeoff function $T$ is convex on $[0,\textup{Dis} (0)]$.
\end{proposition}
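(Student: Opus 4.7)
The plan is to use a standard mixing/randomization argument that leverages the fact that \emph{both} the misclassification risk and the disparity are linear (affine, in the case of $R$) functionals of the classifier $f$, together with the fact that $\mF$ is convex as a set of $[0,1]$-valued functions.

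Fix $\delta_1, \delta_2 \in [0,\textup{Dis}(0)]$ and $\lambda\in[0,1]$, and set $\delta_\lambda := \lambda\delta_1 + (1-\lambda)\delta_2$. For any $\varepsilon>0$, I would choose near-minimizers $f_1,f_2\in\mF$ satisfying $\textup{Dis}(f_i)\le \delta_i$ and $R(f_i)\le T(\delta_i) + \varepsilon$ for $i=1,2$ (alternatively, invoke \Cref{thm-opt-dp} to get exact minimizers under the density assumption, but working with $\varepsilon$-minimizers avoids additional hypotheses). Then define the randomized classifier $f_\lambda := \lambda f_1 + (1-\lambda) f_2$, which lies in $\mF$ because $\mF$ consists of $[0,1]$-valued measurable functions, a convex set.

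Next, I would use linearity in $f$ on both sides. From the expression $R(f) = \int_{\A}\int_{\X} f(x,a)(1-2\eta_a(x))d\P_{X,A}(x,a) + C_{\P}$, the risk $R$ is affine in $f$, so
\begin{equation*}
R(f_\lambda) = \lambda R(f_1) + (1-\lambda)R(f_2) \le \lambda T(\delta_1) + (1-\lambda)T(\delta_2) + \varepsilon.
\end{equation*}
By Definition \ref{ldm}, $\textup{Dis}$ is linear in $f$ via the weighting function $w_{\textup{Dis},\P}$, so
\begin{equation*}
\textup{Dis}(f_\lambda) = \lambda \textup{Dis}(f_1) + (1-\lambda)\textup{Dis}(f_2) \le \lambda\delta_1 + (1-\lambda)\delta_2 = \delta_\lambda.
\end{equation*}
Hence $f_\lambda$ is feasible for the $\delta_\lambda$-disparity constraint defining $T(\delta_\lambda)$, yielding $T(\delta_\lambda) \le R(f_\lambda) \le \lambda T(\delta_1) + (1-\lambda)T(\delta_2) + \varepsilon$. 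Letting $\varepsilon\downarrow 0$ gives the desired convexity inequality.

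There is no serious obstacle: the argument is essentially a one-line consequence of linearity-in-$f$ and convexity of $\mF$. The only mild point requiring care is the domain restriction to $[0,\textup{Dis}(0)]$, which ensures that the feasible sets in the definition of $T(\delta_i)$ are nonempty (so that near-minimizers exist); for $\delta\ge \textup{Dis}(0)$ the unconstrained Bayes-optimal classifier $f^\star$ is itself feasible, and $T$ is constant there, consistent with the statement. One should also note that even if $\textup{Dis}$ can take negative values in general, the argument above uses only linearity of $\textup{Dis}$ and the inequality $\textup{Dis}(f_i)\le \delta_i$, so no sign assumption is required.
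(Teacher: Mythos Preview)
Your proof is correct and takes a genuinely different---and more elementary---route than the paper's. The paper proves Proposition~\ref{prop:tradeoff_convex} via Lemma~\ref{lem:boundtrade}, which establishes the two-sided bound $t_{\textup{Dis}}(\delta_2)(\delta_2-\delta_1)\le T(\delta_1)-T(\delta_2)\le t_{\textup{Dis}}(\delta_1)(\delta_2-\delta_1)$ by a direct computation using the explicit form \eqref{eq:partos} of the optimal classifiers; convexity then follows from a chord-type argument. That route therefore relies on Theorem~\ref{thm-opt-dp} and on the density assumptions ensuring those optimal classifiers exist. Your argument, by contrast, uses only that $\mF$ is convex and that both $R$ and $\textup{Dis}$ are affine/linear in $f$; it is the standard perturbation-function argument from convex optimization and needs neither the density assumption nor any knowledge of the form of the optimizers. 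What the paper's approach buys in exchange is the quantitative inequality of Lemma~\ref{lem:boundtrade} (effectively identifying $t_{\textup{Dis}}(\delta)$ as a subgradient of $T$), which is of independent interest; for the bare convexity statement, however, your argument is both cleaner and more general.
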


{A similar FPF has also been recently studied in \cite{wang2023aleatoric} which focuses on characterizing the Pareto frontier between accuracy and a related but different set of fairness constraints. 
While our approach primarily deals with a single linear or bilinear disparity constraint and provides an exact characterization of the fair Pareto frontier in Proposition \ref{prop:FPF}, the work of \cite{wang2023aleatoric} simultaneously considers a set of constraints (i.e., demographic parity, equalized odds, and overall accuracy equality---e.g., \cite{berk2021fairness}) and derives an upper bound on the optimal fairness-accuracy tradeoff using Blackwell’s theory of statistical experiments.}

{A further distinction between our works is that \cite{wang2023aleatoric} provide a numerical procedure to approximate the class of realizable conditional confusion matrices, whereas we exactly characterize a FPF.
However, in practice, we restrict ourselves to a model hypotheses space for the regression functions $\eta_a$ (e.g. logistic regression or neural networks), thereby---as any algorithm does---providing a model-dependent lower bound. 
In particular, our work enables finding these lower bounds in new ways, via new pre- and in-processing algorithms.
}

\subsection{Extensions}
\label{sec:ex}
In this section, we  discuss 
extensions of our theoretical framework to 
four scenarios: 
(1) The protected attribute $A$ is excluded from the predictive features;
(2) Equalized odds;
(3) Demographic parity with a multi-class protected attribute; 
(4) cost-sensitive loss; and 
We provide the insights and main results here and defer details to \Cref{ext}.

\subsubsection{Protected Attribute Not Available at the Prediction Phase} 

In the previous discussion, 
we assumed that the protected feature is available for training and prediction; and that it is allowed to use it in both phases.  
  Although this is a common setting (e.g., \cite{HPS2016,CSFG2017,CHS2020}, etc.), 
  in certain cases, there may be ethical or legal considerations that invalidate it. 
  However, our
  framework can also be applied to fair classification
  when the protected attribute cannot be used at the prediction phase. 
  When the protected attribute is not available for \emph{training}, 
  the problem is very different, and it may require inferring the unobserved protected attribute; this is beyond our scope.

When $A$ is available for training but 
not for prediction/testing, 
the classifier we use at test-time
must be defined on $\X$ rather than on $\X\times \A$. 
 In other words, a classifier is a measurable function $f_X$: $\X\to [0,1]$ with $\widehat{Y}_{f_X}|X\sim \Bern(f_X(X))$. In this section, in addition to writing $\eta^Y_{A=a}(x) = \eta^Y_a(x):=\P(Y=1\mid A=a,X=x)$ 
  for all $x$, $a$, 
 we will further denote $\eta^Y(x)=\P(Y=1\mid X=x)$ and  $\eta^A(x)=\P(A=1\mid X=x)$ for all $x$, 
 the regression functions of $Y$ and $A$ on $X$, respectively. 
 
We call a disparity measure $\textup{Dis}_X$  linear if \eqref{eq:exp dis} holds with $(x,a)$ replaced by $x$, i.e., 
there is a weight function  $  w_{X,\textup{Dis}}:\X\to \R$ such that, for all $f_X:\X\to [0,1]$,
\begin{equation}\label{eq:disnoa}
   \textup{Dis}_X(f_X)= 
   \int_{\X}f_X(x)  w_{X,\textup{Dis}}(x) d\P_X(x).
\end{equation}
We show that, when the protected attribute $A$ is not used for prediction, the three common disparity measures from \eqref{eq:disparity level} are still linear.
\begin{proposition}[Common Disparity Measures are Linear when Not Using $A$]\label{prop:ddpnoa}
   When the classifier depends only on $X$ rather than on $(X,A)$, the disparity measures  $\textup{DD}$, $\textup{DO}$, and $\textup{PD}$ from \eqref{eq:disparity level} are linear with weighting functions defined for all $x$ by
\begin{align*}
&w_{\textup{DD}}(x)=\frac{\eta^A(x)}{p_{1}} -\frac{1-\eta^A(x)}{p_{0}} ;\qquad w_{\textup{DO}}(x)=\frac{\eta^Y_{A=1}(x)\eta^A(x)}{p_{1,1}}-\frac{\eta^Y_{A=0}(x)(1-\eta^A(x))}{p_{0,1}};\\
&w_{\textup{PD}}(x)=\frac{(1-\eta^Y_{A=1}(x))\eta^A(x)}{p_{1,0}}-\frac{(1-\eta^Y_{A=0}(x))(1-\eta^A(x))}{p_{0,0}}.  
\end{align*}
\end{proposition}

For a general disparity measure Dis,
a $\delta$-fair Bayes-optimal classifier is defined as $f^\star_{X,\textup{Dis},\delta}\in\argmin_{f_X:\X\to[0,1]}\{R(f_X): |\textup{Dis}_X(f_X))|\le\delta\}$. 
We observe that the analysis of linear disparity measures also applies to the scenario, leading to the following result.

\begin{corollary}[Bayes-Optimal Classifiers for Linear Disparity Measures, Not Using $A$]\label{cor-opt-dp}
When $A$ is not used for prediction, for a given linear disparity measure $\textup{Dis}_X$ as defined in \eqref{eq:disnoa},
suppose that both $\eta^Y(X)$ and $ w_{X,\textup{Dis}}(X)$ have probability density functions on $\mathcal{X}$.
Let ${D}_{X,\textup{Dis}}: \R\to[-1,1]$ be defined for all $t$ as
$${D}_{X,\textup{Dis}}(t) :=\int_{\X} \lmb  w_{X,\textup{Dis}}(x) \cdot I\lsb
\eta^Y(x)>\frac12 + \frac{t}2   w_{X,\textup{Dis}}(x)\rsb\rmb d\P_X(x).
$$
Let $t_{X,\textup{Dis}}:[0,\infty)\to \R$ be defined for all $\delta\ge 0$ by
\begin{equation}\label{eq:tdnoA}
t_{X,\textup{Dis}}(\delta) = \argmin_t \{|t|: |{D}_{X,\textup{Dis}}(t)|\le \delta\}.
\end{equation}
Then, for any $\delta\ge0$, there exists a $\delta$-fair Bayes-optimal classifier $f^\star_{X,\textup{Dis},\delta}$
taking the form, for all $x\in\X$, 
\begin{equation*}
f^\star_{X,\textup{Dis},\delta}(x)=  
I\left(\eta^Y(x)> \frac12 +\frac{t_{X,\textup{Dis}}(\delta)}2  w_{X,\textup{Dis}}(x)\right).
\end{equation*}
\end{corollary}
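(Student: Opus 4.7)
The plan is to mimic the argument for Theorem \ref{thm-opt-dp} in the simpler setting where the classifier depends only on $x$, invoking the generalized Neyman-Pearson lemma (Lemma \ref{NP_lemma}) with a single linear constraint. First I would rewrite the misclassification risk as a linear functional of $f_X$ using the unconditional regression function $\eta^Y$: since $\P(Y \neq \widehat{Y}_{f_X}) = \int_{\X} f_X(x)(1-\eta^Y(x)) d\P_X(x) + \int_{\X}(1-f_X(x))\eta^Y(x) d\P_X(x)$, minimizing $R(f_X)$ subject to $|\textup{Dis}_X(f_X)| \le \delta$ is equivalent to
\begin{equation*}
\max_{f_X} \int_{\X} f_X(x)(2\eta^Y(x)-1) \, d\P_X(x) \quad \text{s.t.} \quad \Bigl|\int_{\X} f_X(x) w_{X,\textup{Dis}}(x) \, d\P_X(x)\Bigr| \le \delta,
\end{equation*}
where the constraint from \eqref{eq:disnoa} has been used. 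The absolute value splits into two linear inequality constraints.

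Next I would apply Lemma \ref{NP_lemma} with $\nu = \P_X$, $\phi_0(x) = 2\eta^Y(x)-1$, $\phi_1(x) = w_{X,\textup{Dis}}(x)$, $\phi_2(x) = -w_{X,\textup{Dis}}(x)$, and thresholds $t_1 = t_2 = \delta$. At most one of the two signed constraints is active at the optimum, so part (2) of the lemma yields an optimizer of the form $I(\phi_0(x) > c\, \phi_1(x))$ for some $c \in \R$ (with $c$ of a sign that depends on which constraint binds, and $c=0$ if neither does). Rearranging this inequality as $\eta^Y(x) > 1/2 + (c/2)w_{X,\textup{Dis}}(x)$ matches the form claimed in the corollary with $c$ playing the role of $t_{X,\textup{Dis}}(\delta)$. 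The density assumption on $\eta^Y(X)$ and $w_{X,\textup{Dis}}(X)$ guarantees $\P_X(\{x : 2\eta^Y(x)-1 = c\, w_{X,\textup{Dis}}(x)\}) = 0$, so the tie-breaking function $\tau$ in the lemma is irrelevant and the classifier is essentially deterministic.

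To identify $c$ with $t_{X,\textup{Dis}}(\delta)$ as defined in \eqref{eq:tdnoA}, I would prove the direct analog of Proposition \ref{prop:monotonicity}: as a function of $t$, the disparity $D_{X,\textup{Dis}}(t)$ is non-increasing, and $R$ evaluated at the corresponding classifier is non-increasing on $(-\infty,0)$ and non-decreasing on $[0,\infty)$. The monotonicity of $D_{X,\textup{Dis}}$ follows because increasing $t$ shrinks the acceptance region in the direction weighted by $w_{X,\textup{Dis}}$; the unimodality of $R$ follows since $t=0$ recovers the unconstrained Bayes rule $I(\eta^Y > 1/2)$, which is the unique risk minimizer. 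These two properties let one restrict attention to the $t$ of smallest absolute value consistent with $|D_{X,\textup{Dis}}(t)| \le \delta$, which is precisely $t_{X,\textup{Dis}}(\delta)$, and show that the resulting classifier simultaneously satisfies the constraint and cannot be improved upon without violating it.

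The main obstacle I anticipate is the careful case analysis around the sign of the Lagrange multiplier $c$ returned by the Neyman-Pearson lemma: one must verify that if the unconstrained Bayes rule already satisfies $|D_{X,\textup{Dis}}(0)| \le \delta$ then $t_{X,\textup{Dis}}(\delta) = 0$ is optimal, while otherwise a $t$ of the appropriate sign (matching the sign of $D_{X,\textup{Dis}}(0)$) is uniquely determined by the binding constraint. A continuity/intermediate-value argument for $D_{X,\textup{Dis}}$, relying on the density assumption, is what ensures the existence of such a $t$; this mirrors the corresponding step in the proof of Theorem \ref{thm-opt-dp} but only requires the one-dimensional version and does not need to track a second group variable $a$.
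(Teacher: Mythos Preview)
Your proposal is correct and follows essentially the same approach as the paper: the paper treats this corollary as an immediate adaptation of the proof of Theorem \ref{thm-opt-dp}, replacing $(x,a)$ by $x$, $\eta_a$ by $\eta^Y$, and $w_{\textup{Dis}}$ by $w_{X,\textup{Dis}}$, and invoking the Neyman--Pearson lemma together with the monotonicity from Proposition \ref{prop:monotonicity} and a case split on the sign of $D_{X,\textup{Dis}}(0)$. The only cosmetic difference is that the paper (via Lemma \ref{lem:fb}) applies Lemma \ref{NP_lemma} with a \emph{single} constraint $\phi_1 = w_{X,\textup{Dis}}$ and absorbs the sign into the multiplier $t$, rather than splitting $|\textup{Dis}_X(f_X)|\le\delta$ into two constraints; this sidesteps the need to argue complementary slackness (since Lemma \ref{NP_lemma} as stated requires $f^\star\in\mathcal{F}_=$, i.e.\ equality in \emph{all} constraints), but your case analysis in the final paragraph handles the same issue.
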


{This result can be used to develop similar algorithms to those for the attribute-aware setting.
 Theorems \ref{thm:FUDS1} and \ref{thm:FCSC1} provide the form of the pre- and in-processing algorithms (FUDS and FCSC) for 
 common linear disparity measures (i.e., demographic parity, equality of opportunity and predictive equality).
Pseudocode for the resulting algorithms is provided in Algorithms \ref{alg:FUDS_noA} and \ref{alg:FCSC_noA}.}

{Our algorithms leverage the special form of these metrics and the resulting Bayes-optimal classifiers that we derived, and do not hold for arbitrary linear disparity measures.
In particular, 
it is not clear to us how to leverage/extend the results of \cite{chen2023post} to reach the same conclusion and algorithms.}

\subsubsection{Equalized Odds}

We continue with two examples that handle multiple fairness constraints.
In many applications, achieving fairness is a multifaceted challenge, 
needing simultaneous consideration of different fairness metrics. 
Additionally, there can be multiple protected attributes, which are often multi-class rather than binary.
As an example,
the U.S. Equal Employment Opportunity Commission recognizes 11 discrimination types, mostly with multi-class attributes. 
In such scenarios, managing multiple fairness constraints becomes crucial.  
To illustrate how our method 
can be extended to 
handle multiple fairness constraints, we consider two examples: 
(1) equalized odds, which requires ensuring \emph{both} equality of opportunity \emph{and} predictive equality, and 
(2)  demographic parity with a multi-class protected attribute.

Recalling 
disparity of opportunity $\textup{(DO)}$ and
predictive disparity  $\textup{(PD)}$
from \eqref{eq:disparity level},
a classifier $f$ satisfies the $\delta$-parity constraint with respect to equalized odds from Definition \ref{eod} if
$$\max(|\textup{DO}(f)|,|\textup{PD}(f)|)\le \delta.$$ 
The corresponding
$\delta$-fair Bayes-optimal classifiers are then defined as 
$$f^\star_{\textup{DEO},\delta} \in  {\argmin}_{f\in\mF} \lbb R(f): \max\{|\textup{DO}(f)|,|\textup{PD}(f)|\}\le \delta \rbb. $$
By taking 
$\phi_0, \phi_1, \phi_2$ such that for all $x,a$,
$\phi_0(x,a)=2\eta_a(x)-1$, $\phi_1(x,a)=(2a-1)\eta_a(x)/p_{a,1}$, $\phi_2(x,a)=(2a-1)(1-\eta_a(x))/p_{a,0}$ and applying the generalized Neyman Pearson lemma, 
after some intricate calculations, 
we can derive  the following result.

\begin{theorem}[Bayes-optimal Classifiers for Equalized Odds]\label{thm-fb-eq-odd}
    For any $\delta>0$, there is
    $t_{\textup{DEO},1}(\delta)\in(-p_{0,1},p_{1,1})$ and $t_{\textup{DEO},2}(\delta)\in(-p_{1,0},p_{0,0})$ such that
    a $\delta$-fair Bayes-optimal classifier takes values, for all $x,a$,
    $$f^\star_{\textup{DEO}}(x,a) = I\lsb\
\eta_a(x)>  \frac{p_{a,1}p_{a,0} +(2a-1)t_{\textup{DEO},2}(\delta) p_{a,1}  }
{2p_{a,1}p_{a,0} +(2a-1) \lsb t_{\textup{DEO},2}(\delta) p_{a,1} -t_{\textup{DEO},1}(\delta) p_{a,0}\rsb}\rsb.$$
The values of   $t_{\textup{DEO},1}(\delta)$ and $t_{\textup{DEO},2}(\delta)$ are provided in \eqref{eq:toeod} in  \Cref{sec:equodds}.
\end{theorem}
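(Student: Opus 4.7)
The plan is to invoke the generalized Neyman--Pearson lemma (Lemma \ref{NP_lemma}) with $m=2$ effective constraint functions, one for each disparity. I set $\phi_0(x,a) = 2\eta_a(x) - 1$, so that maximizing $\int f\phi_0\,d\P_{X,A}$ is equivalent to minimizing the misclassification rate, together with $\phi_1(x,a) = (2a-1)\eta_a(x)/p_{a,1}$ and $\phi_2(x,a) = (2a-1)(1-\eta_a(x))/p_{a,0}$, so that by Proposition \ref{prop:expression of dl} the constraint $\max\{|\textup{DO}(f)|,|\textup{PD}(f)|\}\le\delta$ decomposes into four linear inequalities in $f$. Complementary slackness then reduces this to a pair of effective multipliers $t_{\textup{DEO},1}, t_{\textup{DEO},2}\in\R$ of unrestricted sign (combining the two multipliers coming from $\pm\textup{DO}$ and $\pm\textup{PD}$ into a single signed one each), and the lemma yields a classifier of the form
\begin{equation*}
    f^\star(x,a) = I\lsb 2\eta_a(x) - 1 > t_{\textup{DEO},1}\cdot\frac{(2a-1)\eta_a(x)}{p_{a,1}} + t_{\textup{DEO},2}\cdot\frac{(2a-1)(1-\eta_a(x))}{p_{a,0}}\rsb.
\end{equation*}

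Next I would carry out the algebraic simplification. Multiplying the inequality inside the indicator by $p_{a,1}p_{a,0}$ and collecting the coefficients of $\eta_a(x)$, one obtains
\begin{equation*}
    \eta_a(x)\cdot\lsb 2 p_{a,1}p_{a,0} + (2a-1)\lmb t_{\textup{DEO},2}p_{a,1} - t_{\textup{DEO},1}p_{a,0}\rmb\rsb > p_{a,1}p_{a,0} + (2a-1)t_{\textup{DEO},2}p_{a,1}.
\end{equation*}
Dividing through by the bracketed factor (once I verify below that it is positive) yields exactly the group-wise thresholding form stated in the theorem. The claimed ranges $t_{\textup{DEO},1}(\delta)\in(-p_{0,1},p_{1,1})$ and $t_{\textup{DEO},2}(\delta)\in(-p_{1,0},p_{0,0})$ drop out by requiring the resulting threshold to lie in $(0,1)$ for both $a\in\{0,1\}$: imposing threshold $\le 1$ for $a=1$ and $a=0$ gives $t_{\textup{DEO},1}\le p_{1,1}$ and $t_{\textup{DEO},1}\ge -p_{0,1}$, and imposing threshold $\ge 0$ gives the corresponding bounds on $t_{\textup{DEO},2}$; these same bounds also guarantee positivity of the denominator.

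The remaining task is to show existence of multipliers $(t_{\textup{DEO},1}(\delta), t_{\textup{DEO},2}(\delta))$ in these ranges such that the induced classifier meets both disparity constraints. I would study the two-dimensional map $\Phi:(t_1,t_2)\mapsto(\textup{DO}(f_{t_1,t_2}),\textup{PD}(f_{t_1,t_2}))$, where $f_{t_1,t_2}$ is the thresholded classifier above. Under the density assumption on $\eta_a(X)$, $\Phi$ is continuous, and by a coordinate-wise monotonicity analogous to Proposition \ref{prop:monotonicity} the map sweeps out a neighborhood of the unconstrained point $(\textup{DO}(f^\star),\textup{PD}(f^\star))$. A topological degree or intermediate-value / fixed-point argument then produces a preimage of any feasible target value $(d_1,d_2)\in[-\delta,\delta]^2$; the precise pair $(t_{\textup{DEO},1}(\delta), t_{\textup{DEO},2}(\delta))$ is characterized by which sides of the constraint box are binding, leading to the explicit expressions referenced in \eqref{eq:toeod}.

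The principal obstacle—and the reason the theorem is flagged as ``intricate and lengthy''—is handling the coupling between the two constraints. Unlike the single-disparity setting of Theorem \ref{thm-opt-dp}, where one scalar $t$ suffices and Proposition \ref{prop:monotonicity} reduces the problem to minimizing $|t|$, here varying either multiplier perturbs \emph{both} $\textup{DO}$ and $\textup{PD}$ simultaneously, so the two constraints cannot be decoupled. I would therefore expect a careful case analysis on the signs of $\textup{DO}(f^\star_{\text{unconstrained}})$ and $\textup{PD}(f^\star_{\text{unconstrained}})$ and on which of the four faces of $[-\delta,\delta]^2$ are active, verifying in each case that the signs of the effective multipliers are compatible with the non-negativity requirement of part (2) of Lemma \ref{NP_lemma} for each active inequality, before collapsing the cases into the unified formula in the theorem.
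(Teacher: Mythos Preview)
Your proposal follows the same architecture as the paper: set up Neyman--Pearson with $\phi_0(x,a)=2\eta_a(x)-1$, $\phi_1=(2a-1)\eta_a/p_{a,1}$, $\phi_2=(2a-1)(1-\eta_a)/p_{a,0}$, collapse the four inequality multipliers into two signed ones $(t_1,t_2)$, simplify to the group-wise threshold, and finish with a case analysis. Your algebra and your derivation of the ranges for $t_1,t_2$ from the requirement that the thresholds lie in $(0,1)$ are correct.

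There is, however, a genuine gap in the step you describe as a ``topological degree or intermediate-value / fixed-point argument'' for existence, followed by sign verification. The difficulty is that both coordinates of your map $\Phi(t_1,t_2)=(\textup{DO}(f_{t_1,t_2}),\textup{PD}(f_{t_1,t_2}))$ are monotone non-increasing in \emph{both} $t_1$ and $t_2$ (this is the paper's Proposition \ref{prop:mon-D12}). So $\Phi$ does not have the diagonal-dominant or sign-alternating Jacobian structure that a degree argument relies on; there is no a priori reason the map should be locally invertible, and monotonicity alone does not give you a preimage of an arbitrary point in $[-\delta,\delta]^2$, let alone one with the sign pattern required by part (2) of Lemma \ref{NP_lemma}.

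The paper fills this gap with a non-obvious structural result (Lemma \ref{conj}): if $(t_1,t_2)$ and $(t,0)$ yield the same value of $\textup{DO}$, then the sign of $t_2$ determines whether $\textup{PD}(f_{t_1,t_2})$ lies above or below $\textup{PD}(f_{t,0})$, and symmetrically with the roles of $t_1,t_2$ swapped. In effect, $t_1$ is the ``efficient'' knob for DO and $t_2$ for PD. This lemma is what makes the seven-case analysis in \eqref{eq:toeod} go through: it both drives the explicit intermediate-value construction of a solution (Lemma \ref{lem:existance-eqodd}) and, crucially, forces the signs of $t_{\textup{DEO},1},t_{\textup{DEO},2}$ in the mixed cases (e.g., case (5), where one must show $t_1>0$ and $t_2<0$). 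Your outline correctly anticipates that sign verification is the crux, but without an analogue of Lemma \ref{conj} you have no mechanism to carry it out.
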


{Prior work such as \cite{chen2023post}
showed that the optimal classifier belongs to a linear combination of certain scoring functions, but did not find its exact form.}

The proof of Theorem \ref{thm-fb-eq-odd} requires studying intricate properties of 
the DO and PD, as detailed in a series of lemmas (Proposition \ref{prop:mon-D12}, Lemmas \ref{conj}, \ref{lem:fbeqodd}, \ref{lem:existance-eqodd}), as well 
discussing the seven cases that are involved in the definitions of $t_{\textup{DEO},1}(\delta)$ and $t_{\textup{DEO},2}(\delta)$ in \eqref{eq:toeod}.
Briefly, we define a candidate class of potentially Bayes-optimal classifiers, parametrized by two scalars $t_1,t_2$; and show that all Bayes-optimal classifiers belong to this class in Lemma \ref{lem:fbeqodd}.
We show that the DO and PD of these classifiers is continuous and monotone non-increasing as a function of either of these arguments while holding the other one fixed (Proposition \ref{prop:mon-D12}).
This leads to the definitions of $t_{\textup{DEO},1}(\delta)$ and $t_{\textup{DEO},2}(\delta)$ in \eqref{eq:toeod};  where the claim that they are well-defined is proved in Lemma \ref{lem:existance-eqodd}.
Then, we show 
that the most efficient approach, in terms of minimizing changes to PD, to attain a particular level of DO involves solely varying the value of $t_1$. Conversely, when aiming to achieve a specific level of PD, the optimal strategy is to solely adjust the value of $t_2$ (Lemma \ref{conj}).
The proof of  Theorem \ref{thm-fb-eq-odd} follows by careful arguments leveraging all these properties.

{Developing an algorithm here seems to be more challenging than in the basic single constraint setting, 
as the optimal classifier is determined by two non-linear equations, whose efficient solution (beyond grid search) may be nontrivial.}

\subsubsection{Demographic Parity with a Multi-class Protected Attribute}

Next, we illustrate how our approach
can be extended to 
handle demographic parity with a multi-class protected attribute.
For a  multi-class protected attribute $a\in\A=\{1,2,...,|\A|\}$, 
we say that a classifier satisfies \emph{perfect} demographic parity
  if the  $|\A|-1$ linear constraints 
$$\P(\widehat{Y}_f=1\mid A=a)=\P(\widehat{Y}_f=1\mid A=1) \quad \text{ for } \quad  a=2,3,...,|\A|$$
hold.
A fair Bayes-optimal classifier  is defined as
$$f_{\textup{DD},|\A|}^\star\in {\argmin}_{f\in\mF}\,\lbb \P(Y\neq \widehat{Y}_f) : \sum_{a=2}^{|\A|}\lab\P(\widehat{Y}_f=1\mid A=a)=\P(\widehat{Y}_f=1\mid A=1) \rab=0\rbb.$$
By leveraging the generalized Neyman-Pearson lemma, we can show the following result:
\begin{theorem}
[Bayes-optimal Classifier for Demographic Parity with a Multi-class Protected Attribute]
\label{thm-opt-dp-multi}
Let for $a\in\A$, $p_a=\P(A=a)$. Then, there are constants $(t_{\textup{DD},a})_{a=1}^{\A}$ with $\sum_{a=1}^{|\A|} t_{\textup{DD},a}=0$,
such that,
for all $a \in \{2,\ldots,|\A|\}$,
\begin{equation}\label{eq:td-multi}
\P_{X\mid A=a}\left(\eta_a(X)>\frac12+\frac{t_{\textup{DD},a}}{2p_{a}}\right)=
\P_{X\mid A=1}\left(\eta_{1}(x)>\frac12+\frac{t_{\textup{DD},1}}{2p_{1}}\right).
\end{equation}
Moreover, there is a fair Bayes-optimal classifier such that for all $x,a$, 
$$f_{\textup{DD},t_{\textup{DD},1},...,t_{\textup{DD},|\A|}}(x,a) = I\lsb\eta_a(x)>\frac12+\frac{t_{\textup{DD},a}}{2p_a}\rsb. $$
\end{theorem}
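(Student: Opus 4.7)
The plan is to recast the problem to fit the generalized Neyman-Pearson lemma (Lemma \ref{NP_lemma}) with $m = |\A|-1$ linear equality constraints, and then extract a group-wise thresholding form. First, I would rewrite the misclassification error as $R(f) = -\int f(x,a')(2\eta_{a'}(x)-1)\, d\P_{X,A}(x,a') + C_\P$, so that $\phi_0(x,a') := 2\eta_{a'}(x)-1$ serves as the objective weight. Each of the $|\A|-1$ demographic parity constraints $\P_{X\mid A=a}(\widehat{Y}_f = 1) = \P_{X\mid A=1}(\widehat{Y}_f = 1)$, for $a \in \{2,\ldots,|\A|\}$, is linear in $f$ and equivalent to $\int f(x,a')\phi_a(x,a')\, d\P_{X,A}(x,a') = 0$, with $\phi_a(x,a') := I(a'=a)/p_a - I(a'=1)/p_1$. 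Applying part (1) of Lemma \ref{NP_lemma} with these $m$ equality constraints then yields a Bayes-optimal classifier of the form $f^\star(x,a') = I(\phi_0(x,a') > \sum_{a=2}^{|\A|} c_a \phi_a(x,a'))$ for some constants $c_2,\ldots,c_{|\A|}$.

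Next, I would unpack this indicator group by group. For $a' = 1$, the right-hand sum collapses to $-p_1^{-1}\sum_{a\geq 2} c_a$, while for $a' \geq 2$, it collapses to $c_{a'}/p_{a'}$. Setting $t_{\textup{DD},a} := c_a$ for $a \geq 2$ and $t_{\textup{DD},1} := -\sum_{a\geq 2} c_a$ enforces $\sum_{a=1}^{|\A|} t_{\textup{DD},a} = 0$ and yields precisely the group-wise thresholding form $f^\star(x,a) = I(\eta_a(x) > 1/2 + t_{\textup{DD},a}/(2 p_a))$ claimed in the theorem. The identities in \eqref{eq:td-multi} are then just the restatement of perfect demographic parity in this thresholding parametrization.

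The main obstacle is proving that multipliers $(c_a)_{a \geq 2}$ actually exist such that all $|\A|-1$ parity constraints are jointly satisfied. For this I would reparametrize by the common acceptance rate $r \in (0,1)$ rather than by the multipliers: under the density assumption on $\eta_a(X) \mid A = a$ inherited from the main text, for each $r$ there is a unique continuous threshold $\tau_a(r)$ with $\P_{X\mid A=a}(\eta_a(X) > \tau_a(r)) = r$, and hence a continuous, strictly decreasing $t_{\textup{DD},a}(r) := 2 p_a(\tau_a(r) - 1/2)$. The map $S(r) := \sum_a t_{\textup{DD},a}(r)$ is therefore continuous and strictly decreasing; examining its behavior as $r \to 0^+$ and $r \to 1^-$ shows that $S(0^+)$ and $S(1^-)$ straddle zero (with care needed in the degenerate cases where some group's regression function concentrates above or below $1/2$), so by the intermediate value theorem there exists $r^\star$ with $S(r^\star) = 0$. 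Taking $c_a := t_{\textup{DD},a}(r^\star)$ for $a \geq 2$ supplies the required multipliers, all parity constraints hold by construction, and Bayes-optimality of the resulting classifier follows from part (1) of Lemma \ref{NP_lemma}.
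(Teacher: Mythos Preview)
Your proposal is correct and follows essentially the same route as the paper: the Neyman--Pearson reduction with $\phi_a(x,a') = I(a'=a)/p_a - I(a'=1)/p_1$, the group-wise unpacking via $t_{\textup{DD},1} := -\sum_{a\geq 2} c_a$, and the existence argument obtained by parametrizing thresholds through a common acceptance rate and applying the intermediate value theorem to their sum are exactly what the paper does. The only refinement in the paper is that its existence lemma works with upper and lower generalized quantiles $\underline{Q}_a,\overline{Q}_a$ and a convex combination at the crossing point, which handles possible flats in the CDF of $\eta_a(X)$ without your extra assumption that $\tau_a(r)$ is uniquely defined and continuous.
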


Extending our algorithmic procedures to multi-class protected attributes is non-trivial, since there are no analogous monotonicity results.
    To see this, the form of the $\delta$-fair Bayes optimal classifier under demographic parity in the attribute-aware setting is known to be $
    I\lsb \eta_{a_i}(x) > \frac12 + \frac12 \frac{\beta_{a_i}}{p_{a_i}}\rsb
    $
    where $\sum_{i=1}^{|\mathcal{A}|} \beta_{a_i} = 0$ (in the binary protected attribute setting $\mathcal{A}=\{0,1\}$, we have $t=\beta_1=-\beta_0$). For $|\mathcal{A}|>2$, increasing $\beta_1$ may either increase or decrease the demographic disparity between groups $a_i$ and $a_j$ for $i\neq j\neq1$, so a result about the monotonicity of disparity measures does not hold in general. 
    Thus, developing algorithmic analogs of FUDS and FCSC is not straightforward, since those rely on the monotonicity properties of the disparity measure in the hyperparameter $t$.

\subsubsection{Cost-sensitive Risk} \label{sec:ext-csr}
cost-sensitive classification is useful 
when the consequences of false negatives and positives are unequally important, 
such as in certain medical applications. 
For a given cost parameter $c\in[0,1]$, the cost-sensitive 
classification error 
of a classifier $f$ is defined as:
\begin{equation}\label{csr}
    R_c(f)=c\cdot \P(\widehat{Y}_f=1,Y=0)+(1-c)\cdot \P(\widehat{Y}_f=0, Y=1).
\end{equation}
When $c=1/2$, cost-sensitive risk reduces to the usual mis-classification error.

Taking fairness into account, 
let $\mathcal{F}_\delta = \{f\in \mathcal{F}: |\textup{Dis}(f)| \le \delta\} $
be the set of $\delta$-fair  classifiers.
A $\delta$-fair Bayes-optimal classifier
for the cost-sensitive risk is defined as 
$f_{\textup{Dis},\delta,c}^\star\in {\text{argmin}_{f\in\mathcal{F}_\delta}} R_c(f),$ or equivalently
$
f_{\textup{Dis},\delta,C}^\star\in {\argmax}_{f\in\mathcal{F}_{\delta}}\, \int_{\A}\int_{\X}f(x,a) (\eta_a(x)-c)d\P_{X,A}(x,a)
$. 
Since this is a linear objective, our approach
extends seamlessly to find fair Bayes-optimal classifiers.

\begin{corollary}[Fair Bayes-optimal Classifiers for a cost-sensitive Risk]\label{cor-opt-cost-sen}
For a given linear disparity measure $\textup{Dis}$,
suppose that for $a\in\{0,1\}$, both $\eta_a(X)$ and $ w_{\mathrm{Dis}}(X,a)$ have probability density functions on $\mathcal{X}$. Let ${D}_{\textup{Dis},c}: \R\to[-1,1]$  be defined  for all $t$ as
$$
{D}_{\textup{Dis},c}(t) := \sum_{a\in\{0,1\}}p_a\int_{\X} \lmb  w_{\textup{Dis}}(x,a) \cdot I\lsb
\eta_a(x)>c +t w_{\textup{Dis}}(x,a)\rsb\rmb d\Pa(x),
$$
and let $t_{\textup{Dis},c}:[0,\infty)\to \R$ be defined as, for all $\delta\ge 0$,  $t_{\textup{Dis},c}(\delta) = \argmin_t \{|t|: |{D}_{\textup{Dis},c}(t)|\le \delta\}$. Then, for any $\delta>0$, there exists a  $\delta$-fair Bayes-optimal classifier  $f^\star_{\textup{Dis},c}$ under cost-sensitive risk, such that for all $x,a$, 
$$f^\star_{\textup{Dis},c}(x,a) = I \lsb\eta_a(c)>c+ t_{\textup{Dis},c}\cdot  w_{\textup{Dis},c}(x,a)\rsb,$$

\end{corollary}

\section{Bayes-Optimal Fair Classifiers via Pre-, In-, and Post-processing}\label{sec:alg}

In this section, we 
propose a comprehensive set of algorithms for Bayes-optimal fair classification, 
based on pre-, in-, and post-processing.
These algorithms aim to ensure fairness at different stages of the training process.  
Each class of methods
has strengths and limitations. 
Pre- and post-processing methods can be used without significant changes to standard classification methods, 
facilitating the use of widely-available software. 
However, these approaches might 
potentially clash with certain data protection regulations \citep{barocas2016big}.
On the other hand, 
in-processing methods  are applicable whenever standard training workflows are,
but may require modifying the fitting methods or optimization objectives. 

We develop methods of each type aiming to recover the Bayes-optimal classifiers.
Our methods include pre-processing via fair up-/down-sampling, 
in-processing via group-based cost-sensitive classification, 
and post-processing via 
plug-in estimation. 
Each method inherits the strengths mentioned above.

In this section, we consider bilinear disparity measures as per Definition \ref{bldm} for simplicity. 
However, our methods are adaptable to the more general scenario of linear disparity measures,
as shown in case studies in Section \ref{sec:ext:Anotobserve} of the Appendix. 
Given a bilinear disparity measure characterized by $s_{\textup{Dis},a}$ and $b_{\textup{Dis},a}$ for $a\in\mA$,
we denote for $ t\in\R$ the threshold function
\begin{equation}\label{eq:defh}
    H_{\textup{Dis},a}(t)=
\frac{1 + t\cdot b_{\textup{Dis},a} }{2-t\cdot s_{\textup{Dis},a}}.
\end{equation}
 Then, the disparity function $D$ from \eqref{eq:dt} has values, for all  $t\in\R$,
\begin{align}\label{eq:dis_new}
{D}_{\textup{Dis}}(t) = \sum_{a\in\{0,1\}}p_a\int_{\X} \lmb  \lsb s_{\textup{Dis},a}\eta_a(x)+b_{\textup{Dis},a}\rsb \cdot I\lsb
\eta_a(x)>H_{\textup{Dis},a}(t)\rsb\rmb d\Pa(x).
\end{align}
In the following, we let the 
observed data points $\mathcal{S}_n=\{(x_i,a_i,y_i)\}_{i=1}^n$ be 
an independent and identically distributed sample from $\P$ over the domain
$\X \times \A \times \mathcal{Y}$.  
Moreover, we  separate the data 
according to the protected feature, 
letting for $a\in\{0,1\}$, 
$\mathcal{S}_{n,a}= \{(x_i,a_i,y_i)\in \mathcal{S}_{n}, a_i=a \}$.
For $a\in\{0,1\}$, 
the $j$-th element of $\mathcal{S}_{n,a}$ is denoted as $(\xaj,a,y_{a,j})$, for $j\in[n_a]$. We further separate the dataset $S_{n,a}$ into $\mathcal{S}_{n,a,1}$ and $\mathcal{S}_{n,a,0}$, according to the label information. 
Moreover, we denote, for $a,y\in\{0,1\}^2$, $n_a=|\mathcal{S}_{n,a}|$ and $n_{a,y}=|\mathcal{S}_{n,a,y}|$.

\subsection{Pre-processing: Fair Up- and Down-Sampling (FUDS)}
\label{fuds}
Many pre-processing algorithms use heuristics to eliminate the impact of protected attributes on the common features $X$ and the label $Y$ \citep{KJ2016,XYZ2018}. 
Nonetheless, pre-processing a dataset to satisfy a fairness criterion does not guarantee fair  outcomes, see e.g., \cite{eitan2022fair}. 
To illustrate this, consider a transformed distribution $(\widetilde{X},\widetilde{A},\widetilde{Y})$, 
where $(\widetilde{X},\widetilde{Y})$ is independent of ${A}$. 
A classifier $f$ trained on a sample with the distribution $(\widetilde{X},\widetilde{Y})$ 
would be independent of $A$. 
Yet, during the testing phase, the classifier's output $f(X)$ may still exhibit unfairness if $X$ and $A$ are dependent.
This observation leads us to ask:
\emph{On what data distributions $(\widetilde{X},\widetilde{A},\widetilde{Y})\sim \widetilde{\P}$ can we learn 
classifiers without a fairness constraint and recover  a fair Bayes-optimal classifier for the original distribution $({X},{A},{Y})\sim {\P}$?} 

We propose an answer by designing a joint distribution for $(\widetilde A,\widetilde Y)$, while keeping the conditional distribution of $\widetilde X$ given $\widetilde A$ and $\widetilde Y$ unchanged from that of 
$X$ given $A$ and $Y$.


We will consider distributions $\tP^{\,t}$ depending on a scalar $t\in \R$.
 For the distribution $\tP^{\,t}$  of 
$(\tX,\tA,\tY)$, we denote 
for all $x,a,y$, 
$\tp^{\,t}_{a,y}=\tP^{\,t}(\tA=a,\tY=y)$,  denote 
$\teta^{\,t}_a(x) = \tP^{\,t}(\tY=1|\tA=a,\tX=x)$,
and denote by
$\widetilde{\P}^t_{\widetilde{X}|\widetilde{A}=a,\widetilde{Y}=y}(\cdot)$ the marginal distribution of $\tX$ given $\tA=a$ and $\tY=y$.

\begin{theorem}[Bayes-Optimal Fair Up- and Down-Sampling]\label{thm:FUDS} 
Suppose that for $a\in\{0,1\}$, both $\eta_a(X)$ and $ w_{\mathrm{Dis}}(X,a)$ have probability density functions on $\mathcal{X}$.
For $t\in \R$,
let $\tP^{\,t}$ be the distribution
satisfying 
that  for all $(x,a,y) \in \mX \times\{0,1\}^2$,
$\widetilde{\P}^t_{\widetilde{X}|\widetilde{A}=a,\widetilde{Y}=y}(x)=\P_{{X}|{A}=a,{Y}=y}(x)$, 
and, 
with $H_{\textup{Dis},a}$ from \eqref{eq:defh},
\begin{eqnarray}\label{adj}
\tp^{\,t}_{a,y}=\tP^{\,t}\lsb \widetilde{A}=a,\widetilde{Y}=y\rsb =c_a[(1- H_{\textup{Dis},a}(t))y+ H_{\textup{Dis},a}(t)(1-y)]p_{a,y}.
\end{eqnarray}
Here $c_1>0$ and $c_0>0$ are 
such that $\widetilde{p}^{\,t}_{11}+\widetilde{p}^{\,t}_{10}+\widetilde{p}^{\,t}_{01}+\widetilde{p}^{\,t}_{00}=1$. 
Then, the  unconstrained   Bayes-optimal classifier for $\widetilde{\P}^{\,t}$
is  a $|\Dt|$-fair Bayes-optimal classifier for $\P$. 
Thus, for a given $\delta\ge0$,
any Bayes-optimal classifier for $\widetilde{\P}^{\,t_{\textup{Dis}}(\delta)}$ is a $\delta$-fair Bayes-optimal classifier for $\P$.
22\end{theorem}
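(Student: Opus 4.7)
My approach is to apply Proposition~\ref{prop:ba-op} to the transformed distribution $\widetilde{\P}^{\,t}$ and reduce the resulting threshold at $1/2$ on the regression function $\widetilde{\eta}_a^{\,t}$ to the bilinear fair Bayes-optimal threshold $H_{\textup{Dis},a}(t)$ on $\eta_a$ prescribed by \eqref{eq:fbbilinear} in Theorem~\ref{thm-opt-dp}. The key is that the construction \eqref{adj} only reweights the joint $(\widetilde{A},\widetilde{Y})$ distribution while leaving the class-conditional distribution of $\widetilde{X}$ unchanged, so the transformation acts on the regression function $\widetilde{\eta}_a^{\,t}$ purely through the odds ratio between the labels within each group.

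First, I would verify that \eqref{adj} defines a valid joint distribution: since $H_{\textup{Dis},a}(t) \in [0,1]$ and $2-t s_{\textup{Dis},a}>0$ on the range of $t$ for which $f_{\textup{Dis},t}$ is a meaningful classifier (this can be read off from the definition \eqref{eq:defh} and the bilinearity structure), the masses $\widetilde{p}^{\,t}_{a,y}$ are nonnegative, and $c_0, c_1$ can be chosen to make the four probabilities sum to one. Next, I compute $\widetilde{\eta}_a^{\,t}(x) = \widetilde{\P}^{\,t}(\widetilde{Y}=1\mid \widetilde{A}=a,\widetilde{X}=x)$ via Bayes' rule. Because $\widetilde{\P}^{\,t}_{\widetilde{X}\mid \widetilde{A}=a,\widetilde{Y}=y} = \P_{X\mid A=a, Y=y}$, the ratio
\[
\widetilde{\eta}_a^{\,t}(x) = \frac{\widetilde{p}^{\,t}_{a,1}\,\P_{X\mid A=a,Y=1}(x)}{\widetilde{p}^{\,t}_{a,1}\,\P_{X\mid A=a,Y=1}(x) + \widetilde{p}^{\,t}_{a,0}\,\P_{X\mid A=a,Y=0}(x)}
\]
has the normalizing factor $c_a$ canceling from numerator and denominator. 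Plugging in \eqref{adj}, the condition $\widetilde{\eta}_a^{\,t}(x) > 1/2$ becomes $(1-H_{\textup{Dis},a}(t))\,p_{a,1}\P_{X\mid A=a,Y=1}(x) > H_{\textup{Dis},a}(t)\,p_{a,0}\P_{X\mid A=a,Y=0}(x)$, which, after dividing both sides by $p_{a,1}\P_{X\mid A=a,Y=1}(x)+p_{a,0}\P_{X\mid A=a,Y=0}(x)$ and using $\eta_a(x) = p_{a,1}\P_{X\mid A=a, Y=1}(x)/[p_{a,1}\P_{X\mid A=a,Y=1}(x)+p_{a,0}\P_{X\mid A=a,Y=0}(x)]$, rearranges to the key identity $\eta_a(x) > H_{\textup{Dis},a}(t)$.

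This is exactly the form of $f_{\textup{Dis},t}$ given by \eqref{eq:fbbilinear} in Theorem~\ref{thm-opt-dp}. Under the density assumption, the boundary set $\{\eta_a(X) = H_{\textup{Dis},a}(t)\}$ has $\P_{X\mid A=a}$-probability zero, so the uniqueness conclusion of Proposition~\ref{prop:ba-op} (applied to $\widetilde{\P}^{\,t}$) gives that any unconstrained Bayes-optimal classifier for $\widetilde{\P}^{\,t}$ agrees almost surely with $f_{\textup{Dis},t}$. By the definition \eqref{eq:dt} of $D_{\textup{Dis}}(t)$ and by Theorem~\ref{thm-opt-dp}, this classifier has disparity $D_{\textup{Dis}}(t)$ under $\P$ and minimizes the $\P$-misclassification rate among all classifiers with disparity at most $|D_{\textup{Dis}}(t)|$, proving the first claim. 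The second claim follows by specializing $t = t_{\textup{Dis}}(\delta)$ and invoking the definition \eqref{eq:td}, which guarantees $|D_{\textup{Dis}}(t_{\textup{Dis}}(\delta))|\le \delta$.

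The main obstacle I anticipate is the algebraic reduction in the second step: ensuring that the specific affine form of $H_{\textup{Dis},a}(t)$ in \eqref{eq:defh} is precisely the odds-ratio shift induced by the reweighting \eqref{adj}, and that the normalizing constants $c_a$ and the numerator/denominator of $\widetilde{\eta}_a^{\,t}$ all cancel cleanly to reproduce $\eta_a$. A subsidiary issue is verifying that $H_{\textup{Dis},a}(t)\in[0,1]$ on the relevant range of $t$ so that \eqref{adj} genuinely defines a probability distribution; this is routine for the bilinear measures of Proposition~\ref{prop:expression of dl} but must be checked to ensure the construction is not vacuous.
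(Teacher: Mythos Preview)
Your approach is essentially the paper's: both compute $\widetilde{\eta}^{\,t}_a$ via Bayes' rule, reduce the condition $\widetilde{\eta}^{\,t}_a(x)>1/2$ to $\eta_a(x)>H_{\textup{Dis},a}(t)$ through the same cancellation of $c_a$ and the same odds-ratio manipulation, and then identify the result with $f_{\textup{Dis},t}$. The only imprecision is in your final citation: Theorem~\ref{thm-opt-dp} is stated for $t=t_{\textup{Dis}}(\delta)$, so it justifies your second claim directly but not the first (that $f_{\textup{Dis},t}$ is $|D_{\textup{Dis}}(t)|$-fair Bayes-optimal for \emph{general} $t$); the paper handles this by invoking Lemma~\ref{lem:fb}, which gives exactly that conclusion for $t\in[\underline{t}_0,\overline{t}_0]$.
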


Theorem \ref{thm:FUDS} 
designs a fairness-inducing distribution by
multiplying the probabilities 
of each group with a group-wise factor, as per \eqref{adj}.  
As a result, 
$(1-H_{\textup{Dis},a}(t))y+H_{\textup{Dis},a}(t)(1-y)$ can be viewed as
a fairness-inducing 
factor for the group with $A=a$ and $Y=y$. 
As can be seen from \eqref{eq:dis_new},
a large 
threshold
$H_{\textup{Dis},a}(t)$ for the group with $A=a$
indicates that the fair Bayes-optimal classifier specifies a higher acceptance standard for group $A=a$. 
In other words, the unconstrained classifier 
has a higher acceptance rate 
on the subgroup with $A=a$.
To avoid this disparity, we need to generate more data with $Y=0$ 
and less data with $Y=1$, 
for the group with $A=a$; 
reducing $\eta_a(x)=\P(Y=1|X=x,A=a)$.  
For this reason, we call our approach Fair Up- and Down-Sampling (FUDS).

{
{\bf An extension.}
It is possible to show more generally that if the $\delta$-fair Bayes-optimal classifier has the form 
$I(\eta_a(x)>H(a,\delta))$, then re-weighting for all $(x,a,y) \in \mX \times\{0,1\}^2$, by
$\widetilde{\P}^t_{\widetilde{X}|\widetilde{A}=a,\widetilde{Y}=y}(x)=\P_{{X}|{A}=a,{Y}=y}(x)$, 
and, 
\begin{eqnarray*}
\tp^{\,t}_{a,y}=\tP^{\,t}\lsb \widetilde{A}=a,\widetilde{Y}=y\rsb =c_a[(1- H(a,\delta))y+ H(a,\delta)(1-y)]p_{a,y}
\end{eqnarray*}
leads to a distribution whose unconstrained Bayes-optimal classifier is a $\delta$-fair Bayes-optimal classifier for original distribution.  
}

{
However, in the more general setting of linear disparity measures, which includes the attribute blind setting (Section \ref{sec:ex}), developing pre- and in-processing algorithms is delicate.
Though we have shown 
how to do this 
for DD, DO, PD, 
this does not extend 
to general classifiers of the form $I(\eta^Y(x)>H(x;a,\delta))$.
Nevertheless, we show that such algorithms do exist for these common fairness criteria in the attribute-blind setting, and we formalize these findings in Theorems \ref{thm:FUDS1} and \ref{thm:FCSC1}.}

{{\bf Resampling as a pre-processing method.}} 
{In the broader fairness literature, both reweighting and resampling are widely used approaches for developing fairness aware pre-processing algorithms (e.g., \cite{kamiran2012data}, \cite{chai2022fairness}, \cite{CFDV2017}, and \cite{hort2024bias}).
Reweighting and resampling
are equivalent in our case: assigning weights to data points implicitly defines a corresponding resampling distribution, and vice versa.
While we presented the re-sampling version here, our methods are equally applicable to develop group-wise re-weighting methods.}

{\textbf{Data augmentation vs. Resampling methods.}
There have been a number of recent studies that compare generative augmentation methods to 
resampling techniques on tabular datasets.
For instance, \cite{panagiotou2024synthetic} find that models trained on synthetic data from CTGAN \citep{xu2019modeling} and TVAE \citep{patki2016synthetic} outperform SMOTE \citep{chawla2002smote} by tracing a more favorable fair Pareto frontier. More recently, \cite{hastings2025data} demonstrate that diffusion-based tabular data augmentation improves upon the work of \cite{kamiran2012data}. 
}

{However, the verdict of which fairness-aware approach is better remains context-dependent: modern resampling methods such as FairBatch \citep{roh2020fairbatch}, Adaptive Priority Reweighing \citep{hu2023adaptive}
can have advantages,
especially when generative models cannot reliably approximate the conditional distribution $\P(X,Y|A=a)$.}

{Ultimately, our method, FUDS, uses group-aware resampling of the original dataset, which differs fundamentally from generative data augmentation. A direct empirical comparison with data augmentation frameworks would require extensive implementation and domain-specific tuning. We view this as an important direction for future work, but it lies beyond the scope of the current work.}

\subsubsection{Algorithmic implementation}

\begin{algorithm} 
   \caption{Fair Up-/Down-Sampling (FUDS)}
   \label{alg:FUDS}

   \KwIn {Step size $\alpha>0$; Disparity Level $\delta\ge0$; Error tolerance level $\ep>0$; Dataset $S=S_1\cup S_0$ with  $S=\{x_{i},a_i,y_{i}\}_{i=1}^{n}$, $S_1=\{x_{1,i},y_{1,i}\}_{i=1}^{n_1}$ and $S_0=\{x_{0,i},y_{0,i}\}_{i=1}^{n_0}$; $n_{a,y}  = \#\{a_i=a,y_i=y\}$.}
   \BlankLine
\hrule
   \BlankLine

\textbf{Disparity Estimation sub-routine: Construct $\widehat{f}^{\textup{FUDS}}_t$ and estimate $\textup{Dis}(\widehat{f}^{\textup{FUDS}}_t)$ for a given threshold parameter $t$:}

 1. For all $a$, estimate $\widehat{w}_{\textup{Dis}}$ and $\widehat{H}_{\textup{Dis},a}$ using the expressions in \eqref{eq:disparity level expression} and \eqref{eq:defh}. 
 
 2. For all $a,y$, let $\widehat{\widetilde{p}}^{\,t}_{a,y}$ be as in \eqref{htp}.

 3. Apply up- and down-sampling to generate a new dataset $S_{t}$ with proportions $\widehat{\widetilde{p}}_{a,y}^{\,t}$ for all $a,y$. 
 
 4. Fit classifier $\widehat{f}^{\textup{FUDS}}_{t}$ on the dataset $S_t$ using any method. 
 
 5. Estimate the disparity level of $\widehat{f}^{\textup{FUDS}}_{t}$ as in \eqref{dise}.
 \BlankLine
 \hrule
 \BlankLine

\textbf{Estimate the $\delta$-fair Bayes-optimal classifier:}

   Run Disparity Estimation sub-routine with $t=0$.
    
  \uIf{$\left|\widehat{\textup{Dis}}^{\textup{FUDS}}({0})\right|\le \delta$} {
 $\widehat{t}_{\textup{Dis}}(\delta)=0$.}
  \Else{
  \uIf{$\widehat{\textup{Dis}}^{\textup{FUDS}}({0})> \delta$}{ $\delta'=\delta$; $t_{\min}=0$, $t_{\max}=1$.}
  \Else {$\delta = -\delta$; $t_{\min}=-1$, $t_{\max}=0$.}
\While {$t_{\max}-t_{\min}>\ep$}
        {$t =  (t_{\max}-t_{\min})/2;$ 
        
     Run Disparity Estimation sub-routine with current $t$.
     
 \uIf {$\widehat{\textup{Dis}}^{\textup{FUDS}}(t) > \delta$} {$t_{\max} = t$}
 \Else {$t_{\min} = t$}}
$\widehat{t}_{\textup{Dis}}(\delta)=t.$}

 {\bfseries Output:}  
$\widehat{f}_{\textup{Dis},\delta}=\widehat{f}^{\textup{FUDS}}_{\widehat{t}_{\textup{Dis}}(\delta)}.$
 \BlankLine

\end{algorithm}
While the sampling probabilities from \eqref{adj}
depend on unknown population parameters, 
we can estimate them from the training data.
Then, 
we can generate approximately fair datasets using up- and down-sampling; see \Cref{alg:FUDS}.
According to Theorem \ref{thm:FUDS}, for each value of $t$, there is a fair Bayes-optimal classifier for \emph{some} level---$|\Dt|$---of disparity. 
Therefore, by adjusting $t$, we can create a range of data distributions, each leading to a different point on the fair Pareto frontier as defined in \eqref{eq:vec-opt}.  
When we need a dataset with 
a predetermined disparity level $\delta$,
we can estimate the threshold parameter $t_{\textup{Dis}}(\delta)$ using an iterative method for updating $t$, see Algorithm \ref{alg:FUDS}. 

Specifically, for a given $t$, we estimate $f^{\textup{FUDS}}_t$ and its disparity level as follows: 
\begin{itemize}[]
    \item  (1) 
    For all $x,a,y,t$,
    estimate $p_{a,y}$ by $n_{a,y}/n$ and 
    estimate $\wD(x,a)$ and $\Hta$ from \eqref{eq:disparity level expression} and \eqref{eq:defh} by plug-in estimation. 
    Taking demographic disparity as an example, we let for all $x,a$ and for all $t\in \R$,
$$
\widehat{w}_{\textup{DD}}(x,a)=\frac{n}{n_{a,1}+n_{a,0}}, \ \ \text{ and } \ \ 
\widehat{H}_{\text{DD},a}(t)=\frac12 + \frac{(2a-1)nt}{n_{a,1}+n_{a,0}}.
$$
\item (2) Estimate the group-wise proportions, for $(a,y)\in\{0,1\}^2$,
\begin{equation}\label{htp}
    \widehat{\widetilde{p}}^{\,t}_{a,y}=\widehat c_a[(1- \widehat{H}_{\textup{Dis},a}(t))y+ \widehat{H}_{\textup{Dis},a}(t)(1-y)]\cdot n_{a,y}/n.
\end{equation}
Here $\widehat c_1$ and $\widehat c_2$ are 
such that $\widehat{\widetilde{p}}^{\,t}_{11}+\widehat{\widetilde{p}}^{\,t}_{10}+\widehat{\widetilde{p}}^{\,t}_{01}+\widehat{\widetilde{p}}^{\,t}_{00}=1$. 
\item (3) Apply up- and down- sampling to generate a new dataset $S_t
=\{\widetilde{x}_i,\widetilde{a}_i,\widetilde{y}_i\}_{i=1}^n$ with 
$\widetilde{n}^{\,t}_{a,y}=\#\{\widetilde{a}_i=a,\widetilde{y}_i=y\}
=\lfloor n\cdot \widehat{\tpt}_{a,y} \rfloor $.
The precise sampling method is presented in the paragraphs below.
\item (4) Fit a classifier $\widehat{f}^{\textup{FUDS}}_{t}$ to the dataset $S_t$ using any method, such as logistic regression or deep neural nets.
\item (5) Estimate the disparity level of $\widehat{f}^{\textup{FUDS}}_{t}$ for the given $t$:
\begin{equation}\label{dise}
  \widehat{\textup{Dis}}^{\textup{FUDS}}(t)=\frac1{n_1}\sum_{j=1}^{n_1} \widehat{f}^{\textup{FUDS}}_{t}(\xoj,1) \widehat{w}_{\textup{Dis}}(\xoj,1)-\frac1{n_0}\sum_{j=1}^{n_0} \widehat{f}^{\textup{FUDS}}_{t}(\xzj,0) \widehat{w}_{\textup{Dis}}(\xzj,0).  
\end{equation} 
\end{itemize}
Following this, $t$ can be adjusted using the bisection method for solving ${D}_{\textup{Dis}}(t)=0$, based on the estimating equation $\widehat{\textup{Dis}}^{\textup{FUDS}}(t)=0$. 
This is grounded in our observation that, as per Proposition \ref{prop:monotonicity}, at a population level, the disparity function $\textup{Dis}(\widehat{f}^{\textup{FUDS}}_t)$ is monotone non-increasing in $t$.

Performing independent random resampling of the original dataset 
$S_n$ 
  in each iteration can lead to significant variability in the datapoints---and also in the induced classifiers. 
  To address this issue, we propose a sampling strategy aimed at reducing variability.
   
   Consider the generated dataset $\tmStm$ for a given threshold 
$t'$, and write it  as $\tmStm=\cup_{a,y\in\{0,1\}^2}\tmStmay$, where for all $a,y$, $\tmStmay$
  represents the set of points with $\widetilde{a}_i=a$ and $\widetilde{y}_i=y$.  
  Let $t$ be the next threshold, and let
  $\widetilde{n}^{\,t'}_{a,y}=\lfloor n\cdot \tptpr_{a,y} \rfloor $ and
  $\widetilde{n}^{\,t}_{a,y}=\lfloor n\cdot \tpt_{a,y} \rfloor $ for all $a,y$.
  Our strategy considers the following two cases, for all $a,y$:
\begin{itemize}
\item If $\widetilde{n}_{a,y}^{\,t}\ge\widetilde{n}_{a,y}^{\,t'}$, we randomly select $\widetilde{n}_{a,y}^{\,t}-\widetilde{n}_{a,y}^{\,t'}$ data points from $S_{n,a}$ to add to  $\tmStmay$, forming $\tmStay$.

\item If $\widetilde{n}_{a,y}^{\,t}<\widetilde{n}_{a,y}^{\,t'}$, we randomly select $\widetilde{n}_{a,y}^{\,t}$ data points from  $\tmStmay$ to form $\tmStay$.
\end{itemize}
Our simulation studies indicate that this strategy offers greater stability than independent random sampling in each iteration, leading to a more favorable fairness-accuracy tradeoff and reduced variance in the empirical performance.

Another challenge is posed by computational complexity. 
As discussed above, we fit a classifier at each iteration and then estimate its disparity level. 
For larger datasets and more complex models, this procedure can become time-consuming.
A faster method is to adjust $t$ while fitting each classifier, 
fine-tuning the optimal sampling ratio concurrently with model training. 
We only need to revise step (4): 
for instance, when training via an iterative optimization-based method,
rather than finishing the entire training procedure, we only 
 train for a small number of steps---or, epochs---in each iteration to obtain a classifier $\widehat{f}_t^{\textup{FUDS}}$. 
 The classifier obtained in the each iteration is  used as the starting point in the next iteration.

\subsection{In-Processing: Fair cost-sensitive Classification (FCSC) }
\label{sec:fcsc}
In this section, we propose an 
in-processing fair classification method. 
Our objective is to identify a risk function whose 
unconstrained 
minimizer is the fair Bayes-optimal classifier. 
The key observation is that 
fair Bayes-optimal classifiers
adjust thresholds for each protected group, 
which is also known to occur in cost-sensitive classification.

Recall the cost-sensitive risk $R_c$ from \eqref{csr}.
An unconstrained Bayes-optimal classifier for this cost-sensitive risk, denoted as $f^\star_c$,
 is a classifier that minimizes $R_c(f)$, i.e., $f_c^\star\in \argmin_{f\in\mF} R_c(f).$  
 All such classifiers can be represented as
$f_c^\star(x,a)=I(\eta_a(x)>c)$ for all $x,a$, \citep[see e.g.,][]{C2001CS}. 
Drawing inspiration from this connection, 
we propose a \emph{group-wise} cost-sensitive classification approach. 
We aim to shift the thresholds used for the protected groups in opposite directions 
to reduce disparity. 
\begin{definition}[Fair cost-sensitive Risk]\label{fcsr}
Recall $H_{\textup{Dis},a}$, $a\in \mA$ from \eqref{eq:defh}     
and let $c_{a,y}(t)=(1-2y)H_{\textup{Dis},a}(t)+y$ for all $t,a,y$. 
For $t\in \R$, 
   define the following \emph{fair cost-sensitive risk} of a classifier $f$:
$$R^{\textup{FCSC}}_t(f)=\sum_{a\in\{0,1\}}\sum_{y\in\{0,1\}}c_{a,y}(t) \P(\widehat{Y}_f=1-y,Y=y,A=a).$$
\end{definition}
\emph{Fair cost-sensitive classification} (FCSC) minimizes the 
fair cost-sensitive risk from Definition \ref{fcsr}:
\begin{equation}\label{ffcsc}
f^{\textup{FCSC}}_{t} = \argmin_{f\in{\mF}}\, R^{\textup{FCSC}}_t(f).
\end{equation}
Our key result here is that fair cost-sensitive classification is Bayes-optimal.
\begin{theorem}[Fair cost-sensitive Classification is Bayes-Optimal]\label{thm:FCSC}
Letting $t^\star_0 = t_{\textup{Dis}} (0) =\argmin_{t}\{|t|: |{D}_{\textup{Dis}}(t)|=0\}$,  
we have, for $t\in [\min(t^\star_0,0),\max(t^\star_0,0)]$,  with $\Dt$ defined in \eqref{eq:dt},   
that $f^{\textup{FCSC}}_{t}$ from \eqref{ffcsc} is a $|\Dt|$-fair Bayes-optimal classifier for  $\P$. 
  In particular,   $f^{\textup{FCSC}}_{ t_{\textup{Dis}}(\delta)}$ is a $\delta$-fair Bayes-optimal classifier for $\P$.
\end{theorem}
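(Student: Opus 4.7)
My plan is to rewrite the fair cost-sensitive risk as a single integral over $\mathcal{X}\times\mathcal{A}$ against the class-conditional regression functions $\eta_a$, carry out pointwise minimization to identify the form of $f^{\textup{FCSC}}_t$ explicitly, and then connect that form to the fair Bayes-optimal classifier characterized in Theorem~\ref{thm-opt-dp}. Concretely, writing $\P(\widehat{Y}_f = 1-y, Y=y, A=a)$ as $\int_\mathcal{X} [(1-f(x,a))\eta_a(x)]$ (when $y=1$) or $\int_\mathcal{X} [f(x,a)(1-\eta_a(x))]$ (when $y=0$), weighted by $p_a d\P_{X\mid A=a}(x)$, and plugging in $c_{a,0}(t)=H_{\textup{Dis},a}(t)$ and $c_{a,1}(t)=1-H_{\textup{Dis},a}(t)$, I would collect terms to obtain
\begin{equation*}
R^{\textup{FCSC}}_t(f) = \sum_{a\in\{0,1\}} p_a \int_\mathcal{X} f(x,a)\bigl(H_{\textup{Dis},a}(t) - \eta_a(x)\bigr)\,d\P_{X\mid A=a}(x) + C(t),
\end{equation*}
where $C(t)$ is independent of $f$.

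Since $f$ enters only through a pointwise linear term with coefficient $H_{\textup{Dis},a}(t) - \eta_a(x)$, and $f(x,a) \in [0,1]$, the unconstrained minimizer over $\mathcal{F}$ is obtained by setting $f(x,a) = 1$ exactly when $\eta_a(x) > H_{\textup{Dis},a}(t)$ and $f(x,a)=0$ when $\eta_a(x) < H_{\textup{Dis},a}(t)$. Under the assumption that $\eta_a(X)$ has a density, the tie set $\{\eta_a(x)=H_{\textup{Dis},a}(t)\}$ has $\P_{X\mid A=a}$-measure zero, so the minimizer is unique $\P_{X,A}$-almost everywhere and equals $f^{\textup{FCSC}}_t(x,a) = I[\eta_a(x) > H_{\textup{Dis},a}(t)]$. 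This is exactly the group-wise thresholding rule $f_{\textup{Dis},t}$ from \eqref{eq:partos}--\eqref{eq:fbbilinear}, so by the definition of $D_{\textup{Dis}}$ in \eqref{eq:dt}, $\textup{Dis}(f^{\textup{FCSC}}_t) = D_{\textup{Dis}}(t)$.

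It remains to show that $f^{\textup{FCSC}}_t$ attains the best accuracy among all classifiers with disparity at most $|D_{\textup{Dis}}(t)|$. By Theorem~\ref{thm-opt-dp}, a $|D_{\textup{Dis}}(t)|$-fair Bayes-optimal classifier has the thresholding form $I[\eta_a(x) > H_{\textup{Dis},a}(t_{\textup{Dis}}(|D_{\textup{Dis}}(t)|))]$, so it suffices to prove $t_{\textup{Dis}}(|D_{\textup{Dis}}(t)|) = t$ whenever $t \in [\min(t_0^\star,0),\max(t_0^\star,0)]$. This is where I expect the main obstacle to lie, because the definition of $t_{\textup{Dis}}$ in \eqref{eq:td} selects the argument of minimum absolute value among those satisfying the disparity bound. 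I would handle it by a case split on the sign of $t_0^\star$ and invoke the monotonicity of $D_{\textup{Dis}}$ established in Proposition~\ref{prop:monotonicity}(1): on the relevant interval, $D_{\textup{Dis}}$ is non-increasing and vanishes at $t_0^\star$, so $|D_{\textup{Dis}}(t')|$ is non-increasing in $|t'|$ on $[0,t_0^\star]$ (resp.\ $[t_0^\star,0]$). Hence any $t'$ with $|D_{\textup{Dis}}(t')| \le |D_{\textup{Dis}}(t)|$ and $|t'|<|t|$ would have to lie on the opposite side of $t_0^\star$, but crossing $t_0^\star$ does not reduce $|t'|$ below $|t|$ unless the disparity strictly exceeds $|D_{\textup{Dis}}(t)|$; the argmin in \eqref{eq:td} is therefore $t$ itself.

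Combining the three pieces yields that $f^{\textup{FCSC}}_t$ coincides (up to a null set) with the fair Bayes-optimal classifier at level $\delta = |D_{\textup{Dis}}(t)|$. The final claim, that $f^{\textup{FCSC}}_{t_{\textup{Dis}}(\delta)}$ is $\delta$-fair Bayes-optimal, follows by specializing to $t = t_{\textup{Dis}}(\delta)$ and noting that $|D_{\textup{Dis}}(t_{\textup{Dis}}(\delta))| \le \delta$ by construction, with equality whenever $\delta \le |D_{\textup{Dis}}(0)|$ by continuity/monotonicity of $D_{\textup{Dis}}$ on the relevant interval.
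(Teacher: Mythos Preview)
Your computational core is correct and matches the paper exactly: you expand $R^{\textup{FCSC}}_t(f)$ into a linear functional of $f$ with pointwise coefficient $H_{\textup{Dis},a}(t)-\eta_a(x)$, minimize pointwise, and conclude $f^{\textup{FCSC}}_t=f_{\textup{Dis},t}$.

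The place where your argument diverges from the paper, and where it develops a small gap, is the ``inversion'' step. You try to show $t_{\textup{Dis}}(|D_{\textup{Dis}}(t)|)=t$ and then invoke Theorem~\ref{thm-opt-dp}. But this equality need not hold: $D_{\textup{Dis}}$ is only non-increasing, not strictly decreasing, and the density assumption on $\eta_a(X)$ guarantees continuity but not strict monotonicity (the density of $\eta_a(X)$ may vanish on an interval of threshold values). If $D_{\textup{Dis}}$ is constant on $[t_1,t]\subset[0,t_0^\star]$ with $t_1<t$, then $t_{\textup{Dis}}(|D_{\textup{Dis}}(t)|)=t_1\neq t$, contradicting your claim that any competing $t'$ ``would have to lie on the opposite side of $t_0^\star$.'' The conclusion you want is still true---$f_{\textup{Dis},t}$ and $f_{\textup{Dis},t_1}$ have the same risk in that situation---but your argument as written does not establish it.

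The paper avoids this detour entirely. Rather than going through Theorem~\ref{thm-opt-dp} and inverting $t_{\textup{Dis}}$, it appeals directly to Lemma~\ref{lem:fb}, the Neyman--Pearson characterization, which states that for every $t\in[\min(0,t_{\textup{Dis}}(0)),\max(0,t_{\textup{Dis}}(0))]$ the classifier $f_{\textup{Dis},t}$ already lies in $\argmin_{f\in\mathcal{F}}\{R(f):|\textup{Dis}(f)|\le |D_{\textup{Dis}}(t)|\}$. Once you have $f^{\textup{FCSC}}_t=f_{\textup{Dis},t}$, that lemma finishes the proof in one line, with no need to identify $t$ as the value of $t_{\textup{Dis}}$ at any particular level.
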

Theorem \ref{thm:FCSC} shows that carefully designed group-wise cost-sensitive classification
can lead to a fair Bayes-optimal classifier. 
Similar to the fair sampling method from \Cref{fuds}, one can generate a fairness-accuracy tradeoff curve by using FCSC and varying $t$. 
To achieve a specific disparity level  $\delta$, 
an updating method similar to that from \Cref{alg:FUDS}
can be used to estimate $ t_{\textup{Dis}}(\delta)$; see Algorithm \ref{alg:FCSC}.

\begin{algorithm}
   \caption{Fair cost-sensitive Classification (FCSC)}
   \label{alg:FCSC}

   \KwIn {Step size $\alpha>0$; Disparity Level $\delta\ge0$, Error tolerance level $\ep>0$; Dataset $S=S_1\cup S_0$ with  $S=\{x_{i},a_i,y_{i}\}_{i=1}^{n}$, $S_1=\{x_{1,i},y_{1,i}\}_{i=1}^{n_1}$ and $S_0=\{x_{0,i},y_{0,i}\}_{i=1}^{n_0}$; $n_{a,y}  = \#\{a_i=a,y_i=y\}$; Step size $\alpha\ge 0$.}
   \BlankLine

\textbf{Disparity Estimation sub-routine: Construct $\widehat{f}^{\textup{FCSC}}_t$ and estimate $\textup{Dis}(\widehat{f}^{\textup{FCSC}}_t)$ for a given threshold parameter $t$:}

 1. For all $a$, estimate  $\widehat{w}_{\textup{Dis}}$  and $\widehat{H}_{\textup{Dis},a}$ by plug-in estimation using the expressions in \eqref{eq:disparity level expression} and \eqref{eq:defh}. 
 
 2. Denote $\widehat c_{a,y}(t) = (1-2y)\widehat{H}_{\textup{Dis},a}(t)+y$, for all $t,a,y$.

 3. Use any cost-sensitive classification method to fit $\widehat{f}^{\textup{FCSC}}_{t}(\cdot,1)$ on $S_1$ and $\widehat{f}^{\textup{FCSC}}_{t}(\cdot,0)$ on $S_0$.
 
4. Estimate the disparity level of $\widehat{f}^{\textup{FCSC}}_{t}$ as in \eqref{dise} with $\widehat{f}^{\textup{FCSC}}_{t}$ instead of $\widehat{f}^{\textup{FUDS}}_{t}$.
 \BlankLine
 \hrule
 \BlankLine

  \begin{multicols}{2}
  {
\textbf{Estimate the fair Pareto frontier:}
 \BlankLine
\hrule
   \BlankLine
   Run Disparity Estimation sub-routine with $t=0$.
    
\uIf { $\widehat{\textup{Dis}}^{\textup{FCSC}} (0)>0$} {$\alpha' = \alpha$}
 \Else { $\alpha' = -\alpha$}
 \While{ $\widehat{\textup{Dis}}^{\textup{FCSC}}(t)\cdot \widehat{\textup{Dis}}^{\textup{FCSC}} (0) > 0$}
 { $t = t+\alpha'$. 
 
Run Disparity Estimation sub-routine with current $t$.
 }
 {\bfseries Output:}  $\{\widehat{f}^{\textup{FCSC}}_{t}\}_{t=0,1,2,...}$.}
\columnbreak

\textbf{Estimate the $\delta$-fair Bayes-optimal classifier:}
 \BlankLine
\hrule
   \BlankLine
   Run Disparity Estimation sub-routine with $t=0$.
  \uIf{$\left|\widehat{\textup{Dis}}^{\textup{FCSC}}({0})\right|\le \delta$} {
 $\widehat{t}_{\textup{Dis}}(\delta)=0$.}
  \Else{
  \uIf{$\widehat{\textup{Dis}}^{\textup{FCSC}}({0})> \delta$}{ $\delta'=\delta$; $t_{\min}=0$, $t_{\max}=1$.}
  \Else {$\delta = -\delta$; $t_{\min}=-1$, $t_{\max}=0$.}
\While {$t_{\max}-t_{\min}>\ep$}
        {$t =  (t_{\max}-t_{\min})/2;$ 
        
     Run Disparity Estimation sub-routine with current $t$.
     
 \uIf {$\widehat{\textup{Dis}}^{\textup{FCSC}}(t) > \delta$} {$t_{\max} = t$}
 \Else {$t_{\min} = t$}}
$\widehat{t}_{\textup{Dis}}(\delta)=t.$}

 {\bfseries Output:}  
$\widehat{f}_{\textup{Dis},\delta}=\widehat{f}^{\textup{FUDS}}_{\widehat{t}_{\textup{Dis}}(\delta)}.$
  \end{multicols}
 \BlankLine

\end{algorithm}

Similar to the FUDS algorithm,
the computational efficiency of FCSC can be improved
by adjusting the threshold  parameter 
$t$ and the associated loss function after a few training epochs (rather than after the whole training process in step 3).

\subsection{Post-Processing: Fair Plug-in Thresholding Rule (FPIR)}
\label{sec:fpir}

Finally, we consider a post-processing algorithm that modifies the model output to control the disparity. 
Given the near-explicit form of Bayes-optimal classifiers from \Cref{thm-opt-dp}, 
after fitting any classifier $\widehat{f}$, 
we can estimate the group-wise thresholds. 
Such an approach has been proposed in a more limited case for demographic parity and equality of opportunity in \cite{MW2018}.
In contrast, our framework is applicable to \emph{any} bilinear disparity measure.

\begin{algorithm}
   \caption{Fair Plug-in Rule (FPIR)}
   \label{alg:FPIR}

   \KwIn {Step size $\alpha>0$; Disparity level $\delta\ge0$; Error tolerance level $\ep>0$; Dataset $S=S_1\cup S_0$ with  $S=\{x_{i},a_i,y_{i}\}_{i=1}^{n}$, $S_1=\{x_{1,i},y_{1,i}\}_{i=1}^{n_1}$ and $S_0=\{x_{0,i},y_{0,i}\}_{i=1}^{n_0}$; $n_{a,y}  = \#\{a_i=a,y_i=y\}$.}
   \BlankLine

   \BlankLine   {\textbf{Step 1}:} Construct estimate $\widehat{\eta}_a$ of $\eta_a$ using any approach, for all $a$.
   \BlankLine

  {\textbf{Step 2}:  Estimate the the fair Pareto frontier or the $\delta$-fair Bayes-optimal classifier: 
}

\textbf{Disparity Estimation sub-routine: Construct $\widehat{f}^{\textup{FPIR}}_t$ and estimate $\textup{Dis}(\widehat{f}^{\textup{FPIR}}_t)$ with threshold parameter  $t$:}

 1. For all $a$, estimate  $\widehat{w}_{\textup{Dis}}$  and $\widehat{H}_{\textup{Dis},a}$ by plug-in estimation using the expressions in \eqref{eq:disparity level expression} and \eqref{eq:defh}. 

2. Let $\widehat{f}^{\mathrm{FPIR}}_t$ be defined by 
$\widehat{f}^{\mathrm{FPIR}}_t(x,a) = I(\widehat{\eta}_a(x)> \widehat{H}_{\textup{Dis},a}(t))$ for all $x,a$.

3. Evaluate the disparity level of $\widehat{f}^{\textup{FPIR}}_{t}$ as in \eqref{dise} with $\widehat{f}^{\textup{FPIR}}_{t}$ instead of $\widehat{f}^{\textup{FUDS}}_{t}$.
 
 \BlankLine
 \hrule
 \BlankLine

  \begin{multicols}{2}
  {
\textbf{Estimate the fair Pareto frontier:}
 \BlankLine
\hrule
   \BlankLine
   Run Disparity Estimation sub-routine with $t=0$.
    
\uIf { $\widehat{\textup{Dis}}^{\textup{FPIR}} (0)>0$} {$\alpha' = \alpha$}
 \Else { $\alpha' = -\alpha$}
 \While{ $\widehat{\textup{Dis}}^{\textup{FPIR}}(t)\cdot \widehat{\textup{Dis}}^{\textup{FPIR}} (0) > 0$}
 { $t = t+\alpha'$. 
 
Run Disparity Estimation sub-routine with current $t$.
 }
 {\bfseries Output:}  $\{\widehat{f}^{\textup{FPIR}}_{t}\}_{t=0,1,2,...}$.}
\columnbreak

\textbf{Estimate the $\delta$-fair Bayes-optimal classifier:}
 \BlankLine
\hrule
   \BlankLine
   Run Disparity Estimation sub-routine with $t=0$.
    
  \uIf{$\left|\widehat{\textup{Dis}}^{\textup{FPIR}}({0})\right|\le \delta$} {
 $\widehat{t}_{\textup{Dis}}(\delta)=0$.}
  \Else{
  \uIf{$\widehat{\textup{Dis}}^{\textup{FPIR}}({0})> \delta$}{ $\delta'=\delta$; $t_{\min}=0$, $t_{\max}=1$.}
  \Else {$\delta = -\delta$; $t_{\min}=-1$, $t_{\max}=0$.}
\While {$t_{\max}-t_{\min}>\ep$}
        {$t =  (t_{\max}-t_{\min})/2;$ 
        
Run Disparity Estimation sub-routine with current $t$.
     
 \uIf {$\widehat{\textup{Dis}}^{\textup{FPIR}}(t) > \delta$} {$t_{\max} = t$}
 \Else {$t_{\min} = t$}}
$\widehat{t}_{\textup{Dis}}(\delta)=t.$}

 {\bfseries Output:}  
$\widehat{f}_{\textup{Dis},\delta}=\widehat{f}^{\textup{FPIR}}_{\widehat{t}_{\textup{Dis}}(\delta)}.$
  \end{multicols}
 \BlankLine
\end{algorithm}

First, 
we can use any method to estimate the feature-conditional probability of $Y=1$ for each protected group. 
Second, we estimate the threshold for each protected group.
One can either vary the value of $t$ to generate fairness-accuracy tradeoff curves or solve the 
estimated one-dimensional fairness equation for a specific disparity level. 

Specifically, let $\widehat{\eta}_a$ be an estimator of $\eta_a$, and consider the following plug-in rule for all $x,a$:
\begin{align*}
\widehat{f}^{\textup{FPIR}}_t(x,a)&=
I\left(\widehat\eta_{a}(x)>\widehat{H}_{\textup{Dis},a}\right),
\end{align*}
where $\widehat{H}_{\textup{Dis},a}$ is estimated using the expression in  \eqref{eq:defh}.  
Now, our goal is to construct an estimate $\widehat{t}_\delta$ such that 
$\widehat{f}_{\widehat{t}_\delta}$ 
approximately satisfies the fairness constraint. 
{We
define $\widehat{\textup{Dis}}(t)$, an estimator of the disparity function $\Dt$, as in \eqref{dise} with $\widehat{f}_{t}^{\textup{FPIR}}$ instead of $\widehat{f}^{\textup{FUDS}}_{t}$,
where $\widehat{w}_{\textup{Dis}}$ 
is estimated using the plug-in estimators $\widehat p_{a,y} =n_{a,y}/n$ in \eqref{eq:disparity level expression}.
 We then follow the definition of $\td$ in \eqref{eq:td} to estimate it by
$$\widehat{t}_{\textup{Dis}}(\delta) = \argmin_t \{|t|: |\widehat{\textup{Dis}}(t)|\le \delta\}.$$
Since $\widehat{\textup{Dis}}(t)$ is monotone non-increasing as a function of $t$,
$\widehat{t}_{\textup{Dis}}(\delta)$ can be estimated via the bisection method, as summarized in Algorithm \ref{alg:FPIR}.
Our final FPIR estimator of the fair Bayes-optimal classifier is $\widehat{f}^{\textup{FPIR}}_{\widehat{t}_{\textup{Dis}}(\delta)}$.

{
We also note the recent findings of \cite{cruz2024unprocessing}, which suggest that post-processing methods can achieve better fair Pareto frontiers compared to pre- and in-processing approaches. While this might imply that post-processing methods alone are sufficient, pre- and in-processing approaches play distinct and valuable roles, making relevant techniques such as resampling and reweighting
essential components of the broader fairness toolbox. 
One of the main contributions of our work is to provide such tools via a unified framework for characterizing the $\delta$-fair Bayes optimal classifier and for tracing the fair Pareto frontier.}



\section{Empirical Experiments}
\label{sim}
\subsection{Synthetic Datasets}
\label{synth}
In this section, we conduct simulation studies to illustrate the numerical performance of our proposed methods. We consider 
a label-conditional normal distribution, for which the fair Bayes-optimal classifier has a simple closed form.

\textbf{Data-generating process.}
For a positive integer dimension $p$, let $X=(X_1,X_2,...,X_p)^\top\in \mathbb{R}^p$ be the usual features, 
$A\in\{0,1\}$ be the protected attribute and $Y\in\{0,1\}$ be the label. 
Recalling the notations from the end of Section \ref{Pre_and _Not}, we generate $A$ and $Y$ according to the probabilities $p_{1,1}=0.49$, $p_{1,0}=0.21$ $p_{0,1}=0.12$ and $p_{0,0}=0.18$. 
Conditional on $A=a$ and $Y=y$, we generate $X$ from a multivariate
Gaussian distribution $\N(\mu_{a,y},\sigma^2I_p)$, where $I_p$ is the $p$-dimensional identity covariance matrix, and $\sigma^2$ controls the variability  of the feature entries. 
The entries of $\mu_{a,y}$ are sampled from $\mu_{ay,j}\sim \mathrm{Unif} (0,1)$, $j=1,\ldots,p$, where $\mathrm{Unif} (0,1)$ is the uniform distribution over $[0,1]$.  In this model, $\eta_a$ has a closed form, 
and we can use it to find the fair Bayes-optimal classifier, see \Cref{exp_det}.
Intriguingly, under this model, group-wise thresholding rules are linear in  $x$.

\begin{table}[b]
\caption{Numerical simulation results showing classification accuracies and levels of disparity of the true fair Bayes-optimal classifier and the three proposed methods (FUDS, FCSC, FPIR) using logistic regression. 
The reported results are averages over 5,000 test data points, with standard deviations shown in parentheses.
\vspace{-0.5cm}}
\label{table_ddp}
\vskip -0.2in
\begin{center}
\setlength{\tabcolsep}{5.1pt}
\renewcommand{\arraystretch}{1.15}
\begin{small}
\begin{sc}
\resizebox{\textwidth}{!}{%
\begin{tabular}{cc|cc|cc|cc}
\hline
\multicolumn{8}{c}{Demographic Parity}   \\\hline
\multicolumn{2}{c|}{Theoretical}  & \multicolumn{2}{c|}{FUDS} &
\multicolumn{2}{c|}{FCSC}& \multicolumn{2}{c}{FPIR} \\ \hline
$\delta$& $\text{ACC}$ & DD & $\text{ACC}$& DD & $\text{ACC}$ & DD & $\text{ACC}$ \\ 
\hline
0.00 & 0.727 & 0.013 (0.010) & 0.722 (0.007) & 0.013 (0.010) & 0.722 (0.007) & 0.013  (0.010) & 0.722  (0.007) \\
0.10 & 0.736 & 0.100 (0.017) & 0.735 (0.007) & 0.100 (0.017) & 0.736 (0.007) & 0.100  (0.017) & 0.736  (0.007) \\ 
0.20 & 0.746 & 0.200 (0.018) & 0.746 (0.007) & 0.200 (0.018) & 0.746 (0.006) & 0.200  (0.018) & 0.746  (0.006)\\
0.30 & 0.753 & 0.300 (0.017) & 0.753 (0.006) & 0.300 (0.017) & 0.753 (0.006) & 0.300  (0.017) & 0.753  (0.006)\\\hline
\multicolumn{8}{c}{Equality of Opportunity}   \\\hline
\multicolumn{2}{c|}{Theoretical}  & \multicolumn{2}{c|}{FUDS} &
\multicolumn{2}{c|}{FCSC}& \multicolumn{2}{c}{FPIR} \\ \hline
$\delta$& $\text{ACC}$ & DD & $\text{ACC}$& DD & $\text{ACC}$ & DD & $\text{ACC}$ \\ 
\hline
0.00 & 0.731 & 0.017 (0.013) & 0.730 (0.007) & 0.017 (0.013) & 0.731 (0.007) & 0.017  (0.013) & 0.731  (0.007) \\
0.10 & 0.745 & 0.101 (0.022) & 0.744 (0.007) & 0.101 (0.022) & 0.744 (0.007) & 0.101  (0.023) & 0.744  (0.007) \\ 
0.20 & 0.753 & 0.201 (0.024) & 0.752 (0.006) & 0.202 (0.024) & 0.753 (0.006) & 0.202  (0.024) & 0.753  (0.006)\\
0.30 & 0.757 & 0.302 (0.025) & 0.756 (0.006) & 0.302 (0.025) & 0.757 (0.006) & 0.302  (0.025) & 0.757  (0.006)\\\hline
\multicolumn{8}{c}{Predictive Equality}   \\\hline
\multicolumn{2}{c|}{Theoretical}  & \multicolumn{2}{c|}{FUDS} &
\multicolumn{2}{c|}{FCSC}& \multicolumn{2}{c}{FPIR} \\ \hline
$\delta$& $\text{ACC}$ & DD & $\text{ACC}$& DD & $\text{ACC}$ & DD & $\text{ACC}$ \\ 
\hline
0.00 & 0.743 & 0.020 (0.015) & 0.743 (0.006) & 0.020 (0.015) & 0.743 (0.006) & 0.020  (0.015) & 0.743  (0.007) \\
0.10 & 0.751 & 0.099 (0.025) & 0.750 (0.006) & 0.099 (0.025) & 0.750 (0.006) & 0.099  (0.025) & 0.750  (0.006) \\ 
0.20 & 0.756 & 0.199 (0.024) & 0.755 (0.006) & 0.199 (0.025) & 0.755 (0.006) & 0.199  (0.025) & 0.755  (0.006)\\
0.30 & 0.758 & 0.297 (0.023) & 0.757 (0.006) & 0.296 (0.024) & 0.757 (0.006) & 0.297  (0.024) & 0.757  (0.006)\\\hline
\end{tabular}
}
\end{sc}
\end{small}
\end{center}
\vskip -0.2in
\end{table}

 \textbf{Experimental setting.} To evaluate our FUDS, FCSC and FPIR algorithms, we randomly generate $10,000$ training data points  and $5,000$ test data points.
Since the group-wise thresholding rules are linear in $x$, we use logistic regression in our methods.
All experiments are conducted in python and we use the scikit-learn package to train the logistic regression model.
This package allows us to specify label weights for cost-sensitive classification, and thus works seamlessly for FCSC.
 In our experiments, we consider four fairness metrics: 
 demographic parity, equality of opportunity, predictive equality and overall accuracy equality.

We  first evaluate the three algorithms with various pre-determined levels of disparity. We present the simulation results in Table \ref{tab1}. We observe that FPIR controls the disparity level at the pre-determined value, as desired. 
 We further present the 
 fairness-accuracy tradeoff curve of fair Bayes-optimal classifiers (the fair Pareto frontier from \Cref{fro}) and 
 our three methods in Figure \ref{plt-synthetic}. Our classifiers closely track the fair Pareto frontier.

\begin{figure}[t]
\begin{center}
\centerline{\includegraphics[width=1\columnwidth]{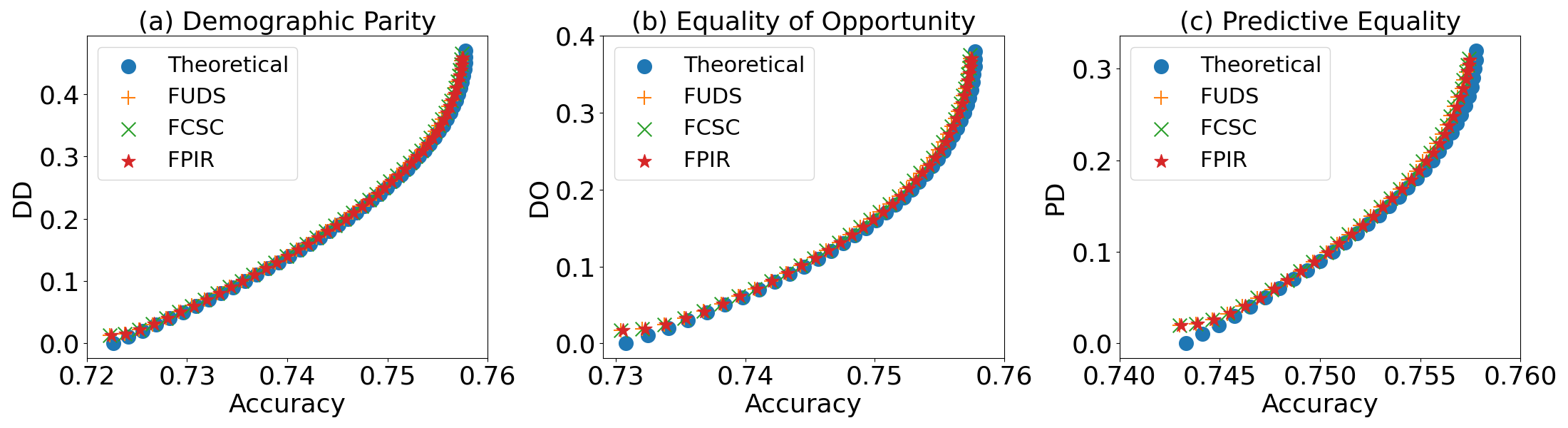}} \caption{Numerical simulation results showing the fairness-accuracy tradeoff the fair Bayes-optimal classifier (fair Pareto frontier) and the three proposed methods (FUDS, FCSC, FPIR) using logistic regression.}
 \label{plt-synthetic}
 \end{center}
  \vskip -0.4in
 \end{figure}

\subsection{{Empirical Data Analysis}}
To further illustrate our proposed methods, 
we compare our method with strong baseline methods on standard datasets.

{\bf Data Description.} 
We consider four benchmark datasets: AdultCensus, COMPAS, LawSchool, and ACSIncome. For the first three datasets, we follow the pre-processing pipeline introduced in \cite{CHS2020}, while for ACSIncome, we adopt the pipeline from \cite{xian2024unified}.

\begin{compactitem}[]
\item {{(1) AdultCensus}}: In the AdultCensus dataset, the target variable $Y$ is whether the income of an individual is more than \$50,000.  Age, marriage status, education level, and other related variables are included in $X$, and the protected attribute $A$ refers to gender. 

\item {{(2) COMPAS}}: In  the COMPAS dataset, $Y$  indicates whether or not a criminal will re-offend. Here $X$ includes  prior criminal records, age, and an indicator of misdemeanor.  The protected attribute $A$  is the race of an individual, ``white-vs-non-white''.

\item {{(3) LawSchool}}: The target variable $Y$ in the LawSchool dataset is whether a student gets admitted to law school. 
Here $X$ includes the LSAT scores, undergraduate GPA and more. The protected attribute $A$ we consider is the race, ``white-vs-non-white''.

\item {{{(4) ACSIncome}}}: {This dataset extends the AdultCensus dataset by incorporating more recent and comprehensive demographic and economic information from the American Community Survey (\cite{ding2021retiring}). The target variable \( Y \) indicates whether an individual's total income exceeds \$50{,}000. The covariates \( X \) include age, education level, employment status, hours worked per week, occupation, and marital status. The protected attribute \( A \) is gender.}

\end{compactitem}
\vspace{0.10in}
{\bf Algorithms Considered.}
We select ten methods from the literature, comprising of 3 pre-processing, 4 in-processing, and 3 post-processing methods.

\begin{itemize}[]

\item (1) Disparate Impact Remover (DIR, \cite{FFMS2015}):

DIR is a pre-processing method that aims to modify the value of $Y$ to $\widetilde{Y}$,  ensuring that the probability of  $\widetilde{Y}=1$ is equal across different protected groups, specifically, $\P(\widetilde{Y}=1|A=1)=\P(\widetilde{Y}=1|A=0)$. The key idea  of DIR is to align the cumulative distribution functions of 
$\eta^Y_{A=1}(X)$ and $\eta^Y_{A=0}(X)$  while 
maximizing similarity between the transformed and original datasets.

\item (2) Fairness-Aware Oversampling (FAWOS, \cite{salazar2021fawos}):

FAWOS is a pre-processing method that uses data augmentation to achieve fairness. Unlike our FUDS method that adjusts the sizes of all groups, FAWOS applies SMOTE \citep{chawla2002smote}--—a popular synthetic data augmentation method for unbalanced classification problems---to increase the number of data points in the positive unprivileged group ($A=0, Y=1$). The number of data points generated is
$N = \alpha_F \times \left(
n_{11} n_{00}/n_{10} - n_{01}\right)$, with $\alpha_F$ tuned to control the accuracy-fairness tradeoff.

\item (3) {Adaptive Reweighing Algorithm (ARA, \cite{chai2022fairness})}

{Adaptive reweighing
is a pre-processing technique that aims to induce fairness 
via finding weights for
a cost-sensitive classification problem.
It proceeds by iteratively (1) finding individual-level
weights by solving a $\ell_2$-regularized quadratic programming (QP) problem for each protected subgroup, and (2) fitting a classifier with the resulting weights via cost-sensitive classification.
Although fairness constraints are not directly encoded in the iterative algorithm, fairness is encouraged by choosing an appropriate level of normalization through a hyperparameter in the QP step. }

\item (4) KDE-based constrained optimization (KDE, \cite{CHS2020}):

KDE-based optimization is an in-processing technique that uses kernel density estimation (KDE) to approximate fairness constraints. The proposed estimator of disparity is a differentiable function with respect to the model parameters. 
KDE-based optimization uses this estimated disparity as a regularizer for empirical risk minimization.

\item (5) Adversarial training (ADV, \cite{ZLM2018}):

Adversarial training is an in-processing method that involves simultaneously training two models: (1) a classifier to predict the label and (2) an adversary that attempts to predict protected attributes from the classifier's outputs. 
This dual-training process encourages the classifier to learn predictable representations that are devoid of biases related to protected attributes, thereby promoting fairness in its predictions.

\item (6) {Reductions (RED, \cite{agarwal2018reductions})}

{Reductions is an iterative in-processing algorithm.
At each round, it trains a cost-sensitive classifier that balances error and fairness-violation penalties (using current Lagrange multipliers), and updates those multipliers 
using the observed constraint slack.
Ultimately, it returns a randomized ensemble of the resulting classifiers.}

\item (7) {MinDiff (MD, \cite{prost2019toward})}

{MinDiff is an in-processing algorithm that adds a fairness regularization term to the original loss function. The regularizer minimizes the Maximum Mean Discrepancy (\cite{gretton2006kernel}) between the distributions of score functions of the binary protected attribute subpopulations using an appropriate universal kernel (e.g. Gaussian or Laplace).}

\item (8) Post-processing through flipping (PPF, \cite{chen2023post}):

\cite{chen2023post} showed
that a Bayes-optimal fair classifier can be obtained by flipping the output of the unconstrained classifier and proved that the flipping probability satisfies an estimation equation. Based on these observations, they designed a post-processing method that first estimates the flipping probability of each output and then estimates the fair Bayes-optimal classifier.

\item (9) Post-processing through optimal transport (PPOT, \cite{xian2023fair}):

\cite{xian2023fair} proved that the fair classification problem with bounded demographic parity is equivalent to a  Wasserstein-barycenter problem, and the fair Bayes-optimal classifier is given by the composition of the Bayes-optimal score function 
$\eta_a$ 
and the optimal transport  map from the Wasserstein-barycenter problem. 
They proposed a post-processing algorithm 
to estimate 
the fair Bayes-optimal classifier from the estimated score functions $\widehat{\eta}_a$.

\item (10) {Group Fairness Framework for Post-processing Everything (FRAPPÉ, \cite{tifrea2023frappe})}

{FRAPPÉ is a post-processing method that converts any regularized in-processing fairness approach into a plug-in post-processing adjustment by learning a lightweight additive correction module. 
It is based on an optimization problem that minimizes a divergence between the base and adjusted model outputs on data without group attributes, while also incorporating the original in-processing fairness regularizer on protected-attribute data.}

\end{itemize}

 \textbf{{Experimental Setting:}}
 We follow the training settings from \cite{CHS2020} and \cite{xian2024unified}. With the exception of the pre-processing method ARA \citep{chai2022fairness} across all datasets, and our in-processing method FCSC and the in-processing method RED \citep{agarwal2018reductions} on the AdultCensus dataset, all methods across all datasets use a three-layer fully connected neural network and are trained with the Adam optimizer using $(\beta_1, \beta_2) = (0.9, 0.999)$, the default hyperparameters. For the AdultCensus, COMPAS, and LawSchool datasets, each hidden layer contains 32 neurons; for the ACSIncome dataset, we use the same optimizer configuration but adopt a larger architecture of with hidden layers of sizes 500, 200, and 100. The ARA method is trained using a simple logistic regression model. The FCSC and RED method on the AdultCensus dataset is trained using a histogram-based gradient boosting tree configured with 300 trees, a maximum depth of 4, a learning rate of $0.05$, and an $\ell_2$ regularization strength of $1.0$.\footnote{The other in-processing baseline methods-- ADV \citep{ZLM2018}, MinDiff \citep{prost2019toward}, and KDE \citep{CHS2020} -- are not easily compatible with gradient boosting trees as base classifiers, since these approaches rely on adding a fairness regularization term to the loss function and require backpropagation through the classifier.}

For adversarial training \citep{ZLM2018}, we use a linear classifier as the discriminator. For FRAPPÉ \citep{tifrea2023frappe}, the post-hoc module builds off the KDE-based fairness regularizer of \cite{CHS2020}, using a single-layer perceptron with 128 hidden nodes for all datasets. The training batch size, training epochs, starting learning rate, and learning rate decay factor for the four datasets are summarized in Table \ref{detail}. 

Let $n_{\textup{epoch}}$ represent the total number of training epochs. For the FUDS and FCSC methods, we initially pre-train the model by training the neural network for $n_{\textup{epoch}}/4$ epochs. Subsequently, we apply the bisection method as described in Algorithms \ref{alg:FUDS} and \ref{alg:FCSC} to update $t$ after every $n_{\textup{epoch}}/20$ epochs. Through this approach, we update $t$ a total of 15 times, resulting in a tolerance level of $2^{-15}\sim 3\times 10^{-5}$.
For adversarial training, we first pre-train the classifier over $n_{\textup{epoch}}/4$ epochs and then update the weights of both the classifier and discriminator in the later $3n_{\textup{epoch}}/4$ epochs.

We repeat each experiment 50 times with different random seeds for the AdultCensus, COMPAS, and LawSchool datasets, and 10 times for ACSIncome.
For each dataset and random seed, we randomly split the 
data into training and test sets (with a 70\%-30\% split). 
As a result, the randomness of the experiments arises from the stochasticity of the train-test set split, the neural network initialization, and the minibatch selection during the optimization.

\begin{table}[t]
\caption{Training details for the empirical datasets.}
\label{detail}
\begin{center}
\begin{sc}
\begin{tabular}{c|cccc}
\hline
  Dataset &Batch  & Training   &Starting & Learning Rate \\
   & size &  Epochs  & Learning Rate & Decay Factor\\\hline
 AdultCensus &512 & 200    &1e-1 & 0.98\\
 COMPAS &2048 & 500    &5e-4 & None\\
LawSchool &2048 & 200 &   2e-4  & None \\
ACSIncome& 128 & 20 & 1e-3 & None \\ \hline
\end{tabular}
\end{sc}
\end{center}
\end{table}

\textbf{{Computational complexity:}} As outlined above, for our pre- and in-processing algorithms, we iteratively update the classifier rather than train an entirely new classifier at each iteration, so the total 
number of (stochastic) gradient steps used for estimating the $\delta$-fair Bayes optimal classifier is \[
    \mathcal{O} \lsb \left(n_{\textup{epoch}}^{\textup{train}}\cdot \frac{N}{B}\right) +\lsb \left(n_\textup{epoch}^{\textup{iter}}\cdot \frac{N}{B}\right) \cdot \log(1/\epsilon) + N\log(1/\epsilon)\rsb \rsb 
\]
where $n_{\textup{epoch}}^{\textup{train}}$ and $n_{\textup{epoch}}^{\textup{iter}}$ are the number of epochs used to train the unconstrained classifier and iteratively update the classifier, respectively. Here, $N$ is the size of the dataset, $B$ is the batch size and $\epsilon$ is the tolerance level. The computational complexity of the post-processing algorithm consists of only the latter term $\mathcal{O}(N\log(1/\epsilon))$, since we assume access to a pre-trained classifier. This term specifies the complexity associated with repeatedly evaluating the disparity level for the trained classifier at each iteration of the bisection search. In particular, across all algorithms, we set $\epsilon=2^{-15}$, resulting in at most $15$ updates to the parameter $t$ via the bisection search. 

These computational complexities are similar to 
those of
state of the art methods.
For example, the reductions method of \cite{agarwal2018reductions} has a runtime of $\mathcal{O}\lsb (n_{\textup{epoch}}^{\textup{iter}} \cdot {N}/{B})\cdot 1/\nu^2\rsb$ (where $\nu$ is the $\nu$-approximate saddle point per their Theorem 1, $\nu$ typically $\approx0.01$). 
For post-processing, FPIR requires a computational runtime of $\mathcal{O}(N\log(1/\epsilon))$ where for $\epsilon = 2^{-15}$, $1/\epsilon = 32768$.
This is comparable
to our sample sizes
$N$, which
are typically on the order of thousands to tens of thousands, so this complexity matches that of \cite{chen2023post}.

 \textbf{Simulation Results for Our Methods:}
We first evaluate the FUDS, FCSC, and FPIR algorithms with various pre-determined levels of disparity. We present the simulation results in Table \ref{tab1}. We observe that the proposed three methods control the disparity level at the pre-determined values, as desired.
We then compare the fairness-accuracy tradeoff of the proposed methods to the baseline.

\begin{table}[b]
 \caption{Empirical data results showing the DD of FUDS, FCSC and FPIR with predetermined unfairness levels.}
 \label{tab1}
 \vskip -0.1in
 \begin{center}
 \begin{small}
 \begin{sc}
 \begin{tabular}{l|c|ccccc}\hline
 
\multicolumn{7}{c}{AdultCensus}  \\\hline
Methods & $\delta$ & 0.00 & 0.04 &0.08 & 0.12 & 0.16\\\hline
FUDS &    &  0.009 (0.007) & 0.037 (0.020) & 0.080 (0.012) & 0.123 (0.016) & 0.152 (0.016) \\
 FCSC & DD &  0.017 (0.019) & 0.037 (0.011) & 0.081 (0.012) & 0.123 (0.011) & 0.159 (0.012) \\
 FPIR &    &  0.007 (0.006) & 0.040 (0.010) & 0.080 (0.019) & 0.121 (0.020) & 0.159 (0.015) \\\hline

\multicolumn{7}{c}{COMPAS}  \\\hline
Methods & $\delta$ & 0.00 & 0.06 &0.12 & 0.18 & 0.24\\\hline
FUDS &    &  0.039 (0.026) & 0.053 (0.031) & 0.106 (0.039) & 0.166 (0.039) & 0.218 (0.033) \\
 FCSC & DD &  0.033 (0.025) & 0.060 (0.032) & 0.115 (0.039) & 0.169 (0.039) & 0.223 (0.036) \\
 FPIR &    &  0.028 (0.023) & 0.061 (0.028) & 0.117 (0.038) & 0.173 (0.037) & 0.227 (0.033) \\\hline
 
\multicolumn{7}{c}{Lawschool}  \\\hline
Methods & $\delta$ & 0.000 & 0.0016 &0.032 & 0.048 & 0.064\\\hline
FUDS &    &  0.006 (0.006) & 0.015 (0.007) & 0.031 (0.007) & 0.045 (0.008) & 0.061 (0.006) \\
 FCSC & DD &  0.007 (0.005) & 0.017 (0.007) & 0.032 (0.008) & 0.048 (0.007) & 0.062 (0.007) \\
 FPIR &    &  0.004 (0.003) & 0.016 (0.006) & 0.032 (0.005) & 0.047 (0.005) & 0.063 (0.005) \\\hline
 
\multicolumn{7}{c}{ACSIncome}  \\\hline
Methods & $\delta$ & 0.000 & 0.050 &0.100 & 0.150 & 0.200\\\hline
FUDS &    &  0.007 (0.005) & 0.047 (0.012) & 0.102 (0.015) & 0.143 (0.007) & 0.179 (0.010) \\
 FCSC & DD &  0.012 (0.010) & 0.050 (0.008) & 0.100 (0.009) & 0.148 (0.012) & 0.185 (0.011) \\
 FPIR &    &  0.001 (0.001) & 0.050 (0.001) & 0.100 (0.001) & 0.150 (0.001) & 0.186 (0.010) \\\hline
\end{tabular}
 \end{sc}
\end{small}
 \end{center}
 \vskip -0.1in
 \end{table}

 \textbf{Experimental Setting for Controlling Disparity:}
In our proposed methods—FUDS, FCSC, and FPIR—along with the two other post-processing methods, the level of disparity is  directly controlled. We set the range from  zero  to the empirical DD of the unconstrained classifier.

For the DIR method, as suggested by \cite{FFMS2015}, a ``partial remover'' tool is employed. 
This tool features a parameter $\lambda_{\textup{DIR}}$, which varies from zero  (indicating the original dataset) to $1$ (indicating perfect distribution alignment between $\eta^Y_{A=1}(X)$ and $\eta^Y_{A=0}(X)$). We adjust $\lambda_{\textup{DIR}}$  from $0.05$ to $0.95$ to explore the fairness-accuracy tradeoff.

In FAWOS, we vary $\alpha_{\textup{FAWOS}}$ from zero to two. Higher values of $\alpha_{F}$ lead to an excess of data points in the positive unprivileged group, potentially causing reverse discrimination. {We omit the result of this method on the ACSIncome dataset, as FAWOS requires computing nearest neighbors for each data point which 
was extremely slow in our
experiments
for the ACSIncome dataset ($\approx$1.66 million data points).}

{For ARA, we vary $\alpha_{\textup{ARA}}$ from 0.40 to 0.50 on the COMPAS dataset. Higher values of $\alpha_{\textup{ARA}}$ correspond to more uniform reweighting, which results in reduced fairness. 
For the other datasets, we were unable to meaningfully decrease demographic disparity.\footnote{
To address this, we contacted the authors and explored several potential remedies, including alternative classifier model spaces and adjustments to hyperparameters within the QP subproblem, but still could not achieve a significant reduction in the disparity.}}

{Both KDE-based constrained optimization and MinDiff balance fairness and accuracy by adjusting a tuning parameter that trades off empirical loss against fairness regularization. We vary the tuning parameter $\lambda_{\textup{KDE}}$ from $0.05$ to $0.95$, and $\lambda_{\textup{MinDiff}}$ from $0.0$ to $10.0$.}

In adversarial training, the tradeoff is controlled by altering the parameter $\alpha_{\textup{ADV}}$ that controls the gradient of the discriminator. We vary this parameter from zero  to three, as we empirically find that, similar to FAWOS, a larger $\alpha_{\textup{ADV}}$ would result in reverse discrimination. 

\begin{figure}[h!]
\centering
\resizebox{0.8\textwidth}{!}{%
    \begin{minipage}{\textwidth}
        \centering
        Panel (A): AdultCensus Dataset\\
        \includegraphics[width=1\textwidth]{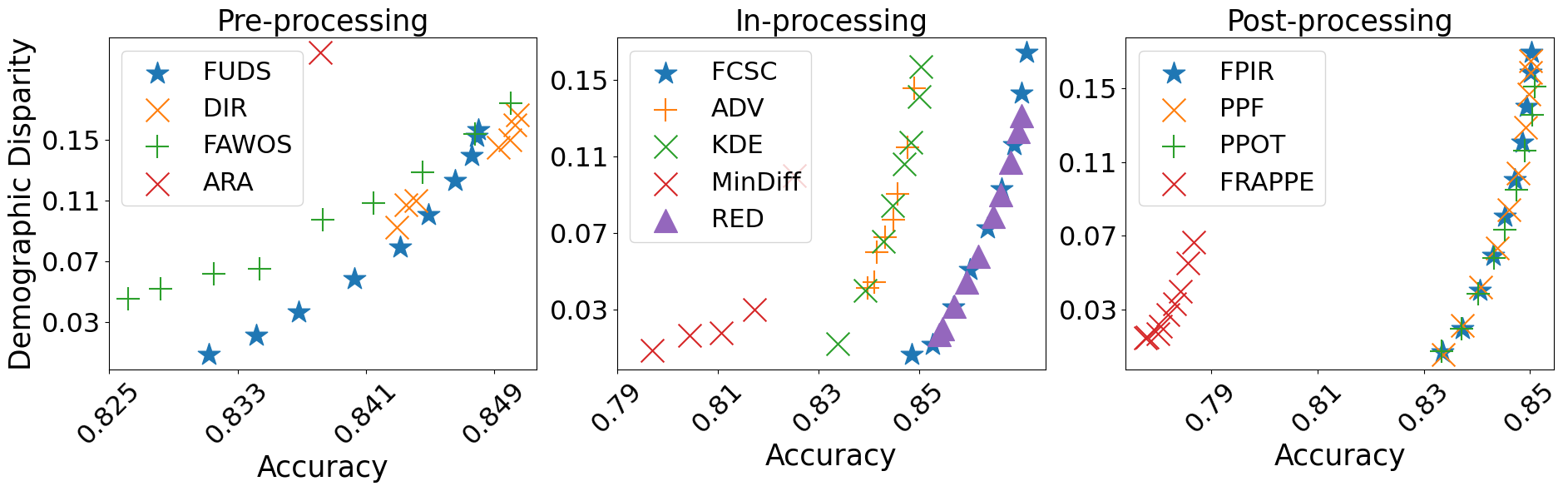}\\[1em]

        Panel (B): COMPAS Dataset\\
        \includegraphics[width=1\textwidth]{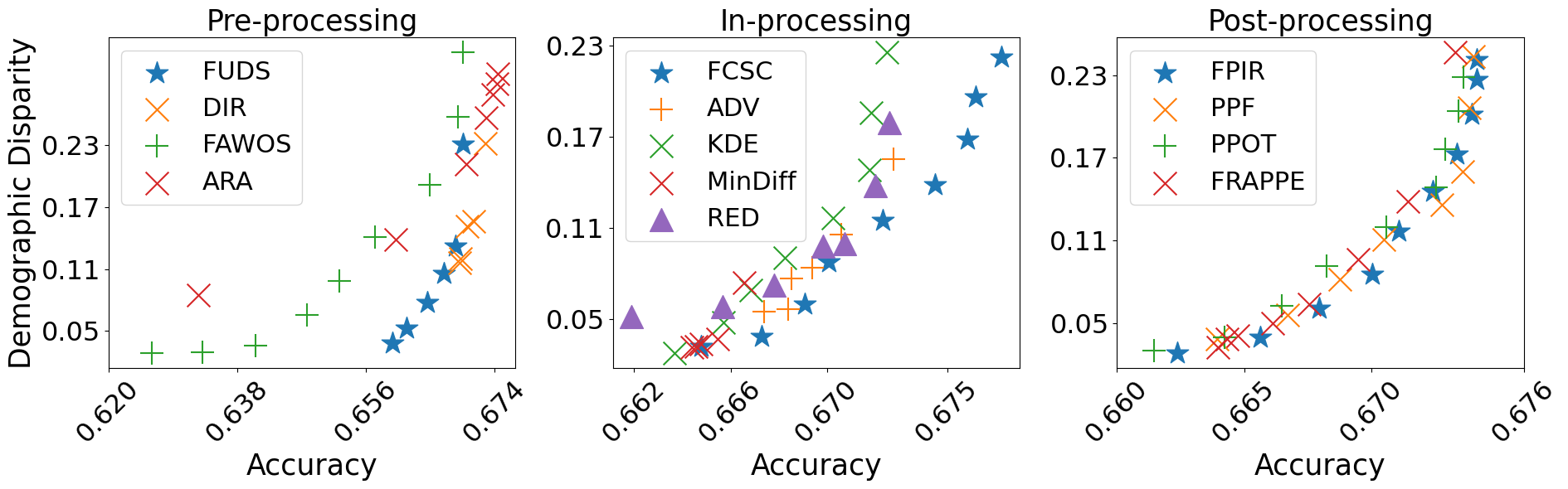}\\[1em]

        Panel (C): LawSchool Dataset\\
        \includegraphics[width=1\textwidth]{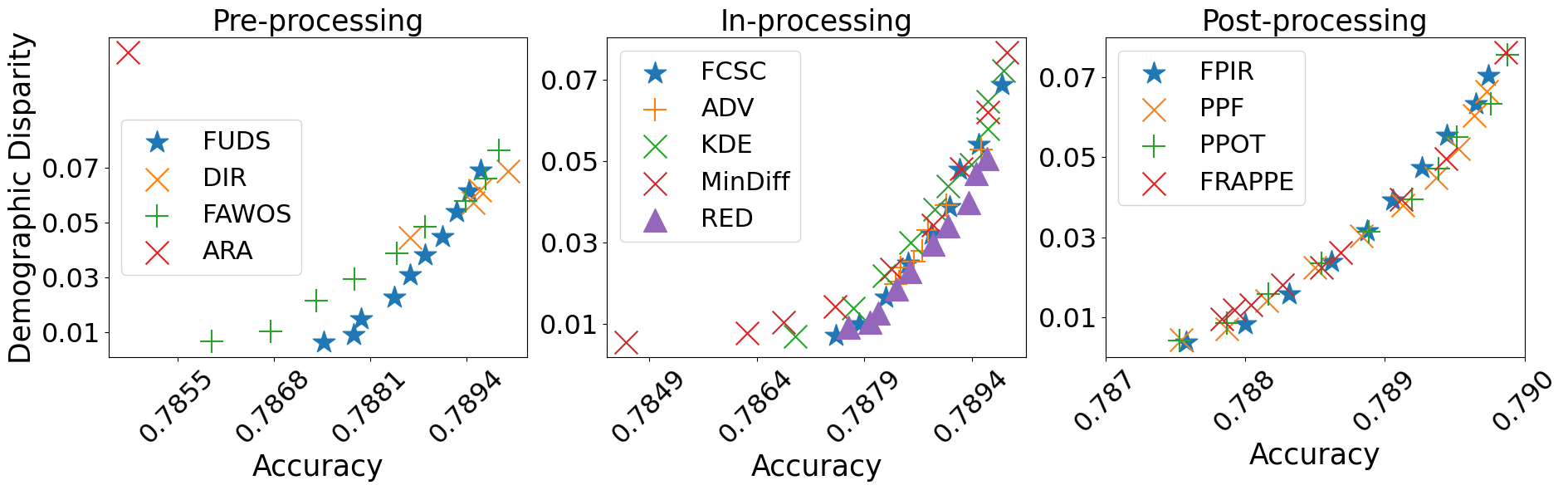}\\[1em]

        Panel (D): ACSIncome Dataset\\
        \includegraphics[width=1\textwidth]{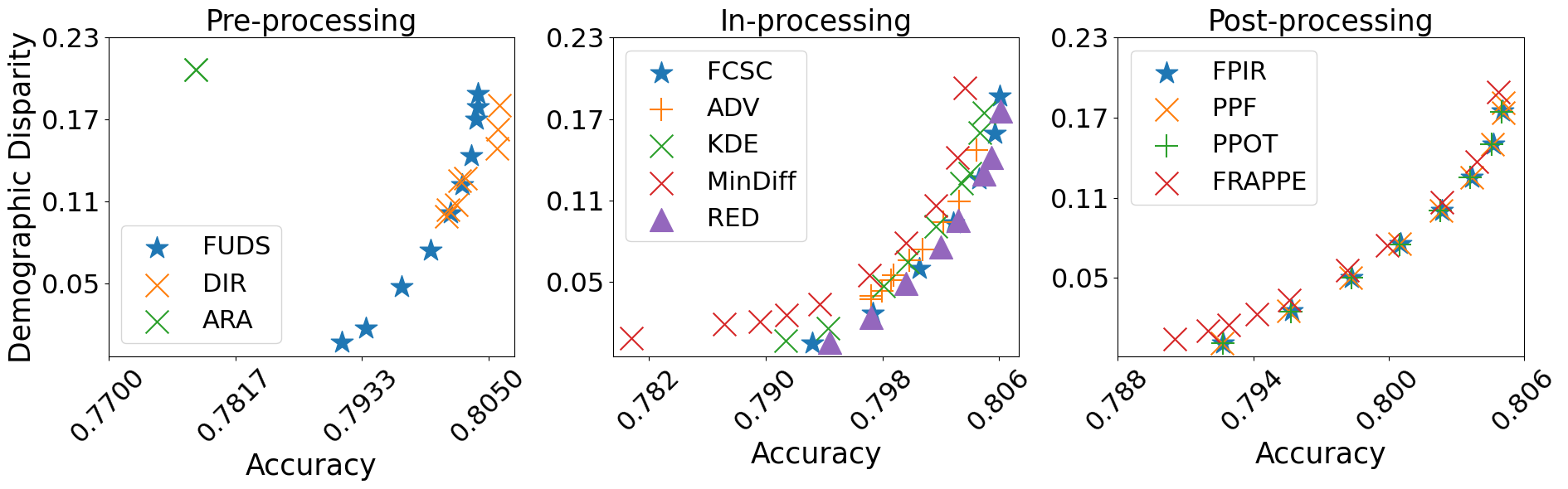}
    \end{minipage}
}
\caption{Fairness-accuracy tradeoff on the AdultCensus (Panel (A)), COMPAS (Panel (B)), LawSchool (Panel (C)), and ACSIncome (Panel (D)) datasets.}
\label{plt-tradeoff}
\vskip -0.2in
\end{figure}

  \textbf{Simulation Results for Controlling Disparity:}
Figure \ref{plt-tradeoff} presents the fairness-accuracy tradeoff with respect to Demographic Disparity (DD) evaluated across four datasets. In the plot, each point corresponds to a specific tuning parameter. 

{{\bf Pre-processing.} Among the three pre-processing methods, our FUDS algorithm exhibits the most favorable fairness-accuracy tradeoff and allows for direct control of disparity.} 

{
In contrast, while the DIR method shows satisfactory accuracy, it falls short in achieving perfect fairness. This is because, although DIR ensures equal probabilities of positive outcomes across different groups in the transformed dataset---i.e., $\P(\tilde{Y}=1|A=1)=\P(\tilde{Y}=1|A=0)$---this does not automatically ensure that a fair model is learned. {This shortcoming is readily apparent in the results of DIR on the ACSIncome dataset, where it fails to substantially reduce the disparity level.} 
The third method, FAWOS, effectively manages disparity, but its heuristic-based sampling strategy strays from the ideal proportion we identified. As a result, compared to FUDS, FAWOS compromises more on accuracy at all levels of disparity.}

{\bf In-processing.} {For the three in-processing methods, FCSC outperforms all  baselines on the COMPAS dataset and demonstrates comparable or better results on the other three datasets. The MinDiff method of \cite{prost2019toward} performs equally as well as FCSC and the other baselines (KDE and ADV) on the Lawschool and ACSIncome datasets, but has trouble tracing a reasonable Pareto frontier on AdultCensus.}

{The KDE method experiences a sudden drop in accuracy when the disparity level approaches zero. 
This issue may caused by its use of a Huber surrogate loss to address the non-differentiability of the absolute value function at $\delta = 0$. Compared with FCSC and KDE-based optimization, adversarial training exhibits inferior performance and fails to achieve near-perfect fairness. 
This shortcoming could be attributed to the challenges associated with minimax optimization, which often leads to unstable training processes.}

{Finally, the RED method traces a Pareto optimal frontier that is comparable with our FCSC method across all datasets, with the exception of COMPAS, for which it performs slightly worse across all levels of disparity. Similar to FCSC, RED can leverage a gradient boosting tree as its base classifier, thereby enabling strong performance on the AdultCensus dataset where such a model is more accurate.}

{{\bf Post-processing.} 
{Finally, with the exception of FRAPPÉ on the AdultCensus dataset, we observe that the four post-processing methods present comparable performance in balancing fairness and accuracy.} This similarity arises because they 
all aim to estimate the fair Bayes-optimal classifier. 
However, FPIR can be viewed as a more direct approach.}

{\textbf{Computational Runtimes.}} 
{We also report runtimes measuring the average execution time of a single run per seed for the COMPAS and ACSIncome datasets (see Figure \ref{computational-runtime}). All experiments are conducted on a computing cluster equipped with Intel Xeon Platinum 8375C CPUs $@$ 2.90GHz processors. For the COMPAS dataset, each run is allowed to use up to 10 CPU cores with 0.8 GB of RAM, while for ACSIncome, each run uses a single core with access of up to 32 GB of RAM.}

{The pre-processing method termed ARA of \cite{chai2022fairness} runs much quicker than its pre-processing counterparts since the classifier is an ordinary logistic regression estimator. However, ARA achieves a sub-optimal Pareto frontier on COMPAS and fails to trace any meaningful frontier for the other datasets we considered.}

{The post-processing methods do not include the runtime of the pre-trained classifiers which the post-hoc fairness corrections rely upon. 
The time required to obtain
this unconstrained classifier---that does not impose any fairness constraints or fairness algorithms---is denoted as \textit{BASE}.}

{Our method's runtime is either comparable or slightly better than that of competing methods with Pareto-optimal performance across all types of algorithms (pre-, in-, and post-processing).}


\begin{figure}[b]
\centering
\resizebox{0.66\textwidth}{!}{%
    \begin{minipage}{\textwidth}
        \centering
        Panel (A): COMPAS Dataset\\
        \includegraphics[width=\textwidth]{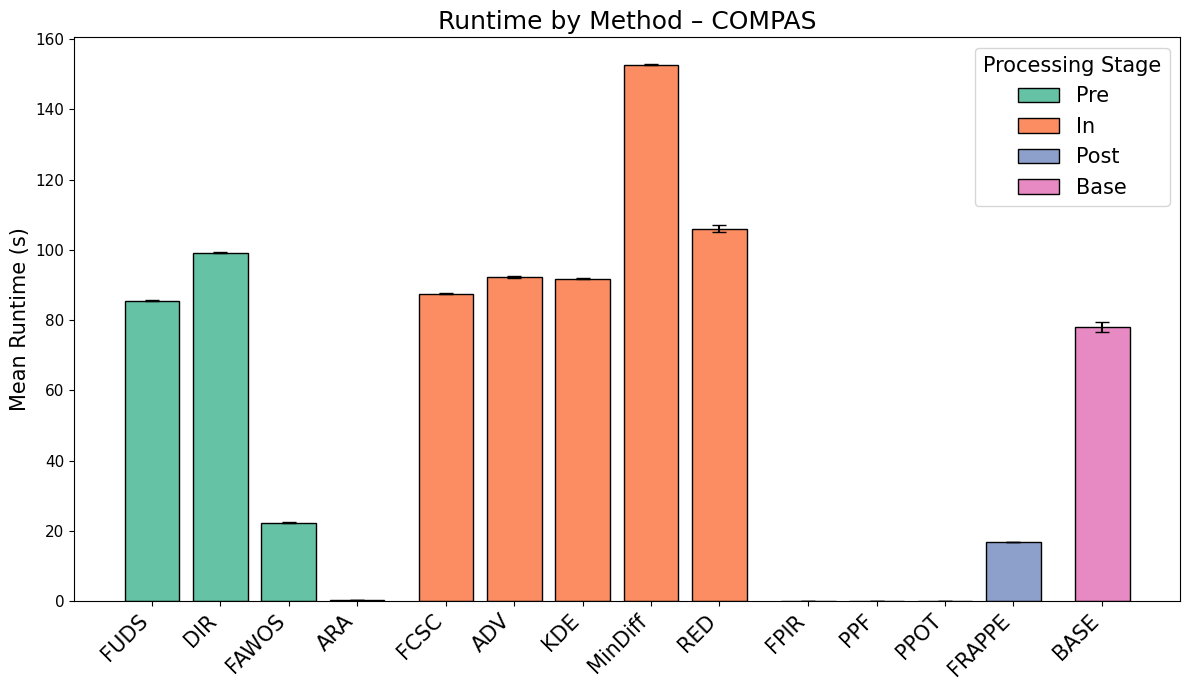}\\[1em]
        Panel (B): ACSIncome Dataset\\
        \includegraphics[width=\textwidth]{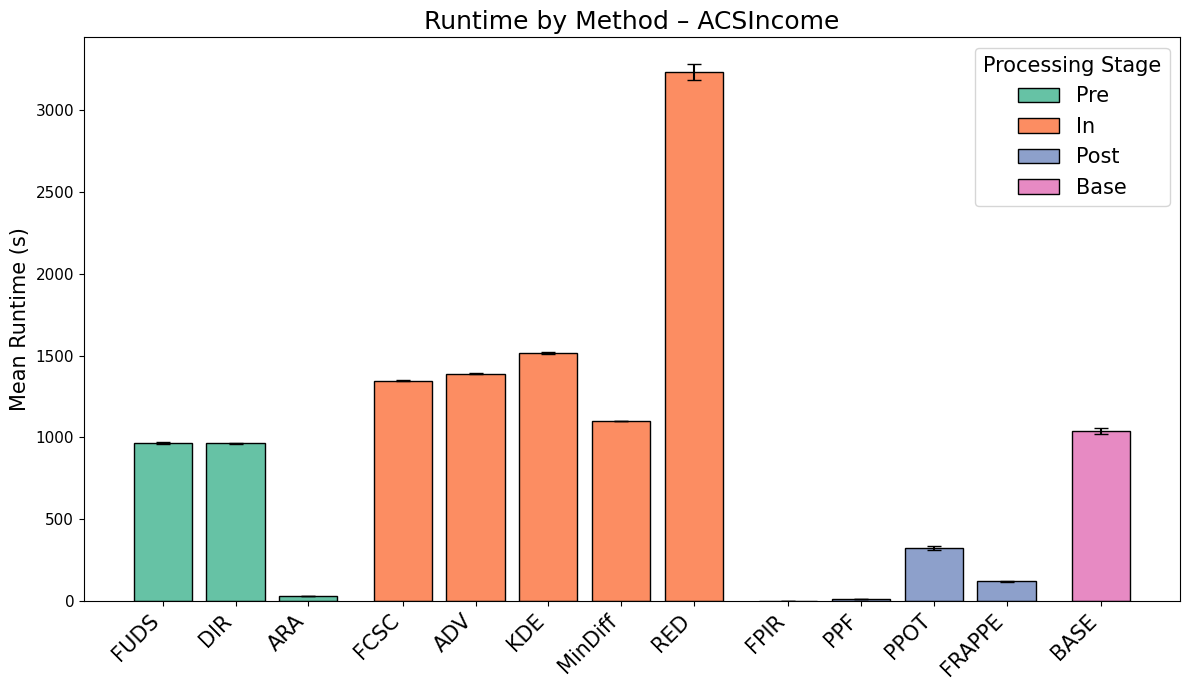}
    \end{minipage}
}
\caption{Computational runtimes on the COMPAS (Panel (A)) and ACSIncome (Panel (B)) datasets. \textit{BASE} records the runtime of training an unconstrained classifier, i.e.~one with no fairness constraints imposed or fairness algorithm implemented. This unconstrained classifier is used for post-processing.}
\label{computational-runtime}
\vskip -0.2in
\end{figure}

{
In summary, our methods have state-of-the-art performance compared to existing algorithms.
In particular, our pre-processing method can a reach higher accuracy than prior pre-processing methods at low disparity levels.}

\section{Summary and Discussion}
\label{sec:dis}
In this paper, we study the statistical and methodological underpinnings of fair classification problems.
We establish a unified framework for deriving fair Bayes-optimal classifiers. 
We introduce the notion of linear and bilinear disparity measures, 
and find Bayes-optimal classifiers under these constraints.
Our framework can be applied in a wide range of scenarios, including various disparity measures, cost-sensitive loss functions, and multiple fairness constraints while allowing for direct control of the disparity level.
Additionally, we study the Pareto frontier for 
fair classification and prove that the tradeoff function is convex for linear group fairness measures.

Based on our theoretical results, we further design pre-, in-, and post-processing methods that handle algorithmic bias. 
The proposed methods aim to recover the optimal fairness-accuracy tradeoff. Moreover, they allow for direct control of the disparity level.  

Our work suggests several promising directions for further theoretical and algorithmic development. From the theoretical
perspective, the fair Bayes-optimal classifier only characterizes the best solution at the population level, and the finite-sample properties need to be characterized. 
From the algorithmic perspective, it is of interest to improve our  algorithms. 
For example, for our pre-processing method, we only considered fair sampling. However, with the desired proportions in hand, we can generate fair synthetic data using more advanced conditional generative models, such as conditional GANs and conditional diffusion models.

{Recent work by \cite{xian2024unified} presents results for multi-class optimal fair classifiers via post-processing.
It would be of interest, but it does not seem completely straightforward, to adapt
our in- and pre-processing algorithms to this setting.
}

Furthermore, our analysis 
supposes the availability of protected attributes during training. 
However, the collection and use of protected information is often restricted. For instance, under the General Data Protection Regulations in the European Union, acquiring protected personal information requires consent from individuals.
This motivates studying methods for protected attributes only partially available during training. 
Recently, \cite{chai2022fairness} proposed addressing this challenge through knowledge distillation, which generates soft labels for protected attributes. Exploring the integration of this technique into our framework presents an intriguing avenue for future research.

\section*{Acknowledgements}

This work was supported in part by ARO W911NF-23-1-0296, NSF 2031895, NSF 2046874, NSF 2247795, ONR N00014-21-1-2843, ONR N00014-22-1-2680, and the Sloan Foundation. {We are grateful to Junyi Chai and Xiaoqian Wang for insightful discussions regarding their work on the ARA method, and to Ruicheng Xian for his assistance with the implementation of the reductions approach.}

{\small
\setlength{\bibsep}{0.2pt plus 0.3ex}
\bibliographystyle{plainnat-abbrev}
\bibliography{main}
}



\appendix

\section*{Appendix}

{\bf Additional notation and conventions.}
In this appendix, we use some additional notation.
For a real-valued function $f$ defined on $[a,b)$ for some $a<b$, we denote by $\lim_{x\to a^+} f(x)$ the limit from the right of $f$ at $a$, if it exists. Similarly, if $f$ is defined on $(b,a]$ for $b<a$,  we denote by $\lim_{x\to a^-} f(x)$ the limit from the left of $f$ at $a$, if it exists. 
For an interval $[a,b]$, and scalars $c\in \R$, $d>0$, we denote $c+ d[a,b] =[c+ da,c+ db] $.
For a classifier $f$, we denote
$\textup{Acc}(f)=1-R(f)$.
When needed, we define $0/0:=0$.

\section{Proofs}

In Sections \ref{sec:lemmas} to \ref{pfalg}, we present the proofs of our theoretical results from the main text, except for \Cref{thm-fb-eq-odd} and \Cref{thm-opt-dp-multi}. We provide an independent \Cref{ext} to discuss the interesting extension of our theoretical and methodological framework.
We first introduce several technical lemmas that are essential for proving our theoretical results (Section \ref{sec:lemmas}).

\subsection{Additional Lemmas}\label{sec:lemmas}

\begin{lemma}[Risk and Accuracy as Linear Functionals of Classifiers]
\label{lem:misclassification}
For any classifier $f: \X\times \A\to[0,1]$, 
we have 
\begin{equation}\label{eq:expr}
    R(f)=\sum_{a\in\{0,1\}}p_{a}\int  \lsb 1-2\eta_a(x)\rsb f(x,a) d\Pa(x)+\sum_{a\in\{0,1\}}p_{a}\int   \eta_a(x)  d\Pa(x),
\end{equation}
and
\begin{equation}\label{eq:expacc}
\textup{Acc}(f)=\sum_{a\in\{0,1\}}p_{a}\int  \lsb 2\eta_a(x)-1\rsb f(x,a) d\Pa(x)+\sum_{a\in\{0,1\}}p_{a}\int  \lsb 1- \eta_a(x)\rsb  d\Pa(x),\end{equation}
\end{lemma}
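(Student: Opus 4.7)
The strategy is a direct calculation by conditioning on $(X,A)$ and using the key fact that, by definition of a randomized classifier, the prediction $\widehat{Y}_f$ is conditionally independent of $Y$ given $(X,A)$, with $\widehat{Y}_f \mid X=x, A=a \sim \Bern(f(x,a))$. Once this is used, both claims reduce to tower-property integrations.

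\textbf{First step.} I would compute the pointwise conditional misclassification probability. For fixed $(x,a)$, decompose
\begin{align*}
\P(Y \neq \widehat{Y}_f \mid X=x, A=a)
&= \P(Y=1, \widehat{Y}_f = 0 \mid X=x, A=a) + \P(Y=0, \widehat{Y}_f = 1 \mid X=x, A=a).
\end{align*}
By the conditional independence of $\widehat{Y}_f$ and $Y$ given $(X,A)$, each joint probability factorizes. Using $\P(Y=1 \mid X=x, A=a) = \eta_a(x)$ and $\P(\widehat{Y}_f = 1 \mid X=x, A=a) = f(x,a)$, this becomes
\[
\eta_a(x)(1 - f(x,a)) + (1 - \eta_a(x)) f(x,a) = \eta_a(x) + f(x,a)(1 - 2\eta_a(x)).
\]

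\textbf{Second step.} I would then apply the tower property to integrate over $X$ and sum over $A$:
\[
R(f) = \sum_{a\in\{0,1\}} p_a \int \bigl[\eta_a(x) + f(x,a)(1 - 2\eta_a(x))\bigr]\, d\P_{X\mid A=a}(x),
\]
which rearranges to \eqref{eq:expr}. Measurability and integrability of the integrands follow from the measurability assumption on $f$ and the boundedness of $\eta_a$ and $f$ in $[0,1]$, so Fubini applies without any obstacle.

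\textbf{Third step.} For the accuracy identity \eqref{eq:expacc}, I would either use $\textup{Acc}(f) = 1 - R(f)$ together with the identity $\sum_a p_a \int 1\, d\P_{X\mid A=a} = 1$, splitting the constant term as $1 = \sum_a p_a \int [\eta_a(x) + (1 - \eta_a(x))]\, d\P_{X\mid A=a}(x)$; or repeat the calculation directly with $\P(Y = \widehat{Y}_f \mid X=x, A=a) = (1 - \eta_a(x)) + f(x,a)(2\eta_a(x) - 1)$. Either route yields \eqref{eq:expacc} immediately. There is no substantive obstacle here; the only thing to be careful about is cleanly invoking the conditional independence of $\widehat{Y}_f$ from $Y$ given $(X,A)$, which is built into the definition of a randomized classifier.
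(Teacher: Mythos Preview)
Your proposal is correct and follows essentially the same approach as the paper: both condition on $(X,A)$, use the conditional independence of $\widehat{Y}_f$ and $Y$ given $(X,A)$ to factor the joint probabilities, simplify to $\eta_a(x) + f(x,a)(1-2\eta_a(x))$, and then integrate via the tower property; the accuracy identity is obtained from $\textup{Acc}(f)=1-R(f)$ in both cases.
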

\begin{proof}

By definition, $\widehat{Y}$ is conditionally independent of $Y$ given $X$ and $A$. Thus,
 \begin{align*}
 &\P(\widehat{Y}_f=1,Y=0\mid X=x,A=a)=f(x,a)(1-\eta_a(x)),\\
 &\P(\widehat{Y}_f=1,Y=0\mid X=x,A=a)=\eta_a(x)(1-f(x,a)).
 \end{align*}
 This implies that
 \begin{align*}
&R(f)=\P(Y\neq \widehat{Y}_{f})=\sum_{a\in\A}p_a \P(Y\neq \widehat{Y}_{f}|A=a)\\
&=\sum_{a\in\{0,1\}}p_a\int_\X \lsb \P(\widehat{Y}_f=1,Y=0\mid X=x,A=a)+\P(\widehat{Y}_f=1,Y=0\mid X=x,A=a)\rsb d\Pa(x)\\ 
 &=\sum_{a\in\{0,1\}}p_a\int_\X \lsb f(x,a)(1-\eta_a(x))+\eta_a(x)(1-f(x,a)) \rsb d\Pa(x)\\
 &=\sum_{a\in\{0,1\}}p_{a}\int  \lsb 1-2\eta_a(x)\rsb f(x,a) d\Pa(x)+\sum_{a\in\{0,1\}}p_{a}\int  \lsb  \eta_a(x)\rsb  d\Pa(x).
\end{align*}
Moreover,
 \begin{align*}
\textup{Acc}(f)&=1-R(f) =1-\sum_{a\in\{0,1\}}p_a\int_\X \lsb f(x,a)(1-\eta_a(x))+\eta_a(x)(1-f(x,a)) \rsb d\Pa(x)\\
 &=\sum_{a\in\{0,1\}}p_a\int_\X \lsb 1- \eta_a(x) -(1-2\eta_a(x))f(x,a) \rsb d\Pa(x)\\
 &=\sum_{a\in\{0,1\}}p_{a}\int  \lsb 2\eta_a(x)-1\rsb f(x,a) d\Pa(x)+\sum_{a\in\{0,1\}}p_{a}\int  \lsb 1- \eta_a(x)\rsb  d\Pa(x).
\end{align*}
\end{proof}

\begin{lemma}[Characterizing Bayes-Optimal Classifiers for Linear Disparity Measures]\label{lem:fb}
For $t\in\R$ and $\delta\ge0$,
let $f_{\textup{Dis},t}$, $\Dt$ and $\td$ be defined in \eqref{eq:partos}, \eqref{eq:dt} and \eqref{eq:td}, respectively.
Then, using the convention that $0/0=0$, for any fixed $t$,  we have
$$ f_{\textup{Dis},t} = \argmin_{f\in\mF}\lbb R(f):  \frac{t\cdot\textup{Dis}(f)}{|t|} \le \frac{t\cdot \Dt}{|t|}
\rbb.   $$
Moreover,  for all classifiers $f'\in\argmin_{f\in\mF}\lbb R(f): { t\cdot\textup{Dis}(f)}/|t| \le {t\cdot \Dt}/{|t|}
\rbb$, $f'=f_{\textup{Dis},t}$ almost surely with respect to $\P_{X,A}$.
In addition, if $t\in [\min (0,t_{\textup{Dis}} (0)),\max (0,t_{\textup{Dis}} (0))]$,
$$ f_{\textup{Dis},t} = \argmin_{f\in\mF}\lbb R(f): { {|\textup{Dis}(f)}|} \le|\Dt|\rbb.   $$
\end{lemma}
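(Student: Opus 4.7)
The plan is to recast the constrained risk minimization as a constrained linear maximization in $f$ and apply the generalized Neyman--Pearson lemma (\Cref{NP_lemma}) with choices of $\phi_0,\phi_1,c_1$ dictated by the sign of $t$. By \Cref{lem:misclassification}, minimizing $R(f)$ is equivalent, up to the $f$-independent constant $C_\P$, to maximizing $\int f\,\phi_0\, d\P_{X,A}$ with $\phi_0(x,a)=2\eta_a(x)-1$. By \Cref{ldm}, $\textup{Dis}(f)=\int f\, w_{\textup{Dis}}\, d\P_{X,A}$. Setting $\nu=\P_{X,A}$ and $m=1$ in \Cref{NP_lemma} puts the problem into the NP template.

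For $t>0$, take $\phi_1=w_{\textup{Dis}}$, $c_1=t$, and target $t_1=\Dt$. Then \eqref{eq:partos} shows $f_{\textup{Dis},t}(x,a)=I[\phi_0(x,a)>c_1\phi_1(x,a)]$ (after clearing a factor of $2$), matching the NP form, and $\textup{Dis}(f_{\textup{Dis},t})=\Dt$ by \eqref{eq:dt}, so $f_{\textup{Dis},t}\in\mF_=$. Since $c_1>0$, part~(2) of \Cref{NP_lemma} yields that $f_{\textup{Dis},t}$ maximizes the objective over $\mF_\le=\{f:\textup{Dis}(f)\le\Dt\}$, which is the required one-sided constraint. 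For $t<0$, apply the same lemma with $\phi_1=-w_{\textup{Dis}}$, $c_1=|t|>0$, and $t_1=-\Dt$; this keeps $c_1\ge 0$ (so NP part~(2) still applies), preserves the indicator form of $f_{\textup{Dis},t}$ because $c_1\phi_1=t\,w_{\textup{Dis}}$, and converts the constraint into $\textup{Dis}(f)\ge\Dt$, as required. For $t=0$ the constraint degenerates to $0\le 0$ under the convention $0/0:=0$, and $f_{\textup{Dis},0}=I[\eta_a>1/2]$ is unconstrained Bayes-optimal by \Cref{prop:ba-op}. For the almost-sure uniqueness, invoke the second clause of NP part~(2): under the density assumption on $\eta_a(X)$ and $w_{\textup{Dis}}(X,a)$ inherited from \Cref{thm-opt-dp}, the tie set $\{(x,a):2\eta_a(x)-1=t\,w_{\textup{Dis}}(x,a)\}$ has $\P_{X,A}$-measure zero, which yields $\P_{X,A}$-almost-everywhere uniqueness of any maximizer.

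For the two-sided version, combine the one-sided result with the monotonicity of $\Dt$ in $t$ from \Cref{prop:monotonicity} and the identity $D_{\textup{Dis}}(t_{\textup{Dis}}(0))=0$ (by definition of $t_{\textup{Dis}}(0)$). Together these force $\Dt$ to share a sign with $D_{\textup{Dis}}(0)$ (or vanish) for $t\in[\min(0,t_{\textup{Dis}}(0)),\max(0,t_{\textup{Dis}}(0))]$. Without loss of generality assume $t_{\textup{Dis}}(0)\ge 0$ and $t\in[0,t_{\textup{Dis}}(0)]$, so $|\Dt|=\Dt$. Then $\{f:|\textup{Dis}(f)|\le|\Dt|\}\subseteq\{f:\textup{Dis}(f)\le\Dt\}$, and $f_{\textup{Dis},t}$ lies in the smaller set since $\textup{Dis}(f_{\textup{Dis},t})=\Dt\in[0,|\Dt|]$; being a minimizer of $R$ over the larger set, it remains a minimizer over the smaller one. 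The case $t_{\textup{Dis}}(0)\le 0$ is symmetric.

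The main obstacle I anticipate is handling the sign of $t$ correctly so that the NP hypothesis $c_i\ge 0$ is satisfied; this is resolved by the trick of flipping both $\phi_1$ and the target $t_1$ when $t<0$. Verifying the zero-measure condition for the tie set is the other point that needs care, but it follows routinely from the density assumption imported from \Cref{thm-opt-dp}.
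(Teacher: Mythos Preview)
Your proposal is correct and follows essentially the same approach as the paper: both reduce to the generalized Neyman--Pearson lemma with $\phi_0=2\eta_a-1$ and a coefficient $|t|\ge 0$, then sandwich the two-sided constraint between the equality class and the one-sided class using the sign agreement of $t$ and $\Dt$ on $[\min(0,t_{\textup{Dis}}(0)),\max(0,t_{\textup{Dis}}(0))]$. The only cosmetic difference is that the paper writes $f_{\textup{Dis},t}=I[\phi_0>|t|\cdot (t\phi_1/|t|)]$ with the signed constraint $\overline{\textup{Dis}}_t(f)=t\,\textup{Dis}(f)/|t|$, whereas you split into cases and flip $\phi_1\mapsto -\phi_1$ when $t<0$; these are equivalent rewritings.
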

\begin{proof}

This result is a consequence of the generalized Neyman-Pearson lemma. If $t=0$, the result follows since  $f_{\textup{Dis},0}$ is the unconstrained Bayes-optimal classifier.
When $t\neq 0$,
let, for all $x,a$, 
$\phi_0(x,a) = 2\eta^Y_{A=1}(x,a)-1$, $\phi_1(x,a)={ w_{\textup{Dis}}(x)}$ 
and for all classifiers $f$, and $t\in\R$,
$\overline{\textup{Dis}}_t(f) = t\cdot \textup{Dis}(f) /|t|$.
 We have, for all $x,a$, 
\begin{equation*} f_{\textup{Dis},t}(x,a) =  I\lsb\phi_0(x,a)> t\phi_1(x,a)\rsb =
I\lsb \phi_0(x,a)> |t| \frac{t\phi_1(x,a)}{|t|}\rsb  
.\end{equation*}
Moreover, by Lemma \ref{lem:misclassification} and \eqref{eq:exp dis}, we can write $\textup{Acc}(f)$ and $\overline{D}_t(f)$ as
\begin{align*} 
&   \textup{Acc}(f) = \int_\A\int_\X f(x,a) \phi_0(x,a) d\P_{X,A}(x,a) + \int_\A\int_\X (1-\eta_a(x))  d\P_{X,A}(x,a)  ;\\
&\overline{\textup{Dis}}_t(f) =\frac{t}{|t|} \textup{Dis}(f)=  \int_{\X\times\A}f(x,a)\frac{t\phi_1(x,a)}{|t|}d\P_{X,A}(x,a).
\end{align*}
Define:
\begin{equation*}
\begin{array}{l}
\mathcal{F}_{t,=} = \lbb f:  \overline{\textup{Dis}}_t(f) = \frac{t\Dt}{|t|}\rbb;  
\mathcal{F}_{t,|\cdot|,\le} = \lbb f:  \lab\overline{\textup{Dis}}_t(f)\rab \le \frac{t\Dt}{|t|}\rbb;  \text{ and } 
\mathcal{F}_{t,\le} = \lbb f:\overline{\textup{Dis}}_t(f) \le \frac{t\Dt}{|t|}\rbb.
\end{array}\end{equation*}
It is clear that $f_{\textup{Dis},t}\in\mathcal{F}_{t,=}\subset\mathcal{F}_{t,\le}$. Since $|t|\ge0$, by the generalized Neyman-Pearson lemma (Lemma \ref{NP_lemma}),
$$f_{\textup{Dis},t} \in \underset{f\in\mathcal{F}_{t,\le}}{\argmax}\,  \textup{Acc}(f),$$
and, since $\Pa(\eta_a(X)=1/2+\td\wD(X,a)/2)=0$ when both $\eta_a(X)$ and $\wD(X,a)$ are continuous random variables with respect to $\Pa$, for all $f'\in\argmax_{f\in\mathcal{F}_{t,\le}}\,  \textup{Acc}(f), $
$f'= f_{\textup{Dis},t}$ almost surely with respect to $\P_{X,A}$.

Now, suppose further that $t\in [\min (0,t_{\textup{Dis}} (0)),\max (0,t_{\textup{Dis}} (0))]$.
By result (1) of 
Proposition \ref{prop:monotonicity}, $\Dt$ is monotone non-increasing with respect to $t$.
By the definition of $\td$ in \eqref{eq:td}, we have $\tz\ge  0$ when $\Dz\ge 0$ and   $\tz\le 0$ when $\Dz\le 0$. In both cases, we have $\tz\cdot\Dz\ge 0$. 
This further implies $t\cdot \Dz\ge0$.
Moreover, suppose now that $\td \ge 0$
so that 
 $t\in [0,t_{\textup{Dis}} (0))]$.
Then, since $D_\mathrm{Dis}$ non-increasing, we have $D_\mathrm{Dis}(t) \ge D_\mathrm{Dis}(t_{\textup{Dis}} (0)) \ge 0$.
Thus $t\cdot D_\mathrm{Dis}(t)\ge0$
and 
$t\cdot D_\mathrm{Dis}(t)/|t|\ge0$.
The same claim holds when $t\le 0$.
Therefore, 
$\overline{\textup{Dis}}_t(f) = \frac{t\Dt}{|t|}$
implies
$\lab\overline{\textup{Dis}}_t(f)\rab \le \frac{t\Dt}{|t|}$.
Consequently, $f_{\textup{Dis},t}\in\mathcal{F}_{t,=}\subset\mathcal{F}_{t,|\cdot|,\le}$. 
Then,
$$\underset{f\in\mathcal{F}_{t,\le}}{\max}\, \textup{Acc}(f) = \textup{Acc}(f_{\textup{Dis},t})\le  \underset{f\in\mathcal{F}_{t,|\cdot|,\le}}{\max}\, \textup{Acc}(f) \le  \underset{f\in\mathcal{F}_{t,\le}}{\max}\, \textup{Acc}(f).$$
Thus, we can conclude that 
$$f_{t} = \underset{f\in\mathcal{F}_{t,|\cdot|,\le}}{\argmax}\,  \textup{Acc}(f)
= \argmin_{f\in\mF}\lbb R(f): { |\textup{Dis}(f)|} \le \frac{t\cdot \Dt}{|t|}
\rbb. $$
\end{proof}

\begin{lemma}[Properties of the Tradeoff function]\label{lem:boundtrade}
For any $\delta\ge0$,
let 
$\td$ and $T(\delta)$ be defined in  \eqref{eq:dt} and \eqref{eq:Td}, respectively. 
Let for $a\in\{0,1\}$, $\eta_a(X)$ and $\wD(X,a)$ be continuous random variables when $X\sim \P_{X\mid A=a}$,  for $a\in\{0,1\}$. 
Then, for $\min (0,\Dz)\le \delta_1<\delta_2\le \max (0,\Dz)$,
\begin{equation} \label{b3ineq}
\tdt(\delta_2-\delta_1) \le T(\delta_1)-T(\delta_2) \le \tdo(\delta_2-\delta_1).
\end{equation}
\end{lemma}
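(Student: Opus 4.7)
The bound is a Fenchel-type sandwich and I will derive it from the fact that $f_{\textup{Dis},t}$ is the \emph{unconstrained} minimizer of the penalized objective $R(f) + t\cdot\textup{Dis}(f)$. Concretely, combining \Cref{lem:misclassification} with the linearity of $\textup{Dis}$ in Definition \ref{ldm} gives
\begin{equation*}
R(f) + t\cdot\textup{Dis}(f) \;=\; \int f(x,a)\bigl[1-2\eta_a(x) + t\cdot w_{\textup{Dis}}(x,a)\bigr]\,d\P_{X,A}(x,a) \;+\; C_\P,
\end{equation*}
and the integrand is minimized pointwise by $f(x,a)=I[2\eta_a(x)-1 > t\cdot w_{\textup{Dis}}(x,a)] = f_{\textup{Dis},t}(x,a)$. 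Hence $f_{\textup{Dis},t}\in\argmin_{f\in\mF}\{R(f)+t\cdot\textup{Dis}(f)\}$ for every $t\in\R$.

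Next, I will set $t_i := \tdi$ and $f_i^\star := f_{\textup{Dis},t_i}$ for $i=1,2$. By \Cref{thm-opt-dp}, $T(\delta_i) = R(f_i^\star)$. Under the density hypothesis on $\eta_a(X)$ and $w_{\textup{Dis}}(X,a)$, the function $D_{\textup{Dis}}$ from \eqref{eq:dt} is continuous, so the definition of $t_{\textup{Dis}}$ in \eqref{eq:td} yields $|D_{\textup{Dis}}(t_i)|=\delta_i$ on the relevant interval. I will treat the case $\Dz\ge 0$ explicitly (the case $\Dz<0$ follows by replacing $\textup{Dis}$ with $-\textup{Dis}$ and the $\delta_i$'s with their negatives). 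In this case, \Cref{prop:monotonicity}(1) together with \eqref{eq:td} gives $t_1\ge t_2\ge 0$ and $\textup{Dis}(f_i^\star) = D_{\textup{Dis}}(t_i)=\delta_i$ for $i=1,2$.

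The proof then reduces to applying the Lagrangian optimality at $t=t_1$ and at $t=t_2$. Using $f_2^\star$ as a competitor against $f_1^\star$ at $t=t_1$ gives
\begin{equation*}
R(f_1^\star) + t_1\delta_1 \;\le\; R(f_2^\star) + t_1\delta_2,
\end{equation*}
which rearranges to $T(\delta_1)-T(\delta_2)\le t_1(\delta_2-\delta_1) = \tdo(\delta_2-\delta_1)$. Symmetrically, using $f_1^\star$ as a competitor against $f_2^\star$ at $t=t_2$ gives $R(f_2^\star)+t_2\delta_2\le R(f_1^\star)+t_2\delta_1$, i.e., $T(\delta_1)-T(\delta_2)\ge t_2(\delta_2-\delta_1)=\tdt(\delta_2-\delta_1)$. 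Stacking the two inequalities yields \eqref{b3ineq}.

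\textbf{Main obstacle.} The Lagrangian duality step is routine once the identity above is in hand; the only delicate bookkeeping is the sign convention. I need to verify that $D_{\textup{Dis}}(t_i)$ equals $+\delta_i$ (not $-\delta_i$) when $\Dz\ge 0$, so that the cross-multiplications produce the correct signs; this is where the continuity of $D_{\textup{Dis}}$ (from the density assumption) and the monotonicity in \Cref{prop:monotonicity} are essential, as together they force $t_1\ge t_2\ge 0$ and pin down the signed values $D_{\textup{Dis}}(t_i)=\delta_i$. The symmetric case $\Dz<0$, where $t_1\le t_2\le 0$, then follows verbatim after the sign reversal and is what gives the statement its apparent symmetry in $\min(0,\Dz)$ and $\max(0,\Dz)$.
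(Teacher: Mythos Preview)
Your argument is correct and takes a genuinely different route from the paper's. The paper proves the sandwich by an explicit set decomposition: it partitions $\mathcal{X}$ according to the sign of $w_{\textup{Dis}}(x,a)$ and whether $(2\eta_a(x)-1)/w_{\textup{Dis}}(x,a)$ lands in the interval $[\tdt,\tdo)$, then bounds the integrand $2\eta_a(x)-1$ above and below by $\tdi\cdot w_{\textup{Dis}}(x,a)$ on each piece and integrates, recognizing the resulting integrals as $\textup{Dis}(f_{\tdo})-\textup{Dis}(f_{\tdt})=\delta_1-\delta_2$. Your approach instead extracts both inequalities from the single observation that $f_{\textup{Dis},t}$ globally minimizes the penalized objective $R(f)+t\cdot\textup{Dis}(f)$, and then compares the Lagrangian values at $t_1$ and $t_2$ with $f_1^\star$ and $f_2^\star$ as mutual competitors. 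This is the standard subgradient-of-a-value-function argument from convex duality, and it is both shorter and more conceptual; it also makes the convexity of $T$ in Proposition~\ref{prop:tradeoff_convex} essentially immediate. The paper's hands-on computation is in effect re-deriving the Lagrangian comparison pointwise. One minor remark: your appeal to \Cref{thm-opt-dp} for $T(\delta_i)=R(f_i^\star)$ is slightly indirect, since that theorem concerns the two-sided constraint $|\textup{Dis}(f)|\le\delta$ whereas $T$ is defined with the one-sided constraint $\textup{Dis}(f)\le\delta$; but your Lagrangian optimality already yields this identity directly (for any $f$ with $\textup{Dis}(f)\le\delta_i$ and $t_i\ge 0$, $R(f_i^\star)+t_i\delta_i\le R(f)+t_i\,\textup{Dis}(f)\le R(f)+t_i\delta_i$), so no external reference is needed.
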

\begin{proof}
As $\Dt$ is monotone non-increasing and continuous in $t\in\R$, it follows that
$\td$ is strictly decreasing in $\delta$ 
on $[0,|\Dz|]$. Then, we have $\tdo> \tdt$. For
$a\in \{0,1\}$, we define
$$\mathcal{T}_{a,+}=\lbb x\in\X,\,  \wD(x,a)> 0, \,  \tdt \le \frac{2\eta_a(x)-1}{ \wD(x,a)} < \tdo  \rbb,$$
 and 
$$\mathcal{T}_{a,-}=\lbb x\in\X,\,  \wD(x,a)< 0, \,  \tdt \le \frac{2\eta_a(x)-1}{ \wD(x,a)} < \tdo \rbb.$$
By definition,
\begin{align*}
& f_{\tdo}(x,a)-f_{\tdt}(x,a)\\
&= I\lsb \eta_a(x) >\frac12+\frac{\tdo}2 \wD(x,a) \rsb -I\lsb \eta_a(x)
 >\frac12+\frac{\tdo}2 \wD(x,a) \rsb\\
&=   \left\{ \begin{array}{lcl}
-I\lsb  \tdt \le \frac{2\eta_a(x)-1}{\wD(x,a)} < \tdo\rsb,    && \wD(x,a) > 0;\\
 I\lsb  \tdt \le \frac{2\eta_a(x)-1}{\wD(x,a)} < \tdo \rsb,    && \wD(x,a) < 0;\\
   0,   && \wD(x,a) =0,
    \end{array}\right.
\end{align*}
In addition, we have, on $\mathcal{T}_{a,+}$,
$${\tdt}\cdot\wD(x,a)\le 2\eta_a(x)-1 \le  {\tdo}\cdot\wD(x,a),$$
and
on  $\mathcal{T}_{a,-}$,
$${\tdo}\cdot\wD(x,a)\le 2\eta_a(x)-1 \le {\tdt}\cdot\wD(x,a).$$
Then, by \eqref{eq:expr} from Lemma \ref{lem:misclassification},
\begin{align*}
&T(\delta_1) -T(\delta_2)=R\lsb f_{\tdo}\rsb-R\lsb f_{\tdt}\rsb\\
&=\sum_{a\in\A}p_{a}\int_\X \lsb 1-2\eta_a(x)\rsb \lsb f_{\tdo}(x,a)-f_{\tdt}(x,a)\rsb d\Pa(x)\\
&=\sum_{a\in\A}p_{a}\int_{\{\wD(x,a)>0\}} \lsb 2\eta_a(x)-1\rsb I\lsb  \tdt\le \frac{2\eta_a(x)-1}{\wD(x,a)} < \tdo\rsb d\Pa(x)\\
&-\sum_{a\in\A}p_{a}\int_{\{\wD(x,a)<0\}} \lsb 2\eta_a(x)-1\rsb I\lsb  \tdt\le \frac{2\eta_a(x)-1}{\wD(x,a)} < \tdo\rsb d\Pa(x)\\
&=\sum_{a\in\A}p_{a}\int_{\mathcal{T}_{a,+}} \lsb 2\eta_a(x)-1\rsb d\Pa(x)-\sum_{a\in\A}p_{a}\int_{\mathcal{T}_{a,+}} \lsb 2\eta_a(x)-1\rsb  d\Pa(x)\\
\le&\tdo\sum_{a\in\A}p_{a}\int_{\mathcal{T}_{a,+}}  \wD(x,a) d\Pa(x)-\tdo\sum_{a\in\A}p_{a}\int_{\mathcal{T}_{a,-}}  \wD(x,a)d\Pa(x)\\
&=-\tdo\sum_{a\in\A}p_{a}\int_{\X\cap\{\wD(x,a)>0\}}  \wD(x,a)\cdot \lsb f_{\tdo}(x,a)-f_{\tdt}(x,a) \rsb d\Pa(x)\\
&-\tdo\sum_{a\in\A}p_{a}\int_{\X\cap\{\wD(x,a)<0\}}  \wD(x,a)\cdot \lsb f_{\tdo}(x,a)-f_{\tdt}(x,a)\rsb d\Pa(x)\\
&=-\tdo\sum_{a\in\A}p_{a}\int_{\X}  \wD(x,a)\cdot \lsb f_{\tdo}(x,a)-f_{\tdt}(x,a)\rsb d\Pa(x)\\
&=-\tdo\lsb\textup D\lsb f_{\tdo} \rsb -D\lsb f_{\tdt} \rsb \rsb=\tdo(\delta_2-\delta_1).
\end{align*}
On the other hand,
\begin{align*}
&T(\delta_1) -T(\delta_2)=\sum_{a\in\A}p_{a}\int_{\mathcal{T}_{a,+}} \lsb 2\eta_a(x)-1\rsb d\Pa(x)-\sum_{a\in\A}p_{a}\int_{\mathcal{T}_{a,+}} \lsb 2\eta_a(x)-1\rsb  d\Pa(x)\\
\ge&\tdo\sum_{a\in\A}p_{a}\int_{\mathcal{T}_{a,+}}  \wD(x,a) d\Pa(x)-\tdt\sum_{a\in\A}p_{a}\int_{\mathcal{T}_{a,-}}  \wD(x,a)d\Pa(x)\\
&=-\tdt\sum_{a\in\A}p_{a}\int_{\X\cap\{\wD(x,a)>0\}}  \wD(x,a)\cdot \lsb f_{\tdo}(x,a)-f_{\tdt}(x,a) \rsb d\Pa(x)\\
&-\tdt\sum_{a\in\A}p_{a}\int_{\X\cap\{\wD(x,a)<0\}}  \wD(x,a)\cdot \lsb f_{\tdo}(x,a)-f_{\tdt}(x,a)\rsb d\Pa(x)\\
&=-\tdt\sum_{a\in\A}p_{a}\int_{\X}  \wD(x,a)\cdot \lsb f_{\tdo}(x,a)-f_{\tdt}(x,a)\rsb d\Pa(x)\\
&=-\tdt\lsb\textup D\lsb f^\star_{\delta_1} \rsb -D\lsb f^\star_{\delta_2} \rsb\rsb =\tdt(\delta_2-\delta_1).
\end{align*}
We conclude \eqref{b3ineq}.

\end{proof}

\subsection{Proofs of Results from Section \ref{gnpfc}}
\subsubsection{Proof of Proposition \ref{prop:expression of dl}}

Let   $f$ be any  classifier. 
Its DD can be expressed as
\begin{align}
 \label{exp:DD}\nonumber\textup{DD}(f)&=\P(\widehat{Y}_f  = 1|A=1)-\P(\widehat{Y}_f=1|A=0)\\
\nonumber&=\int_{\mathcal{X}}f(x,1)d\P_{X\mid A=1}(x)-\int_{\mathcal{X}}f(x,0)d\P_{X\mid A=0}(x)\\
\nonumber&=\sum_{a\in\{0,1\}}p_a\left(\int_{\mathcal{X}}\left[\frac{I(a=1)}{p_{1}}f(x,a)-\frac{I(a=0)}{p_{0}}f(x,a)\right]d\P_{X\mid A=a}(x)\right)\\
\nonumber&=\int_{\mathcal{A}}\int_{\mathcal{X}}f(x,a)\left(\frac{2a-1}{p_{a}}\right){d\P_{X\mid A=a}}d\P_A(a)\\
&=\int_{\mathcal{A}}\int_{\mathcal{X}}f(x,a)(2a-1)\left(\frac{1}{p_{a}}\right)d\P_{X,A}(x,a).
\end{align}

By Bayes' theorem, we can express the regression function $\eta_a$ in terms of the conditional distributions of $X$ given $A,Y$ and $A$, as follows
\begin{equation*}
\eta_a(x)=\P\lsb Y=1|X=x,A=a\rsb=
\frac{\P(A=a,Y=1) d\P_{X\mid A=a,Y=1}(x)}{
\P(A=a) d\P_{X\mid A=a}(x)}
=\frac{ p_{a,1} d\P_{X\mid A=a,Y=1}(x)}{p_a  d\P_{X\mid A=a}(x)}.
\end{equation*}

Next, $\text{DO}$ can be expressed as

\begin{align}
 \label{exp:DO}\nonumber\textup{DO}(f)&=\P(\widehat{Y}_f  = 1|A=1,Y=1)-\P(\widehat{Y}_f=1|A=0,  = 1)\\
\nonumber&=\int_{\mathcal{X}}f(x,1)d\P_{X\mid A=1,Y=1}(x)-\int_{\mathcal{X}}f(x,0)d\P_{X\mid A=0,Y=1}(x)\\
\nonumber&=\sum_{a\in\{0,1\}}p_a\left(\int_{\mathcal{X}}\left[\frac{I(a=1)}{p_{1}}f(x,a)-\frac{I(a=0)}{p_{0}}f(x,a)\right]d\P_{X\mid A=a,Y=1}(x)\right)\\
\nonumber&=\int_{\mathcal{A}}\int_{\mathcal{X}}f(x,a)\left(\frac{2a-1}{p_{a}}\right)\frac{d\P_{X\mid A=a,Y=1}(x)}{d\P_{X\mid A=a}}{d\P_{X\mid A=a}}d\P_A(a)\\
&=\int_{\mathcal{X}\times\mathcal{A}}f(x,a)(2a-1)\left(\frac{\eta_a(x)}{p_{a,1}}\right)d\P_{X,A}(x,a).
\end{align}

Similarly to above
\begin{equation*}
1-\eta_a(x)=\P\lsb Y=0|X=x,A=a\rsb=
\frac{\P(A=a,Y=0) d\P_{X\mid A=a,Y=0}(x)}{
\P(A=a) d\P_{X\mid A=a}(x)}
=\frac{ p_{a,0} d\P_{X\mid A=a,Y=1}(x)}{p_a  d\P_{X\mid A=a}(x)}.
\end{equation*}
Thus, $\text{PD}$ can be expressed as



\begin{align}
 \label{exp:PD}\nonumber\textup{PD}(f)&=\P(\widehat{Y}_f  = 1|A=1,Y=0)-\P(\widehat{Y}_f  = 1|A=0,Y=0)\\
\nonumber&=\int_{\mathcal{X}}f(x,1)d\P_{X\mid A=1,Y=0}(x)-\int_{\mathcal{X}}f(x,0)d\P_{X\mid A=0,Y=0}(x)\\
\nonumber&=\sum_{a\in\{0,1\}}p_a\left(\int_{\mathcal{X}}\left[\frac{I(a=1)}{p_{1}}f(x,a)-\frac{I(a=0)}{p_{0}}f(x,a)\right]d\P_{X\mid A=a,Y=0}(x)\right)\\
\nonumber&=\int_{\mathcal{A}}\int_{\mathcal{X}}f(x,a)\left(\frac{2a-1}{p_{a}}\right)\frac{d\P_{X\mid A=a,Y=0}(x)}{d\P_{X\mid A=a}}{d\P_{X\mid A=a}}d\P_A(a)\\
&=\int_{\mathcal{X}\times\mathcal{A}}f(x,a)(2a-1)\left(\frac{1-\eta_a(x)}{p_{a,0}}\right)d\P_{X,A}(x,a). 
\end{align}
Thus, Proposition \ref{prop:expression of dl} follows.

\subsection{Proofs of Results from Section \ref{sec:Bayes}}
\subsubsection{Proof of Proposition \ref{prop:monotonicity}}
For (1), since, for $a\in\{0,1\}$, both $\eta_a(X)$ and $\wD(X,a)$ are continuous random variable given $A=a$, we have that for $a\in\{0,1\}$, $t\mapsto \Pa\lsb \eta_a(X)>1/2+{t}\wD(X,a)/2\rsb$ is a continuous function. 
Thus, $t\mapsto \Dt$ is continuous.
Now, define, for $a\in\{0,1\}$, 
\begin{equation*}
   {G}_{a,+} = \{x \in\mathcal{X},  w_{\textup{Dis}}(x,a) > 0\};  \ \ \ \, \quad {G}_{a,0} = \{x\in\mathcal{X},  w_{\textup{Dis}}(x,a) = 0\} \ \ \ \, \quad {G}_{a,-} = \{x\in\mathcal{X}, w_{\textup{Dis}}(x,a) < 0\}.
\end{equation*}
Letting $t_1<t_2$, we have  for $a\in\{0,1\}$ and $x\in \mX$, 
\begin{equation}\label{eq:diffinf}
 f_{\textup{Dis},t_1}(x,a)-f_{\textup{Dis},t_2}(x,a)
= \left\{
\begin{array}{ll}
  I\lsb t_1< \frac{2\eta_a(x)-1}{  \wD(x,a)}
  \le  t_2\rsb  ,  & x\in G_{a,+};\\
  - I\lsb t_1< \frac{2\eta_a(x)-1}{  \wD(x,a)}
  \le  t_2 \rsb ,  & x\in G_{a,-};\\
0,   & \text{otherwise}.
\end{array}
\right.
\end{equation}
It thus follows that,
\begin{align*}
&\Dto-\Dtt = \textup{Dis}(f_{\textup{Dis},t_1})-\textup{Dis}(f_{\textup{Dis},t_2})
= \int_{\A}\int_{\X}\lmb
f_{\textup{Dis},t_1}(x,a)-f_{\textup{Dis},t_2}(x,a)
\rmb w_{\textup{Dis}}(x,a) d\P_{X,A}({x,a})\\
&=\sum_{a\in\{0,1\}} p_{a}\int_{\X}\lmb
f_{\textup{Dis},t_1}(x,a)-f_{\textup{Dis},t_2}(x,a)
\rmb w_{\textup{Dis}}(x,a) d\Pa\\
&=\sum_{a\in\{0,1\}} p_{a}\int_{G_{a,+}}   I\lsb t_1< \frac{2\eta_a(x)-1}{  \wD(x,a)}
  \le  t_2\rsb   w_{\textup{Dis}}(x,a) d\P_{X\mid A=a}\\
  &-\sum_{a\in\{0,1\}} p_{a}\int_{G_{a,-}}   I\lsb t_1< \frac{2\eta_a(x)-1}{  \wD(x,a)}
  \le  t_2\rsb   w_{\textup{Dis}}(x,a) d\P_{X\mid A=a}\ge 0.
\end{align*}
The last inequality holds since the indicator function is non-negative, and
$  w_{\textup{Dis}}(x,a)$ is positive on $G_{a,+}$ and negative on $G_{a,-}$.
Thus, $t\mapsto\Dt$ is a monotone non-increasing function.

For (2), when $t_1<t_2<0$, it follows that on $G_{a,+}$,
$$ (1-2\eta_a(x)) I\lsb t_1< \frac{2\eta_a(x)-1}{  \wD(x,a)}  \le  t_2\rsb \ge -t_2 \wD(x,a) I\lsb t_1< \frac{2\eta_a(x)-1}{  \wD(x,a)}  \le  t_2\rsb \ge 0,
$$
and, on $G_{a,-}$,
$$ (2\eta_a(x)-1) I\lsb t_1< \frac{2\eta_a(x)-1}{  \wD(x,a)}  \le  t_2\rsb \ge t_2  \wD(x,a) I\lsb t_1< \frac{2\eta_a(x)-1}{  \wD(x,a)}  \le  t_2\rsb \ge 0.
$$
Then, by Lemma \ref{lem:misclassification} and \eqref{eq:diffinf},
\begin{align*}
&R(f_{\textup{Dis},t_1})-R(f_{\textup{Dis},t_2}) = \int_{\A}\int_{\X} (2\eta_a(x)-1)\lmb
f_{\textup{Dis},t_1}(x,a)-f_{\textup{Dis},t_2}(x,a)\rmb   d\P_{X,A}({x,a})\\
&=\sum_{a\in\{0,1\}} p_{a}\int_{\X}(2\eta_a(x)-1)\lmb
f_{\textup{Dis},t_1}(x,a)-f_{\textup{Dis},t_2}(x,a)
\rmb  d\Pa(x,a)\\
&=\sum_{a\in\{0,1\}} p_{a}\int_{G_{a,+}}  (1-2\eta_a(x)) I\lsb t_1< \frac{2\eta_a(x)-1}{  \wD(x,a)}  \le  t_2\rsb   d\P_{X\mid A=a}(x,a)\\
  &+\sum_{a\in\{0,1\}} p_{a}\int_{G_{a,-}} (2\eta_a(x)-1)  I\lsb t_1< \frac{2\eta_a(x)-1}{  \wD(x,a)}  \le  t_2\rsb    d\P_{X\mid A=a}(x,a)\ge 0.
\end{align*}
Thus, $t\mapsto R(f_{\textup{Dis},t})$ is monotone non-increasing on $(-\infty,0)$.

On the other hand, when $0\le t_1<t_2$, we have on $G_{a,+}$ that
$$ (1-2\eta_a(x)) I\lsb t_1< \frac{2\eta_a(x)-1}{  \wD(x,a)}  \le  t_2\rsb \le -t_1 \wD(x,a) I\lsb t_1< \frac{2\eta_a(x)-1}{  \wD(x,a)}  \le  t_2\rsb \le 0,
$$
and, on $G_{a,-}$ that
$$ (2\eta_a(x)-1) I\lsb t_1< \frac{2\eta_a(x)-1}{  \wD(x,a)}  \le  t_2\rsb \le t_1  \wD(x,a) I\lsb t_1< \frac{2\eta_a(x)-1}{  \wD(x,a)}  \le  t_2\rsb \le 0.
$$
Then, by Lemma \ref{lem:misclassification} and \eqref{eq:diffinf},
\begin{align*}
&R(f_{\textup{Dis},t_1})-R(f_{\textup{Dis},t_2}) = \int_{\A}\int_{\X} (2\eta_a(x)-1)\lmb
f_{\textup{Dis},t_1}(x,a)-f_{\textup{Dis},t_2}(x,a)\rmb   d\P_{X,A}({x,a})\\
&=\sum_{a\in\{0,1\}} p_{a}\int_{\X}(2\eta_a(x)-1)\lmb
f_{\textup{Dis},t_1}(x,a)-f_{\textup{Dis},t_2}(x,a)
\rmb  d\Pa(x,a)\\
&=\sum_{a\in\{0,1\}} p_{a}\int_{G_{a,+}}  (1-2\eta_a(x)) I\lsb t_1< \frac{2\eta_a(x)-1}{  \wD(x,a)}  \le  t_2\rsb   d\P_{X\mid A=a}(x,a)\\
  &+\sum_{a\in\{0,1\}} p_{a}\int_{G_{a,-}} (2\eta_a(x)-1)  I\lsb t_1< \frac{2\eta_a(x)-1}{  \wD(x,a)}  \le  t_2\rsb    d\P_{X\mid A=a}(x,a)\le 0.
\end{align*}
Thus, $t\mapsto R(f_{\textup{Dis},t})$ is monotone non-decreasing on $[0,\infty)$.
This shows that $t\mapsto R(f_{\textup{Dis},t})$ is non-increasing in $|t|$.

\subsubsection{Proof of Theorem \ref{thm-opt-dp}}

We analyze the following three cases separately: (1) $|\textup{Dis} (0)|\le \delta$, (2) $\textup{Dis} (0)>\delta$ and (3) $\textup{Dis} (0)<-\delta$. 
Since the proof for case (3) 
is analogous to case (2), we omit the discussion of case (3).

Case (1): $|\textup{Dis} (0)|\le \delta$.
 In this case,  the unconstrained Bayes-optimal classifier satisfies the fairness constraint. 
 As a result, we have $\td=0$ and a $\delta$-fair Bayes-optimal classifier is given by $f^\star_{\textup{Dis},\delta}(x,a) = I(\eta_a(x)>1/2).$ for all $x,a$.

Case (2): $\textup{Dis} (0)>\delta$.
When, for $a\in\{0,1\}$, $\eta_a(X)$ has probability density function on $\mathcal{X}$,  ${D}_{\textup{Dis}}(t)$ is a continuous non-increasing function on $\R$,
and thus we have
${D}_{\textup{Dis}}(\td)=\delta$. 
Moreover, $\textup{Dis} (0)>\delta$ indicates $\td>0$. Then, by Lemma \ref{lem:fb},
with $t=\td$,
\begin{align*}
f_{\mathrm{Dis}, \td}  =\argmin_{f\in\mF}\lbb R(f): |\textup{Dis}(f)|\le \frac{t\cdot {D}_{\textup{Dis}}(\td)}{|t|}\rbb = \argmin_{f\in\mF}\{ R(f): |\textup{Dis}(f)|\le\delta\}. \end{align*}
This finishes the proof.

\subsubsection{Proof of Proposition \ref{prop:FPF}}
Without loss of generality, we assume $\Dz\ge 0$. 
In this case, we have $\tl=0$ and $\tu=\tz$. 
By Lemma \ref{lem:fb},  we have 
$$\{f_{\textup{Dis},t}: t\in[0,\tz]\} \subset \textup{FPF}\}.$$
On the other hand, since $f_{\textup{Dis},0} \in \argmin_{f\in\mF}\{ R(f)\},$
and by Lemma \ref{lem:fb},
$f_{\textup{Dis},\tz} \in \argmin_{f\in\mF}\{ R(f):  \textup{Dis}(f) \le 0\},$
we have, for $f_1$ with $\textup{Dis}(f)<0$ and $f_2$ with $\textup{Dis}(f)>\delta$,
$$R(f_{\textup{Dis},\tz}) \le R(f_1), \, |\textup{Dis}(f_{\textup{Dis},\tz})|=0< |\textup{Dis}(f_{1})|,$$
and
$$R(f_{0}) \le R(f_2), \, |\textup{Dis}(f_{0})|=\Dz< |\textup{Dis}(f_{2})|.$$
Thus, for any  $f_{\textup{FPF}}\in \textup{FPF}$, we have
$$0\le \textup{Dis}(f_{\textup{FPF}})\le \textup{Dis} (0).$$
Moreover, by the definition of the fair Pareto frontier, we have
$$R(f_{\textup{FPF}}) = \min_{f\in\mF}\{R(f):  \textup{Dis}(f)\le \textup{Dis}(f_{\textup{FPF}})\}.$$
 Since $t\mapsto\Dt$ is a continuous monotone non-increasing function on $[0,\tz]$ with ${D}_{\textup{Dis}}(\tz)=0$, there exists a $t\in [0,\tz]$ such that $\textup{Dis}(f_{\textup{Dis},t})=\Dt=\textup{Dis}(f_{\textup{FPF}})$. 
 By Lemma \ref{lem:fb},
 $$R(f_{\textup{Dis},t}) = \min_{f\in\mF}\{R(f):  \textup{Dis}(f)= \textup{Dis}(f_{\textup{FPF}})\}. $$
 In conclusion, there exists a $t\in[0,\tz]$ such that, 
$$ R(f_{\textup{Dis},t}) = R(f_{\textup{FPF}})  \ \ \text{ and } \ \  \textup{Dis}(f_{\textup{Dis},t}) = \textup{Dis}(f_{\textup{FPF}}). $$

\subsubsection{Proof of Theorem \ref{prop:tradeoff_convex}}
Let $ 0\le \delta_1<\delta_2\le |\Dz|$ and $\lambda\in [0,1]$, 
and denote $\delta_\lambda = \lambda {\delta}_1 + (1-\lambda)\delta_2$.
As $\td$ is strictly decreasing on $[0,|\Dz|]$, we have $\tdo> \tdl>\tdt$. 
By Lemma \ref{lem:boundtrade},
\begin{align*}
&T(\delta_\lambda) = \lambda T(\delta_\lambda) +(1-\lambda)T(\delta_\lambda)
= \lambda T(\delta_1) +(1-\lambda)T(\delta_2) + \lambda \lsb T(\delta_\lambda)-T(\delta_1)\rsb  +(1-\lambda)\lsb T(\delta_\lambda)-T(\delta_2)\rsb  \\
&\le  \lambda T(\delta_1) +(1-\lambda)T(\delta_2) -\lambda \tdl (\delta_\lambda -\delta_1)  +(1-\lambda)\tdl (\delta_2-\delta_\lambda)
= \lambda T(\delta_1) +(1-\lambda)T(\delta_2). 
\end{align*}
The proof is thus completed.

\subsubsection{Proof of Proposition \ref{prop:ddpnoa}}
By Bayes' theorem, we have for all $x,y,a$ that
\begin{equation*}
\P\lsb A=a\mid X=x\rsb=
\frac{p_a d\P_{X\mid A=a}(x)}{d\P_{X}(x)},
\end{equation*}
and
\begin{equation*}
\P(Y=y\mid A=a,X=x)\P(A=a\mid X=x)=\P(A=a,Y=y\mid X=x)=\frac{p_{a,y}d\P_{X\mid A=a,Y=y}(x)}{d\P_X(x)}.
\end{equation*}
It then follows that
\begin{align*}
&\frac{d\P_{X\mid A=1}(x)}{d\P_X(x)} = \frac{\P(A=1\mid X=x)}{\P(A=1)}=\frac{\eta^A(x)}{p_1};\\
&\frac{d\P_{X\mid A=0}(x)}{d\P_X(x)} = \frac{\P(A=0\mid X=x)}{\P(A=0)}=\frac{1-\eta^A(x)}{p_0};\\
&\frac{d\P_{X\mid A=1,Y=1}(x)}{d\P_X(x)} =\frac{\P(Y=1\mid A=1,X=x)\P(A=1|X=x)}{p_{1,1}} = \frac{\eta^Y_{A=1}(x)\eta^A(x)}{p_{1,1}};\\
&\frac{d\P_{X\mid A=1,Y=0}(x)}{d\P_X(x)} =\frac{\P(Y=0\mid A=1,X=x)\P(A=1|X=x)}{p_{1,0}} = \frac{(1-\eta^Y_{A=1}(x))\eta^A(x)}{p_{1,0}};\\
&\frac{d\P_{X\mid A=0,Y=1}(x)}{d\P_X(x)} =\frac{\P(Y=1\mid A=0,X=x)\P(A=0|X=x)}{p_{0,1}} = \frac{\eta^Y_{A=0}(x)(1-\eta^A(x)}{p_{0,1}};\\
&\frac{d\P_{X\mid A=0,Y=0}(x)}{d\P_X(x)} =\frac{\P(Y=0\mid A=0,X=x)\P(A=0|X=x)}{p_{0,0}} = \frac{(1-\eta^Y_{A=0}(x))(1-\eta^A(x))}{p_{0,0}}.
\end{align*}
Let   $f$ be any  classifier. 
Its DD, $\text{DO}$ and $\text{PD}$ can be expressed in turn as
\begin{align*}
\textup{DD}(f)&=\P(\widehat{Y}_f  = 1\mid A=1)-\P(\widehat{Y}_f=1\mid A=0)\\
&=\int_{\mathcal{X}}f(x)d\P_{X\mid A=1 , Y=1}(x)-\int_{\mathcal{X}}f(x)d\P_{X\mid A=0 , Y=1}(x)\\
&=\int_{\mathcal{X}}\lmb f(x)\lsb \frac{d\P_{X\mid A=1}(x)}{d\P_X(x)}-\frac{d\P_{X\mid A=0}(x)}{d\P_X(x)}\rsb\rmb d\P_X(x)\\
&=\int_{\mathcal{X}}f(x,a)\left(\frac{\eta^A(x)}{p_{1}} - \frac{1-\eta^A(x)}{p_0}\right)d\P_{X}(x);\\
\textup{DO}(f)&=\P(\widehat{Y}_f  = 1\mid A=1,Y=1)-\P(\widehat{Y}_f=1\mid A=0,Y=1)\\
&=\int_{\mathcal{X}}f(x)d\P_{X\mid A=1}(x)-\int_{\mathcal{X}}f(x)d\P_{X\mid A=0}(x)\\
&=\int_{\mathcal{X}}\lmb f(x)\lsb \frac{d\P_{X\mid A=1, Y=1}(x)}{d\P_X(x)}-\frac{d\P_{X\mid A=0, Y=1}(x)}{d\P_X(x)}\rsb\rmb d\P_X(x)\\
&=\int_{\mathcal{X}}f(x,a)\left(\frac{\eta^Y_{A=1}(x)\eta^A(x)}{p_{1,1}} - \frac{\eta^Y_{A=0}(x)(1-\eta^A(x))}{p_{0,1}}\right)d\P_{X}(x);\\
\textup{PD}(f)&=\P(\widehat{Y}_f  = 1\mid A=1,Y=0)-\P(\widehat{Y}_f=1\mid A=0,Y=0)\\
&=\int_{\mathcal{X}}f(x)d\P_{X\mid A=1, Y=0}(x)-\int_{\mathcal{X}}f(x)d\P_{X\mid A=0, Y =0}(x)\\
&=\int_{\mathcal{X}}\lmb f(x)\lsb \frac{d\P_{X\mid A=1, Y=0}(x)}{d\P_X(x)}-\frac{d\P_{X\mid A=0, Y=0}(x)}{d\P_X(x)}\rsb\rmb d\P_X(x)\\
&=\int_{\mathcal{X}}f(x,a)\left(\frac{(1-\eta^Y_{A=1}(x))\eta^A(x)}{p_{1,0}} - \frac{(1-\eta^Y_{A=0}(x))(1-\eta^A(x))}{p_{0,0}}\right)d\P_{X}(x).
\end{align*}
Thus, Proposition \ref{prop:ddpnoa} follows.

\subsection{Proofs of Results from Section \ref{sec:alg}}
\label{pfalg}
\subsubsection{Proof of Theorem \ref{thm:FUDS}}
By Lemma \ref{lem:fb}, we only need to prove that, for $t\in[\tl,\tu]$, the  unconstrained Bayes-optimal classifier for $\tPt$ is the same as $f_{\textup{Dis},t}$.
By Bayes' theorem, for all $x,a$,
\begin{align*}
\tetat_a(x) = \frac{ \tp^{\,t}_{a,1} d\tPt_{\tX|\tA=a,\tY=1}(x)}{\tpt_{a1}  d\tP_{\tX|\tA=a,\tY=1}(x) + \tpt_{a0}  d\tP_{\tX|\tA=a,\tY=0}(x)}.
\end{align*}
Then,  the unconstrained Bayes-optimal classifier for $\tPt$ is 
\begin{align*}
&{f}^{\textup{FUDS}}_t(x,a) = I\lsb \tetat_a(x) >\frac12\rsb
= I\lsb\frac{ \tpt_{a,1} d\tPt_{\tX|\tA=a,\tY=1}(x)}{\tpt_{a1}  d\tPt_{\tX|\tA=a,\tY=1}(x) + \tpt_{a0}  d\tPt_{\tX|\tA=a,\tY=0}(x)}>\frac12\rsb\\
= &I\lsb\frac{  c_a(1- H_{\textup{Dis},a}(t)) p_{a,y} d\P_{X\mid A=a,Y=1}(x)}{c_a(1- H_{\textup{Dis},a}(t)) p_{a,y}  d\tP_{X\mid A=a,Y=1}(x) + c_a  H_{\textup{Dis},a}(t) p_{a,y}  d\P_{X\mid A=a,Y=0}(x)}>\frac12\rsb\\
= &I\lsb (1- H_{\textup{Dis},a}(t)) p_{a,1} d\P_{X\mid A=a,Y=1}(x)>  H_{\textup{Dis},a}(t) p_{a,0}  d\P_{X\mid A=a,Y=0}(x)\rsb\\
= &I\lsb  p_{a,y} d\P_{X\mid A=a,Y=1}(x)>  H_{\textup{Dis},a}(t) \lmb   p_{a,1} d\P_{X\mid A=a,Y=1}(x) + p_{a,0}  d\P_{X\mid A=a,Y=0}(x)\rsb\rmb\\
&= I\lsb \eta_a(x) > \Hta\rsb = I\lsb \eta_a(x)>\frac12 + \frac{t}{2}\wD(x,a)\rsb = f_{\textup{Dis},t}(x,a).
\end{align*}
This finishes the proof.

\subsubsection{Proof of Theorem \ref{thm:FCSC}}
Again, we only need to prove that for $t\in[\tl,\tu]$, $f_t^{\textup{FCSC}}=f_{\textup{Dis},t}$. By definition,
\begin{align*}
&R^{\textup{FCSC}}_t(f)=\sum_{a\in\{0,1\}}\lmb
 H_{\textup{Dis},a}(t)\cdot \P(\widehat{Y}_f=1,Y=0,A=a)+(1- H_{\textup{Dis},a}(t))\cdot \P(\widehat{Y}_f=0, Y=1,A=a)\rmb\\
 &= \sum_{a\in\{0,1\}}p_a\lmb
   H_{\textup{Dis},a}(t)\cdot \P(\widehat{Y}_f=1,Y=0|A=a)+(1- H_{\textup{Dis},a}(t))\cdot \P(\widehat{Y}_f=0, Y=1|A=a)\rmb\\
   &=\sum_{a\in\{0,1\}}p_a 
   H_{\textup{Dis},a}(t)\cdot \int_\X\P (\widehat{Y}_f=1,Y=0|A=a,X=x)d\Pa(x)\\
   &+\sum_{a\in\{0,1\}}p_a (1- H_{\textup{Dis},a}(t))\cdot \int_\X\P(\widehat{Y}_f=0, Y=1|X=x,A=a)  d\Pa(x)\\
   &=\sum_{a\in\{0,1\}}p_a 
   \int_\X  \lmb H_{\textup{Dis},a}(t)f(x,a)(1-\eta_a(x))  + (1- H_{\textup{Dis},a}(t)) (1-f(x,a))\eta_a(x)  \rmb d\Pa(x)\\
   &=\sum_{a\in\{0,1\}}p_a \int_\X\lmb  
    H_{\textup{Dis},a}(t)f(x,a)
    -\eta_a(x)- H_{\textup{Dis},a}(t)\eta_a(x) -\eta_a(x)f(x,a)  \rmb d\Pa\\
    &=\sum_{a\in\{0,1\}}p_a \int_\X  
    \lsb H_{\textup{Dis},a}(t) -\eta_a\rsb f(x,a)
     d\Pa  - \sum_{a\in\{0,1\}}p_a \int_\X  
    \lsb 1+H_{\textup{Dis},a}(t) \rsb \eta_a(x)
     d\Pa.
    \end{align*}
Note that the second term does not depend on $f$
and that the first term is minimized by taking $f(x,a)=1$ if $\eta_a(x)>H_{\textup{Dis},a}(t)$ and $f(x,a)=0$ if $\eta_a\le H_{\textup{Dis},a}(t)$.
  We can thus conclude that for all $x,a$,
$$f^\textup{FCSC}_t(x,a) = I\lsb\eta_a(x)>H_{\textup{Dis},a}(t)\rsb = f_{\textup{Dis},t}(x,a).$$
This finishes the proof.

\section{Extensions}
\label{ext}
In this section, we discuss some interesting extensions of our theoretical and methodological framework, including (1) fair Bayes-optimal classifier with equalized odds; (2) fair Bayes-optimal classifier with multi-class protected attribute; (3) fair Bayes-optimal classifier with no distributional assumptions; (4) FUDS and FCSC algorithms with linear disparities (protected attribute $A$ is excluded from predictive attribute.)

\subsection{Fair Bayes-optimal Classifier with Equalized Odds}\label{sec:equodds}
In this section, we provide the detailed characterization of the fair Bayes-optimal classifier under equalized odds (EO), as defined in Theorem \ref{thm-fb-eq-odd}.
Recall that, for any $\delta>0$, a $\delta$-fair Bayes-optimal classifier is  defined as 
$$
f^\star_{\textup{DEO},\delta} =  {\argmin} \lbb R(f): \max\{|\textup{DO}(f)|,|\textup{PD}(f)|\}\le \delta \rbb,
$$
with DO and PD defined in \eqref{eq:disparity level} and written more explicitly as
\begin{align}\label{eq:dopd}
& \textup{DO}(f)=\int_{\X\times\A}f(x,a)\frac{(2a-1)\eta_a(x)}{p_{a,1}}d\P_{X,A}(x,a); \\
&\textup{PD}(f)=\int_{\X\times\A}f(x,a)\frac{(2a-1)(1-\eta_a(x))}{p_{a,0}}d\P_{X,A}(x,a).
\end{align}
The fairness constraint $\max\{|\textup{DO}(f)|,|\textup{PD}(f)|\}\le \delta$
consists of four linear inequalities on $f$:
$$  {\textup{DO}}(f)\le  \delta, \quad  -\textup{DO}(f)\le\delta, \quad \textup{PD}(f)\le\delta \, \text{ and } - \textup{PD}(f)\le\delta.$$
or
\begin{align*}
&\int_{\X\times\A}f(x,a)\frac{(2a-1)\eta_a(x)}{p_{a,1}}d\P_{X,A}(x,a)\le \delta;
&\int_{\X\times\A}f(x,a)\frac{(1-2a)\eta_a(x)}{p_{a,1}}d\P_{X,A}(x,a)\le \delta;\\
&\int_{\X\times\A}f(x,a)\frac{(2a-1)(1-\eta_a(x))}{p_{a,0}}d\P_{X,A}(x,a)\le \delta;
&\int_{\X\times\A}f(x,a)\frac{(1-2a)(1-\eta_a(x))}{p_{a,0}}d\P_{X,A}(x,a)\le \delta.
\end{align*}
Then, the generalized Neyman-Pearson lemma motivates us to consider classifiers of the form, for $(c_1,c_2,c_3,c_4)\in \R^4$, $x\in\X$ and $a\in\{0,1\}$,
\begin{align}\label{eq:feqodd0}
  \nonumber f_{c_1,c_2,c_3,c_4}(x,a) = &I\left(2\eta_a(x)-1 >  c_1 \frac{(2a-1)\eta_a(x)}{p_{a,1}} -c_2\frac{(2a-1)\eta_a(x)}{p_{a,1}}\right.\\
   & \qquad\qquad\qquad\left.+c_3 \frac{(2a-1)(1-\eta_a(x))}{p_{a,0}} -c_4\frac{(2a-1)(1-\eta_a(x))}{p_{a,0}}\right). 
\end{align}
Let $\mathcal{T}= [-p_{0,1},p_{1,1}]\times [-p_{1,0},p_{0,0}]  \setminus \{ (-p_{0,1},p_{0,0}),(-p_{1,0},p_{1,1})\})$.
Take $t_1= c_1-c_2\in \R$ and $t_2=c_3-c_4\in \R$ and define, for $a\in\{0,1\}$, $T_a: \mathcal{T}\to [0,1]$
for all $(t_1,t_2)\in\mathcal{T}$
by
\begin{equation}\label{ta}
T_a(t_1,t_2)= 
  \frac{p_{a,1}p_{a,0} +(2a-1)t_2 p_{a,1}  }{2p_{a,1}p_{a,0} +(2a-1) \lsb t_2 p_{a,1} -t_1 p_{a,0}\rsb}.    
\end{equation} 
Then,
\eqref{eq:feqodd0} can be simplified to, for all $x,a$,
\begin{equation}\label{eq:feqodd1}
   f_{\textup{Dis},t_1,t_2}(x,a) = I\left(2\eta_a(x)-1 >  t_1 \frac{(2a-1)\eta_a(x)}{p_{a,1}} +t_2 \frac{(2a-1)(1-\eta_a(x))}{p_{a,0}}\right)= I\lsb\eta_a(x)>T_a(t_1,t_2)\rsb.
\end{equation}
We then define the disparity functions $ D_\textup{DO}$: $\mathcal{T}\to [-1,1]$ and $ D_\textup{PD}$: $\mathcal{T}\to [-1,1]$ to measure the DO and PD of $f_{\textup{Dis},t_1,t_2}$, for all $t_1,t_2\in\mathcal{T}$,
as
\begin{align}
&    D_{\textup{DO}}(t_1,t_2) =\textup{DO}(f_{\textup{Dis},t_1,t_2})
=\int_{\X\times\A}I\lsb \eta_a(x)>T_a(t_1,t_2)\rsb\frac{(1-2a)\eta_a(x)}{p_{a,1}} d\P_{X,A}(x,a); \label{eq:eodeo} \\
 &{D}_{\textup{PD}}(t_1,t_2) 
 =\textup{PD}(f_{\textup{Dis},t_1,t_2})
 =\int_{\X\times\A}I\lsb \eta_a(x)>T_a(t_1,t_2)\rsb\frac{(2a-1)(1-\eta_a(x))}{p_{a,0}}d\P_{X,A}(x,a).\label{eq:eodpe}
\end{align}
\begin{proposition}[Continuity and Monotonicity of $\textup{DO}$ and $\textup{PD}$]\label{prop:mon-D12}

~
\begin{itemize}[]
    \item (1) For fixed $t_2\in(-p_{1,0},p_{0,0})$, both $t \mapsto {D}_{\textup{DO}}(t,t_2)$ and $t \mapsto {D}_{\textup{PD}}(t,t_2)$ are continuous and monotone non-increasing. Moreover,   $\textup{DO}(-p_{0,1},t_2)\ge 0$, 
 $ \textup{DO}(p_{1,1},t_2)\le 0$, $\textup{PD}(-p_{0,1},t_2)\ge 0$ and 
 $ \textup{PD}(p_{1,1},t_2)\le 0$.
    \item (2) For fixed $t_1\in(-p_{0,1},p_{1,1})$, both $t \mapsto {D}_{\textup{DO}}(t_1,t)$ and $t \mapsto {D}_{\textup{PD}}(t_1,t)$ are continuous and monotone non-increasing. Moreover  ${D}_{\textup{DO}}(t_1,-p_{1,0})\ge 0$, 
 $ {D}_{\textup{DO}}(t_1,p_{0,0})\le 0$, ${D}_{\textup{PD}}(t_1,-p_{1,0})\ge 0$ and 
 $ {D}_{\textup{PD}}(t_1,p_{0,0})\le 0$.
\end{itemize}
\end{proposition}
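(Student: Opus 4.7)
My plan is to first recast $D_{\textup{DO}}$ and $D_{\textup{PD}}$ as differences of group-conditional acceptance probabilities, and then reduce the question to how $T_1$ and $T_0$ move with $(t_1,t_2)$. By Bayes' rule, exactly as in the proof of Proposition~\ref{prop:expression of dl}, one can rewrite
\begin{align*}
D_{\textup{DO}}(t_1, t_2) &= \P(\eta_1(X) > T_1(t_1,t_2) \mid A=1, Y=1) - \P(\eta_0(X) > T_0(t_1,t_2) \mid A=0, Y=1),\\
D_{\textup{PD}}(t_1, t_2) &= \P(\eta_1(X) > T_1(t_1,t_2) \mid A=1, Y=0) - \P(\eta_0(X) > T_0(t_1,t_2) \mid A=0, Y=0).
\end{align*}
The density of $\eta_a(X)$ given $A=a$ transfers to the distribution conditioned additionally on $Y=y$, since the new density is just a reweighting by $\eta_a(u)$ or $1-\eta_a(u)$. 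Hence each map $c\mapsto \P(\eta_a(X)>c\mid A=a, Y=y)$ is continuous and monotone non-increasing on $[0,1]$.

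Next I would analyze the dependence of the thresholds on $(t_1,t_2)$. Each $T_a$ is a rational function; the two points where either denominator vanishes are precisely the two corners removed from $\mathcal{T}$, so $T_a$ is continuous on $\mathcal{T}$. A direct partial-derivative computation gives the sign pattern $\partial T_1/\partial t_j > 0$ and $\partial T_0/\partial t_j < 0$ for $j\in\{1,2\}$ throughout the interior of $\mathcal{T}$. For instance, $\partial T_1/\partial t_1 = p_{1,0}\, p_{1,1}(p_{1,0}+t_2)/(\mathrm{denom})^2 > 0$ because $t_2 > -p_{1,0}$; the other three signs reduce analogously to the strict inequalities $t_2 < p_{0,0}$, $t_1 > -p_{0,1}$ and $t_1 < p_{1,1}$ defining the interior. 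Composing with the previous step, raising $t_1$ (or $t_2$) increases $T_1$, which drives the first summand down, and decreases $T_0$, which drives the subtracted summand up; both effects pull $D_{\textup{DO}}$ and $D_{\textup{PD}}$ downward, yielding both monotonicity claims. Continuity is inherited from that of $T_a$ and of the conditional CDFs.

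For the boundary statements I would simply substitute. At $t_1 = p_{1,1}$ the numerator and denominator of $T_1$ both collapse to $p_{1,1}(p_{1,0}+t_2)$, so $T_1(p_{1,1},t_2)=1$ and the first summand of $D_{\textup{DO}}$ and $D_{\textup{PD}}$ vanishes, forcing both values to be $\le 0$. Symmetric substitutions give $T_0(-p_{0,1},t_2)=1$, while $T_1(t_1,-p_{1,0})=0$ and $T_0(t_1,p_{0,0})=0$ take care of part~(2). The only technical point to check is that the ``other'' threshold stays well-defined along each relevant boundary segment (e.g.\ that $T_0$ has nonzero denominator when $t_1=p_{1,1}$ and $t_2\in(-p_{1,0},p_{0,0})$); this is precisely why the two singular corners were removed from $\mathcal{T}$ in the first place, and it is the main, though mild, obstacle.
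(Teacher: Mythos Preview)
Your proposal is correct and follows essentially the same route as the paper's proof: rewrite $D_{\textup{DO}}$ and $D_{\textup{PD}}$ as differences of group-conditional acceptance probabilities, show $T_1$ is increasing and $T_0$ is decreasing in each argument, and then evaluate at the boundary to get the sign conclusions. The only cosmetic difference is that you compute partial derivatives of $T_a$ whereas the paper argues directly via the elementary monotonicity of $x\mapsto c_1/(c_2\pm x)$; your remark that the density of $\eta_a(X)$ transfers to the $(A=a,Y=y)$-conditional law is in fact slightly more careful than what the paper writes.
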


Next, we define the following quantities,
which are useful in determining the optimal thresholds $t_1,t_2$ for the $\delta$-fair Bayes-optimal classifier. 
For any $\delta>0$, define: $\acute{t}_{\textup{DO}}: \R_+ \to [-p_{0,1},p_{1,1}]$ and $\acute{t}_{\textup{PD}}: \R_+ \to[-p_{1,0},p_{0,0}]$ by 
\begin{align}\label{eq:atd}
\acute{t}_{\textup{DO}}(\delta) = \argmin_{t}\{|t|, |{D}_{\textup{DO}}(t,0)|\le\delta \}, \qquad \acute{t}_{\textup{PD}}(\delta) = \argmin_t\{ |t|, |\textup{PD} (0,t)|\le \delta\}.
\end{align}
By Proposition \ref{prop:mon-D12} with $t_1 = t_2=0$, these quantities are well-defined.

Now, we define $t_{\textup{DEO},1}$: $[0,\infty)\to \R$ and $t_{\textup{DEO},2}$: $[0,\infty)\to \R$, 
the thresholds of the $\delta$-fair Bayes-optimal classifiers in Theorem \ref{thm-fb-eq-odd}, for any $\delta\ge0$, as follows; where the claim that they are well-defined is proved in Lemma \ref{lem:existance-eqodd}:
\begin{align}\label{eq:toeod}
&    (t_{\textup{DEO},1}(\delta),t_{\textup{DEO},2}(\delta)) 
\\
\nonumber&=\left\{\begin{array}{ll}
       (0,0),   & (1)\, |{D}_{\textup{DO}}(0,\atdt)|\le \delta \text{ and } |{D}_{\textup{PD}}(\atdo,0)|\le \delta;\\
      (\atdo,0),   & (2)\, |{D}_{\textup{DO}}(0,\atdt)|> \delta \text{ and } |{D}_{\textup{PD}}(\atdo,0)|\le \delta;\\
       (0,\acute{t}_{\textup{PD}}(\delta)),   & (3)\, |{D}_{\textup{DO}}(0,\atdt)|\le \delta \text{ and } |{D}_{\textup{PD}}(\atdo,0)|> \delta;\\
      \textup{solve}\{ ({D}_{\textup{DO}}(t_1,t_2),{D}_{\textup{PD}}(t_1,t_2))=(\delta,\delta)\},   
      &(4)\, {D}_{\textup{DO}}(0,\atdt)> \delta \text{ and } {D}_{\textup{PD}}(\atdo,0)> \delta;\\
    \textup{solve}\{ ({D}_{\textup{DO}}(t_1,t_2),{D}_{\textup{PD}}(t_1,t_2))=(\delta,-\delta)\},   
      &(5)\, {D}_{\textup{DO}}(0,\atdt)> \delta \text{ and } {D}_{\textup{PD}}(\atdo,0)< -\delta;\\
    \textup{solve}\{ ({D}_{\textup{DO}}(t_1,t_2),{D}_{\textup{PD}}(t_1,t_2))=(-\delta,\delta)\},   
      &(6)\, {D}_{\textup{DO}}(0,\atdt)< -\delta \text{ and } {D}_{\textup{PD}}(\atdo,0)> \delta;\\
      \textup{solve}\{ ({D}_{\textup{DO}}(t_1,t_2),{D}_{\textup{PD}}(t_1,t_2))=(-\delta,-\delta)\},   
      &(7)\, {D}_{\textup{DO}}(0,\atdt)< -\delta \text{ and } {D}_{\textup{PD}}(\atdo,0)< -\delta.
    \end{array}\right.
\end{align}
The seven cases above are mutually exclusive and collectively exhaustive. 
The first three cases, where at least one of 
$t_{\textup{DEO},1}$  and $t_{\textup{DEO},2}$ equals zero, correspond to 
(1) the unconstrained Bayes-optimal classifier, 
(2) the $\delta$-fair Bayes-optimal classifier under equality of opportunity, 
and 
(3) the $\delta$-fair Bayes-optimal classifier under predictive equality.
These satisfy the $\delta$-parity constraint for equalized odds. 
For the remaining four cases, both $t_{\textup{DEO},1}$  and $t_{\textup{DEO},2}$ 
are carefully selected to satisfy the hard constraint that for
$a\in\{\textup{DO},\textup{PD}\}$, $D_a(t_{\textup{DEO},1},t_{\textup{DEO},2})$ equals either $\delta$ or $-\delta$.

\subsubsection{Proof of Proposition \ref{prop:mon-D12}}

We prove claim (1), and 
claim (2) follows by similar arguments. 
For any
$0\le c_1\le c_2$, we have that $x\mapsto c_1/(c_2+x)$ is monotone non-increasing on $(-c_2,\infty)$ and  $x\mapsto c_1/(c_2-x)$ is monotone non-decreasing on $(-\infty,c_2)$. 
Let,
 for $a\in\{0,1\}$, $0\le c_{1,a} =p_{a,1}p_{a,0} +t_2 p_{a,1} < 2p_{a,1}p_{a,0} +t_2 p_{a,1} =c_{2,a}$.
 Fixing $t_2\in(-p_{1,0},p_{0,0})$, on one hand, 
 we have
 for
 $-p_{0,1}\le t_{1,1}\le t_{1,2}\le p_{1,1}$ that
    $t_{1,1}p_{1,0}\le t_{1,2}p_{1,0}\le  p_{1,1}p_{1,0}<2p_{1,1}p_{1,0} +t_2 p_{1,1}=c_{2,1} $. 
    Thus, since $ t_{1,1}\le t_{1,2}$,
   $$ T_1(t_{1,1},t_2)=
\frac{p_{1,1}p_{1,0} + t_2 p_{1,1}  }{2p_{1,1}p_{1,0} +  t_2 p_{1,1} -t_{1,1} p_{1,0}} \le \frac{p_{1,1}p_{1,0} + t_2 p_{1,1}  }{2p_{1,1}p_{1,0} +  t_2 p_{1,1} -t_{1,2} p_{1,0}} \le T_1(t_{1,2},t_2).$$
On the other hand, 
    $t_{1,2}p_{0,0}\ge t_{1,1}p_{0,0}\ge  -p_{0,1}p_{0,0}>t_2 p_{0,1}-  2p_{0,1}p_{0,0} =-c_{2,0}$. Thus
    $$ T_0(t_{1,1},t_2)=
\frac{p_{0,1}p_{0,0} - t_2 p_{0,1}  }{2p_{0,1}p_{0,0} - t_2 p_{0,1}  + t_{1,1} p_{0,0}} \ge \frac{p_{0,1}p_{0,0} - t_2 p_{0,1}  }{2p_{0,1}p_{0,0} - t_2 p_{0,1}  + t_{1,2} p_{0,0}} = T_0(t_{1,2},t_2).$$
Then, by definition, we have
\begin{align*}
& {D}_{\textup{DO}}(t_{1,1},t_2) = \P_{A=1,Y=1}\lsb\eta^Y_{A=1}(X)>T_1(t_{1,1},t_2)\rsb  -\P_{A=0,Y=1}\lsb\eta^Y_{A=0}(X)>T_0(t_{1,1},t_2)\rsb  \\
\ge &\P_{A=1,Y=1}\lsb\eta^Y_{A=1}(X)>T_1(t_{1,2},t_2)\rsb  -\P_{A=0,Y=1}\lsb\eta^Y_{A=0}(X)>T_0(t_{1,2},t_2)\rsb = {D}_{\textup{DO}}(t_{1,1},t_2) (t_{1,2},t_2);
\end{align*}
and
\begin{align*}& {D}_{\textup{PD}}(t_{1,1},t_2) = \P_{A=1,Y=0}\lsb\eta^Y_{A=1}(X)>T_1(t_{1,1},t_2)\rsb  -\P_{A=0,Y=0}\lsb\eta^Y_{A=0}(X)>T_0(t_{1,1},t_2)\rsb  \\
\ge &\P_{A=1,Y=0}\lsb\eta^Y_{A=1}(X)>T_1(t_{1,2},t_2)\rsb  -\P_{A=0,Y=0}\lsb\eta^Y_{A=0}(X)>T_0(t_{1,2},t_2)\rsb = {D}_{\textup{PD}}(t_{1,1},t_2) (t_{1,2},t_2).
\end{align*}
Thus, for fixed $t_2\in(-p_{1,0},p_{0,0})$, both $t \mapsto {D}_{\textup{DO}}(t,t_2)$ and 
$t \mapsto {D}_{\textup{PD}}(t,t_2)$ are monotone non-increasing on $[-p_{0,1},p_{1,1}]$.  
Continuity follows since, for $a\in\{0,1\}$,  $t \mapsto T_a(t,t_2)$ is continuous and $\eta_a(X)$ has density function on $\X$.

Moreover, since for any fixed $t_2\in (-p_{1,0},p_{0,0})$,
$ T_1(p_{1,1},t_2) = T_0(-p_{0,1})= 1,$
we have,
\begin{align*}
& \textup{DO}(-p_{0,1},t_2) = \P_{A=1,Y=1}\lsb\eta^Y_{A=1}(X)>T_1(-p_{0,1},t_2)\rsb  -\P_{A=0,Y=1}\lsb\eta^Y_{A=0}(X)>1)\rsb  \\
= &\P_{A=1,Y=1}\lsb\eta^Y_{A=1}(X)>T_1(-p_{0,1},t_2)\rsb  \ge 0;\\
& \textup{DO}(p_{1,1},t_2) = \P_{A=1,Y=1}\lsb\eta^Y_{A=1}(X)>1\rsb  -\P_{A=0,Y=1}\lsb\eta^Y_{A=0}(X)> T_0(p_{1,1},t_2)\rsb  \\
= &-\P_{A=0,Y=1}\lsb\eta^Y_{A=0}(X)> T_0(p_{1,1},t_2)\rsb  \le 0;\\
& \textup{PD}(-p_{0,1},t_2) = \P_{A=1,Y=0}\lsb\eta^Y_{A=1}(X)>T_1(-p_{0,1},t_2)\rsb  -\P_{A=0,Y=0}\lsb\eta^Y_{A=0}(X)>1)\rsb  \\
= &\P_{A=1,Y=2-j}\lsb\eta^Y_{A=1}(X)>T_1(-p_{0,1},t_2)\rsb  \ge 0;\\
& \textup{PD}(p_{1,1},t_2) = \P_{A=1,Y=0}\lsb\eta^Y_{A=1}(X)>1\rsb  -\P_{A=0,Y=0}\lsb\eta^Y_{A=0}(X)> T_0(p_{1,1},t_2)\rsb  \\
= &-\P_{A=0,Y=2-j}\lsb\eta^Y_{A=0}(X)> T_0(p_{1,1},t_2)\rsb  \le 0.
\end{align*}

\subsubsection{Proof of Theorem  \ref{thm-fb-eq-odd}}

The following lemmas are useful in proving Theorem \ref{thm-fb-eq-odd}.
The first result characterizes 
the relation between ${D}_{\textup{DO}}$ and ${D}_{\textup{PD}}$.

\begin{lemma}[Relation between ${D}_{\textup{DO}}$ and ${D}_{\textup{PD}}$]\label{conj}
Let $t$, $t_1$ and $t_2$ be three real numbers.
\begin{itemize}[]
\item (1) If $t_2\ge 0$ and 
${D}_{\textup{DO}}(t_1,t_2)={D}_{\textup{DO}}(t,0)$, we have
${D}_{\textup{PD}}(t_1,t_2)\le{D}_{\textup{PD}}(t,0).$

\item (2) If $t_2\le 0$ and 
${D}_{\textup{DO}}(t_1,t_2)={D}_{\textup{DO}}(t,0)$, we have
${D}_{\textup{PD}}(t_1,t_2)\ge{D}_{\textup{PD}}(t,0).$

\item (3)  If $t_1\ge 0$ and 
${D}_{\textup{PD}}(t_1,t_2)={D}_{\textup{PD}}(0,t)$, we have
${D}_{\textup{DO}}(t_1,t_2)\le{D}_{\textup{DO}}(0,t).$

\item (4) If $t_1\le 0$ and 
${D}_{\textup{PD}}(t_1,t_2)={D}_{\textup{PD}}(0,t)$, we have
${D}_{\textup{DO}}(t_1,t_2)\ge{D}_{\textup{DO}}(0,t).$

\end{itemize}
\end{lemma}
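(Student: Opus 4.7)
The plan is to give a short variational argument using the fact that $f_{\textup{Dis},t_1,t_2}$ is, for \emph{any} real $(t_1,t_2)$, the unconstrained pointwise maximizer of the linear functional
\[
J_{t_1,t_2}(f) \;:=\; \textup{Acc}(f) - t_1\,\textup{DO}(f) - t_2\,\textup{PD}(f).
\]
Combining \eqref{eq:expacc} with the expressions \eqref{exp:DO} and \eqref{exp:PD}, the integrand of $J_{t_1,t_2}$ at $(x,a)$ is $f(x,a)$ multiplied by $(2\eta_a(x)-1) - t_1(2a-1)\eta_a(x)/p_{a,1} - t_2(2a-1)(1-\eta_a(x))/p_{a,0}$, up to a constant independent of $f$; pointwise maximization over $f(x,a)\in[0,1]$ returns exactly the indicator in \eqref{eq:feqodd1}. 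Crucially, no sign restriction on $(t_1,t_2)$ is needed, because this is an unconstrained linear optimization rather than a Lagrangian dual in the generalized Neyman--Pearson lemma.

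To prove claim (1), I would pair this optimality with that of $f_{\textup{Dis},t,0}\in\argmax_f J_{t,0}(f)$. Evaluating the two competitors against each other gives
\[
J_{t_1,t_2}(f_{\textup{Dis},t_1,t_2}) \ge J_{t_1,t_2}(f_{\textup{Dis},t,0}) \quad \text{and} \quad J_{t,0}(f_{\textup{Dis},t,0}) \ge J_{t,0}(f_{\textup{Dis},t_1,t_2}).
\]
After substituting the hypothesis $D_{\textup{DO}}(t_1,t_2) = D_{\textup{DO}}(t,0)$, the $\textup{DO}$-terms cancel in each inequality: the second reduces to $\textup{Acc}(f_{\textup{Dis},t,0}) \ge \textup{Acc}(f_{\textup{Dis},t_1,t_2})$, and the first rearranges to
\[
t_2\bigl[D_{\textup{PD}}(t_1,t_2) - D_{\textup{PD}}(t,0)\bigr] \;\le\; \textup{Acc}(f_{\textup{Dis},t_1,t_2}) - \textup{Acc}(f_{\textup{Dis},t,0}) \;\le\; 0.
\]
The sign of $t_2$ then determines the direction of the inequality on $D_{\textup{PD}}$: for $t_2\ge 0$ we obtain $D_{\textup{PD}}(t_1,t_2) \le D_{\textup{PD}}(t,0)$ (claim (1)) and for $t_2\le 0$ the opposite inequality (claim (2)).

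Claims (3) and (4) then follow by the symmetric argument, pairing $f_{\textup{Dis},t_1,t_2}$ with $f_{\textup{Dis},0,t}$ and using the hypothesis $D_{\textup{PD}}(t_1,t_2) = D_{\textup{PD}}(0,t)$ to cancel the $\textup{PD}$-terms, yielding $t_1\bigl[D_{\textup{DO}}(t_1,t_2) - D_{\textup{DO}}(0,t)\bigr] \le 0$. I do not expect a serious obstacle: the main point is spotting the variational identification above, which sidesteps the need to reason about the intricate threshold formulas $T_a(t_1,t_2)$. The only technicality concerns ties in the pointwise maximization, but the density assumption on $\eta_a(X)$ ensures that the coefficient of $f(x,a)$ vanishes on a $\P_{X,A}$-null set, so the maximizer is essentially unique and all four inequalities follow cleanly.
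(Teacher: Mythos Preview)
Your proposal is correct and takes a genuinely different route from the paper's proof. The paper argues claim~(1) via constrained optimality: it invokes Lemma~\ref{lem:fb} (adapted to DO) to assert that $f_{\textup{Dis},t,0}$ minimizes risk among classifiers with $\textup{DO}$ on the appropriate side of $D_{\textup{DO}}(t,0)$, and then splits into two cases depending on whether $R(f_{\textup{Dis},t,0})=R(f_{\textup{Dis},t_1,t_2})$ or the inequality is strict, using Lemma~\ref{lem:fbeqodd} in the strict case to derive a contradiction if $D_{\textup{PD}}(t_1,t_2)>D_{\textup{PD}}(t,0)$. Your argument instead observes directly that $f_{\textup{Dis},t_1,t_2}$ is the \emph{unconstrained} pointwise maximizer of the Lagrangian $J_{t_1,t_2}(f)=\textup{Acc}(f)-t_1\,\textup{DO}(f)-t_2\,\textup{PD}(f)$, then cross-evaluates $J_{t_1,t_2}$ and $J_{t,0}$ at the two competitors; after cancelling the $\textup{DO}$-terms using the hypothesis, the sign of $t_2$ delivers the conclusion. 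This is shorter, avoids the case split, and does not rely on Lemma~\ref{lem:fbeqodd} (which in the paper is stated after Lemma~\ref{conj}). The only place your argument needs a little extra care is the boundary case $t_2=0$, where the chain $t_2[D_{\textup{PD}}(t_1,t_2)-D_{\textup{PD}}(t,0)]\le 0$ is vacuous; there your two optimality inequalities force $\textup{Acc}(f_{\textup{Dis},t_1,0})=\textup{Acc}(f_{\textup{Dis},t,0})$, and the essential uniqueness of the maximizer of $J_{t,0}$ (under the density assumption you cite) then gives $f_{\textup{Dis},t_1,0}=f_{\textup{Dis},t,0}$ a.s.\ and hence equality of the $\textup{PD}$ values. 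You do mention uniqueness at the end, so this is not a gap, but it is worth flagging explicitly as the place where the density assumption is actually used.
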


\begin{proof}[Proof of Lemma \ref{conj}]
For conciseness, we show the proofs for claim (1) 
Claims (2) through (4) can be verified using the same approach as for claim (1).

Recall  
$f_{\textup{Dis},t_1,t_2}$ and
$f_{\textup{Dis},t,0}$ from \eqref{eq:feqodd1}.
By Lemma \ref{lem:fb} adapted to equality of opportunity,
$f_{\textup{Dis},t,0}$ is Bayes-optimal with respect to DO, and
as $f_{\textup{Dis},t_1,t_2}$ is another classifier,
$$R(f_{\textup{Dis},{t},0}) =\min_{f\in\mF}\lbb R_f: \frac{t\cdot\textup{DO}(f)}{|t|} \le \frac{t\cdot\textup{DO}({t},0)}{|t|}\rbb 
\le R(f_{\textup{Dis},t_1,t_2}).$$
Moreover, since $\Pa(\eta_a(X)=T_a(t_1,t_2))=0$ when $\eta_a(X)$ is a continuous random variable with respect to $\Pa$,
for all $f'\in\argmin_{f\in\mF}\{ R_f: {t\cdot\textup{DO}(f)}/{|t|} \le {t\cdot\textup{DO}({t},0)}/{|t|}\} $, $f'(x,a) = f_{\textup{Dis},{t},0}(x,a)$ almost surely with respect to $\P_{X,A}$.

We have two cases: 
(i) $R(f_{\textup{Dis},{t},0}) = R(f_{\textup{Dis},t_1,t_2})$ and 
(ii) $R(f_{\textup{Dis},{t},0}) < R(f_{\textup{Dis},t_1,t_2})$.
\begin{itemize}[]
\item (i) When $R(f_{\textup{Dis},{t},0}) = R(f_{\textup{Dis},t_1,t_2})$,  we have $f_{\textup{Dis},t_1,t_2}(x,a) = f_{\textup{Dis},t,0}(x,a)$ almost surely with respect to $\P_{X, A}$. As a result,
$${D}_{\textup{PD}}(t_1,t_2)-{D}_{\textup{PD}}({t},0) =\int_{\X\times\A}\lsb f_{\textup{Dis},t_1,t_2}(x,a)- f_{\textup{Dis},{t},0}(x,a) \rsb\frac{(2a-1)(1-\eta_a(x))}{p_{a,0}}d\P_{X,A}(x,a)=0.
$$
\item (ii) When $R(f_{\textup{Dis},t,0}) < R(f_{\textup{Dis},t_1,t_2})$,
we first notice that 
since $t_2\ge 0$, by Lemma \ref{lem:fbeqodd},
\begin{equation}\label{eq:t1t2optimal}
 f_{\textup{Dis},t_1,t_2} \in  \argmin_{f\in\mF}\lbb R(f): \frac{t_1\cdot\textup{DO}(f)}{|t_1|}\le \frac{t_1 \cdot {D}_{\textup{DO}}(t_1,t_2)}{|t_1|},\,\,   {\textup{PD}(f)}\le {{D}_{\textup{PD}}(t_1,t_2)}
\rbb.  
\end{equation}
This implies ${D}_{\textup{PD}}(t_1,t_2)<{D}_{\textup{PD}}(t,0)$. 
Otherwise, $f_{\textup{Dis},{t},0}$ satisfies
$\textup{DO}(f_{\textup{Dis},{t},0},0)={D}_{\textup{DO}}(t_1,t_2)$, 
$\textup{PD}(f_{\textup{Dis},{t},0})\le 
{D}_{\textup{PD}}(t,0)
\le {{D}_{\textup{PD}}(t_1,t_2)}$ 
and $R(f_{\textup{Dis},{t},0})<R(f_{\textup{Dis},t_1,t_2})$, which contradicts \eqref{eq:t1t2optimal}.
\end{itemize}
In both cases, we have ${D}_{\textup{PD}}(t_1,t_2)\le {D}_{\textup{PD}}(t,0)$, finishing the proof. 
\end{proof}

The next result characterizes Bayes-optimal classifiers under equalized odds.

\begin{lemma}[Characterizing $f_{\textup{Dis},t_1,t_2}$ as Bayes-optimal Classifiers under Equalized Odds]\label{lem:fbeqodd}
For $(t_1,t_2)\in\R^2$ and $\delta\ge0$,
let $f_{\textup{Dis},t_1,t_2}$, ${D}_{\textup{DO}}(t)$, ${D}_{\textup{PD}}(t)$  be defined in \eqref{eq:feqodd1}, \eqref{eq:eodeo} and \eqref{eq:eodpe}, respectively. 
Then, recalling we use the convention that $0/0=0$,
$$ f_{\textup{Dis},t_1,t_2} \in \argmin_{f\in\mF}\lbb R(f):  \frac{t_1\cdot\textup{DO}(f)}{|t_1|}\le \frac{t_1\cdot{D}_{\textup{DO}}(t_1,t_2)}{|t_1|},\,\,   \frac{t_2\cdot\textup{PD}(f)}{|t_2|}\le \frac{t_2\cdot{D}_{\textup{PD}}(t_1,t_2)}{|t_2|} 
\rbb.   $$
Moreover, for all 
$$f'\in \argmin_{f\in\mF}\lbb R(f):  {t_1\cdot\textup{DO}(f)}/{|t_1|}\le {t_1\cdot{D}_{\textup{DO}}(t_1,t_2)}/{|t_1|},\,\,   {t_2\cdot\textup{PD}(f)}/{|t_2|}\le {t_2\cdot{D}_{\textup{PD}}(t_1,t_2)}/{|t_2|} 
\rbb,  $$ 
we have $f'=f_{\textup{Dis},t_1,t_2} $ almost surely with respect to $\P_{X,A}$.
In particular, 
if $t_1\cdot{D}_{\textup{DO}}(t_1,t_2)\ge 0$ and $t_2\cdot{D}_{\textup{PD}}(t_1,t_2)\ge 0$,
$$ f_{\textup{Dis},t_1,t_2} = \argmin_{f\in\mF}\lbb R(f):  |\textup{DO}(f)|\le |{D}_{\textup{DO}}(t_1,t_2)|,   |\textup{PD}(f)|\le |{D}_{\textup{PD}}(t_1,t_2)| 
\rbb.   $$\end{lemma}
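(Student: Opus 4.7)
The plan is to invoke the generalized Neyman--Pearson Lemma (Lemma \ref{NP_lemma}) with $m=2$ constraints, extending the single-constraint argument from the proof of Lemma \ref{lem:fb}. The key observation is that the misclassification risk $R(f)$ and both disparity measures $\textup{DO}(f)$, $\textup{PD}(f)$ are linear functionals of $f$ (via Lemma \ref{lem:misclassification} and equations \eqref{exp:DO}, \eqref{exp:PD}), putting us exactly in the setting of the NP lemma. To make the coefficients $c_i$ in that lemma nonnegative---which is required for part (2) of the lemma---I would absorb the signs of $t_1,t_2$ into the $\phi_i$'s. Specifically, set $\phi_0(x,a)=2\eta_a(x)-1$, and
\begin{align*}
\phi_1(x,a) &= \operatorname{sign}(t_1)\cdot\frac{(2a-1)\eta_a(x)}{p_{a,1}}, & c_1 &= |t_1|,\\
\phi_2(x,a) &= \operatorname{sign}(t_2)\cdot\frac{(2a-1)(1-\eta_a(x))}{p_{a,0}}, & c_2 &= |t_2|,
\end{align*}
so that the constraint $\int f\phi_i\, d\P_{X,A}\le t_i'$ with $t_i' = t_i D_{\textup{Dis}_i}(t_1,t_2)/|t_i|$ reproduces exactly the two signed inequalities in the statement of the lemma. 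When $t_i=0$, the convention $0/0=0$ makes the $i$th constraint the trivial inequality $0\le 0$, so $\phi_i$ can be dropped from the NP setup.

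With this choice, the defining inequality of the NP-optimal classifier, $\phi_0(x,a) > c_1\phi_1(x,a)+c_2\phi_2(x,a)$, becomes exactly
\begin{equation*}
2\eta_a(x)-1 > t_1\,\frac{(2a-1)\eta_a(x)}{p_{a,1}} + t_2\,\frac{(2a-1)(1-\eta_a(x))}{p_{a,0}},
\end{equation*}
which matches the definition of $f_{\textup{Dis},t_1,t_2}$ in \eqref{eq:feqodd1}. By the very definitions of $D_{\textup{DO}}(t_1,t_2)$ and $D_{\textup{PD}}(t_1,t_2)$ in \eqref{eq:eodeo}--\eqref{eq:eodpe}, the classifier $f_{\textup{Dis},t_1,t_2}$ meets both constraints with equality, so it lies in the equality-constraint class $\mathcal{F}_=$ of Lemma \ref{NP_lemma}. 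Part (2) of that lemma (with $c_1,c_2\ge 0$) then yields that $f_{\textup{Dis},t_1,t_2}$ maximizes $\int f\phi_0\, d\P_{X,A}$ over $\mathcal{F}_\le$, which, since accuracy differs from this integral by an $f$-independent constant, is equivalent to minimizing $R(f)$ over the stated feasible set. Almost-sure uniqueness follows from the hypothesis that $\eta_a(X)$ has a density, which forces the boundary set $\{\phi_0=c_1\phi_1+c_2\phi_2\}$ to have $\P_{X,A}$-measure zero.

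For the final claim involving $|\textup{DO}(f)|$ and $|\textup{PD}(f)|$ under the sign conditions $t_1 D_{\textup{DO}}(t_1,t_2)\ge 0$ and $t_2 D_{\textup{PD}}(t_1,t_2)\ge 0$, I would mimic the sandwich argument from the proof of Lemma \ref{lem:fb}. These sign conditions are exactly what makes $f_{\textup{Dis},t_1,t_2}$ itself feasible for the absolute-value constraints, so the absolute-value feasible set sits between the equality set (containing $f_{\textup{Dis},t_1,t_2}$) and the signed-inequality set (over which $f_{\textup{Dis},t_1,t_2}$ was just shown to be optimal), forcing the optimum on all three to coincide. The main bookkeeping obstacle I expect is carefully tracking the sign conventions when combining the two constraints and handling the degenerate cases $t_1=0$ or $t_2=0$ via the $0/0=0$ convention; conceptually nothing new is needed beyond the single-constraint argument.
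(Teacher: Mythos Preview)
Your proposal is correct and follows essentially the same approach as the paper: both invoke the generalized Neyman--Pearson lemma with two constraints, absorb the signs of $t_1,t_2$ so that the NP coefficients are nonnegative (you via $\operatorname{sign}(t_i)$ in $\phi_i$, the paper via writing $|t_i|\cdot\tfrac{t_i\phi_i}{|t_i|}$), use the density assumption on $\eta_a(X)$ for uniqueness, and conclude the absolute-value case by the same sandwich $\mathcal{F}_=\subset\mathcal{F}_{|\cdot|,\le}\subset\mathcal{F}_\le$. The only cosmetic difference is that the paper handles $t_i=0$ by reducing to the single-constraint Lemma~\ref{lem:fb}, whereas you drop the trivialized constraint directly; both are valid.
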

\begin{proof}[Proof of Lemma \ref{lem:fbeqodd}]
Similar to Lemma \ref{lem:fb}, this result is a consequence of the generalized Neyman-Pearson lemma. If either $t_1=0$ or $t_2=0$, the result is an application of Lemma \ref{lem:fb}. Now, we assume $t_1\neq 0$ and $t_2\neq 0$.
Let, for all $x,a$, 
$\phi_0(x,a) = 2\eta^Y_{A=1}(x,a)-1$,  $\phi_1(x,a)={(2a-1)\eta_a(x)}/{p_{a,1}}$ and 
$\phi_2(x,a)={(2a-1)(1-\eta_a(x))}/{p_{a,0}}$.
Also, 
let, for all classifiers $f$, and $t\in\R$,
$\overline{\textup{DO}}_{t}(f) = t\cdot \textup{DO}(f) /|t|$ and
$\overline{\textup{PD}}_{t}(f) = t\cdot \textup{PD}(f) /|t|$.
 We have, for $(t_1,t_2)\in\R^2$ and all $x,a$, 
that with $f_{\textup{Dis},t_1,t_2}$ from \eqref{eq:feqodd1},
\begin{equation*} f_{\textup{Dis},t_1,t_2}(x,a) =  I\lsb\phi_0(x,a)> t_1\phi_1(x,a) + t_2\phi_2(x,a)\rsb =
I\lsb \phi_0(x,a)> |t_1| \frac{t_1\phi_1(x,a)}{|t_1|}+|t_2| \frac{t_2\phi_2(x,a)}{|t_2|}\rsb  
.\end{equation*}
Moreover, by Lemma \ref{lem:misclassification} and \eqref{eq:dopd}, we can write $\textup{Acc}(f)$, $\overline{\textup{DO}}_t(f)$ and $\overline{\textup{PD}}_t(f)$
as
\begin{align*} 
&   \textup{Acc}(f) = \int_\A\int_\X f(x,a) \phi_0(x,a) d\P_{X,A}(x,a) + \int_\A\int_\X (1-\eta_a(x))  d\P_{X,A}(x,a)  ;\\
&\overline{\textup{DO}}_{t_1}(f) =\frac{t_1}{|t_1|} \textup{DO}(f)=  \int_{\X\times\A}f(x,a)\frac{t_1\phi_1(x,a)}{|t_1|}d\P_{X,A}(x,a);\\
&\overline{\textup{PD}}_{t_2}(f) =\frac{t_2}{|t_2|} \textup{PD}(f)=  \int_{\X\times\A}f(x,a)\frac{t_2\phi_2(x,a)}{|t_2|}d\P_{X,A}(x,a).
\end{align*}
Define the sets of classifiers
\begin{align*}
&\mathcal{F}_{t_1,t_2,=} = \lbb f:   \overline{\textup{DO}}_{t_1}(f)=\frac{t_1\cdot{D}_{\textup{DO}}(t_1,t_2)}{|t_1|},\,\,   \overline{\textup{PD}}_{t_2}(f)=\frac{t_2\cdot{D}_{\textup{PD}}(t_1,t_2)}{|t_2|}\rbb;\\
&\mathcal{F}_{t_1,t_2,|\cdot|,\le} = \lbb f:  | \overline{\textup{DO}}_{t_1}(f)|\le |{D}_{\textup{DO}}(t_1,t_2)|,   |\overline{\textup{PD}}_{t_2}(f)|\le |{D}_{\textup{PD}}(t_1,t_2)|\rbb;\\
&\mathcal{F}_{t_1,t_2,\le} = \lbb f:  \overline{\textup{DO}}_{t_1}(f)\le \frac{t_1\cdot{D}_{\textup{DO}}(t_1,t_2)}{|t_1|},\,\,   \overline{\textup{PD}}_{t_2}(f)\le \frac{t_2\cdot{D}_{\textup{PD}}(t_1,t_2)}{|t_2|}\rbb. 
\end{align*}\
As $f_{\textup{Dis},t_1,t_2}\in \mathcal{F}_{t_1,t_2,=}$, by the generalized Neyman-Pearson lemma (Lemma \ref{NP_lemma}),
$$f_{\textup{Dis},t_1,t_2} \in \argmax_{f\in\mathcal{F}_{t_1,t_2,=}} \mathrm{Acc}(f).$$
Moreover, since $|t_1|\ge0$ and $|t_2|  \ge 0$, we have
$$f_{\textup{Dis},t_1,t_2} \in \argmax_{f\in\mathcal{F}_{t_1,t_2,\le}} \mathrm{Acc}(f)= \argmin_{f\in\mF}\lbb R(f): \frac{t_1\textup{DO}(f)}{|t_1|}\le \frac{t_1{D}_{\textup{DO}}(t_1,t_2)}{|t_1|},\,\,   \frac{t_2\textup{PD}(f)}{|t_2|}\le \frac{t_2{D}_{\textup{DO}}(t_1,t_2)}{|t_2|} 
\rbb. $$
In addition, since $\Pa(\eta_a(X)=T_a(t_1,t_2))=0$ when $\eta_a(X)$ is a continuous random variable with respect to $\Pa$, for all $f'\in \argmax_{f\in\mathcal{F}_{t_1,t_2,\le}} \mathrm{Acc}(f)$,  $f'=f_{\textup{Dis},t_1,t_2}$ almost surely with respect to $\P_{X,A}$.
Furthermore, if $t_1\cdot{D}_{\textup{DO}}(t_1,t_2)\ge 0$ and $t_2\cdot{D}_{\textup{PD}}(t_1,t_2)\ge 0$, we have
$f_{\textup{Dis},t_1,t_2}\in \mathcal{F}_{t_1,t_2,|\cdot|,\le} \subset \mathcal{F}_{t_1,t_2,=}.$
Thus
$$ f_{\textup{Dis},t_1,t_2} = \argmin_{f\in\mF}\lbb R(f):  |\textup{DO}(f)|\le |{D}_{\textup{DO}}(t_1,t_2)|,   |\textup{PD}(f)|\le |{D}_{\textup{PD}}(t_1,t_2)| 
\rbb.   $$
This finishes the proof.
\end{proof}

Next, we show that $\tiodod$ and $\tiodtd$ are well defined in cases (4) to (7). 
For conciseness, we
consider case (4), and the arguments for the other cases are similar. 
\begin{lemma}[The Quantities $\tiodod$ and $\tiodtd$ Are Well-Defined]\label{lem:existance-eqodd}
Let $\atdo$ and $\atdt$ be defined in \eqref{eq:atd}, Suppose that, for $a\in\{0,1\}$, $\eta_a(X)$ has a density function on $\X$. Then, for fixed $\delta\ge 0$, when ${D}_{\textup{DO}}(0,\atdt)>\delta$ and ${D}_{\textup{PD}}(0,\atdo)>\delta$, we have $\atdo>0$, $\atdt>0$, and there exist $(\tiodod,\tiodtd)\in [0,\atdo]\times [0,\atdt]$ such that $({D}_{\textup{DO}}(t_1,t_2),{D}_{\textup{PD}}(t_1,t_2))=(\delta,\delta)$.
\end{lemma}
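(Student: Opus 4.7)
The plan is to prove the lemma in two stages: first to establish $\atdo > 0$ and $\atdt > 0$, then to produce the desired point by a two-dimensional intermediate value argument on the rectangle $[0, \atdo] \times [0, \atdt]$. For the positivity, I would argue by contradiction. If $\atdo = 0$, the definition \eqref{eq:atd} forces $|D_{\textup{DO}}(0, 0)| \leq \delta$, while the hypothesis $D_{\textup{PD}}(\atdo, 0) > \delta$ becomes $D_{\textup{PD}}(0, 0) > \delta$, so \eqref{eq:atd} gives $\atdt > 0$ with $D_{\textup{PD}}(0, \atdt) = \delta$; monotonicity in $t_2$ from Proposition \ref{prop:mon-D12}(2) then yields $D_{\textup{DO}}(0, \atdt) \leq D_{\textup{DO}}(0, 0) \leq \delta$, contradicting $D_{\textup{DO}}(0, \atdt) > \delta$. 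The case $\atdo < 0$ is more delicate: here $D_{\textup{DO}}(0, 0) < -\delta$ and $D_{\textup{DO}}(\atdo, 0) = -\delta$, and one first checks via the same kind of monotonicity argument that we must also be in the regime $\atdt < 0$ with $D_{\textup{PD}}(0, \atdt) = -\delta$. I would then invoke Lemma \ref{conj}(2) with $t_1 = 0$, $t_2 = \atdt \leq 0$, and an auxiliary parameter $t$ chosen so that $D_{\textup{DO}}(t, 0) = D_{\textup{DO}}(0, \atdt)$ (whose existence follows from continuity of $D_{\textup{DO}}(\cdot, 0)$ on $[-p_{0,1}, p_{1,1}]$ and the one-dimensional intermediate value theorem). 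The conclusion of Lemma \ref{conj}(2) gives $D_{\textup{PD}}(t, 0) \leq D_{\textup{PD}}(0, \atdt) = -\delta$; on the other hand, $D_{\textup{DO}}(t, 0) > \delta > -\delta = D_{\textup{DO}}(\atdo, 0)$ forces $t < \atdo$, so monotonicity of $D_{\textup{PD}}(\cdot, 0)$ and the hypothesis $D_{\textup{PD}}(\atdo, 0) > \delta$ give $D_{\textup{PD}}(t, 0) \geq D_{\textup{PD}}(\atdo, 0) > \delta$, a contradiction. The claim $\atdt > 0$ follows by the symmetric argument exchanging the roles of DO and PD.

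For the existence of $(\tiodod, \tiodtd)$, once positivity is in hand one has $D_{\textup{DO}}(\atdo, 0) = \delta$ and $D_{\textup{PD}}(0, \atdt) = \delta$. I would consider the continuous map
\[\Phi(t_1, t_2) = \bigl(D_{\textup{DO}}(t_1, t_2) - \delta, \, D_{\textup{PD}}(t_1, t_2) - \delta\bigr), \qquad (t_1, t_2) \in [0, \atdo] \times [0, \atdt].\]
Proposition \ref{prop:mon-D12}, together with the hypothesis and the boundary identities above, yields the sign pattern $\Phi_1 \geq 0$ on the edge $\{t_1 = 0\}$, $\Phi_1 \leq 0$ on $\{t_1 = \atdo\}$, $\Phi_2 \geq 0$ on $\{t_2 = 0\}$, and $\Phi_2 \leq 0$ on $\{t_2 = \atdt\}$. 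The Poincar\'e--Miranda theorem (the two-dimensional intermediate value theorem) then supplies a point $(\tiodod, \tiodtd)$ in the rectangle with $\Phi(\tiodod, \tiodtd) = (0, 0)$, which is the required conclusion.

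The main obstacle will be the positivity step, specifically ruling out $\atdo < 0$: the bare coordinatewise monotonicity from Proposition \ref{prop:mon-D12} does not exclude the configuration where $D_{\textup{DO}}(0, 0)$ and $D_{\textup{PD}}(0, 0)$ are both sharply negative while $D_{\textup{DO}}(0, \atdt)$ and $D_{\textup{PD}}(\atdo, 0)$ nonetheless overshoot above $\delta$ at their respective one-dimensional minimizers. Closing this gap will rely on the cross-comparison in Lemma \ref{conj}, which couples the behavior of DO and PD along the two parameter directions. Once this is settled, the existence step via Poincar\'e--Miranda is routine, and is essentially equivalent to parametrizing the level curve $\{D_{\textup{DO}} = \delta\}$ in the rectangle and applying the one-dimensional IVT to the continuous function $t_2 \mapsto D_{\textup{PD}}(\varphi(t_2), t_2)$.
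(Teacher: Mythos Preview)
Your overall strategy is sound, but there is a genuine gap in the sub-case $\atdo<0$ of the positivity step. To invoke Lemma~\ref{conj}(2) you need an auxiliary $t$ with $D_{\textup{DO}}(t,0)=D_{\textup{DO}}(0,\atdt)$, and the one-dimensional intermediate value theorem on $[-p_{0,1},p_{1,1}]$ does not supply it: the range of $t\mapsto D_{\textup{DO}}(t,0)$ is only $[D_{\textup{DO}}(p_{1,1},0),\,D_{\textup{DO}}(-p_{0,1},0)]$, and when $\atdt<0$ the target value $D_{\textup{DO}}(0,\atdt)$ can exceed the upper endpoint (indeed $t_2\mapsto D_{\textup{DO}}(-p_{0,1},t_2)$ is non-increasing, so $D_{\textup{DO}}(-p_{0,1},\atdt)\ge D_{\textup{DO}}(-p_{0,1},0)$, possibly strictly). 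A clean repair is to bypass Lemma~\ref{conj} and argue directly from Lemma~\ref{lem:fbeqodd}: since $\atdo<0$, the classifier $f_{\textup{Dis},\atdo,0}$ minimizes risk over $\{f:\textup{DO}(f)\ge -\delta\}$, a set that contains $f_{\textup{Dis},0,\atdt}$ because $D_{\textup{DO}}(0,\atdt)>\delta>-\delta$; the symmetric statement with PD (using $\atdt<0$) gives the reverse risk inequality, so the two risks coincide, and the uniqueness clause in Lemma~\ref{lem:fbeqodd} forces $f_{\textup{Dis},\atdo,0}=f_{\textup{Dis},0,\atdt}$ almost surely. But then $-\delta=D_{\textup{DO}}(\atdo,0)=D_{\textup{DO}}(0,\atdt)>\delta$, the desired contradiction.

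By contrast, the paper attempts to dispose of $\atdo\le 0$ in a single stroke using only Proposition~\ref{prop:mon-D12}, writing $D_{\textup{PD}}(0,0)\ge D_{\textup{PD}}(\atdo,0)>\delta$ to force $\atdt>0$; but that inequality goes the wrong way for $\atdo<0$ (the map is non-increasing in the first argument), so your instinct that something beyond bare monotonicity is needed here is correct. For the existence step your Poincar\'e--Miranda argument is valid and equivalent to what the paper does: the paper parametrizes the level curve $\{D_{\textup{DO}}=\delta\}$ by $t_1=Q_\delta(t_2):=\inf\{t\in(0,p_{1,1}):D_{\textup{DO}}(t,t_2)<\delta\}$ and applies the one-dimensional IVT to $t_2\mapsto D_{\textup{PD}}(Q_\delta(t_2),t_2)$ on $[0,\atdt]$, which is precisely the reduction you sketch in your final remark and is what Poincar\'e--Miranda unwinds to in this coordinatewise-monotone setting.
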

\begin{proof}[Proof of Lemma \ref{lem:existance-eqodd}]
If $\atdo\le 0$, then by the definition of $\atdo$ in \eqref{eq:atd}, we have ${D}_{\textup{DO}}(0,0)\le \delta$ and $\textup{DO}(\atdo,0)\le \delta$. Moreover,
by Proposition \ref{prop:mon-D12},
${D}_{\textup{PD}}(0,0)\ge {D}_{\textup{PD}}(\atdo,0)>\delta$. 
Thus, it follows that 
 $\atdt>0$. By applying Proposition \ref{prop:mon-D12} again, ${D}_{\textup{DO}}(0,\atdt)\le {D}_{\textup{DO}}(0,0)\le \delta$, which is a contradiction. As a result, we have $\atdo>0$ and $\textup{DO}(\atdo,0)=\delta
 $. On the other hand, since ${D}_{\textup{DO}}(0,p_{0,0})\le\delta$, we have  $\atdt<p_{0,0}$. 
 Similarly, we can show $0<\atdo<p_{1,1}$ and
 ${D}_{\textup{PD}}(0,\atdt)=\delta
 $.
 
 Now, for any $t_2\in (0,\atdt)$,
we have ${D}_{\textup{DO}}(0,t_2)\ge {D}_{\textup{DO}}(0,\atdt)>\delta$ and
$\textup{DO}(p_{1,1},t_2)\le\delta$. By the continuity of ${D}_{\textup{DO}}(t,t_2)$ as a function of $t$, there exists $t_1\in(0,p_{1,1})$ such that ${D}_{\textup{DO}}(t_1,t_2)=\delta$. 
We then define   $Q_\delta: [0,\atdt]\to (0,p_{1,1})$ 
for all $t_2\in (0,p_{1,1})$ as
$$Q_\delta(t_2) = \inf_{t\in(0,p_{1,1})}\{ {D}_{\textup{DO}}(t,t_2) < \delta\}.$$ 
Clearly, we have $Q_\delta(0) \le \atdo$ and, since
${D}_{\textup{DO}}(0,\atdt)>\delta$, we have $Q_\delta(\atdt)>0$. 

Now, we consider the function 
$P$ defined on $[0,\atdt]$
$t_2\mapsto \P(t_2)=\textup{PD}(Q_\delta(t_2),t_2)$.
Then we have  $\P(0) ={D}_{\textup{PD}}(\atdo,0)>\delta$ and $\P(\atdt) =\textup{PD}(Q_\delta(\atdt),\atdt)\le {D}_{\textup{PD}}(0,\atdt)=\delta$. 
Thus, by the continuity of $D_{\textup{PD}}$, there exists  $\tiodtd\in[0,\atdo]$ such that $\textup{PD}(Q_\delta(\tiodtd),\tiodtd) =\delta$. 
Setting $\tiodod = Q_\delta(\tiodtd)$, we have
$\textup{DO}(\tiodod,\tiodtd)={D}_{\textup{PD}}(\tiodod,\tiodtd) =\delta$. 
Moreover, since $\tiodtd\le \atdt$ and ${D}_{\textup{PD}}(0,\atdt) ={D}_{\textup{PD}}(\tiodod,\tiodtd) =\delta$, we have $\tiodod\ge 0$. On the other hand, since $\tiodtd\ge 0$ and $\textup{DO}(\atdo,0) ={D}_{\textup{PD}}(\tiodod,\tiodtd) =\delta$, we have $\tiodod\le\atdo$.
This finishes the proof.
\end{proof}

We proceed with the proof of Theorem \ref{thm-fb-eq-odd}.
\begin{proof}[Proof of \Cref{thm-fb-eq-odd}]
We demonstrate the result in cases (1), (2), (4), and (5). The results in other cases can be verified similarly: 
case (3) is analogous to case (2), case (6) is analogous to case (5), and case (7) is analogous to case (4). 

The following facts follow directly from the definitions of $\atdo$ and $\atdt$ in \eqref{eq:atd}, 
and Proposition \ref{prop:mon-D12}:
    \begin{itemize}[]
        \item \qquad Fact (1). For $j\in\{\textup{DO},\textup{PD}\}$, $\delta\mapsto \acute{t}_j(\delta)$ is monotone non-decreasing.
        \item \qquad Fact (2). 
        For $j\in\{\textup{DO},\textup{PD}\}$,
        $\acute{t}_j>0$ if and only if $D_j(0,0)>\delta$. 
        Further, $\acute{t}_j<0$ if and only if $D_j(0,0)<-\delta$;
        \item \qquad Fact (3). If ${D}_{\textup{DO}}(0,0)>\delta$, $\textup{DO}(\atdt,0)=\delta$;
        If ${D}_{\textup{DO}}(0,0)<-\delta$, 
        then $\textup{DO}(\atdt,0)=-\delta$.
        Further, ${D}_{\textup{PD}}(0,0)>\delta$, then ${D}_{\textup{PD}}(0,\atdt)=\delta$.
        Finally, if ${D}_{\textup{PD}}(0,0)<-\delta$, then ${D}_{\textup{PD}}(0,\atdt)=-\delta$;
    \end{itemize}
    We now study the cases mentioned for Theorem \ref{thm-fb-eq-odd}.
\begin{itemize}[]
\item Case (1). $|{D}_{\textup{PD}}(\atdo,0)|\le \delta$ and $|{D}_{\textup{DO}}(0,\atdt)\le \delta.$

In this scenario, we have  $t_{\textup{DEO},1}(\delta)= t_{\textup{DEO},2}(\delta)=0$ and, for all $x,a$, $f_{\textup{Dis},0,0}(x,a)=f_{\textup{Dis},t_{\textup{DEO},1}(\delta),t_{\textup{DEO},2}(\delta)}(x,a) = I\lsb\eta_a(x)>1/2\rsb$ is the unconstrained Bayes-optimal classifier. As a result, we only need to prove $|{D}_{\textup{DO}}(0,0)|\le \delta$  and $|{D}_{\textup{PD}}(0,0)|\le \delta$.

If $\atdo>0$, 
then we have ${D}_{\textup{DO}}(0,0)>\delta$. 
By $|{D}_{\textup{DO}}(0,\atdt)|\le\delta$ and Proposition \eqref{prop:mon-D12}, 
we conclude $\atdt<0$, which leads to
${D}_{\textup{PD}}(0,0)<-\delta$. Finally, since $\atdo>0$, we see
$\textup{PD}(\atdo,0)\le {D}_{\textup{PD}}(0,0)<-\delta$, which contradicts $|{D}_{\textup{PD}}(0,0)|\le \delta$. Thus, we can conclude that $\atdo\le 0$. 

With analogous arguments, we can also demonstrate that $\atdo\ge 0$. Then, we must have 
$\atdo=0$. 
As a result, $|{D}_{\textup{DO}}(0,0)|\le \delta$  and $|{D}_{\textup{PD}}(0,0)|\le \delta$.

\item  Case (2). $|{D}_{\textup{DO}}(0,\atdt)|> \delta$ and $|{D}_{\textup{PD}}(\atdo,0)|\le\delta$.

In this scenario, we have  $t_{\textup{DEO},1} = \atdo = \argmin_t\{|t|:|\textup{DO}|\le \delta\}$ and $t_{\textup{DEO},2}=0$.  
On one hand, by Theorem \ref{thm-opt-dp} adapted for equality of opportunity, we have
    $$R(f_{\textup{Dis},t_{\textup{DEO},1}(\delta),t_{\textup{DEO},2}(\delta)}) = R(f_{\textup{Dis},\atdo,0}) = \min_{f\in\mF}\{ R(f): |\textup{DO}(f)|\le\delta\}.$$
    On the other hand, given that $|{D}_{\textup{PD}}(\atdo,0)|\le \delta$, we have
$$ f_{\textup{Dis},t_{\textup{DEO},1}(\delta),t_{\textup{DEO},2}(\delta)} =f_{\textup{Dis},\atdo,0} \in   \{f\in\mF: \max(|\textup{DO}(f)|,|\textup{PD}(f))|\le \delta\} \subset  \{f\in\mF: \textup{DO}(f) \le \delta\}.$$
It follows that
$$R(f_{\textup{Dis},t_{\textup{DEO},1}(\delta),t_{\textup{DEO},2}(\delta)})\ge \min_{f\in\mF}\{ R(f):  \max(|\textup{DO}(f)|,|\textup{PD}(f))|\le \delta\}  \ge \min_{f\in\mF}\{R(f):\textup{DO}(f) \le \delta\}.$$
Then, we can conclude that
$$f_{\textup{Dis},t_{\textup{DEO},1}(\delta),t_{\textup{DEO},2}(\delta)}) \in \argmin_{f\in\mF}\{R(f):  \max(|\textup{DO}(f)|,|\textup{PD}(f))|\le \delta\}.$$

\item Case (4)  ${D}_{\textup{DO}}(0,\atdt)> \delta$ and ${D}_{\textup{PD}}(\atdo,0)>\delta$

In this case,  $(t_{\textup{DEO},1}(\delta),t_{\textup{DEO},2}(\delta))$ is the solution of the equation $({D}_{\textup{DO}}(t_1,t_2),{D}_{\textup{PD}}(t_1,t_2)) = (\delta,\delta)$. 

Following the proof of Lemma \ref{lem:existance-eqodd}, we find that $\atdo>0$ and $\atdt>0$. 
This further implies $\textup{DO} (\atdo,0) =\textup{PD} (0,\atdo)=\delta$.
By claims (2) and (4) of Lemma \ref{conj}, we can conclude that $t_{\textup{DEO},1}(\delta)\ge 0$ and $t_{\textup{DEO},2}(\delta)\ge 0$. 
Otherwise, if $t_{\textup{DEO},2}(\delta)< 0$, since ${D}_{\textup{DO}}(t_{\textup{DEO},1}(\delta),t_{\textup{DEO},2}(\delta))=\delta = \textup{DO}(\atdo,0)$,
we  would have ${D}_{\textup{PD}}(t_{\textup{DEO},1}(\delta),t_{\textup{DEO},2}(\delta))\ge \textup{PD} (\atdo,0)>\delta$, which contradicts ${D}_{\textup{PD}}(t_{\textup{DEO},1}(\delta),t_{\textup{DEO},2}(\delta))=\delta$. 
A similar argument applies to $t_{\textup{DEO},1}(\delta)$.

Now, we have
$t_{\textup{DEO},1}(\delta)\cdot{D}_{\textup{DO}}(t_{\textup{DEO},1},t_{\textup{DEO},2}) =t_{\textup{DEO},1}(\delta)\cdot \delta\ge 0 $ and $t_{\textup{DEO},2}(\delta)\cdot{D}_{\textup{PD}}(t_{\textup{DEO},1},t_{\textup{DEO},2}) =t_{\textup{DEO},2}(\delta)\cdot \delta\ge 0 $. By Lemma \ref{lem:fbeqodd},
\begin{equation*}
f_{\textup{Dis},t_{\textup{DEO},1}(\delta),t_{\textup{DEO},2}(\delta)}\\
\in\argmin_{f\in\mF}\{R(f):  \max(|\textup{DO}(f)|,|\textup{PD}(f))|\le \delta\}.
\end{equation*}

\item Case (5) ${D}_{\textup{DO}}(0,\atdt)> \delta$ and ${D}_{\textup{PD}}(\atdo,0)<-\delta$.

In this case,  $(t_{\textup{DEO},1}(\delta),t_{\textup{DEO},2}(\delta))$ is the solution of the equation $({D}_{\textup{DO}}(t_1,t_2),{D}_{\textup{PD}}(t_1,t_2)) = (\delta,-\delta)$. 

We consider two cases: (i) $\atdo> 0$ and (ii) $\atdo\le 0$.
\begin{itemize}
\item[(i)] When $\atdo> 0$,  we have ${D}_{\textup{DO}}(0,0)> \delta$ and $\textup{DO}(\atdo,0)=\delta$.
Then, from result  (1) of Lemma \ref{conj}, 
it follows that $t_{\textup{DEO},1}(\delta)> 0$ and $t_{\textup{DEO},2}(\delta)< 0$. 
Otherwise, on one hand, if $t_{\textup{DEO},2}(\delta)\ge 0$, since $\textup{DO}(t_{\textup{DEO},1}(\delta),t_{\textup{DEO},2}(\delta))=\delta =\textup{DO}(\atdo,0)$, 
we have $ \textup{PD}(t_{\textup{DEO},1}(\delta),t_{\textup{DEO},2}(\delta))\le \textup{PD}(\atdo,0)<-\delta$, which is a contradiction.
On the other hand, if
$t_{\textup{DEO},1}(\delta)\le 0$, we have $t_{\textup{DEO},1}(\delta)\le 0\le \atdo$. 
Then, by the monotonicity properties of $t\mapsto {D}_{\textup{DO}}$ from Proposition \ref{prop:mon-D12} and the fact that
$\textup{DO}(\atdo,0)=\delta =\textup{DO}(t_{\textup{DEO},1}(\delta),t_{\textup{DEO},2}(\delta))$, 
we have $t_{\textup{DEO},2}(\delta)\ge 0$. 
We again reach a contradiction.

\item[(ii)] When $\atdo\le 0$, 
the argument is similar.
By Proposition \ref{prop:mon-D12}, we have ${D}_{\textup{PD}}(0,0)\le {D}_{\textup{PD}}(\atdo,0)<-\delta$, which implies $\atdt<0$ and ${D}_{\textup{PD}}(0,\atdt)=-\delta$.
Then, by claim (4) of Lemma \ref{conj}, we have $t_{\textup{DEO},1}(\delta)>0$ and $t_{\textup{DEO},2}(\delta)<0$.  
Otherwise, on one hand, if $t_{\textup{DEO},1}(\delta)\le 0$, since $\textup{PD}(t_{\textup{DEO},1}(\delta),t_{\textup{DEO},2}(\delta))=-\delta =\textup{PD}(0,\atdt)$, we would have $ \textup{DO}(t_{\textup{DEO},1}(\delta),t_{\textup{DEO},2}(\delta))\ge \textup{DO}(0,\atdt)>\delta$, which is a contradiction.
On the other hand, if
$t_{\textup{DEO},2}(\delta)\ge 0$, we would have $t_{\textup{DEO},2}(\delta)\ge 0\ge \atdo$. 
Then, by the monotonicity properties of $t\mapsto {D}_{\textup{DO}}$ from Proposition \ref{prop:mon-D12} and the fact that
$\textup{PD}(0,\atdt)=-\delta =\textup{PD}(t_{\textup{DEO},1}(\delta),t_{\textup{DEO},2}(\delta))$, we conclude$t_{\textup{DEO},1}(\delta))\le 0$; 
also a contradiction.
\end{itemize}
In both cases, we have
$t_{\textup{DEO},1}(\delta)\cdot{D}_{\textup{DO}}(t_{\textup{DEO},1},t_{\textup{DEO},2}) =t_{\textup{DEO},1}(\delta)\cdot \delta\ge 0 $ and $t_{\textup{DEO},2}(\delta)\cdot{D}_{\textup{PD}}(t_{\textup{DEO},1},t_{\textup{DEO},2}) =t_{\textup{DEO},2}(\delta)\cdot (-\delta)\ge 0 $. By Lemma \ref{lem:fbeqodd}, we conclude that
\begin{equation*}
f_{\textup{Dis},t_{\textup{DEO},1}(\delta),t_{\textup{DEO},2}(\delta)}\\
\in\argmin_{f\in\mF}\{R(f):  \max(|\textup{DO}(f)|,|\textup{PD}(f))|\le \delta\}.
\end{equation*}
\end{itemize}
The proof of Theorem \ref{thm-fb-eq-odd}
is complete.
\end{proof}

\subsection{Fair Bayes-optimal Classifier with a Multi-class Protected Attribute}
In this section, we consider demographic parity with a multi-class protected attribute. 
We assume that $A\in\A=\{1,2,...,|\A|\}$ for some integer $|\A|>2$.  Our theoretical
results concern perfect fairness. 
Under perfect fairness, we have $|\A|-1$ equality constraints, and the fair Bayes-optimal classifier can be derived using the Neyman-Pearson lemma.
However, for approximate fairness for a multi-class protected attribute, 
it is unknown ahead of time how many constraints become equalities, and how many are strict inequalities. 
A careful analysis of these two types of constraints is required, and we leave this to future work.
Recall that, a classifier satisfies demographic parity if
  $$\P(\widehat{Y}_f  = 1\mid A = a)=\P\lsb \widehat{Y}_f  = 1\mid A=1\rsb , \ \text{ for } \ \ a=2,...,\A.$$
Similar to the expression of DD in \eqref{exp:DD}, these constraints can be expressed as:
$$\int_{\mathcal{A}}\int_{\mathcal{X}}f(x,\ta) \frac{I(\ta=a)}{p_{a}}-\frac{I(\ta=1)}{p_{1}} d\P_{X,A}(x,\ta) \ \text{ for } \ \ a=2,...,\A.$$
Take $\phi_0(x,\ta)=2\eta_{\ta}(x)$ and $\phi_{a}(x,\ta)=I(\ta=a)/{p_{a}}-{I(\ta=1)}/p_{1}$ for $a=2,...,|\A|$, 
the generalized Neyman-Pearson lemma suggests us to consider the group-wise thresholding rule, for all $(x,\ta)\in\X\times\{0,1\}$,
$$
f(x,\tilde a,t_2,\ldots,t_\A)
= I\lsb  \phi_0(x,\ta)>\sum_{a=2}^{|\A|}t_a\phi_a(x,\ta)\rsb.
$$
This further leads to our formal Theorem \ref{thm-opt-dp-multi} in \Cref{ext}.

\subsubsection{Proof of Theorem \ref{thm-opt-dp-multi}}
We first demonstrate the existence of thresholds $\{t_{\textup{Dis},a}\}_{a=1}^{|\A|}$ that satisfy the condition required in \Cref{thm-opt-dp-multi}.

\begin{lemma}[Existence of Thresholds]\label{existence-multi}
Suppose that, for all $a\in\A$, $\eta_a(X)$ has a density function on $\X$. Then, there exist  
$\{t_{\textup{Dis},a}\}_{a=1}^{|\A|}$ such that 
$\sum_{a=1}^{|\A|} t_{\textup{Dis},a}=0$, and
for all $a \in \{2,\ldots,|\A|\}$, \eqref{eq:td-multi} holds.
\end{lemma}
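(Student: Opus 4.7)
The plan is to introduce, for each $a\in\A$, the ``acceptance-rate'' function $g_a\colon \R \to [0,1]$ defined by $g_a(s) := \P_{X\mid A=a}(\eta_a(X) > \tfrac12 + \tfrac{s}{2p_a})$, and to reduce the lemma to an intermediate-value argument on the sum of generalized inverses of the $g_a$ along a common acceptance level $u\in(0,1)$. The density hypothesis on $\eta_a(X)$ makes each $g_a$ continuous and non-increasing on $\R$, with $g_a(s)\to 1$ as $s\to -\infty$ and $g_a(s)\to 0$ as $s\to +\infty$.

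For each $u\in(0,1)$, the level set $\{s\in\R: g_a(s)=u\}$ is then a non-empty compact interval $[\tau_a^-(u),\tau_a^+(u)]$, where $\tau_a^-(u) := \inf\{s: g_a(s)\le u\}$ and $\tau_a^+(u) := \sup\{s: g_a(s)\ge u\}$. Any choice of $t_a \in [\tau_a^-(u),\tau_a^+(u)]$ satisfies $g_a(t_a)=u$, so selecting a common level $u$ across $a$ automatically yields the $|\A|-1$ equality constraints in \eqref{eq:td-multi}; the only remaining task is to pick $u^*$ and representatives so that $\sum_a t_a = 0$.

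To find $u^*$, I would set $H(u) := \sum_{a\in\A}\tau_a^+(u)$ and take $u^* := \inf\{u\in(0,1): H(u) < 0\}$. A short monotone-class calculation, using the continuity of $g_a$, shows that $\tau_a^+$ is left-continuous with $\lim_{u'\downarrow u}\tau_a^+(u') = \tau_a^-(u)$. Combined with $H(u)\to +\infty$ as $u\downarrow 0$ and $H(u)\to -\infty$ as $u\uparrow 1$, this gives $u^* \in (0,1)$ together with the two-sided inequality
$$\sum_{a\in\A} \tau_a^-(u^*) \;\le\; 0 \;\le\; \sum_{a\in\A} \tau_a^+(u^*).$$
Connectedness of the box $\prod_a [\tau_a^-(u^*), \tau_a^+(u^*)]$ then lets me pick $t_{\textup{DD},a}\in[\tau_a^-(u^*),\tau_a^+(u^*)]$ with $\sum_a t_{\textup{DD},a} = 0$, which finishes the proof.

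The main obstacle I anticipate is the possibility that $g_a$ has flat plateaus, in which case $\tau_a^+$ need not be continuous and the sum $\sum_a \tau_a^+(u)$ can jump across zero without ever equaling it. This is precisely what forces the distinction between $\tau_a^-$ and $\tau_a^+$ and the use of interpolation inside the product of level sets to enforce the linear constraint. Under the stronger hypothesis that each $g_a$ is strictly decreasing, $\tau_a^\pm$ coincide and the argument collapses to a one-line IVT on a continuous scalar function; the genuinely new content in the non-strict case is the box/interpolation step.
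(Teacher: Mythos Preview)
Your proposal is correct and follows essentially the same route as the paper's own proof: the paper defines, for each $a$, the two generalized inverses $\underline{Q}_a(s)=\sup\{t:\P_{X\mid A=a}(\eta_a(X)>\tfrac12+\tfrac{t}{p_a})>s\}$ and $\overline{Q}_a(s)$ (with $\ge$), which are precisely your $\tau_a^-$ and $\tau_a^+$ up to scaling; it then chooses $s^\star=\sup\{s:\sum_a\underline{Q}_a(s)>0\}$ (the dual of your $u^*=\inf\{u:H(u)<0\}$), obtains the same sandwich $\sum_a\underline{Q}_a(s^\star)\le 0\le\sum_a\overline{Q}_a(s^\star)$ via one-sided continuity, and finishes by writing an explicit convex combination of $\underline{Q}_a(s^\star)$ and $\overline{Q}_a(s^\star)$ where you instead invoke connectedness of the product box.
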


\begin{proof}[Proof of Lemma \ref{existence-multi}]
Define the functions $\underline{Q}_a, \overline{Q}_a:[0,1]\to\R$, such that for $s\in[0,1]$,
\begin{align*}
\begin{split}
    \underline{Q}_a(s)&=\sup\left\{t: \P_{X\mid A=a}\left(\eta_a(X)>\frac12+\frac{t}{p_{a}}\right)>s\right\};  \\
\overline{Q}_a(s)&=\sup\left\{t:\P_{X\mid A=a}\left(\eta_a(X)>\frac12+\frac{t}{p_{a}}\right)\ge s\right\}.
\end{split}\end{align*} 
By definition, we have the following:
\begin{compactitem}
\item Both $\underline{Q}_a$ and $\overline{Q}_a$ are strictly monotonically decreasing, as $\eta_a(X)$ has a density function;
\item $\underline{Q}_a$ is right-continuous and $\underline{Q}_a$ is left-continuous;
\item For all $s_0\in[0,1]$, $\lim_{s\to s_0^-}\underline{Q}_a(s)=\lim_{s\to s_0^-}\underline{Q}_a(s)$ and $\lim_{s\to s_0^+}\underline{Q}_a(s)=\lim_{s\to s_0^+}\underline{Q}_a(s)$ with $\underline{Q}_a(s_0)\le \overline{Q}_a(s_0)$;
\item   For all $t\in[\underline{Q}_a(s),\overline{Q}_a(s)]$, 
$\P_{X\mid A=a}(\eta_a(X)>1/2+\frac{t}{2p_{a}})=s.$
\end{compactitem}
Now, we set
 $$s^\star=\sup\lbb s: \sum_{a=1}^{|\A|}\underline{Q}_a(s)>0\rbb=\sup\lbb s: \sum_{a=1}^{|\A|}\overline{Q}_a(s)>0\rbb.$$
  By the  right-continuity of $\sum_{a=1}^{|\A|}\underline{Q}_a(s)$ and
 the 
  left-continuity of $\sum_{a=1}^{|\A|}\overline{Q}_a(s)$, we have
  $$\sum_{a=1}^{|\A|}\underline{Q}_a(s^\star)\le 0 \le \sum_{a=1}^{|\A|}\overline{Q}_a(s^\star).$$
  Letting 
  $$t_{\textup{Dis},a}=\frac{\sum_{a=1}^{|\A|}\overline{Q}_{a}(s^\star)}{\sum_{a=1}^{|\A|}\overline{Q}_{a}(s^\star)-\sum_{a=1}^{|\A|}\underline{Q}_{a}(s^\star)}\underline{Q}_{a}(s^\star)-\frac{\sum_{a=1}^{|\A|}\underline{Q}_{a}(s^\star)}{\sum_{a=1}^{|\A|}\overline{Q}_{a}(s^\star)-\sum_{a=1}^{|\A|}\underline{Q}_{a}(s^\star)}\overline{Q}_{a}(s^\star),$$
  we have $\sum_{a=1}^{|\A|}t_{\textup{Dis},a}=0$.
  Moreover, as $t_{\textup{Dis},a}\in[\underline{Q}_{a}(s),\overline{Q}_{a}(s)]$ for all $a\in\A$, we have, for all $a\in\A$,
  $$\P_{X\mid A=a}\left(\eta_a(X)>1/2+\frac{t_{\textup{Dis},a}}{2p_{a}}\right)=s^\star.$$
    This completes the proof.
\end{proof}
We now proceed to the proof of \Cref{thm-opt-dp-multi}.
\begin{proof}[Proof of \Cref{thm-opt-dp-multi}]
Recall that, for $(x,\ta)\in\mathcal{X}\times \{0,1\}$, we denote, for $a=2,...,|\A|$,
$\phi_0(x,\widetilde{a})=2\eta_{\widetilde{a}}(x)-1$; and $\phi_a(x,\ta)={I(\ta=a)}/{p_{a}}-{I(\ta=1)}/{p_{1}}$.  Since $t_{\textup{Dis},1}=-\sum_{a=2}^{|\A|}t_{\textup{Dis},a}
$, We have, for all $x,\ta$,
\begin{align*}
&f^\star_{t_{\textup{Dis},1},...,t_{\textup{Dis},|\A|}}(x,\ta) = I\lsb\eta_{\ta}(x)>\frac12 + \frac{t_{\textup{Dis},\ta}}{2p_{\ta}}\rsb = I\lsb 
2\eta_{\ta}(x)-1>  \frac{t_{\ta}}{p_{\ta}}\rsb\\
=& I\lsb 
2\eta_{\ta}(x)-1>  \sum_{a=2}^{|\A|}t_a\lsb \frac{I(\ta=a)}{p_{a}}-\frac{I(\ta=1)}{p_1}\rsb\rsb = I\lsb \phi_0(x,\ta)>\sum_{t=2}^{|\A|}t_a\phi_a(x,\ta)\rsb.
\end{align*}
Moreover, for any classifier $f$, we can write its  misclassification rate and corresponding disparity measures ($\P_{X\mid A=a}(\widehat{Y}_f=1)-\P_{X\mid A=1}(\widehat{Y}_f=1)$) as 
\begin{align*}
\begin{split}
R(f)=\P(\widehat{Y}_f\neq Y)&=-\int_{\A}\int_{\X}f(x,\widetilde{a})\phi_0(x,\widetilde{a})d\P_{X,A}(x,\ta) + \int_{\A}\int_{\X} \eta_{\ta}(x)d\P_{X,A}(x,\ta)
\end{split}
\end{align*}
and, for $a=2,...,|\A|$,
\begin{equation}\label{cons-multi}
\P_{X\mid A=a}(\widehat{Y}_f=1)-\P_{X\mid A=1}(\widehat{Y}_f=1)=\int_{\A}\int_{\X}\phi_a(x,\widetilde{a})f(x,\ta)d\P_{X,A}(x,\widetilde{a}).
\end{equation}
Denote
$$\mathcal{F}_{D,=}=\left\{f:\int_{\A}\int_{\X}f(x,\ta)\left(\frac{I(\ta=a)}{p_{a}}-\frac{I(\ta=1)}{p_{1}}\right)d\P_{X,A}(x,\ta)=0, \ \ a=2,3,...,|\A|\right\},$$
By the construction of $\{t_{\textup{Dis},a}\}_{a=1}^{|\A|}$, we have $f_{t_{\textup{Dis},1},...,t_{\textup{Dis},|\A|}}^\star\in \mathcal{F}_{D,=}$.
Then, by the generalized Neyman Pearson lemma,
$$f_{t_{\textup{Dis},1},...,t_{\textup{Dis},|\A|}}^\star \in \argmax_{f\in\mathcal{F}_{D,=}}\lbb\int_{\A}\int_{\X}f(x,\widetilde{a})\phi_0(x,\widetilde{a})d\P_{X,A}(x,\ta)\rbb=\argmin_{f\in\mathcal{F}_{D,=}} R(f), $$
which completes the proof.
\end{proof}

\subsection{FUDS, FCSC and FPIR Algorithms for Extensions}
\subsubsection{{Algorithms for 
a Protected Attribute Unavailable at Test Time}}
\label{sec:ext:Anotobserve}
In this section, we
provide explicit algorithms
for the case
when the protected attribute, $A$, is unavailable during prediction.
We focus on {the common disparity measures of demographic parity, equality of opportunity, and predictive equality}.
As outlined in Proposition \ref{prop:ddpnoa},
this setting leads to a fairness constraint
which is linear, but not bilinear. Based on Corollary  \ref{cor-opt-dp} and Proposition \ref{prop:ddpnoa}, {the general form of a $\delta$-fair Bayes-optimal classifier with linear disparity measures in this setting is given by:}
\begin{equation}\label{eq:fair_bayes_opt_noA}
f^\star_{X,\textup{Dis},\delta}(x)=  I\left(\eta^Y(x)>\frac12 + \frac{t_{X,\textup{Dis}}(\delta)}{2}w_{X,\textup{Dis}}(x)\right),
\end{equation}

with \begin{equation}\label{eq:tddnoA}
    t_{X,\textup{Dis}}(\delta) = \argmin_t \lbb|t|: \lab\int_{\X} \lmb w_{X,\textup{Dis}}(x) \cdot  I\left(\eta^Y(x)>\frac12 + \frac{t}{2}w_{X,\textup{Dis}}(x)\right) \rmb d\P_X(x)\rab\le \delta\rbb.
\end{equation}
Moreover, the specific form of the 
linear weighting function $w_{X,\textup{Dis}}(x)$ 
for our disparity measures of demographic parity, equality of opportunity, and predictive equality, respectively,
is
\begin{align*}
&w_{\textup{DD}}(x)=\frac{\eta^A(x)}{p_{1}} -\frac{1-\eta^A(x)}{p_{0}} ;\qquad w_{\textup{DO}}(x)=\frac{\eta^Y_{A=1}(x)\eta^A(x)}{p_{1,1}}-\frac{\eta^Y_{A=0}(x)(1-\eta^A(x))}{p_{0,1}};\\
&w_{\textup{PD}}(x)=\frac{(1-\eta^Y_{A=1}(x))\eta^A(x)}{p_{1,0}}-\frac{(1-\eta^Y_{A=0}(x))(1-\eta^A(x))}{p_{0,0}}. 
\end{align*} 
In particular, denote the $\delta$-fair Bayes-optimal classifier under demographic parity as
$f^\star_{X,\textup{DD},\delta}(x)$.
By setting $w_{X,\textup{Dis}}(x) = w_{\textup{DD}}(x)$ in \ref{eq:fair_bayes_opt_noA} and \ref{eq:tddnoA}, respectively, we find it is of the form

\begin{align*}
f^\star_{X,\textup{DD},\delta}(x)=&I\left(\eta^Y(x)> \frac12 +\frac{t_{X,\textup{DD}}(\delta)(\eta^A(x)-p_1)}{2p_1p_0} \right),
\end{align*}
with 
\begin{equation*}
t_{X,\textup{DD}}(\delta) = \argmin_t \lbb|t|: \lab\int_{\X} \lmb \lsb\frac{\eta^A(x)-p_1}{p_1p_0}\rsb \cdot I\lsb
\eta^Y(x)>\frac12 +\frac{t(\eta^A(x)-p_1)}{2p_1p_0}\rsb\rmb d\P_X(x)\rab\le \delta\rbb.
\end{equation*}

Similarly, define the $\delta$-fair Bayes-optimal classifier under equality of opportunity, $f^\star_{X,\textup{DO},\delta}(x)$; and predictive equality, $f^\star_{X,\textup{PD},\delta}(x)$;
with associated thresholding parameters $t_{X,\textup{DO}}(\delta)$ and $t_{X,\textup{PD}}(\delta)$ by setting $w_{X,\textup{Dis}}(x) = w_{\textup{DO}}(x)$ and  $w_{X,\textup{Dis}}(x) = w_{\textup{PD}}(x)$, respectively. 

The following two theorems, proved below, parallel 
Theorems \ref{thm:FUDS} and \ref{thm:FCSC}, and provide 
a theoretical underpinning for our Fair Up-/Down-Sampling and Fair cost-sensitive Classification methods {in the attribute-blind setting.}  

\begin{theorem}[{Bayes-Optimal FUDS for Common Disparity Measures with an Unavailable Protected Attribute}]\label{thm:FUDS1}
For $\delta\ge 0$, let $\tPd_{\textup{DD}}$ be the distribution
satisfying $\tPd_{\widetilde{X}|\widetilde{A}=a,\widetilde{Y}=y}(x)=\P_{{X}|{A}=a,{Y}=y}(x)$ for all $x$, 
and for $(a,y)\in\{0,1\}^2$,
\begin{eqnarray}\nonumber
\tpd_{\textup{DD},a,y}=\tPd_{\textup{DD}}\lsb \widetilde{A}=a,\widetilde{Y}=y\rsb =c_{\textup{DD}}(\delta)\lsb1 + \frac{(2a-1)(1-2y)t_{X,\textup{DD}}(\delta)}{p_a}\rsb p_{a,y},
\end{eqnarray}
where $c_{\textup{DD}}(\delta)>0$ is  such that $\sum_{a\in\{0,1\}}\sum_{y\in\{0,1\}}\tpd_{\textup{DD},a,y} = 1$.
\\

Similarly, let $\tPd_{\textup{DO}}$ be the distribution
satisfying $\tPd_{\widetilde{X}|\widetilde{A}=a,\widetilde{Y}=y}(x)=\P_{{X}|{A}=a,{Y}=y}(x)$ for all $x$, 
and for $(a,y)\in\{0,1\}^2$,
\begin{eqnarray}\nonumber
\tpd_{\textup{DO},a,y}=\tPd_{\textup{DO}}\lsb \widetilde{A}=a,\widetilde{Y}=y\rsb =c_{\textup{DO}}(\delta)\lsb1 + \frac{(1-2a)yt_{X,\textup{DO}}(\delta)}{p_{a,y}}\rsb p_{a,y},
\end{eqnarray}
where $c_{\textup{DO}}(\delta)>0$ is  such that $\sum_{a\in\{0,1\}}\sum_{y\in\{0,1\}}\tpd_{\textup{DO},a,y} = 1$.  \\

Lastly, let $\tPd_{\textup{PD}}$ be the distribution
satisfying $\tPd_{\widetilde{X}|\widetilde{A}=a,\widetilde{Y}=y}(x)=\P_{{X}|{A}=a,{Y}=y}(x)$ for all $x$, 
and for $(a,y)\in\{0,1\}^2$,
\begin{eqnarray}\label{eq:paynoA_Linear} 
\tpd_{\textup{PD},a,y}=\tPd_{\textup{PD}}\lsb \widetilde{A}=a,\widetilde{Y}=y\rsb =c_{\textup{PD}}(\delta)\lsb1 + \frac{(2a-1)(1-y)t_{X,\textup{PD}}(\delta)}{p_{a,y}}\rsb p_{a,y},
\end{eqnarray}
where $c_{\textup{PD}}(\delta)>0$ is  such that $\sum_{a\in\{0,1\}}\sum_{y\in\{0,1\}}\tpd_{\textup{PD},a,y} = 1$. \\

Then, the unconstrained Bayes-optimal classifiers for $\tPd_{\textup{DD}}, \tPd_{\textup{DO}}$, and $\tPd_{\textup{PD}}$
are $\delta$-fair Bayes-optimal classifiers under demographic parity, equality of opportunity, and predictive equality, respectively, for $\P$.

\end{theorem}

Next consider the following fair cost-sensitive risks:
\begin{align}
    \nonumber R^{\textup{FCSC}}_{t_{X,\textup{DD}}(\delta)}(f)&=\sum_{a\in\{0,1\}}\sum_{y\in\{0,1\}}\lsb\frac12+\frac{(2a-1)(1-2y)t_{X,\textup{DD}}(\delta)}{2p_a}\rsb \P(\widehat{Y}_f=1-y,Y=y,A=a)\\
    \nonumber R^{\textup{FCSC}}_{t_{X,\textup{DO}}(\delta)}(f)&=\sum_{a\in\{0,1\}}\sum_{y\in\{0,1\}}\lsb\frac12+\frac{(1-2a)yt_{X,\textup{DO}}(\delta)}{2p_{a,y}}\rsb \P(\widehat{Y}_f=1-y,Y=y,A=a)\\
    R^{\textup{FCSC}}_{t_{X,\textup{PD}}(\delta)}(f)&=\sum_{a\in\{0,1\}}\sum_{y\in\{0,1\}}\lsb\frac12+\frac{(2a-1)(1-y)t_{X,\textup{PD}}(\delta)}{2p_{a,y}}\rsb \P(\widehat{Y}_f=1-y,Y=y,A=a)\label{eq:fcscnoA_Linear},
\end{align}
 and define the cost-sensitive classifiers
\begin{align*}
&f^{\textup{FCSC}}_{X,t_{X,\textup{DD}}(\delta)} \in \argmin R^{\textup{FCSC}}_{X,t_{X,\textup{DD}}(\delta)}(f); \quad f^{\textup{FCSC}}_{X,t_{X,\textup{DO}}(\delta)} \in \argmin R^{\textup{FCSC}}_{X,t_{X,\textup{DO}}(\delta)}(f); \\
&f^{\textup{FCSC}}_{X,t_{X,\textup{PD}}(\delta)} \in \argmin R^{\textup{FCSC}}_{X,t_{X,\textup{PD}}(\delta)}(f) .
\end{align*}

\begin{theorem}[{Bayes-Optimal FCSC for Common Disparity Measures with an Unavailable Protected Attribute}]\label{thm:FCSC1}
For any $\delta\ge 0$, we have that $f^{X,\textup{FCSC}}_{t_{X,\textup{DD}}(\delta)}, f^{X,\textup{FCSC}}_{t_{X,\textup{DO}}(\delta)}$ and $f^{X,\textup{FCSC}}_{t_{X,\textup{PD}}(\delta)}$ are $\delta$-fair Bayes-optimal classifiers under demographic parity, equality of opportunity, and predictive parity, respectively, for $\P$. 
\end{theorem}

Based on these two theorems, 
pre- and in-processing algorithms analogous to those from \Cref{sec:alg}
can be designed for the case that $A$ is not used at test time. For the FUDS and FCSC algorithms, the main difference is that now we must fit a classifier without using the protected attribute $A$ as a predictor on 
the up- and down-sampled dataset $S_t$ for FUDS;
and on the original dataset with a cost-sensitive risk for FSCS. 

For the FPIR algorithm, we construct a plug-in estimator $\widehat{w}_{X,\textup{Dis}}$ 
of the  weighting function 
$w_{X,\textup{Dis}}$ 
by constructing estimates of $\eta^Y(x)=\P(Y=1|X=x)$ and additional plug-in estimators of $\eta^A$ for demographic parity, and $\eta^Y_{A=0}, \eta^Y_{A=1}$ for equality of opportunity and predictive equality.

For all methods, we also now estimate the disparity level 
$\widehat{f}^{\textup{FUDS,no}A}_{t}$ as 
\begin{equation}\label{dise_noA}
  \widehat{\textup{Dis}}^{\textup{FUDS, no}A}(t)=\frac1{n_1}\sum_{j=1}^{n_1} \widehat{f}^{\textup{FUDS,no}A}_{t}(\xoj) \widehat{w}_{X,\textup{Dis}}(\xoj)-\frac1{n_0}\sum_{j=1}^{n_0} \widehat{f}^{\textup{FUDS,no}A}_{t}(\xzj) \widehat{w}_{X,\textup{Dis}}(\xzj)
\end{equation} 

where $\widehat{w}_{X,\textup{Dis}}(x_{a,j}) = \widehat{w}_{\textup{Dis}}(x_{a,j},a) $ for $j\in[n_a]$ and $a\in\{0,1\}$. 

\begin{algorithm} 
   \caption{{Fair Up-/Down-Sampling (FUDS) for Common Disparity Measures with an Unavailable Protected Attribute}}
   \label{alg:FUDS_noA}

   \KwIn {Disparity measure either Demographic parity, Equality of opportunity, Predictive equality; Step size $\alpha>0$; Disparity Level $\delta\ge0$; Error tolerance level $\ep>0$; Dataset $S=S_1\cup S_0$ with  $S=\{x_{i},a_i,y_{i}\}_{i=1}^{n}$, $S_1=\{x_{1,i},y_{1,i}\}_{i=1}^{n_1}$ and $S_0=\{x_{0,i},y_{0,i}\}_{i=1}^{n_0}$; $n_{a,y}  = \#\{a_i=a,y_i=y\}$.}
   \BlankLine
\hrule
   \BlankLine

\textbf{Disparity Estimation sub-routine: Construct $\widehat{f}^{\textup{FUDS,no}A}_t$ and estimate $\textup{Dis}(\widehat{f}^{\textup{FUDS,no}A}_t)$ for a given threshold parameter $t$:}

 1. For all $a,y$ and for all $a$, estimate $p_{a,y}$ and $p_a$ by their empirical probabilities \begin{align*}
     \widehat p_{a,y} = n_{a,y}/n; \qquad\widehat p_a= (n_{a,0} + n_{a,1})/n.
 \end{align*}
 
 2. For all $a,y$ and for the specified disparity measure, let $\widehat{\widetilde{p}}^{\,t}_{a,y}$ be as in \eqref{eq:paynoA_Linear} using the empirical probabilities $\widehat p_a$ and $\widehat p_{a,y}$.

 3. Apply up- and down-sampling to generate a new dataset $S_{t}$ with proportions $\widehat{\widetilde{p}}_{a,y}^{\,t}$ for all $a,y$. 
 
 4. Fit classifier $\widehat{f}^{\textup{FUDS,no}A}_{t}$ on the dataset $S_t$ using any method that does not use the protected attribute $A$ as a predictor
 
 5. Estimate the disparity level of $\widehat{f}^{\textup{FUDS,no}A}_{t}$ as in \eqref{dise_noA}.
 \BlankLine
 \hrule
 \BlankLine

\textbf{Estimate the $\delta$-fair Bayes-optimal classifier:}

   Run Disparity Estimation sub-routine with $t=0$.
    
  \uIf{$\left|\widehat{\textup{Dis}}^{\textup{FUDS,no}A}({0})\right|\le \delta$} {
 $\widehat{t}_{\textup{Dis}}(\delta)=0$.}
  \Else{
  \uIf{$\widehat{\textup{Dis}}^{\textup{FUDS,no}A}({0})> \delta$}{ $\delta'=\delta$; $t_{\min}=0$, $t_{\max}=1$.}
  \Else {$\delta = -\delta$; $t_{\min}=-1$, $t_{\max}=0$.}
\While {$t_{\max}-t_{\min}>\ep$}
        {$t =  (t_{\max}-t_{\min})/2;$ 
        
     Run Disparity Estimation sub-routine with current $t$.
     
 \uIf {$\widehat{\textup{Dis}}^{\textup{FUDS,no}A}(t) > \delta$} {$t_{\max} = t$}
 \Else {$t_{\min} = t$}}
$\widehat{t}_{\textup{Dis}}(\delta)=t.$}

 {\bfseries Output:}  
$\widehat{f}_{\textup{Dis},\delta}^{\textup{no}A}=\widehat{f}^{\textup{FUDS,no}A}_{\widehat{t}_{\textup{Dis}}(\delta)}.$
 \BlankLine

\end{algorithm}

\begin{algorithm}
   \caption{{Fair cost-sensitive Classification (FCSC) for Common Disparity Measures with an Unavailable Protected Attribute}}
   \label{alg:FCSC_noA}

   \KwIn {Disparity measure either Demographic parity, Equality of opportunity, Predictive equality; Step size $\alpha>0$; Disparity Level $\delta\ge0$, Error tolerance level $\ep>0$; Dataset $S=S_1\cup S_0$ with  $S=\{x_{i},a_i,y_{i}\}_{i=1}^{n}$, $S_1=\{x_{1,i},y_{1,i}\}_{i=1}^{n_1}$ and $S_0=\{x_{0,i},y_{0,i}\}_{i=1}^{n_0}$; $n_{a,y}  = \#\{a_i=a,y_i=y\}$; Step size $\alpha\ge 0$.}
   \BlankLine

\textbf{Disparity Estimation sub-routine: Construct $\widehat{f}^{\textup{FCSC,no}A}_t$ and estimate $\textup{Dis}(\widehat{f}^{\textup{FCSC,no}A}_t)$ for a given threshold parameter $t$:}

 1. For all $a,y$ and for all $a$, estimate $p_{a,y}$ and $p_a$ by their empirical probabilities \begin{align*}
     \widehat p_{a,y} = n_{a,y}/n; \qquad\widehat p_a= (n_{a,0} + n_{a,1})/n.
 \end{align*}
 
 2. For all $a,y$ and for the specified disparity measure, read off the cost-sensitive misclassification risk $\widehat c_{a,y}(t)$ from \eqref{eq:fcscnoA_Linear}.   

 3. Use any cost-sensitive classification method to fit $\widehat{f}^{\textup{FCSC,no}A}_{t}(\cdot)$ on $S$. 
 
4. Estimate the disparity level of $\widehat{f}^{\textup{FCSC,no}A}_{t}$ as in \eqref{dise_noA} with $\widehat{f}^{\textup{FCSC,no}A}_{t}$ instead of $\widehat{f}^{\textup{FUDS,no}A}_{t}$.
 \BlankLine
 \hrule
 \BlankLine

  \begin{multicols}{2}
  {
\textbf{Estimate the fair Pareto frontier:}
 \BlankLine
\hrule
   \BlankLine
   Run Disparity Estimation sub-routine with $t=0$.
    
\uIf { $\widehat{\textup{Dis}}^{\textup{FCSC}} (0)>0$} {$\alpha' = \alpha$}
 \Else { $\alpha' = -\alpha$}
 \While{ $\widehat{\textup{Dis}}^{\textup{FCSC}}(t)\cdot \widehat{\textup{Dis}}^{\textup{FCSC}} (0) > 0$}
 { $t = t+\alpha'$. 
 
Run Disparity Estimation sub-routine with current $t$.
 }
 {\bfseries Output:}  $\{\widehat{f}^{\textup{FCSC,no}A}_{t}\}_{t=0,1,2,...}$.}
\columnbreak

\textbf{Estimate the $\delta$-fair Bayes-optimal classifier:}
 \BlankLine
\hrule
   \BlankLine
   Run Disparity Estimation sub-routine with $t=0$.
  \uIf{$\left|\widehat{\textup{Dis}}^{\textup{FCSC}}({0})\right|\le \delta$} {
 $\widehat{t}_{\textup{Dis}}(\delta)=0$.}
  \Else{
  \uIf{$\widehat{\textup{Dis}}^{\textup{FCSC}}({0})> \delta$}{ $\delta'=\delta$; $t_{\min}=0$, $t_{\max}=1$.}
  \Else {$\delta = -\delta$; $t_{\min}=-1$, $t_{\max}=0$.}
\While {$t_{\max}-t_{\min}>\ep$}
        {$t =  (t_{\max}-t_{\min})/2;$ 
        
     Run Disparity Estimation sub-routine with current $t$.
     
 \uIf {$\widehat{\textup{Dis}}^{\textup{FCSC}}(t) > \delta$} {$t_{\max} = t$}
 \Else {$t_{\min} = t$}}
$\widehat{t}_{\textup{Dis}}(\delta)=t.$}

 {\bfseries Output:}  
$\widehat{f}_{\textup{Dis},\delta}^{\textup{no}A}=\widehat{f}^{\textup{FCSC,no}A}_{\widehat{t}_{\textup{Dis}}(\delta)}.$
  \end{multicols}
 \BlankLine

\end{algorithm}

\begin{algorithm}
   \caption{{Fair Plug-in Rule (FPIR) for Common Disparity Measures with an Unavailable Protected Attribute}}
   \label{alg:FPIR_noA}

   \KwIn {Disparity measure either Demographic Parity, Equality of Opportunity, Predictive Equality; Step size $\alpha>0$; Disparity level $\delta\ge0$; Error tolerance level $\ep>0$; Dataset $S=S_1\cup S_0$ with  $S=\{x_{i},a_i,y_{i}\}_{i=1}^{n}$, $S_1=\{x_{1,i},y_{1,i}\}_{i=1}^{n_1}$ and $S_0=\{x_{0,i},y_{0,i}\}_{i=1}^{n_0}$; $n_{a,y}  = \#\{a_i=a,y_i=y\}$.}
   \BlankLine

   \BlankLine   {\textbf{Step 1}:} Construct estimates $\widehat{\eta}^Y, \widehat{\eta}^A$ of $\eta^Y, \eta^A$, respectively, using any approach. If the disparity measure is Equality of Opportunity or Predictive Equality, further construct estimates $\widehat{\eta}_1, \widehat{\eta}_0$ for $\eta^Y_{A=1}, \eta^Y_{A=0}$, respectively, using any approach. 
   \BlankLine

  {\textbf{Step 2}:  Estimate the the fair Pareto frontier or the $\delta$-fair Bayes-optimal classifier: 
}

\textbf{Disparity Estimation sub-routine: Construct $\widehat{f}^{\textup{FPIR,no}A}_t$ and estimate $\textup{Dis}(\widehat{f}^{\textup{FPIR,no}A}_t)$ with threshold parameter  $t$:}

 1. For all $a,y$ and for all $a$, estimate $p_{a,y}$ and $p_a$ by their empirical probabilities \begin{align*}
     \widehat p_{a,y} = n_{a,y}/n; \qquad\widehat p_a= (n_{a,0} + n_{a,1})/n.
 \end{align*}

2. Let $\widehat{f}^{\mathrm{FPIR,no}A}_t$ be defined by 
$\widehat{f}^{\mathrm{FPIR,no}A}_t(x,a) = I(\widehat{\eta}^Y(x)> \frac12 + \frac{t}{2} \widehat w_{X,\textup{Dis}}(x))$ for all $x,a$

3. Evaluate the disparity level of $\widehat{f}^{\textup{FPIR,no}A}_{t}$ as in \eqref{dise_noA} with $\widehat{f}^{\textup{FPIR,no}A}_{t}$ instead of $\widehat{f}^{\textup{FUDS,no}A}_{t}$.
 
 \BlankLine
 \hrule
 \BlankLine

  \begin{multicols}{2}
  {
\textbf{Estimate the fair Pareto frontier:}
 \BlankLine
\hrule
   \BlankLine
   Run Disparity Estimation sub-routine with $t=0$.
    
\uIf { $\widehat{\textup{Dis}}^{\textup{FPIR}} (0)>0$} {$\alpha' = \alpha$}
 \Else { $\alpha' = -\alpha$}
 \While{ $\widehat{\textup{Dis}}^{\textup{FPIR}}(t)\cdot \widehat{\textup{Dis}}^{\textup{FPIR}} (0) > 0$}
 { $t = t+\alpha'$. 
 
Run Disparity Estimation sub-routine with current $t$.
 }
 {\bfseries Output:}  $\{\widehat{f}^{\textup{FPIR,no}A}_{t}\}_{t=0,1,2,...}$.}
\columnbreak

\textbf{Estimate the $\delta$-fair Bayes-optimal classifier:}
 \BlankLine
\hrule
   \BlankLine
   Run Disparity Estimation sub-routine with $t=0$.
    
  \uIf{$\left|\widehat{\textup{Dis}}^{\textup{FPIR}}({0})\right|\le \delta$} {
 $\widehat{t}_{\textup{Dis}}(\delta)=0$.}
  \Else{
  \uIf{$\widehat{\textup{Dis}}^{\textup{FPIR}}({0})> \delta$}{ $\delta'=\delta$; $t_{\min}=0$, $t_{\max}=1$.}
  \Else {$\delta = -\delta$; $t_{\min}=-1$, $t_{\max}=0$.}
\While {$t_{\max}-t_{\min}>\ep$}
        {$t =  (t_{\max}-t_{\min})/2;$ 
        
Run Disparity Estimation sub-routine with current $t$.
     
 \uIf {$\widehat{\textup{Dis}}^{\textup{FPIR}}(t) > \delta$} {$t_{\max} = t$}
 \Else {$t_{\min} = t$}}
$\widehat{t}_{\textup{Dis}}(\delta)=t.$}

 {\bfseries Output:}  
$\widehat{f}_{\textup{Dis}^{\textup{no}A},\delta}=\widehat{f}^{\textup{FPIR,no}A}_{\widehat{t}_{\textup{Dis}}(\delta)}.$
  \end{multicols}
 \BlankLine
\end{algorithm}

\begin{proof}[Proof of Theorem \ref{thm:FUDS1}]
{We begin by proving the result for demographic parity.}
By Bayes' theorem, we have for all $x$ that
\begin{align*}
  &\eta^Y(x)=\P\lsb Y=1|X=x\rsb = \P\lsb Y=1,A=1|X=x\rsb +\P\lsb Y=1,A=0|X=x\rsb \\
  &=  \frac{p_{1,1}d\P_{X\mid A=1,Y=1}(x)}{d\P_X(x)}  +\frac{p_{0,1}d\P_{X\mid A=0,Y=1}(x)}{d\P_X(x)}\\
   &=  \frac{p_{1,1}d\P_{X\mid A=1,Y=1}(x)+p_{0,1}d\P_{X\mid A=0,Y=1}(x)}{p_{1,1}d\P_{X\mid A=1,Y=1}(x) + p_{1,0}d\P_{X\mid A=1,Y=0}(x) + p_{0,1}d\P_{X\mid A=0,Y=1}(x)+p_{0,0}d\P_{X\mid A=0,Y=0}(x)}   .
\end{align*}
Denoting $\tPd_{\textup{DD}}$ as $\tPd$ and $\tpd_{\textup{DD},a,y}$ as $\tpd_{a,y}$ for $(a,y)\in\{0,1\}^2$, we similarly have for all $x$ that
\begin{align*}
  &{\tetady}(x)=\tPd\lsb \tY=1|\tX=x\rsb = \tPd\lsb \tY=1,\tA=1|\tX=x\rsb +\tPd \lsb \tY=1,\tA=0|\tX=x\rsb \\
  &=  \frac{\tpd_{1,1}d\tPd_{\tX|\tA=1,\tY=1}(x)+\tpd_{0,1}d\tPd_{\tX|\tA=0,\tY=1}(x)}{\tpd_{1,1}d\tPd_{\tX|\tA=1,\tY=1}(x)+\tpd_{1,0}d\tPd_{\tX|\tA=1,\tY=0}(x)+\tpd_{0,1}d\tPd_{\tX|\tA=0,\tY=1}(x)+\tpd_{0,0}d\tPd_{\tX|\tA=0,\tY=0}(x)}.
\end{align*}
Moreover,
\begin{align*}
  &\eta^A(x)=\P\lsb A=1|X=x\rsb = \P\lsb A=1,Y=1|X=x\rsb +\P\lsb A=1,Y=0|X=x\rsb \\
  &=  \frac{p_{1,1}d\P_{X\mid A=1,Y=1}(x)}{d\P_X(x)}  +\frac{p_{1,0}d\P_{X\mid A=0,Y=1}(x)}{d\P_X(x)}\\
     &=  \frac{p_{1,1}d\P_{X\mid A=1,Y=1}(x)+p_{1,0}d\P_{X\mid A=1,Y=0}(x)}{p_{1,1}d\P_{X\mid A=1,Y=1}(x) + p_{1,0}d\P_{X\mid A=1,Y=0}(x) + p_{0,1}d\P_{X\mid A=0,Y=1}(x)+p_{0,0}d\P_{X\mid A=0,Y=0}(x)}. 
\end{align*}
Then, by our construction, for all $x$,
\begin{align}
\nonumber&f_{X,\mathrm{DD}, t_{X,\textup{DD}}(\delta)}^{\textup{FUDS}}(x)= I\lsb\tetady(x)>\frac12\rsb\\
\nonumber&=I\lsb \frac{\tpd_{1,1}d\tPd_{\tX|\tA=1,\tY=1}(x)+\tpd_{0,1}d\tPd_{\tX|\tA=0,\tY=1}(x)}{\tpd_{1,1}d\tPd_{\tX|\tA=1,\tY=1}(x)+\tpd_{1,0}d\tPd_{\tX|\tA=1,\tY=0}(x)+\tpd_{0,1}d\tPd_{\tX|\tA=0,\tY=1}(x)+\tpd_{0,0}d\tPd_{\tX|\tA=0,\tY=0}(x)}>\frac12\rsb\\ \label{eq:FUDS1_all}
&=I\lsb  {\tpd_{1,1}d\tPd_{\tX|\tA=1,\tY=1}(x)+\tpd_{0,1}d\tPd_{\tX|\tA=0,\tY=1}(x)>\tpd_{1,0}d\tPd_{\tX|\tA=1,\tY=0}(x)+\tpd_{0,0}d\tPd_{\tX|\tA=0,\tY=0}(x)}\rsb.
\end{align}
This also equals
\begin{align}
\nonumber&I\left( c_{\textup{DD}}(\delta)\cdot\lsb 1-\frac{t_{X,\textup{DD}}(\delta)}{p_1}\rsb p_{1,1} d\P_{X\mid A=1,Y=1}(x)+c_{\textup{DD}}(\delta)\cdot\lsb 1+\frac{t_{X,\textup{DD}}(\delta)}{p_0}\rsb p_{0,1} d\P_{X\mid A=0,Y=1}(x)\right.\\ \label{eq:DDFUDS1}
&>\left.c_{\textup{DD}}(\delta)\cdot\lsb 1+\frac{t_{X,\textup{DD}}(\delta)}{p_1}\rsb p_{1,0} d\P_{X\mid A=1,Y=0}(x)+c_{\textup{DD}}(\delta)\cdot\lsb 1-\frac{t_{X,\textup{DD}}(\delta)}{p_0}\rsb p_{0,0} d\P_{X\mid A=0,Y=0}(x)
\right). 
\end{align}
Denoting $g_{a,y}(x)=p_{a,y}d\P_{X\mid A=a,Y=y}(x)$, \eqref{eq:DDFUDS1} further equals
\begin{align*}
&I\left( g_{1,1}(x)+g_{0,1}(x) - g_{1,0}(x)-g_{0,0}(x)>\frac{t_{X,\textup{DD}}(\delta)}{p_1}(g_{1,1}(x)+g_{1,0}(x))-\frac{t_{X,\textup{DD}}(\delta)}{p_0}(g_{0,1}(x)+g_{0,0}(x))
\right)\\
&=I\bigg( 2(g_{1,1}(x)+g_{0,1}(x)) - \lsb g_{1,1}(x)+g_{0,1}(x) +g_{1,0}(x)+g_{0,0}(x)\rsb
\\
&>\lsb \frac{t_{X,\textup{DD}}(\delta)}{p_1} +\frac{t_{X,\textup{DD}}(\delta)}{p_0} \rsb(g_{1,1}(x)+g_{1,0}(x))-\frac{t_{X,\textup{DD}}(\delta)}{p_0}(g_{1,1}(x)+g_{1,0}(x)+g_{0,1}(x)+g_{0,0}(x))
\bigg).
\end{align*}
Finally, this equals
\begin{align*}
&I\left( \frac{2(g_{1,1}(x)+g_{0,1}(x))}{g_{1,1}(x)+g_{0,1}(x) +g_{1,0}(x)+g_{0,0}(x)}
> \frac{t_{X,\textup{DD}}(\delta)(g_{1,1}(x)+g_{1,0}(x))}{p_1p_0\lsb g_{1,1}(x)+g_{0,1}(x) +g_{1,0}(x)+g_{0,0}(x)\rsb}+1
-\frac{t_{X,\textup{DD}}(\delta)}{p_0}
\right)\\
&=I\left(2\eta^Y(x)>\frac{t_{X,\textup{DD}}(\delta)}{p_1p_0} \eta^A(x)+1- \frac{t_{X,\textup{DD}}(\delta)}{p_0}\right) = I\lsb
\eta^Y(x)>\frac12+\frac{t_{X,\textup{DD}(\delta)}\lsb\eta^A(x)-p_1\rsb}{2p_1p_0}\rsb = f^\star_{X,\textup{DD},\delta}(x).
\end{align*}
This completes the proof for demographic parity. We show the result for equality of opportunity, as the proof for predictive equality is analogous to this setting. Denote $\tPd_{\textup{DO}}$ as $\tPd$ and $\tpd_{\textup{DO},a,y}$ as $\tpd_{a,y}$ for $(a,y)\in\{0,1\}^2$. Then by the exact same derivation for Bayes' rule, we arrive at \eqref{eq:FUDS1_all}. We then have, 
\begin{align}
\nonumber&f_{X,\mathrm{DO}, t_{X,\textup{DO}}(\delta)}^{\textup{FUDS}}(x)\\ 
\nonumber &=I\lsb  {\tpd_{1,1}d\tPd_{\tX|\tA=1,\tY=1}(x)+\tpd_{0,1}d\tPd_{\tX|\tA=0,\tY=1}(x)>\tpd_{1,0}d\tPd_{\tX|\tA=1,\tY=0}(x)+\tpd_{0,0}d\tPd_{\tX|\tA=0,\tY=0}(x)}\rsb \\
\nonumber&=I\left( c_{\textup{DO}}(\delta)\cdot\lsb 1-\frac{t_{X,\textup{DO}}(\delta)}{p_{1,1}}\rsb p_{1,1} d\P_{X\mid A=1,Y=1}(x)+c_{\textup{DO}}(\delta)\cdot\lsb 1+\frac{t_{X,\textup{DO}}(\delta)}{p_{0,1}}\rsb p_{0,1} d\P_{X\mid A=0,Y=1}(x)\right.\\
\nonumber >&\left.c_{\textup{DO}}(\delta)\cdot p_{1,0} d\P_{X\mid A=1,Y=0}(x)+c_{\textup{DO}}(\delta)\cdot p_{0,0}  d\P_{X\mid A=0,Y=0}(x)\right). 
\end{align}
This also equals
\begin{align}
\nonumber&=I\left( (1-\frac{t_{X,\textup{DO}}(\delta)}{p_{1,1}})g_{1,1}(x) + (1+\frac{t_{X,\textup{DO}}(\delta)}{p_{0,1}})g_{0,1}(x) > g_{1,0}(x) + g_{0,0}(x)\right)\\
\nonumber&=I\left( \frac12 (g_{1,1}(x) + g_{0,1}(x)) > \frac12 (g_{1,0}(x) + g_{0,0}(x)) + \frac12 t_{X,\textup{DO}}(\delta) \lsb \frac{g_{1,1}(x)}{p_{1,1}} - \frac{g_{0,1}(x)}{p_{0,1}} \rsb \right)
\end{align}
Finally, this equals
\begin{align}
\nonumber&= I\left( \frac{g_{1,1}(x) + g_{0,1}(x)}{g_{1,1}(x) + g_{0,1}(x) + g_{1,0}(x) + g_{0,0}(x)} > \frac12 + \frac{t_{X,\textup{DO}}(\delta)}{2} \cdot  \right. \\
\nonumber &\left. \lsb \frac{g_{1,1}(x)}{g_{1,1}(x) + g_{0,1}(x) + g_{1,0}(x) + g_{0,0}(x)} \cdot \frac{1}{p_{1,1}}  -\frac{g_{0,1}(x)}{g_{1,1}(x) + g_{0,1}(x) + g_{1,0}(x) + g_{0,0}(x)} \cdot \frac{1}{p_{0,1}} \rsb \right)\\
\nonumber&=I\left(\eta^Y(x) > \frac12 + \frac{t_{X,\textup{DO}}(\delta)}{2} \cdot \lsb \frac{\eta^Y_{A=1}(x) \eta^A(x)}{p_{1,1}} - \frac{\eta^Y_{A=0}(x)(1-\eta^A(x))}{p_{0,1}} \rsb \right).
\end{align}
This finishes the proof.
\end{proof}

\begin{proof}[Proof of Theorem \ref{thm:FCSC1}]
{We begin by proving the result for demographic parity.} Denoting for all $a,y$,
$c_{a,y}(\delta) =1/2 +(2a-1)(1-2y)t_{X,\textup{DD}}(\delta)/(2p_a)$, we have,
\begin{align*}
&R^{\textup{FCSC}}_{X,t_{X,\textup{DD}}(\delta)}(f)=\sum_{a\in\{0,1\}}\lmb c_{a,0}(\delta)\cdot \P(\widehat{Y}_f=1,Y=0,A=a)+c_{a,1}(\delta) \cdot \P(\widehat{Y}_f=0, Y=1,A=a)\rmb\\
 &= \sum_{a\in\{0,1\}}p_a\lmb
   c_{a,0}(\delta)\cdot \P(\widehat{Y}_f=1,Y=0|A=a)+c_{a,1}(\delta)\cdot \P(\widehat{Y}_f=0, Y=1|A=a)\rmb\\
   &=\sum_{a\in\{0,1\}}p_a  c_{a,0}(\delta)\cdot\int_\X \P (\widehat{Y}_f=1,Y=0|A=a,X=x)  d\Pa(x)\\
   &+\sum_{a\in\{0,1\}}p_a c_{a,1}(\delta)\cdot \int_\X\P(\widehat{Y}_f=0, Y=1|X=x,A=a)  d\Pa(x).
     \end{align*}
This further equals
  \begin{align} 
   \nonumber&\sum_{a\in\{0,1\}}p_a 
    \int_\X \lmb c_{a,0}(\delta)f(x)(1-\eta_a(x)) +c_{a,1}(\delta) (1-f(x))\eta_a(x) \rmb d\Pa(x)\\
   \nonumber&=\sum_{a\in\{0,1\}}p_a \int_\X\lmb  
    c_{a,0}(\delta)f(x)
    -c_{a,0}f(x)\eta_a(x)+c_{a,1}(\delta)\eta_a(x) -c_{a,1}(\delta)f(x)\eta_a(x)  \rmb d\Pa(x)\\
    \label{eq:FSCS1_all}
    &=\sum_{a\in\{0,1\}}p_a \int_\X  
    \lsb c_{a,0}(\delta) -(c_{a,1}(\delta)+c_{a,0}(\delta))\eta_a(x)\rsb f(x)
     d\Pa  + \sum_{a\in\{0,1\}}p_a \int_\X  
    c_{a,1}(\delta) \eta_a(x)
     d\Pa(x)\\
    \nonumber&=\sum_{a\in\{0,1\}}p_a \int_\X  \lsb c_{a,0}(\delta) -\eta_a(x)\rsb f(x) d\Pa  + \sum_{a\in\{0,1\}}p_a \int_\X  c_{a,1}(\delta) \eta_a(x)  d\Pa(x).
\end{align}
This also equals
\begin{align}
     \nonumber&= \int_\X \lsb\sum_{a\in\{0,1\}}p_a \lsb c_{a,0}(\delta) -\eta_a(x)\rsb \frac{d\Pa(x)}{d\P_{X}(x)}\rsb f(x)d\P_X(x)  + \sum_{a\in\{0,1\}}p_a \int_\X  c_{a,1}(\delta) \eta_a(x)  d\Pa(x)\\
     \nonumber&= \int_\X \lsb\sum_{a\in\{0,1\}} \lsb c_{a,0}(\delta) -\eta_a(x)\rsb \eta^A(x)\rsb f(x)d\P_X(x)  + \sum_{a\in\{0,1\}}p_a \int_\X  c_{a,1}(\delta) \eta_a(x)  d\Pa(x).
     \end{align}
The second term does not depend on $f$ and the first term can be expressed as:
\begin{align}
 \nonumber&   \sum_{a\in\{0,1\}}p_a \int_\X  \lsb c_{a,0}(\delta) -\eta_a(x)\rsb f(x) d\Pa 
    =  \sum_{a\in\{0,1\}}p_a  \int_\X \lsb c_{a,0}(\delta) -\eta_a(x)\rsb f(x) \frac{d\Pa(x)}{d\P_X(x)}d\P_X(x) \\
\nonumber&=  \int_\X \sum_{a\in\{0,1\}}  \lmb\lsb c_{a,0}(\delta) -\eta_a(x)\rsb\P(A=a|X=x) \rmb f(x) d\P_X(x) \\
\label{eq:FSCS1_all2}
&=  \int_\X \sum_{a\in\{0,1\}}   \lmb\lsb c_{a,0}(\delta)\P(A=a|X=x) - \P(Y=1,A=a|X=x)\rsb\rmb f(x) d\P_X(x) \\
\nonumber&=  \int_\X \lmb \lsb   c_{1,0}(\delta)\eta^A(x) + c_{0,0}(\delta)(1-\eta^A(x))\rsb - \eta^Y(x) \rmb f(x) d\P_X(x)
\end{align}
In turn, this equals
\begin{align}
\nonumber&=  \int_\X \lmb \lsb   \lsb\frac12+  \frac{t_{X,\textup{DD}}(\delta)}{2p_1}\rsb\eta^A(x) +  \lsb\frac12-  \frac{t_{X,\textup{DD}}(\delta)}{2p_0}\rsb(1-\eta^A(x))\rsb - \eta^Y(x) \rmb f(x) d\P_X(x)\\
\nonumber&=  \int_\X \lmb \frac12 +\frac{t_{X,\textup{DD}}(\delta) \lsb\eta^A(x)-p_1\rsb}{2p_1p_0} - \eta^Y(x) \rmb f(x) d\P_X(x).
\end{align}
Clearly, this quantity is minimized  by taking $f(x)=1$ if $\eta^Y(x)>1/2 +{t_{\textup{X,DD}(\delta)}\lsb\eta^A(x)-p_1\rsb}/({2p_1p_0})$ and $f(x)=0$ if $\eta^Y(x)\le 1/2 +{t_{X,\textup{DD}}\lsb\eta^A(x)-p_1\rsb}/({2p_1p_0})$. 
We can thus conclude that for all $x,a$,
$$f^\textup{FCSC}_{X,t_{X,\textup{DD}}(\delta)}(x) = I\lsb\eta_a(x)>\frac12 +\frac{t_{X,\textup{DD}}\lsb\eta^A(x)-p_1\rsb}{2p_1p_0}\rsb = f^\star_{X,\textup{DD},\delta}(x).$$

This finishes the proof for demographic parity. We show the result for equality of opportunity, as the proof for predictive equality is analogous to this setting. Denoting for all $a,y$,
$c_{a,y}(\delta) =1/2 +(2a-1)yt_{X,\textup{DO}}(\delta)/(2p_{a,y})$, then by a similar derivation leading up to \eqref{eq:FSCS1_all2} where the factor of $c_{a,1}(\delta) + c_{a,0}(\delta)$ no longer necessarily sums to 1 in \eqref{eq:FSCS1_all}, we have minimizing $R^{\textup{FCSC}}_{X,t_{X,\textup{DO}}(\delta)}(f)$ is equivalent to minimizing
\begin{align*}
    &\int_\X \sum_{a\in\{0,1\}}   \lmb\lsb c_{a,0}(\delta)\P(A=a|X=x) - (c_{a,1}(\delta) + c_{a,0}(\delta)) \P(Y=1,A=a|X=x)\rsb\rmb f(x) d\P_X(x) \\
    &=\int_\X \lmb \frac12 - \sum_{a\in\{0,1\}} [1+\lsb \frac{2a-1}{2p_{a,1}}  \rsb t_{X,\textup{DO}}(\delta) \P(Y=1,A=a|X=x)] \rmb f(x) d\P_X(x)\\
    &=\int_\X \lmb \frac12 - (1-\frac{t_{X,\textup{DO}}(\delta)}{2p_{0,1}}) \cdot \lsb \eta^Y_{A=0}(x) (1-\eta^A(x)) \rsb - (1+\frac{t_{X,\textup{DO}}(\delta)}{2p_{1,1}}) \cdot \lsb \eta^Y_{A=1}(x) \eta^A(x) \rsb \rmb f(x) d\P_X(x) \\
    & = \int_\X \lmb \frac12 + \frac{t_{X,\textup{DO}}(\delta)}{2p_{0,1}} \lsb \eta^Y_{A=0}(x)(1-\eta^A(x)) \rsb + \frac{t_{X,\textup{DO}}(\delta)}{2p_{1,1}} \lsb \eta^Y_{A=1}(x) \eta^A(x) \rsb - \eta^Y(x) \rmb f(x) d\P_X(x).
\end{align*}

This quantity is minimized by taking $f(x)=1$ if $\eta^Y(x) > \frac12 + \frac{t_{X,\textup{DO}}(\delta)}{2p_{0,1}} \lsb \eta^Y_{A=0}(x)(1-\eta^A(x)) \rsb + \frac{t_{X,\textup{DO}}(\delta)}{2p_{1,1}} \lsb \eta^Y_{A=1}(x) \eta^A(x) \rsb$ and $f(x) =0$ if $\eta^Y(x) \leq \frac12 + \frac{t_{X,\textup{DO}}(\delta)}{2p_{0,1}} \lsb \eta^Y_{A=0}(x)(1-\eta^A(x)) \rsb + \frac{t_{X,\textup{DO}}(\delta)}{2p_{1,1}} \lsb \eta^Y_{A=1}(x) \eta^A(x) \rsb$. We thus conclude that for all $x,a$, 
$$f^\textup{FCSC}_{X,t_{X,\textup{DO}}(\delta)}(x) = I\left(\eta^Y(x) > \frac12 + \frac{t_{X,\textup{DO}}(\delta)}{2} \cdot \lsb \frac{\eta^Y_{A=1}(x) \eta^A(x)}{p_{1,1}} - \frac{\eta^Y_{A=0}(x)(1-\eta^A(x))}{p_{0,1}} \rsb \right) = f^\star_{X,\textup{DO},\delta}(x).$$
This finishes the proof.
\end{proof}

\subsection{Form of Fair Bayes-optimal Classifiers Without Distributional Assumptions}
\label{gd}
In our main text, we assume that,  for $a\in \{0,1\}$, $\eta_a(X)$
has  a probability density function on $\X$, 
In this section, we consider the more general case with no distributional assumptions on features. Without this assumption,
the boundary case (where the features are exactly on the ``fair'' boundary) does not necessarily have zero probability. In this case, the optimal classifiers must be carefully randomized.

To handle this technical difficulty, we consider the randomized classifier. Let $t\in \R$ and $\tau: \X\times\A\to [0,1]$   be a measurable function. For $x\in\X$ and $a\in\{0,1\}$, we define $f_{\textup{Dis},t,\tau}$ as,
\begin{equation}
 f_{{\textup{Dis}},t,\tau}(x,a) = I\lsb \eta_a(x)>\frac12+\frac{t}{2}\wD(x,a)\rsb   +\tau(x,a) I\lsb\eta_a(x)= \frac12 +\frac{t}{2}\wD(x,a)\rsb.
\end{equation}
We further define the  functions $D_{\textup{Dis},\min}:\R\to \R$ and $D_{\textup{Dis},\max}:\R\to \R$
to measure the minimal and maximal disparities of $f_{_{\textup{Dis}},t,\tau}(x,a)$, 
such that for all $t\in\R$ and $\tau: \X\times\{0,1\}\to [0,1]$:
\begin{equation*}
D_{\textup{Dis},\min} (t)=\min_{\tau}\lbb\textup{Dis}(f_{_{\textup{Dis}},t,\tau})\rbb  
\,\, \text{ and }\,\, D_{\textup{Dis},\max} (t)=\max_{\tau}\lbb\textup{Dis}(f_{{\textup{Dis}},t,\tau})\rbb.
\end{equation*}
We note that $D_{\textup{Dis},\min}(0)$ and $D_{\textup{Dis},\max}(0)$ are, respectively, 
the infimum and supremum of the disparity over all unconstrained Bayes-optimal classifiers. Moreover, ${D}_{\textup{Dis},\min}$ and ${D}_{\textup{Dis},\max}$ satisfy the following monotonicity properties.
\begin{proposition}[Monotonicity of $D_{\textup{Dis},\min}$ and $D_{\textup{Dis},\max}$ ]\label{mono-dis}
As functions of $t$, both ${D}_{\textup{Dis},\min}$ and 
${D}_{\textup{Dis},\max}$ are monotone non-increasing.
%

\end{proposition}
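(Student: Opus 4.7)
The plan is to reduce to a bounded-integral calculation parallel to the density case of Proposition~\ref{prop:monotonicity}(1) by first eliminating the randomization $\tau$ in closed form. Let $G_{a,+} = \{x : w_{\textup{Dis}}(x,a) > 0\}$, $G_{a,-} = \{x : w_{\textup{Dis}}(x,a) < 0\}$, $G_{a,0} = \{x : w_{\textup{Dis}}(x,a) = 0\}$, and let $B_{t,a} = \{x : \eta_a(x) = \tfrac12 + \tfrac{t}{2} w_{\textup{Dis}}(x,a)\}$ be the boundary set. Using the linearity of $\textup{Dis}$, I would write
\[
\textup{Dis}(f_{\textup{Dis},t,\tau}) = \sum_{a} p_a \int I\!\left[\eta_a > \tfrac12 + \tfrac{t}{2} w_{\textup{Dis}}\right] w_{\textup{Dis}}\, d\Pa + \sum_{a} p_a \int_{B_{t,a}} \tau(x,a)\, w_{\textup{Dis}}(x,a)\, d\Pa(x).
\]
The first term is independent of $\tau$, and pointwise maximization/minimization of the second term over $\tau(x,a) \in [0,1]$ yields $\tau = I_{G_{a,+}}$ for the max (arbitrary on $G_{a,0}$) and $\tau = I_{G_{a,-}}$ for the min. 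Substituting back gives
\[
D_{\textup{Dis},\max}(t) = \sum_{a} p_a \!\left[ \int_{G_{a,+}} \! I\!\left[\eta_a \geq \tfrac12 + \tfrac{t}{2} w_{\textup{Dis}}\right] w_{\textup{Dis}}\, d\Pa + \int_{G_{a,-}} \! I\!\left[\eta_a > \tfrac12 + \tfrac{t}{2} w_{\textup{Dis}}\right] w_{\textup{Dis}}\, d\Pa \right],
\]
and the analogous formula for $D_{\textup{Dis},\min}$ with the $\geq$ and $>$ swapped between $G_{a,+}$ and $G_{a,-}$.

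Given these closed forms, I would take $t_1 < t_2$ and analyze each region separately. On $G_{a,+}$, the threshold $\tfrac12 + \tfrac{t}{2} w_{\textup{Dis}}(x,a)$ is non-decreasing in $t$, so the level set $\{\eta_a \geq \tfrac12 + \tfrac{t}{2} w_{\textup{Dis}}\}$ shrinks (in the sense of set inclusion) as $t$ increases, and since $w_{\textup{Dis}} > 0$ on this region, the contribution to $D_{\textup{Dis},\max}$ is non-increasing in $t$. On $G_{a,-}$ the threshold is non-increasing in $t$, so $\{\eta_a > \tfrac12 + \tfrac{t_1}{2} w_{\textup{Dis}}\} \subseteq \{\eta_a > \tfrac12 + \tfrac{t_2}{2} w_{\textup{Dis}}\}$; since $w_{\textup{Dis}} < 0$ there, enlarging the set decreases (makes more negative) the integral, so this contribution is also non-increasing in $t$. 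Summing over $a$ yields $D_{\textup{Dis},\max}(t_1) \geq D_{\textup{Dis},\max}(t_2)$. The argument for $D_{\textup{Dis},\min}$ is symmetric: the roles of strict and non-strict inequalities swap between $G_{a,+}$ and $G_{a,-}$, but the monotone containment within each region still yields the same conclusion.

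The main subtlety is just the boundary bookkeeping: since we no longer assume a density for $\eta_a(X)$, the set $B_{t,a}$ can carry positive mass, and care is needed to verify that the mixed strict/non-strict formulas for $D_{\textup{Dis},\max}$ and $D_{\textup{Dis},\min}$ correctly capture the extremal values of $\textup{Dis}(f_{\textup{Dis},t,\tau})$ and that such extremizers are measurable. Both facts follow since $G_{a,+}, G_{a,-}, G_{a,0}$ and $B_{t,a}$ are measurable, so the indicator choices of $\tau$ above are admissible. Beyond this, no new difficulty arises: the monotonicity argument holds pointwise within each of the two regions as $t$ varies, and the extension from Proposition~\ref{prop:monotonicity}(1) is purely a matter of tracking the boundary contribution.
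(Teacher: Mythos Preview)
Your proposal is correct and uses the same underlying ingredients as the paper: the partition into $G_{a,+}$, $G_{a,-}$, $G_{a,0}$ and the sign analysis of $w_{\textup{Dis}}$ on each region. The organizational difference is that you first solve for the extremal $\tau$ to obtain closed-form expressions for $D_{\textup{Dis},\min}$ and $D_{\textup{Dis},\max}$ (which the paper defers to Proposition~\ref{prop:deepD}(1)) and then read off monotonicity from set inclusion, whereas the paper bounds the difference $D_{\textup{Dis},\min}(t_1)-D_{\textup{Dis},\min}(t_2)$ directly via the inequality $\inf_{\tau_1} A(\tau_1)-\inf_{\tau_2} B(\tau_2)\ge \inf_{\tau_1,\tau_2}[A(\tau_1)-B(\tau_2)]$ without first identifying the optimizer. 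Your route is arguably more transparent and yields the explicit formulas as a byproduct; the paper's route is slightly more self-contained for this proposition in isolation but then re-derives your closed forms separately. Either way, the core monotonicity step is identical.
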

The proofs of all results are presented later in this section.
For any $\delta>0$,
 define
 the following quantity, 
 which can be viewed as  an ``inverse" of 
the disparity functions:
\begin{equation}\label{eq:td_inf}
  \td= 
    \argmin_t\lbb|t|:  \inf_{\tau}|\textup{Dis}(f_{{\textup{Dis}},t,\tau})|\le \delta\rbb=\left\{\begin{array}{ll}
      \inf\left\{t: D_{\textup{Dis},\min} (t)\le \delta\right\}, & 
    D_{\textup{Dis},\min} (t) >\delta;\\
 \sup\left\{t: D_{\textup{Dis},\max} (t) \ge -\delta\right\}, &
 D_{\textup{Dis},\max} (t)<-\delta;\\
    0, &  \textnormal{otherwise}.\\
  \end{array}  \right.
\end{equation}
Again, we want to find the $t$ with minimal $|t|$ such that, for some $\tau$, $f_{{\textup{Dis}},t,\tau}$ satisfies the fairness constraint. Here, we identify three different cases: (1) $\delta \ge \max(D_{\textup{Dis},\min} (t), -D_{\textup{Dis},\max} (t))$, (2) $\delta < D_{\textup{Dis},\min} (t)$, and (3) $\delta < -D_{\textup{Dis},\max} (t)$.
These three cases are disjoint and cover all possible scenarios. 
Clearly, either case (2) or case (3) is true when case (1) does not hold. 
Furthermore, since $\delta\ge 0$ and $D_{\textup{Dis},\min} (t) \le D_{\textup{Dis},\max} (t)$ by definition, cases (2) and (3) cannot occur simultaneously.
We obtain the following result.


\begin{theorem}[Fair Bayes-optimal Classifiers Without Distributional Assumptions]\label{thm-opt-dp-degenerate}
For any $\delta\ge 0$, there is a measurable function $\tau_{\textup{Dis},\delta}: \X\times\{0,1\} \to [0,1]$, such that   
$f^\star_{\textup{Dis},\delta}=f_{\td,\tau_{\textup{Dis},\delta}}$ is a $\delta$-fair Bayes-optimal classifier.
Here, 
 $\tau_{\textup{Dis},\delta}$
is determined to satisfy the following constraints:
\begin{itemize}[]
    \item (1). 
    When $D_{\textup{Dis},\min} (t)>\delta$,
$$       \textup{Dis}(f_{{\textup{Dis}},\td,\tau_{\textup{Dis},\delta}}) =\delta.
$$
    \item (2). 
    When $D_{\textup{Dis},\max} (t)<-\delta$,
$$       \textup{Dis}(f_{{\textup{Dis}},\td,\tau_{\textup{Dis},\delta}}) =-\delta.
$$
 \item (3).
 When $\delta \ge \max(D_{\textup{Dis},\min} (t), -D_{\textup{Dis},\max} (t))$,
$$       |\textup{Dis}(f_{{\textup{Dis}},\td,\tau_{\textup{Dis},\delta}})| \le\delta.
$$
\end{itemize}
\end{theorem}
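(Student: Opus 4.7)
The plan is to derive the result as an application of the generalized Neyman-Pearson lemma (Lemma \ref{NP_lemma}) with $\phi_0(x,a) = 2\eta_a(x)-1$, $\phi_1(x,a) = w_{\textup{Dis}}(x,a)$, and coefficient $c_1 = \td$, paralleling the proof of Theorem \ref{thm-opt-dp}. The key new difficulty, compared to the continuous case, is that the boundary set $B_t = \{(x,a) : \eta_a(x) = 1/2 + (t/2)\, w_{\textup{Dis}}(x,a)\}$ may carry positive $\P_{X,A}$-mass at $t = \td$, so one loses the almost-everywhere uniqueness argument and must instead engineer $\tau_{\textup{Dis},\delta}$ on $B_{\td}$ so that the disparity constraint binds at exactly the right value. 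I expect this calibration step to be the main obstacle.

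First, I would show that for each fixed $t$, the map $\tau \mapsto \textup{Dis}(f_{\textup{Dis},t,\tau})$ is affine, and that as $\tau$ ranges over all measurable functions $\X \times \{0,1\} \to [0,1]$, the set of achievable disparities coincides with the closed interval $[D_{\textup{Dis},\min}(t),\, D_{\textup{Dis},\max}(t)]$. The endpoints are attained by setting $\tau$ to $0$ or $1$ pointwise on $B_t$ according to the sign of $w_{\textup{Dis}}$; intermediate values are reached by a standard sweeping argument, interpolating measurably between the two extremal choices and appealing to continuity of the integral in the interpolation parameter.

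Next, I would verify that the target value ($\delta$ in Case (1), $-\delta$ in Case (2), and some value in $[-\delta,\delta]$ in Case (3)) actually lies in $[D_{\textup{Dis},\min}(\td),\, D_{\textup{Dis},\max}(\td)]$. In Case (1), the infimum definition of $\td$ together with Proposition \ref{mono-dis} gives $D_{\textup{Dis},\min}(\td) \leq \delta$ by right-continuity in $t$; for the converse bound $D_{\textup{Dis},\max}(\td) \geq \delta$, I would take $s \uparrow \td$ through values where $D_{\textup{Dis},\min}(s) > \delta$, observe that the classifiers $f_{\textup{Dis},s,\tau_0}$ converge $\P_{X,A}$-almost everywhere to $f_{\textup{Dis},\td,\tau^\star}$ with $\tau^\star = I(w_{\textup{Dis}} > 0)$ on $B_{\td}$, and pass to the limit in the disparity via dominated convergence. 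Case (2) is symmetric (apply the argument to the weight $-w_{\textup{Dis}}$), and Case (3) is immediate from $\td = 0$ together with $[-\delta,\delta] \cap [D_{\textup{Dis},\min}(0),\, D_{\textup{Dis},\max}(0)] \neq \emptyset$. The intermediate-value step from the previous paragraph then supplies a $\tau_{\textup{Dis},\delta}$ achieving the prescribed disparity.

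Finally, for Bayes-optimality, I would invoke part (2) of Lemma \ref{NP_lemma}. In Case (1), $\td \geq 0$, so with the single $\leq$-constraint $\int f w_{\textup{Dis}} \, d\P_{X,A} \leq \delta$ and coefficient $c_1 = \td \geq 0$, the classifier $f_{\textup{Dis},\td,\tau_{\textup{Dis},\delta}}$ lies in $\mathcal{F}_{=} \subset \mathcal{F}_{\leq}$ and therefore minimizes $R$ over $\{f : \textup{Dis}(f) \leq \delta\}$. Because its disparity equals $\delta \in [-\delta,\delta]$, it also belongs to the strictly smaller feasible set $\{f : |\textup{Dis}(f)| \leq \delta\}$ and hence remains optimal there. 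Case (2) is handled by applying the same lemma to the reflected constraint $-\textup{Dis}(f) \leq \delta$ with nonnegative coefficient $-\td$, and Case (3) reduces to the unconstrained Bayes characterization (Proposition \ref{prop:ba-op}) combined with the freedom in $\tau$ to keep the disparity inside $[-\delta,\delta]$.
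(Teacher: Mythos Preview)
Your proposal is correct and follows essentially the same route as the paper: both arguments reduce to the generalized Neyman-Pearson lemma with $\phi_0=2\eta_a-1$, $\phi_1=w_{\textup{Dis}}$, and coefficient $c_1=\td$, after first engineering a $\tau$ on the boundary so that the disparity hits the required level. The only cosmetic difference is that the paper packages your limit computation (showing $D_{\textup{Dis},\max}(\td)\ge\delta$ in Case~(1)) as a standalone result---namely the identity $\lim_{t'\to t^-}D_{\textup{Dis},\min}(t')=D_{\textup{Dis},\max}(t)$ together with right-continuity of $D_{\textup{Dis},\min}$---whereas you describe the same dominated-convergence step inline; both arrive at the same inclusion $\delta\in[D_{\textup{Dis},\min}(\td),D_{\textup{Dis},\max}(\td)]$ and then conclude via the chain $\mathcal{F}_{=}\subset\mathcal{F}_{|\cdot|,\le}\subset\mathcal{F}_{\le}$.
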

To specify a particular form for $\tau_{\textup{Dis},\delta}$ in Theorem \ref{thm-opt-dp-degenerate}, we first introduce the following proposition, which offers a more comprehensive understanding of $D_{\textup{Dis},\min}$ and $D_{\textup{Dis},\max}$.
Recall that for any function $f:t\mapsto f(t)$, we denote its left-hand limit and right-hand limit at point $t\in\R$ as $\lim_{t'\to t^-}f(t')$ and $\lim_{t'\to t^+}f(t')$, respectively. 
\begin{proposition}\label{prop:deepD}
As functions of $t$,
\begin{itemize}[]
\item (1)  
For 
$G_{a,+}$ and $G_{a,-}$ in \eqref{eq:gapm}, we have, for all $t\in\R$, and all $x,a$
\begin{align*}
&\tau_{\textup{Dis},\min}(x,a): = I((x,a)\in G_{a,-})\in \argmin_\tau\{ \textup{Dis}({f_{\textup{Dis},t,\tau}}) \};\\
&\tau_{\textup{Dis},\max}(x,a): = I((x,a)\in G_{a,+})\in \argmax_\tau\{ \textup{Dis}({f_{\textup{Dis},t,\tau}}) \}.
\end{align*}
Moreover,
\begin{align*}
\Dmint =&\sum_{a\in\{0,1\}} p_a\int_{G_{a,+}} \lmb  w_{\textup{Dis}}(x,a) I\lsb\frac{2\eta_a(x)-1}{\wD(x,a)}>t \rsb\rmb d\Pa(x)\\
&+\sum_{a\in\{0,1\}} p_a\int_{G_{a,-}} \lmb  w_{\textup{Dis}}(x,a) I \lsb\frac{2\eta_a(x)-1}{\wD(x,a)}\le t\rsb\rmb d\Pa(x);\\
\Dmaxt =&\sum_{a\in\{0,1\}} p_a\int_{G_{a,+}} \lmb  w_{\textup{Dis}}(x,a) I\lsb\frac{2\eta_a(x)-1}{\wD(x,a)}\ge t \rsb\rmb d\Pa(x)\\
&+\sum_{a\in\{0,1\}} p_a\int_{G_{a,-}} \lmb  w_{\textup{Dis}}(x,a) I \lsb\frac{2\eta_a(x)-1}{\wD(x,a)}< t\rsb\rmb d\Pa(x).
\end{align*}
\item (2) We have that $D_{\textup{Dis},\min}$ is right-continuous and $D_{\textup{Dis},\min}$ is left-continuous.

\item (3) 
We have,
for $t\in\R$, 
\begin{align*}
& \lim_{t'\to t^-}{D}_{\textup{Dis},\min}(t')={D}_{\textup{Dis},\max}(t).
\end{align*}

\end{itemize}
\end{proposition}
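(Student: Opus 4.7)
The plan is to handle the three parts in order, with the main tool being a pointwise analysis of the integrand combined with dominated convergence.

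For part (1), I would start from the decomposition $\textup{Dis}(f_{\textup{Dis},t,\tau}) = A(t) + B(t,\tau)$, where
\begin{align*}
A(t) &= \int I\!\left[\eta_a(x) > \tfrac{1}{2} + \tfrac{t}{2} w_{\textup{Dis}}(x,a)\right] w_{\textup{Dis}}(x,a)\, d\P_{X,A}(x,a), \\
B(t,\tau) &= \int \tau(x,a)\, I\!\left[\eta_a(x) = \tfrac{1}{2} + \tfrac{t}{2} w_{\textup{Dis}}(x,a)\right] w_{\textup{Dis}}(x,a)\, d\P_{X,A}(x,a).
\end{align*}
Only $B$ depends on $\tau$, and its integrand is the product of $\tau(x,a) \in [0,1]$ with $w_{\textup{Dis}}(x,a)$ on the equality boundary. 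Hence the pointwise minimizer over $\tau$ sets $\tau = 0$ on $G_{a,+}$ (where $w_{\textup{Dis}} > 0$) and $\tau = 1$ on $G_{a,-}$ (where $w_{\textup{Dis}} < 0$), yielding $\tau_{\textup{Dis},\min}(x,a) = I((x,a) \in G_{a,-})$; the maximizer is the complementary choice. Substituting back into $f_{\textup{Dis},t,\tau}$ and rewriting each region then gives the claimed formulas: on $G_{a,+}$ the condition $\eta_a > 1/2 + t w_{\textup{Dis}}/2$ is equivalent to $(2\eta_a - 1)/w_{\textup{Dis}} > t$, while on $G_{a,-}$ dividing by a negative quantity flips the inequality to $(2\eta_a - 1)/w_{\textup{Dis}} < t$, and the boundary contribution from $\tau = 1$ promotes this to $\le t$, matching exactly the expression stated for $D_{\textup{Dis},\min}$. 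The formula for $D_{\textup{Dis},\max}$ follows symmetrically.

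For part (2), I would invoke the elementary fact that for any fixed real $r$, the maps $t \mapsto I[r > t]$ and $t \mapsto I[r \le t]$ are right-continuous, while $t \mapsto I[r \ge t]$ and $t \mapsto I[r < t]$ are left-continuous (each is a step function with a single jump at $t = r$). Applying this pointwise to the integrands in the part (1) formulas, and noting that $|w_{\textup{Dis}}(x,a)|$ provides an integrable dominating function (since the linear functional $\textup{Dis}$ takes finite values on bounded classifiers, so $w_{\textup{Dis}} \in L^1(\P_{X,A})$), the dominated convergence theorem delivers right-continuity of $D_{\textup{Dis},\min}$ and left-continuity of $D_{\textup{Dis},\max}$.

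For part (3), I would compute the pointwise limit as $t' \uparrow t$ of the integrands defining $D_{\textup{Dis},\min}(t')$. On $G_{a,+}$, $I[(2\eta_a - 1)/w_{\textup{Dis}} > t']$ increases to $I[(2\eta_a - 1)/w_{\textup{Dis}} \ge t]$; on $G_{a,-}$, $I[(2\eta_a - 1)/w_{\textup{Dis}} \le t']$ decreases to $I[(2\eta_a - 1)/w_{\textup{Dis}} < t]$. These limits are precisely the integrands appearing in the part (1) expression for $D_{\textup{Dis},\max}(t)$, so dominated convergence (with the same bound $|w_{\textup{Dis}}|$) yields $\lim_{t' \to t^-} D_{\textup{Dis},\min}(t') = D_{\textup{Dis},\max}(t)$. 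The main subtlety throughout is bookkeeping the distinction between strict and non-strict inequalities on $G_{a,+}$ versus $G_{a,-}$; the sign flip when dividing by $w_{\textup{Dis}}$ and the choice of $\tau$ on the equality boundary together determine which case arises, and once this is handled carefully the remaining steps reduce to standard measure-theoretic manipulations.
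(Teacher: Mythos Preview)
Your proposal is correct and mirrors the paper's argument closely: part (1) is identical in spirit, while for parts (2)--(3) you use dominated convergence with $|w_{\textup{Dis}}|\in L^1(\P_{X,A})$ as the envelope, whereas the paper uses monotone convergence along explicit sequences---both routes work equally well here. One minor slip: in part (3) the monotonicity directions you state are reversed (on $G_{a,+}$ the indicator $I[r>t']$ is non-increasing as $t'\uparrow t$, and on $G_{a,-}$ the indicator $I[r\le t']$ is non-decreasing), but since you invoke dominated rather than monotone convergence, only the pointwise limits matter and those you have correct.
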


Now, we  provide a specific choice of $\tau_{\textup{Dis},\delta}$. Let 
$\tau_0$
be the identically zero function with $\tau_0(x,a):=0$, for all $x,a$, and 
for $t\in\R$, $D_{\textup{Dis},0}(t)=\textup{Dis}(f_{\textup{Dis},t,\tau_0})$. Clearly, for all $t$, $D_{\textup{Dis},\min}(t)\le D_{\textup{Dis},0}(t)\le D_{\textup{Dis},\max}(t)$.
We consider three cases in order.
    \renewcommand*{\arraystretch}{1.7}
\begin{itemize}[]
    \item (1). 
    When $D_{\textup{Dis},\min} (t)>\delta$, we have
    $\td= \inf\left\{t: D_{\textup{Dis},\min} (t)\le \delta\right\}.$ Since $D_{\textup{Dis},\min}$ is right-continuous, we have
    $D_{\textup{Dis},\min} (t)\le \delta \le \lim_{t'\to t^{-}} D_{\textup{Dis},\min} (t') = D_{\textup{Dis},\max} (t')$. We can take, for all $x\in\X$ and $a\in\{0,1\}$,
$$\tau_{\textup{Dis},\delta}(x,a) =\left\{\begin{array}{ll}
\frac{  [D_{\textup{Dis},0}(\td)-\delta]\cdot I((x,a)\in G_{a,-})}{  D_{\textup{Dis},0}(\td)-  D_{\textup{Dis},\min}(\td)},  &  D_{\textup{Dis},\min} (\td) \le \delta \le D_{\textup{Dis},0}(\td);\\
 \frac{  [\delta - D_{\textup{Dis},0}(\td)]\cdot I((x,a)\in G_{a,+})}{  D_{\textup{Dis},\max}(\td)-  D_{\textup{Dis},0}(\td)},    & D_{\textup{Dis},0} (\td) <\delta \le D_{\textup{Dis},\max}(\td).
\end{array}\right.
$$
    \item (2). 
    When $D_{\textup{Dis},\max} (t)<-\delta$, we have
    $\td= \sup\left\{t: D_{\textup{Dis},\max} (t)\ge -\delta\right\}.$ Since $D_{\textup{Dis},\max}$ is left-continuous, we have
    $D_{\textup{Dis},\min} (t) =  \lim_{t'\to t^{+}} D_{\textup{Dis},\max} (t')\le -\delta \le D_{\textup{Dis},\max} (t)$. We can take, for all $x\in\X$ and $a\in\{0,1\}$,
$$\tau_{\textup{Dis},\delta}(x,a) =\left\{\begin{array}{ll}
\frac{ [D_{\textup{Dis},0}(\td)+\delta]\cdot I((x,a)\in G_{a,-})}{  D_{\textup{Dis},0}(\td)-  D_{\textup{Dis},\min}(\td)},  &  D_{\textup{Dis},\min} (\td) \le -\delta \le D_{\textup{Dis},0}(\td);\\
 \frac{  [-\delta - D_{\textup{Dis},0}(\td)]\cdot I((x,a)\in G_{a,+})}{  D_{\textup{Dis},\max}(\td)-  D_{\textup{Dis},0}(\td)},    & D_{\textup{Dis},0} (\td) <-\delta \le D_{\textup{Dis},\max}(\td).
\end{array}\right.
$$

 \item (3). 
 When $\delta \ge \max(D_{\textup{Dis},\min} (t), -D_{\textup{Dis},\max} (t))$, 
 since
 $D_{\textup{Dis},0}\le D_{\textup{Dis},0}\le  D_{\textup{Dis},0}$, we have
 $D_{\textup{Dis},\min}\le\delta$ when $ \delta<D_{\textup{Dis},0}$ and  $ D_{\textup{Dis},\max}\ge -\delta$ when $ \delta<-D_{\textup{Dis},0}$. Thus, we can take, for all $x\in\X$ and $a\in\{0,1\}$,
$$\tau_{\textup{Dis},\delta}(x,a) =
\left\{\begin{array}{ll}
\frac{  [D_{\textup{Dis},0}(\td)-\delta]\cdot I((x,a)\in G_{a,-})}{  D_{\textup{Dis},0}(\td)-  D_{\textup{Dis},\min}(\td)}, 
&  \delta < D_{\textup{Dis},0} (\td);\\
\frac{  [-\delta - D_{\textup{Dis},0}(\td)]\cdot I((x,a)\in G_{a,+})}{  D_{\textup{Dis},\max}(\td)-  D_{\textup{Dis},0}(\td)},  
&  \delta < -D_{\textup{Dis},0} (\td);\\
0;&  \text{Otherwise}.
\end{array}\right.
$$
\end{itemize}
We  now present the proofs of the results discussed in this section.
\begin{proof}[Proof of Proposition \ref{mono-dis}]
We define, for $a\in\{0,1\}$, 
\begin{equation}\label{eq:gapm}
   {G}_{a,+} = \{x \in\mathcal{X},  w_{\textup{Dis}}(x,a) > 0\};\, {G}_{a,0} = \{x\in\mathcal{X},  w_{\textup{Dis}}(x,a) = 0\}; \,{G}_{a,-} = \{x\in\mathcal{X}, w_{\textup{Dis}}(x,a) < 0\}.
\end{equation}
Let $t_1<t_2$, as well as $\tau_{1}:\X\times\{0,1\}\to [0,1]$ and $\tau_{2}:\X\times\{0,1\}\to [0,1]$. For
$x\in\X$ and $a\in\{0,1\}$ with $\wD(x,a)\neq 0$, we let $q_{\textup{Dis}}(x,a)=(2\eta_a(x)-1)/\wD(x,a)$. Then, we have,
for all $x,a$,
    \renewcommand*{\arraystretch}{1}
\begin{align}\label{eq:diffinf1}
\nonumber& f_{\textup{Dis},t_1,\tau_{1,1},\tau_{0,1}}(x,a)-f_{\textup{Dis},t_2,\tau_{1,2},\tau_{0,2}}(x,a)\\
\nonumber&=\left\{
\begin{array}{ll}
  I\lsb t_1< q_{\textup{Dis}}(x,a)\le t_2\rsb   +  \tau_{a,1}(x,a) I\lsb  q_{\textup{Dis}}(x,a) =t_1\rsb  -\tau_{a,2}(x,a)  I\lsb   q_{\textup{Dis}}(x,a)  =t_2\rsb  ,  & x\in G_{a,+};\\
  - I\lsb t_1\le  q_{\textup{Dis}}(x,a)
  < t_2 \rsb  + \tau_{a,1}(x,a) I\lsb  q_{\textup{Dis}}(x,a)
    =t_1\rsb  -\tau_{a,2}(x,a)  I\lsb   q_{\textup{Dis}}(x,a)
  =t_2\rsb ,  & x\in G_{a,-};\\
(\tau_{a,1}(x,a)-\tau_{a,2}(x,a)) I\lsb  \eta_a(x)=\frac12\rsb,   &  x\in G_{a,0};
\end{array}
\right.\\
&\ge \left\{
\begin{array}{ll}
  I\lsb t_1< \frac{2\eta_a(x)-1}{  \wD(x,a)}
< t_2\rsb,  & x\in G_{a,+};\\
  - I\lsb t_1\le \frac{2\eta_a(x)-1}{  \wD(x,a)}
  \le t_2 \rsb  ,  & x\in G_{a,-};\\
(\tau_{a,1}-\tau_{a,2}) I\lsb  \eta_a(x)=\frac12\rsb,   &  x\in G_{a,0}.
\end{array}
\right.
\end{align}
It thus follows that
\begin{align*}
&D_{\textup{Dis},\min}(t_1)-D_{\textup{Dis},\min}(t_2)= \inf_{\tau_1}\textup{Dis}(f_{\textup{Dis},t_1,\tau_{1}})-\inf_{\tau_2}\textup{Dis}(f_{\textup{Dis},t_2,\tau_{2}})\\
&\ge \inf_{\tau_1,\tau_2}\int_{\A}\int_{\X}\lmb
f_{\textup{Dis},t_1,\tau_1}(x,a)-f_{\textup{Dis},t_2,\tau_2}(x,a)
\rmb w_{\textup{Dis}}(x,a) d\P_{X,A}({x,a})\\
&=\inf_{\tau_1,\tau_2}\lmb \sum_{a\in\{0,1\}} p_{a}\int_{\X}\lmb
f_{\textup{Dis},t_1}(x,a)-f_{\textup{Dis},t_2}(x,a)
\rmb w_{\textup{Dis}}(x,a) d\Pa\rmb\\
&\ge\sum_{a\in\{0,1\}} p_{a}\int_{G_{a,+}}   I\lsb t_1< \frac{2\eta_a(x)-1}{  \wD(x,a)}
 <  t_2\rsb   w_{\textup{Dis}}(x,a) d\P_{X\mid A=a}\\
  &-\sum_{a\in\{0,1\}} p_{a}\int_{G_{a,-}}   I\lsb t_1\le \frac{2\eta_a(x)-1}{  \wD(x,a)}
  \le  t_2\rsb   w_{\textup{Dis}}(x,a) d\P_{X\mid A=a}\\
  &+\sum_{a\in\{0,1\}} p_{a}\int_{G_{a,=}}   I\lsb \eta_a(x)=\frac12\rsb   \cdot 0 d\P_{X\mid A=a} 
  \ge 0.
\end{align*}
The last inequality holds since the indicator function is non-negative, and
$  w_{\textup{Dis}}$ is positive on $G_{a,+}$ and negative on $G_{a,-}$.  
Using a similar argument, we can verify that 
$D_{\textup{Dis},\max}$
  is also monotonically non-increasing.
\end{proof}
\begin{proof}[Proof of Theorem \ref{thm-opt-dp-degenerate}]

We analyze the following three cases: (1) $\delta\ge  \max\{D_{\textup{Dis},\min}(0), -D_{\textup{Dis},\max}(0)\}$, (2) $D_{\textup{Dis},\min}(0)>\delta$ and (3) $D_{\textup{Dis},\max}(0)<-\delta$. 
Since the proof for case (3) 
is analogous to that for case (2), we omit the discussion of case (3).

Case 1: $\delta\ge  \max\{D_{\textup{Dis},\min}(0), -D_{\textup{Dis},\max}(0)\}$.
 In this case, there exists at least one unconstrained Bayes-optimal classifier satisfying the fairness constraint. 
 As a result, we have $\td=0$ and the value of function $\tau_{\textup{Dis},\delta}$ has no effect on the excess risk of the classifier. Thus, we only need to choose one such that
$$\left|\P\lsb\widehat{Y}_{f^\star_\delta}=1\mid A=1\rsb-\P\lsb\widehat{Y}_{f^\star_\delta}=0|A=1\rsb\right|\le \delta.$$

Case 2: $D_{\textup{Dis},\min}(0)>\delta$.
By  the definition of $\td$ in \eqref{eq:td_inf} and Proposition \ref{mono-dis}, we have $\td>0$. Let, for all $x,a$, 
$\phi_0(x,a) = 2\eta^Y_{A=1}(x,a)-1$, $\phi_1(x,a)={ w_{\textup{Dis}}(x)}$.
By Lemma \ref{lem:misclassification} and \eqref{eq:exp dis}, we can write $\textup{Acc}(f)$ and ${D}_{\textup{Dis}}(f)$ as
\begin{align*} 
&   \textup{Acc}(f) = \int_\A\int_\X f(x,a) \phi_0(x,a) d\P_{X,A}(x,a) + \int_\A\int_\X (1-\eta_a(x))  d\P_{X,A}(x,a)  ;\\
& \textup{Dis}(f)=  \int_{\X\times\A}f(x,a)\phi_1(x,a) d\P_{X,A}(x,a).
\end{align*}
Define
\begin{equation*}
\begin{array}{l}
\mathcal{F}_{=} = \lbb f:  {\textup{Dis}}(f) = \delta \rbb;  
\mathcal{F}_{|\cdot|,\le} = \lbb f:  \lab{\textup{Dis}}(f)\rab \le \delta\rbb;  \text{ and } 
\mathcal{F}_{\le} = \lbb f:{\textup{Dis}}(f) \le \delta\rbb.
\end{array}\end{equation*}
By our construction, we have $f_{\textup{Dis},\td,\tau_{\textup{Dis},\delta}}\in\mathcal{F}_{=}\subset\mathcal{F}_{|\cdot|,\le}\subset\mathcal{F}_{\le}$. Since $\td\ge0$, by the generalized Neyman-Pearson lemma (Lemma \ref{NP_lemma}),

$$\underset{f\in\mathcal{F}_{\le}}{\max}\, \textup{Acc}(f) = \textup{Acc}(f_{\textup{Dis},\td,\tau_{\textup{Dis},\delta}})\le  \underset{f\in\mathcal{F}_{|\cdot|,\le}}{\max}\, \textup{Acc}(f) \le  \underset{f\in\mathcal{F}_{\le}}{\max}\, \textup{Acc}(f).$$
Thus, we can conclude that 
$$f_{\textup{Dis},\td,\tau_{\textup{Dis},\delta}} = \underset{f\in\mathcal{F}_{|\cdot|,\le}}{\argmax}\,  \textup{Acc}(f)
= \argmin_{f\in\mF}\lbb R(f): { |\textup{Dis}(f)|} \le \delta
\rbb. $$
The proof is thus completed.
\end{proof}

\begin{proof}[Proof of Proposition \ref{prop:deepD}]

    We only prove the results for $D_{\textup{Dis},\min}$ and similar arguments apply to the result for $D_{\textup{Dis},\max}(t)$. For (1), by definition, we have
\begin{align}\label{eq:expdinf}
\nonumber&\textup{Dis}(f_{\textup{Dis},t,\tau})=  \sum_{a\in\{0,1\}} p_a\int_{\X} \lmb  w_{\textup{Dis}}(x,a) f_{\textup{Dis},t,\tau}(x,a)\rmb d\Pa(x)\\
\nonumber=&\sum_{a\in\{0,1\}} p_a\int_{\X} \lmb  w_{\textup{Dis}}(x,a) I\lsb\eta_a(x)>\frac{1}{2}+\frac{t}{2}\wD(x,a)\rsb\rmb d\Pa(x)\\
\nonumber&+\sum_{a\in\{0,1\}} p_a\int_{\X} \lmb  w_{\textup{Dis}}(x,a) \tau(x,a) I\lsb\eta_a(x)=\frac{1}{2}+\frac{t}{2}\wD(x,a)\rsb\rmb d\Pa(x)\\
\nonumber=&\sum_{a\in\{0,1\}} p_a\int_{G_{a,+}} \lmb  w_{\textup{Dis}}(x,a) I\lsb\eta_a(x)>\frac{1}{2}+\frac{t}{2}\wD(x,a)\rsb\rmb d\Pa(x)\\
\nonumber&+\sum_{a\in\{0,1\}} p_a\int_{G_{a,-}} \lmb  w_{\textup{Dis}}(x,a) I\lsb\eta_a(x)>\frac{1}{2}+\frac{t}{2}\wD(x,a)\rsb\rmb d\Pa(x)\\
\nonumber&+\sum_{a\in\{0,1\}} p_a\int_{G_{a,+}} \lmb  w_{\textup{Dis}}(x,a) \tau(x,a) I\lsb\eta_a(x)=\frac{1}{2}+\frac{t}{2}\wD(x,a)\rsb\rmb d\Pa(x)\\
&+\sum_{a\in\{0,1\}} p_a\int_{G_{a,-}} \lmb  w_{\textup{Dis}}(x,a) \tau(x,a) I\lsb\eta_a(x)=\frac{1}{2}+\frac{t}{2}\wD(x,a)\rsb\rmb d\Pa(x).
\end{align}
Since the indicator function is always non-negative, the infimum of \eqref{eq:expdinf} with respect to $\tau: \X\times\{0,1\}\to [0,1]$ is achieved by taking
$\tau_{\textup{Dis},\min}(x,a) = I\lsb (x,a)\in G_{a,-}\rsb$ for all $x,a$.
Moreover,
we have,
\begin{align*}
\Dmint =&\sum_{a\in\{0,1\}} p_a\int_{G_{a,+}} \lmb  w_{\textup{Dis}}(x,a) I\lsb\eta_a(x)>\frac{1}{2}+\frac{t}{2}\wD(x,a)\rsb\rmb d\Pa(x)\\
&+\sum_{a\in\{0,1\}} p_a\int_{G_{a,-}} \lmb  w_{\textup{Dis}}(x,a) I\lsb\eta_a(x)\ge\frac{1}{2}+\frac{t}{2}\wD(x,a)\rsb\rmb d\Pa(x)\\
 =&\sum_{a\in\{0,1\}} p_a\int_{G_{a,+}} \lmb  w_{\textup{Dis}}(x,a) I\lsb\frac{2\eta_a(x)-1}{\wD(x,a)}>t \rsb\rmb d\Pa(x)\\
&+\sum_{a\in\{0,1\}} p_a\int_{G_{a,-}} \lmb  w_{\textup{Dis}}(x,a) I \lsb\frac{2\eta_a(x)-1}{\wD(x,a)}\le t\rsb\rmb d\Pa(x).
\end{align*}
For (2), denote
\begin{align*}
D_{\textup{Dis},+,>}(t) &=\sum_{a\in\{0,1\}} p_a\int_{G_{a,+}} \lmb  w_{\textup{Dis}}(x,a) I\lsb\frac{2\eta_a(x)-1}{\wD(x,a)}>t \rsb\rmb d\Pa(x)\\
D_{\textup{Dis},-,\le}(t) &=\sum_{a\in\{0,1\}} p_a\int_{G_{a,-}} \lmb  w_{\textup{Dis}}(x,a) I \lsb\frac{2\eta_a(x)-1}{\wD(x,a)}\le t\rsb\rmb d\Pa(x).
\end{align*}
We study the left- and right-hand limit of $D_{\textup{Dis},+,>}$ at a point $t\in\R$ as follows.
Let $(t_n)_{n=1}^\infty$ be monotone decreasing sequence tending to zero  as $n\to\infty$. 
For fixed $t\in\R$ and $a\in\{0,1\}$, we consider the sets
\begin{align*}
&J_{a,-,t}=\lbb(x,a):\frac{2\eta_a(x)-1}{\wD(x,a)}\ge t
\rbb;  \,\,\qquad J_{n,a,-,t}=\lbb(x,a):\frac{2\eta_a(x)-1}{\wD(x,a)}> t-t_n\rbb;\\ &J_{a,+,t}=\lbb(x,a):\frac{2\eta_a(x)-1}{\wD(x,a)}>t
\rbb;  \,\,  \qquad  J_{n,a,+,t}=\lbb(x,a):\frac{2\eta_a(x)-1}{\wD(x,a)}> t+t_n
\rbb.
\end{align*} and define the following functions on $\X\times\{0,1\}$. For all $x,a$, let,
\begin{align*}
 &j_{n,a,-,t}(x,a)= w_{\textup{Dis}}(x,a) I\lsb G_{a,+}\cap J_{n,a,-,t}\rsb;\quad
 j_{a,-,t}(x,a)= w_{\textup{Dis}}(x,a) I\lsb G_{a,+}\cap J_{a,-,t}\rsb;\\
  &j_{n,a,+,t}(x,a)= w_{\textup{Dis}}(x,a) I\lsb G_{a,+}\cap J_{n,a,+,t}\rsb;\quad
   j_{a,+,t}(x,a)= w_{\textup{Dis}}(x,a) I\lsb G_{a,+}\cap J_{a,+,t}\rsb.
\end{align*} 
It follows that
\begin{align*}
    &j_{1,a,-,t}(x,a)\ge j_{2,a,-,t}(x,a)\ge j_{3,a,-,t}(x,a)\ge\ldots   \,\,\, \text{ with } \lim_{n\to\infty}j_{n,a,-,t}(x,a)=j_{a,-,t}(x,a);\\
    &j_{1,a,+,t}(x,a)\le j_{2,a,+,t}(x,a)\le j_{3,a,+,t}(x,a)\le\ldots   \,\,\, \text{ with } \lim_{n\to\infty}j_{n,a,+,t}(x,a)=j_{a,+,t}(x,a).
\end{align*}
Then, by the monotone convergence theorem \citep{royden1968real},
\begin{align*}
   &\lim_{t'\to t-} D_{\textup{Dis},+,>}(t')=\lim_{n\to\infty}D_{\textup{Dis},+,> }(t-t_n)\\
   =&\lim_{n\to\infty}\lmb\sum_{a\in\{0,1\}} p_a\int_{G_{a,+}} \lmb  w_{\textup{Dis}}(x,a) I\lsb\frac{2\eta_a(x)-1}{\wD(x,a)}> t-t_n \rsb\rmb d\Pa(x)\rmb\\
   =&\sum_{a\in\{0,1\}}\lbb p_a\lim_{n\to\infty}\lmb\int_{\X}  j_{n,a,-,t}(x,a)d\Pa(x)\rmb\rbb = \sum_{a\in\{0,1\}}\lbb p_a \int_{\X} j_{a,-,t}(x,a)d\Pa(x)\rbb \\
  =& \sum_{a\in\{0,1\}} p_a\int_{G_{a,+}} \lmb  w_{\textup{Dis}}(x,a) I\lsb\frac{2\eta_a(x)-1}{\wD(x,a)}\ge t\rsb\rmb d\Pa(x).
   \end{align*}
   Similarly, we can verify that
 \begin{align*}
   &\lim_{t'\to t+} D_{\textup{Dis},+,> }(t')=\lim_{n\to\infty}D_{\textup{Dis},+,> }(t+t_n)\\
  =& \sum_{a\in\{0,1\}} p_a\int_{G_{a,+}} \lmb  w_{\textup{Dis}}(x,a) I\lsb\frac{2\eta_a(x)-1}{\wD(x,a)}> t\rsb\rmb d\Pa(x)= D_{\textup{Dis},+,>}(t). 
   \end{align*}  
Thus, $D_{\textup{Dis},+,>}$ is right-continuous and for $t\in \R$,
\begin{equation*}
\lim_{t'\to t^{-}}D_{\textup{Dis},+,>}(t')= \sum_{a\in\{0,1\}} p_a\int_{G_{a,+}} \lmb  w_{\textup{Dis}}(x,a) I\lsb\frac{2\eta_a(x)-1}{\wD(x,a)}\ge t\rsb\rmb d\Pa(x).
\end{equation*}
With the similar arguments above, we can show that  $D_{\textup{Dis},-,\le}$ is also right-continuous, moreover, for $t\in \R$
\begin{align*}
&\lim_{t'\to t^{-}}D_{\textup{Dis},-,\le}(t') 
=& \sum_{a\in\{0,1\}} p_a\int_{G_{a,-}} \lmb  w_{\textup{Dis}}(x,a) I\lsb\frac{2\eta_a(x)-1}{\wD(x,a)}< t\rsb\rmb d\Pa(x).
\end{align*}
As a consequence, $D_{\textup{Dis},\min}=D_{\textup{Dis},+,>}+ D_{\textup{Dis},-,\le}$ is right-continuous.

Finally,
(3) holds since,  for $t\in\R$,
\begin{align*}
& \lim_{t'\to t^{+}}D_{\textup{Dis},\min}(t') = 
\lim_{t'\to t^{+}}D_{\textup{Dis},+,>}(t') + \lim_{t'\to t^{+}}D_{\textup{Dis},-,\le}(t')
\\
=& \sum_{a\in\{0,1\}} p_a\int_{G_{a,+}} \lmb  w_{\textup{Dis}}(x,a) I\lsb\frac{2\eta_a(x)-1}{\wD(x,a)}\ge t\rsb\rmb d\Pa(x)\\
&+\sum_{a\in\{0,1\}} p_a\int_{G_{a,-}} \lmb  w_{\textup{Dis}}(x,a) I\lsb\frac{2\eta_a(x)-1}{\wD(x,a)}< t\rsb\rmb d\Pa(x) =D_{\textup{Dis},\max}(t) .
\end{align*}
This finishes the proof.
\end{proof}

\section{Bayes-optimal Classifiers for Synthetic Data}
\label{exp_det}
In this section,  we derive the $\delta$-fair Bayes-optimal classifiers for our synthetic model used in Section \ref{synth}. 
In particular, we consider the following data distribution of $(X,A,Y)$: for  $(a,y)\in\{0,1\}^2$, we let $\P(A=a,Y=y)=p_{a,y}$ and $X|A=a,Y=y\sim \mathcal{N}(\mu_{a,y},\sigma^2I_p)$.

Denote, for all $x$,
by $g_{a,y}(x) = (2\pi)^{-p/2}\sigma^{-p}\exp(-\|x-\mu_{a,y}\|^2/(2\sigma^2))$ 
the conditional probability density function of $X$ given $A=a$ and $Y=y$.  We have
\begin{align*}
   \eta_a(x)&=\P(Y=1|X=x,A=a)
   =\frac{p_{a,1}g_{a,1}(x) }{p_{a,1} g_{a,1}(x)+p_{a,o}g_{a,0}(x)}\\
   &=\frac{p_{a,1}\exp(-\frac1{2\sigma^2}\|x-\mu_{a,1}\|^2)}{p_{a,1}\exp(-\frac1{2\sigma^2}\|x-\mu_{a,1}\|^2)+p_{a,0}\exp(-\frac1{2\sigma^2}\|x-\mu_{a,0}\|^2)}.
\end{align*}
In the following, for $a\in\{0,1\}$, we denote
$\Delta_{\mu,a}=\mu_{a,1}-\mu_{a,0}$. Moreover, for $t\in[0,1]$ and $a\in\{0,1\}$, we denote, $q_a(t) = {tp_{a,0}}/({(1-t)p_{a,1}})$. Then, we have, for $(a,y)\in\{0,1\}^2$, 
\begin{align*}
&\Pay(\eta_{a}(X)>t)
=\Pay\left(\frac{p_{a,1}g_{a,1}(x) }{p_{a,1} g_{a,1}(x)+p_{a,0}g_{a,0}(x)}>t\right)\\
&=\Pay\left((1-t_a)p_{a,1}g_{a,1}(x)>t_ap_{a,0}g_{a,0}(x)\right)=\Pay\left(g_{a,1}(x)>q_a(t))g_{a,0}(x)\right)\\
&=\Pay\left(\exp\lsb -\frac1{2\sigma^2}\lnm x-\mu_{a,1}\rnm^2\rsb > q_a(t)\cdot\exp\lsb -\frac1{2\sigma^2}\lnm x-\mu_{a,0}\rnm^2\rsb\right)\\
&=\Pay\left(\lnm x-\mu_{a,0}\rnm^2-\lnm x-\mu_{a,1}\rnm^2>{2\sigma^2}\log(q_a(t))\right)\\
&=\Pay\left(2x^\top\Delta_{\mu,a}>{2\sigma^2}\log(q_a(t))+\|\mu_{a,1}\|^2-\|\mu_{a,0}\|^2\right)\\
&=\P_{Z\sim \N(0,I_p)}\left(2(\sigma \cdot Z +\mu_{a,y})^\top\Delta_{\mu,a}>{2\sigma^2}\log(q_a(t))+\|\mu_{a,1}\|^2-\|\mu_{a,0}\|^2\right).
\end{align*}
From the definition of the distribution of $(X,A,Y)$, it follows that this further equals
\begin{align*}
&\P_{Z\sim \N(0,I_p)}\left(\sigma \cdot Z ^\top\Delta_{\mu,a}>{\sigma^2}\log(q_a(t))+\frac{\|\mu_{a,1}\|^2-\|\mu_{a,0}\|^2 -2\mu_{a,y}^\top\Delta_{\mu,a}}{2}\right)\\
&=\P_{Z\sim \N(0,I_p)}\left(  Z ^\top\Delta_{\mu,a}>{\sigma}\log(q_a(t))+(1-2y)\frac{\|\Delta_{\mu,a}\|^2}{2\sigma}\right)\\
&=\P_{Z'\sim \N (0,1)}\left(  Z'>\frac{{\sigma}\log(q_a(t))}{\|\Delta_{\mu,a}\|}+(1-2y)\frac{\|\Delta_{\mu,a}\|}{2\sigma}\right)=1-\Phi\left(\frac{\sigma\log(q_a(t))}{\lnm \Delta_{\mu,a}\rnm}+(1-2y)\frac{\lnm \Delta_{\mu,a}\rnm}{2\sigma}\right),
\end{align*}
where $\Phi(\cdot)$ is the cumulative distribution function of the standard normal distribution. Hereafter, we denote, for $t\in[0,1]$, and $(a,y)\in\{0,1\}^2$,
$$\psi_{a,y}(t)=\Phi\left(\frac{\sigma\log(q_a(t))}{\lnm \Delta_{\mu,a}\rnm}+(1-2y)\frac{\lnm \Delta_{\mu,a}\rnm}{2\sigma}\right).
$$

Next, we derive $\delta$-fair Bayes-optimal classifiers under demographic parity, equality of opportunity and predictive parity.
\subsection{Demographic Parity}
For demographic parity, the weight function is given by, for all $(x,a)$, $ w_{DD}(x,a)=1/p_a$. This time, the disparity function 
$\textup{DD}$ take values, for $t\in\tset_{\textup{DD}}: =[-\min(p_1,p_0),\min(p_1,p_0)]$,
\begin{align*}
\textup{DD}(t)=&\sum_{a\in\{0,1\}} (2a-1)
\Pa\lsb \eta_a(X)>\frac12+\frac{t}{2p_a}\rsb\\
=&\sum_{a\in\{0,1\}}\sum_{y\in\{0,1\}} \lmb(2a-1)\Pa\lsb Y=y, \eta_a(X)>\frac12+\frac{t}{2p_a}\rsb \rmb \\
=&\sum_{a\in\{0,1\}}\sum_{y\in\{0,1\}} \lmb(2a-1)\P(Y=y|A=a)\cdot\Pay\lsb \eta_a(X)>\frac12+\frac{t}{2p_a}\rsb \rmb \\
=&\sum_{a\in\{0,1\}}\sum_{y\in\{0,1\}} \lmb\frac{(2a-1)p_{a,y}}{p_a} \lsb1-\psi_{a,y}\lsb
\frac12+\frac{t}{2p_a}
\rsb\rsb\rmb\\
=& -\frac{p_{1,1}}{p_1}\psi_{1,1}\lsb\frac{p_1+t}{2p_1}\rsb - \frac{p_{1,0}}{p_1}\psi_{1,0}\lsb\frac{p_1+t}{2p_1}\rsb + \frac{p_{0,1}}{p_0}\psi_{0,1}\lsb\frac{p_0-t}{2p_0}\rsb +\frac{p_{0,0}}{p_0}\psi_{0,0}\lsb\frac{p_0-t}{2p_0}\rsb. 
\end{align*}
By Corollary \ref{cor:fbbinilir}, a $\delta$-fair Bayes-optimal classifier is, for all $x,a$,
$$f^{\star}_{\textup{DD},\delta}(x,a) = I\lsb \eta_a(x)>\frac{1}{2}+\frac{t_{\textup{DD}}(\delta)}{p_{a}}\rsb,$$
with
$t_{\textup{DD}}(\delta) = \argmin_{t\in\tset_{\textup{DD}}}\lbb |t|: \textup{DD}(t)\le \delta\rbb.$ Moreover, we have,
\begin{align*}
&R(f^{\star}_{\textup{DD},\delta}) = \P\lsb \widehat{Y}_{f^{\star}_{\textup{DD},\delta}}\neq Y\rsb=\sum_{y\in\{0,1\}} \P\lsb \widehat{Y}_{f^{\star}_{\textup{DD},\delta}}=y,Y=1-Y\rsb\\
=& \sum_{a\in\{0,1\}}\sum_{y\in\{0,1\}} p_{a,y}\cdot\Pay\lsb \widehat{Y}_{f^{\star}_{\textup{DD},\delta}}=1-y\rsb \\
=& \sum_{a\in\{0,1\}}\lmb p_{a,1}\cdot\P_{A=a,y=1}\lsb \eta_a(X)\le \frac12 +\frac{t}{2p_a}\rsb +
p_{a,0}\cdot\P_{A=a,y=0}\lsb \eta_a(X)> \frac12 +\frac{t}{2p_a}\rsb\rmb\\
=&p_{1,0} +p_{0,0} + p_{1,1}\psi_{1,1}\lsb\frac{p_1+t}{2p_1}\rsb- {p_{1,0}}\psi_{1,0}\lsb\frac{p_1+t}{2p_1}\rsb + {p_{0,1}}\psi_{0,1}\lsb\frac{p_0-t}{2p_0}\rsb -{p_{0,0}}\psi_{0,0}\lsb\frac{p_0-t}{2p_0}\rsb. 
 \end{align*}

\subsection{Equality of Opportunity}
For equality of opportunity, the weight function is given for all $(x,a)$, by 
$ w_{DO}(x,a)=\eta_a(x)/p_{a,1}$. 
The disparity function 
$\textup{DO}$ take values, for $t\in\tset_{\textup{DO}}: =[-p_{0,1},p_{1,1}]$,
\begin{align*}
{D}_{\textup{DO}}(t)=&\sum_{a\in\{0,1\}} (2a-1)
\P_{A=a,Y=1}\lsb \eta_a(X)>\frac12+\frac{t \eta_a(X)}{p_{a,1}}\rsb\\
=&\sum_{a\in\{0,1\}}  \lmb(2a-1)\P_{A=a,Y=1}\lsb  \eta_a(X)>\frac{p_{a,1}}{2p_{a,1}-t}\rsb \rmb \\
=& {p_{1,1}}\lsb 1-\psi_{1,1}\lsb\frac{p_{1,1}}{2p_{1,1}-t}\rsb\rsb - {p_{0,1}}\lsb1-\psi_{0,1}\lsb\frac{p_{0,1}}{2p_{0,1}+t}\rsb \rsb. 
\end{align*}
By Corollary \ref{cor:fbbinilir}, a $\delta$-fair Bayes-optimal classifier is, for all $x,a$,
$$f^{\star}_{\textup{DO},\delta}(x,a) = I\lsb \eta_a(x)>\frac{p_{a,1}}{2p_{a,1}-t_{\textup{DO}}(\delta)}\rsb,$$
with
$t_{\textup{DO}}(\delta) = \argmin_{t\in\tset_{\textup{DO}}}\lbb |t|: {D}_{\textup{DO}}(t)\le \delta\rbb.$ Similarly, we have,
\begin{align*}
&R(f^{\star}_{\textup{DO},\delta}) = \P\lsb \widehat{Y}_{f^{\star}_{\textup{DO},\delta}}\neq Y\rsb=\sum_{y\in\{0,1\}} \P\lsb \widehat{Y}_{f^{\star}_{\textup{DO},\delta}}=y,Y=1-Y\rsb\\
=& p_{1,0}+ p_{0,0} +p_{1,1}\psi_{1,1}\lsb\frac{p_{1,1}}{2p_{1,1}-t}\rsb - {p_{1,0}}\psi_{1,0}\lsb\frac{p_{1,1}}{2p_{1,1}-t}\rsb + {p_{0,1}}\psi_{0,1}\lsb\frac{p_{0,1}}{2p_{0,1}+t}\rsb -{p_{0,0}}\psi_{0,0}\lsb\frac{p_{0,1}}{2p_{0,1}+t}\rsb. 
 \end{align*}

\subsection{Predictive Equality}
For predictive equality, the weight function is given by, for all $(x,a)$, $ w_{PD}(x,a)=(1-\eta_a(x))/p_{a,0}$. This time, the disparity function 
$\textup{PD}$ take values, for $t\in\tset_{\textup{PD}}: =[-p_{1,0},p_{0,0}]$,
\begin{align*}
{D}_{\textup{PD}}(t)=&\sum_{a\in\{0,1\}} (2a-1)
\P_{A=a,Y=0}\lsb \eta_a(X)>\frac12+\frac{t (1-\eta_a(X))}{p_{a,0}}\rsb\\
=&\sum_{a\in\{0,1\}}  \lmb(2a-1)\P_{A=a,Y=1}\lsb  \eta_a(X)>\frac{p_{a,0}+t}{2p_{a,0}+t}\rsb \rmb \\
=& {p_{1,0}}\lsb 1-\psi_{1,0}\lsb\frac{p_{1,0}+t}{2p_{1,0}+t}\rsb\rsb - {p_{0,0}}\lsb1-\psi_{0,0}\lsb\frac{p_{0,0}-t}{2p_{0,0}-t}\rsb \rsb. 
\end{align*}
By Corollary \ref{cor:fbbinilir}, a $\delta$-fair Bayes-optimal classifier is, for all $x,a$,
$$f^{\star}_{\textup{PD},\delta}(x,a) = I\lsb \eta_a(x)>\frac{p_{a,0}+t_{\textup{PD}}(\delta)}{2p_{a,0}+t_{\textup{PD}}(\delta)}\rsb,$$
with
$t_{\textup{PD}}(\delta) = \argmin_{t\in\tset_{\textup{PD}}}\lbb |t|: {D}_{\textup{PD}}(t)\le \delta\rbb.$ Moreover,
\begin{align*}
&R(f^{\star}_{\textup{PD},\delta}) = \P\lsb \widehat{Y}_{f^{\star}_{\textup{PD},\delta}}\neq Y\rsb=\sum_{y\in\{0,1\}} \P\lsb \widehat{Y}_{f^{\star}_{\textup{PD},\delta}}=y,Y=1-Y\rsb\\
=& p_{1,0}+ p_{0,0} +p_{1,1}\psi_{1,1}\lsb\frac{p_{1,0}+t}{2p_{1,0}+t}\rsb - {p_{1,0}}\psi_{1,0}\lsb\frac{p_{1,0}+t}{2p_{1,0}+t}\rsb + {p_{0,1}}\psi_{0,1}\lsb\frac{p_{0,0}-t}{2p_{0,0}-t}\rsb -{p_{0,0}}\psi_{0,0}\lsb\frac{p_{0,0}-t}{2p_{0,0}-t}\rsb.  
 \end{align*}

}
\end{sloppypar}
\end{document}